\definecolor{Gred}{RGB}{219, 50, 54}
\definecolor{Ggreen}{RGB}{60, 186, 84}
\definecolor{Gblue}{RGB}{72, 133, 237}
\definecolor{Gyellow}{RGB}{247, 178, 16}
\definecolor{ToCgreen}{RGB}{0, 128, 0}
\definecolor{myGold}{RGB}{231,141,20}
\definecolor{myBlue}{rgb}{0.19,0.41,.65}
\definecolor{myPurple}{RGB}{175,0,124}
\definecolor{niceRed}{RGB}{153,0,0}
\definecolor{niceRed}{RGB}{190,38,38}
\definecolor{blueGrotto}{HTML}{059DC0}
\definecolor{royalBlue}{HTML}{057DCD}
\definecolor{navyBlueP}{HTML}{0B579C}
\definecolor{limeGreen}{HTML}{81B622}
\definecolor{nicePink}{RGB}{247,83,148}
\title{Estimating Ising Models in Total Variation Distance}
\author{ Constantinos Daskalakis\\
	EECS Department\\
	MIT\\
	\texttt{costis@csail.mit.edu}
    \And
    Vardis Kandiros \\
    Data Science Institute\\
	Columbia University\\
	\texttt{ak5484@columbia.edu} \\
	\And
	Rui Yao\\
    EECS Department\\
	MIT\\
	\texttt{rayyao@mit.edu}
}
\def\compactify{\itemsep=0pt \topsep=0pt \partopsep=0pt \parsep=0pt}
\let\latexusecounter=\usecounter
\definecolor{myC}{rgb}{0, 255, 255}
\definecolor{myY}{rgb}{204, 204, 0}
\definecolor{myM}{rgb}{255, 0, 255}
\definecolor{secinhead}{RGB}{249,196,95}
\definecolor{lgray}{gray}{0.8}
\newtheorem{theorem}{Theorem} 
\newtheorem*{theorem*}{Theorem} 
\newtheorem*{proposition*}{Proposition} 
\newtheorem{lemma}[theorem]{Lemma}
\newtheorem{corollary}[theorem]{Corollary}
\newtheorem{definition}[theorem]{Definition}
\newtheorem{remark}[theorem]{Remark}
\newtheorem{assumption}[theorem]{Assumption}
\renewcommand{\Pr}{\mathop{\bf Pr\/}}
\newcommand{\E}{\mathop{\bf E\/}}
\newcommand{\Var}{\mathop{\bf Var\/}}
\newcommand{\calN}{\mathcal{N}}
\def\<{\langle}
\def\>{\rangle}
\DeclareMathOperator*{\argmax}{argmax}
\def\dtv{\mathrm{TV}}
\pgfplotsset{compat=1.17}
\definecolor[named]{ACMBlue}{cmyk}{1,0.1,0,0.1}
\definecolor[named]{ACMYellow}{cmyk}{0,0.16,1,0}
\definecolor[named]{ACMOrange}{cmyk}{0,0.42,1,0.01}
\definecolor[named]{ACMRed}{cmyk}{0,0.90,0.86,0}
\definecolor[named]{ACMLightBlue}{cmyk}{0.49,0.01,0,0}
\definecolor[named]{ACMGreen}{cmyk}{0.20,0,1,0.19}
\definecolor[named]{ACMPurple}{cmyk}{0.55,1,0,0.15}
\definecolor[named]{ACMDarkBlue}{cmyk}{1,0.58,0,0.21}
\DeclareMathOperator{\sech}{sech}
\newcommand{\lp}{\left}
\newcommand{\rp}{\right}
\def\dkl{\mathrm{KL}}
\newcommand{\R}{\mathbb{R}}
\newcommand{\lr}[1]{\mathopen{}\left(#1\right)}
\newcommand{\lrnorm}[1]{\mathopen{}\left\|#1\right\|}
\newcommand{\mat}[1]{\boldsymbol{#1}}
\newcommand{\indicator}[1]{\mathbf{1}\{#1\}}
\newcommand{\flip}[2]{\overline{#1^{(#2)}}}
\newcommand{\Ent}[1]{\text{Ent}_\mu\lp(#1\rp)}
\begin{document}

\maketitle

\begin{abstract}
	We consider the problem of estimating Ising models over $n$ variables in Total Variation (TV) distance, given $l$ independent samples from the model. While the statistical complexity of the problem is well-understood~\cite{devroye2020minimax}, identifying computationally and statistically efficient algorithms has been challenging. In particular, remarkable progress has occurred in several settings, such as when the underlying graph is a tree \cite{daskalakis2021sample,bhattacharyya2021near}, when the entries of the interaction matrix follow a Gaussian distribution~\cite{gaitonde2024unified,chandrasekaran2024learning}, or when the bulk of its eigenvalues lie in a small interval \cite{anari2024universality,koehler2024efficiently}, but no unified framework for polynomial-time estimation in TV exists so far.

    Our main contribution is a unified analysis of the Maximum Pseudo-Likelihood Estimator (MPLE) for two general classes of Ising models. The first class includes models that have bounded operator norm and satisfy the Modified Log-Sobolev Inequality (MLSI), a functional inequality that was introduced to study the convergence of the associated Glauber dynamics to stationarity. In the second class of models, the interaction matrix has bounded infinity norm (or \emph{bounded width}), which is the most common assumption in the literature for structure learning of Ising models. We show how our general results for these classes yield polynomial-time algorithms and optimal or near-optimal sample complexity guarantees in a variety of settings. Our proofs employ a variety of tools from tensorization inequalities to measure decompositions and concentration bounds.
\end{abstract}

\section{Introduction}

Undirected graphical models are a widely used framework for capturing conditional independence structure in a high-dimensional distribution. One of the earliest and most prominent instances of these models are {\em Ising models}, a family of distributions over $n$ binary variables specified by a symmetric \emph{interaction matrix} $J^* \in \R^{n \times n}$, with zero diagonal, and a vector of \emph{external fields} $h\in \R^n$. In terms of these parameters, a probability distribution is defined over $\{-1,1\}^n$, assigning to each vector $x\in \{-1,1\}^n$ probability
\begin{equation}\label{eq:ising_model}
\Pr_{J^*,h}[x] = \frac{1}{Z_{J^*,h}} \cdot \exp\lp(\frac{1}{2} x^\top J^* x + h^\top x \rp) \enspace,
\end{equation}
where the normalizing constant
\[
Z_{J^*,h} := \sum_{x \in \{-1,1\}^n} \exp\lp(\frac{1}{2} x^\top J^* x + h^\top x \rp)
\]
is called the \emph{partition function} of the model. We will focus on the case $h=0$, so we drop the dependence on $h$ whenever that happens. The matrix $J^*$ can also be thought of as the weighted adjacency matrix of a graph with $n$ nodes. This gives rise to the interpretation of an Ising model as a \emph{Markov Random Field (MRF)} \cite{koller2009probabilistic,wainwright2008graphical}, where conditional independence relations between variables are encoded as connectivity properties between nodes in the graph defined by $J^*$. 
Since its introduction in statistical physics~\cite{lenz1920beitrvsge}, the Ising model has been studied intensely in probability theory, and has found profound applications in a variety of fields, including computer vision, economics, computational biology, and the social sciences; see e.g.~\cite{geman1986markov,ellison1993learning,felsenstein2004,daskalakis2006optimal,montanari2010spread}. Motivated in part by these applications, a common challenge is estimating an Ising model given a number of observations that are assumed to be distributed according to some Ising model, e.g.~capturing opinions of individuals in a social network, or traits of species in some phylogenetic tree.

The problem of learning the structure of the underlying graph, i.e.~the non-zero entries of $J^*$, given access to $l$ independent samples from an Ising model has received significant attention, due to its importance in capturing conditional independence properties. Under the assumption that the degree of the graph is bounded by $d$ and the non-zero entries of $J^*$ are both upper and lower bounded in absolute value, a breakthrough result by Bresler~\cite{bresler2015efficiently} provided a polynomial-time algorithm for identifying the graph topology, albeit with doubly exponential dependence of the sample-size requirements in the degree $d$.
A flurry of subsequent works~ \cite{hamilton2017information,vuffray2016interaction,lokhov2018optimal,klivans2017learning,wu2019sparse,zhang2020privately} have culminated in polynomial-time algorithms that estimate every entry of $J^*$ up to error $\epsilon$, given $l = \Theta(e^{\Theta(d)}\log n /\epsilon^4)$ samples from the model, under the assumption that each row of $J^*$ has infinity norm that is upper-bounded. This matches the information-theoretic lower bound of \cite{santhanam2012information}. Thus, the problem of estimating the graph structure of an Ising model is by now well understood under relatively general assumptions. 

However, often the purpose of estimation is to make predictions of events for various downstream uses of the model~\cite{bresler2020learning,daskalakis2021sample,bhattacharyya2021near}. It is clear that the right metric to capture this property is not graph similarity, but the \emph{total variation (TV) distance}  (formal definition in Section~\ref{sec:results}) between the true and the estimated distribution. Information-theoretically,  Devroye et al.~\cite{devroye2020minimax} show that $\Tilde{O}(n^2/\epsilon^2)$ samples from some Ising model are both necessary and sufficient for estimating an Ising model that is $\epsilon$-close in TV. The algorithm proposed in \cite{devroye2020minimax} involves exhaustive search over all models and is thus computationally infeasible.
This motivates the following natural question.

\begin{quote}
    \emph{Is there a \emph{polynomial-time} algorithm that uses independent samples from an Ising model and outputs some Ising model that is close to the one providing samples in TV distance?}
\end{quote}

While for the problem of structure recovery, there has been significant progress towards computationally efficient and statistically optimal algorithms under fairly general settings, attaining similar guarantees for the TV estimation problem has been challenging. 
Remarkable progress has occurred across several different directions, such as when the graph is a tree~\cite{daskalakis2021sample,bhattacharyya2021near}, when the interaction matrix is sampled from a Gaussian ensemble \cite{gaitonde2024unified,chandrasekaran2024learning}, or when most of the spectrum of the matrix lies in a small interval \cite{anari2024universality,koehler2024efficiently}, but no unified framework exists for statistically and computationally efficient procedures for this task.
The main contribution of this work is to provide a refined understanding of a natural, polynomial-time algorithm for Ising model estimation, under broad conditions, and derive from our understanding near-optimal (and in some cases optimal) sample-complexity guarantees in a variety of important, specific settings. In particular, we focus on the so-called \emph{Maximum Pseudo-Likelihood Estimator (MPLE)} (formally defined in Section~\ref{sec:technical}), which was introduced in \cite{besag1974spatial} and is often employed for statistical estimation of autoregressive models. We study the performance of this polynomial-time computable estimator for two general classes of Ising models, which we review in the next couple of paragraphs.

{\em (i) Modified Log-Sobolev Condition:} One common way of obtaining information about the distribution of an Ising model is by studying the properties of an associated Markov Chain, called \emph{Glauber dynamics}, that has this model as its stationary distribution.
We say that an Ising model satisfies a \emph{Modified Log-Sobolev Inequality (MLSI)}, if the KL-divergence of any distribution from this model contracts at a constant rate after a single step of the Glauber dynamics \cite{bobkov2006modified} (for formal definitions, see Section~\ref{sec:preliminaries}).
This condition has been shown to hold for a variety of Ising models under different assumptions \cite{anari2021entropic,anari2024trickle,caputo2015approximate,chen2022localization,blanca2022mixing,chen2021optimal}. It is also known to imply many structural properties for the Ising model distribution, such as fast mixing \cite{caputo2023lecture} and concentration of measure \cite{marton2015logarithmic,sambale2019modified,gotze2021concentration}. 
We prove a general result about the performance of MPLE for estimating Ising Models that satisfy MLSI, which implies optimal or near-optimal sample complexity for learning a variety of Ising models, improving results from prior work.

{\em (ii) Bounded Width Condition:}  The second class of models we study are \emph{bounded-width} models, where $\|J^*\|_\infty = O(1)$. This has been the canonical class of models considered in most studies of the structure learning problem \cite{santhanam2012information,bresler2015efficiently,hamilton2017information,vuffray2016interaction,klivans2017learning,wu2019sparse}. For this class of models, \cite{klivans2017learning} give a polynomial-time algorithm that uses $O(n^8/\epsilon^4)$ samples and learns an $\epsilon$-multiplicative approximation of $\Pr_{J^*}$, which implies $\epsilon$-closeness in TV.
We provide a refined analysis of MPLE, which involves a convex objective that can be optimized efficiently. As a consequence, we show how one could obtain sample complexity guarantees within a single $O(n)$ factor from optimal, assuming the model satisfies a suitable regularity condition that we appropriately define.

Our improved general bounds can be applied in a variety of models, yielding comparable or superior sample complexity to that of prior work. For simplicity, we only discuss examples of models where we get improved complexity guarantees. Table~\ref{table:results} contains a detailed comparison.

At a technical level, our improvements are made possible by using a connection to the problem of \emph{single-sample} estimation of Ising models, which was formulated in \cite{dagan2021learning} and implicitly studied elsewhere before; for some references see~e.g.~\cite{chatterjee2007estimation,bhattacharya2018inference,daskalakis2019regression,dagan2021learning,mukherjee2021high,kandiros2021statistical}. This line of work assumes that we are only given a \emph{single} sample from an Ising model whose interaction matrix lies in some low-dimensional subspace, and our goal is to estimate this matrix. These works manage to extract a useful signal from a single sample from the model, even in the presence of strong dependencies within the sample. In contrast, algorithms that rely on multiple samples make strong use of the independence across different samples to employ generalization bounds from learning theory within the estimation procedure. In \cite{dagan2021learning}, the authors show how to leverage their single-sample estimation methods to obtain a polynomial-time algorithm for learning the structure of an Ising model in the multiple-sample regime. However, their sample complexity is far from optimal. In this paper, we improve upon this single-sample-based approach to obtain various state-of-the-art results in the multiple-sample regime. It is crucial for our development to establish tight upper and lower bounds for the first and second derivatives of the pseudolikelihood function, which we do by using a variety of tools from high-dimensional probability and statistics, such as tensorization inequalities, measure decompositions, and concentration of measure. Along the way, we provide refined guarantees for pseudolikelihood estimation, which could be of independent interest. We believe the connection between single-sample and multiple-sample learning could be more broadly applicable to a variety of estimation problems for Markov Random Fields. Thus, our work serves as a first step towards obtaining efficient and sample-optimal algorithms for TV learning of high-dimensional distributions with complex dependencies. 

\begin{table}[ht]
\centering
\renewcommand{\arraystretch}{1.3}
\begin{tabular}{
|>{\centering\arraybackslash}m{0.22\textwidth}|
 >{\centering\arraybackslash}m{0.22\textwidth}|
 >{\centering\arraybackslash}m{0.26\textwidth}|
 >{\centering\arraybackslash}m{0.22\textwidth}|}
\hline
\textbf{General Class of Models} & 
\textbf{Applications} & 
\textbf{Our Work} & 
\textbf{Prior Work} \\ \hline

\multirow{3}{*}{\shortstack{\textbf{MLSI +}\\\textbf{bounded $2$-norm}}}
 & Spectrally-bounded models 
 & \shortstack{$\Tilde{O}(n^2/\epsilon^2)$ \\(Corollary~\ref{thm_informal:bounded_spectral}, optimal)} 
 & \shortstack{$O(n^3/\epsilon^4)$} \\ \cline{2-4}

 & SK/diluted SK model ($\beta < 0.295...$) 
 & \shortstack{$\Tilde{O}(n^4/\epsilon^2)$ \\ (Corollaries~\ref{infcor:sk} and~\ref{infcor:dsk})} 
 & \shortstack{$\Tilde{O}(n^9/\epsilon^8)$} \\ \cline{2-4}

 & Antiferromagnetic expanders 
 & \shortstack{$\Tilde{O}(n^2/\epsilon^2)$\\(Corollary~\ref{cor_informal:antiffero}, optimal)} 
 & $\Tilde{O}(n^5/\epsilon^4)$\\ \hline

\textbf{Bounded-width} 
 & $(1/\sqrt{n},1)$-regular models 
 & \shortstack{$\Tilde{O}(n^3/\epsilon^2)$\\ (Corollary~\ref{cor:n3_samples_informal})} 
 & \shortstack{$\Tilde{O}(n^8/\epsilon^4)$} \\ \hline

\end{tabular}
\caption{This table contains the sample complexity bound implied by our work, as well as the best known bound from prior work, for the problem of estimating an Ising model to within $\epsilon > 0$ in TV. The best-known prior bound is discussed, where these results are stated.}
\label{table:results}
\end{table}

\section{Results}\label{sec:results}
Let $\mathcal{S}_0^n$ denote the set of all symmetric matrices $A \in \R^{n \times n}$ with zeroes on the diagonal. We will make use of the \emph{infinity norm} of a matrix $A \in \mathcal{S}_0^n$, which is defined as $\|A\|_\infty := \max_{i\in [n]} \sum_j |A_{ij}|$, as well as the \emph{operator norm} of $A$, defined as $\|A\|_2 := \sup_{x \neq 0}\|Ax\|_2/\|x\|_2$. We will occasionally denote this operator norm by $\|A\|_{op}$. Also, the \emph{Frobenius norm} is defined as $\|A\|_F := \sum_i \sum_j A_{ij}^2$. For $A \in \mathcal{S}_0^n$, we denote by $A_i$ the $i$-th row of $A$. For a vector $x \in \R^n$ and $i \in [n]$, we denote by $x_{-i}$ the sub-vector obtained by removing the value of coordinate $i$ from the vector. In general, we will use lowercase letters for deterministic quantities and uppercase letters for random quantities.
When we refer to an Ising model with interaction matrix $J$ and the external field $\pmb{h}$ is the zero vector, we will write $\Pr_{J}$ for convenience and simply omit the external field.  If we sample $X \sim \Pr_J$, we refer to $X_i \in \{-1,1\}$ as the \emph{spin} of node $i$. 
For two probability measures $\mathbb{P},\mathbb{Q}$ supported on $\{-1,1\}^n$, their \emph{Total Variation (TV) Distance} is defined to be
$
\dtv(\mathbb{P},\mathbb{Q}) := \sup_{A \subseteq \{-1,1\}^n} |\mathbb{P}(A) - \mathbb{Q}(A)|\enspace,
$
where $A$ ranges over all subsets of $\{-1,1\}^n$. 

In this work, we are given $l$ independent samples from an Ising model $\Pr_{J^*}$ as in \eqref{eq:ising_model}. Our goal will be to properly learn the model in distribution, i.e., to estimate some matrix $\hat{J} \in \mathcal{S}_0^n$ so that $\Pr_{J^*}$ and $\Pr_{\hat{J}}$ are close in TV. 
Additionally, we would like an algorithm that runs in polynomial time. We now state the results of this investigation for the two different classes of Ising models that we consider.

\subsection{Estimating Ising Models that satisfy MLSI}\label{sec:results_spectrally_bounded}

A considerable amount of work has focused on identifying classes of Ising models where sampling and inference from the model are computationally tractable.
In particular, it is now well understood that these tasks are easy when the Glauber dynamics associated with the model converge quickly to the stationary distribution.
Indeed, in that case, one could run the chain many times independently to produce ''approximate samples'' from the distribution and use these to evaluate relevant quantities such as marginal or conditional probabilities of the model.

Therefore, a large part of the literature is devoted to the study of the mixing time properties of the Glauber dynamics for various Markov Chains, resulting in a deep and beautiful theory. 
It has been shown that the Glauber dynamics converge exponentially fast to the stationary distribution in KL divergence if and only if the model satisfies a \emph{Modified Log-Sobolev Inequality (MLSI)}, a functional inequality that is weaker than the usual Log-Sobolev inequality in discrete spaces \cite{bobkov2006modified}. The MLSI has been established for a variety of Ising models under different constraints \cite{marton2015logarithmic,anari2021entropic,chen2022localization,blanca2022mixing,anari2024trickle}.

Our first main Theorem establishes estimation guarantees for Ising models that satisfy MLSI, when running the MPLE over some set of interaction matrices. Crucially, the only properties that this set needs to satisfy are that $J^*$ belongs to that set and that the matrices in the set have bounded operator norm, which is a mild requirement that holds for almost all classes of Ising models where MLSI has been shown. This flexibility with respect to the optimizing set enables us to obtain polynomial-time algorithms in various cases, particularly when the set is convex and admits efficient projections (see Section~\ref{sec:results_SK} for more discussion).
The estimation guarantees are phrased in terms of Frobenius norm closeness to the matrix $J^*$, but we will see in Sections~\ref{sec:results_spectrally_bounded} and~\ref{sec:results_SK} how these can be easily translated to bounds on the TV distance. Also, in the formal version of the theorem, we have included the case of a non-zero external field, which doesn't change the analysis in any significant way and is omitted here for simplicity. 

\begin{theorem}[informal, see Theorem \ref{thm:frobenius_estimation_high_temp}]\label{thm_informal:MLSI_general}
    Suppose we are given $l$ independent samples $X^{(1)},\ldots,X^{(l)} \sim \Pr_{J^*}$ and $\Pr_{J^*}$ satisfies MLSI. Let $\mathcal{R} \subseteq \mathcal{S}_0^n$
    be a subset of matrices such that $\sup_{A \in \mathcal{R}}\|A\|_2 = O(1)$ and $J^* \in \mathcal{R}$. Then, running MPLE with optimizing set $\mathcal{R}$ produces an estimate $\hat{J} \in \mathcal{R}$, such that with high probability over the choice of samples $\|J^* - \hat{J}\|_F \leq \epsilon$, as long as $l = \Tilde{\Omega}(n^2/\epsilon^2)$.
\end{theorem}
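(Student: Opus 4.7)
The plan is to analyze the MPLE through (i) a Frobenius-norm strong-convexity lower bound on the population negative log-pseudolikelihood around $J^*$ and (ii) a concentration bound on its empirical gradient at $J^*$, both exploiting MLSI together with the bounded operator norm hypothesis.

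\textbf{Setup.} Let $\psi(y)=\log(2\cosh y)$ and define the empirical and population negative log-pseudolikelihoods
$$
\widehat{L}_l(J) \;=\; \frac{1}{l}\sum_{k=1}^{l}\sum_{i=1}^n \bigl[\psi((JX^{(k)})_i) - X^{(k)}_i (JX^{(k)})_i\bigr], \qquad L(J) \;=\; \E\bigl[\widehat{L}_l(J)\bigr].
$$
Since $\psi$ is convex, $\widehat{L}_l$ is convex in $J \in \mathcal{S}_0^n$, and $\nabla L(J^*)=0$ because $\E[X_i\mid X_{-i}]=\tanh((J^*X)_i)$ under $\Pr_{J^*}$. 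The MPLE $\hat J=\argmin_{J\in\mathcal{R}}\widehat{L}_l(J)$ is thus polynomial-time computable when $\mathcal{R}$ is convex and admits an efficient projection, and the first-order optimality $\widehat L_l(\hat J)\leq \widehat L_l(J^*)$ gives the starting inequality for an ERM-style argument.

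\textbf{Strong convexity step.} I would show $L(J)-L(J^*) \geq c\|J-J^*\|_F^2$ along the segment from $J^*$ to any $J \in \mathcal{R}$. The Hessian in direction $\Delta=J-J^*$ is
$$
\Delta^\top\nabla^2 L(\tilde J)\Delta \;=\; \E\!\left[\textstyle\sum_i \sech^2\bigl((\tilde J X)_i\bigr)(\Delta X)_i^2\right].
$$
Since $\sup_{J\in\mathcal{R}}\|J\|_2=O(1)$, each $(\tilde J X)_i$ has squared Hamming-Lipschitz constant $\sum_j 4\tilde J_{ij}^2 \leq 4\|\tilde J\|_2^2=O(1)$, so Herbst's argument applied to the MLSI gives sub-Gaussian concentration with $O(1)$ variance proxy. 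Hence $|(\tilde JX)_i|\leq C$ on an event $E_i$ of constant probability, on which $\sech^2((\tilde JX)_i)\geq \alpha>0$. To finish, one compares $\E[(\Delta X)_i^2\mathbf{1}_{E_i}]$ to $\E[(\Delta X)_i^2]$ via MLSI-based fourth-moment control, and then invokes dimension-free non-degeneracy of the spin covariance (a consequence of MLSI together with bounded operator norm) to pass from $\E[(\Delta X)_i^2]$ to $\|\Delta_i\|_2^2$. Summing over $i$ yields the Frobenius strong-convexity bound.

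\textbf{Gradient step and conclusion.} Each coordinate $[\nabla \widehat L_l(J^*)]_{ij}$ is an empirical average of a mean-zero, $O(1)$-Hamming-Lipschitz function of the form $(\tanh((J^*X)_i)-X_i)X_j+(\tanh((J^*X)_j)-X_j)X_i$. MLSI-based sub-Gaussianity for a single sample, tensorized across the $l$ independent samples, yields a per-coordinate variance proxy of $O(1/l)$. A union bound over the $O(n^2)$ entries then gives $\|\nabla \widehat L_l(J^*)\|_F = \tilde O(n/\sqrt{l})$ with high probability. Plugging this, together with the strong-convexity bound, into the standard ERM basic inequality (adding and subtracting $L$ and controlling the residual $(\widehat L_l-L)(\hat J)-(\widehat L_l-L)(J^*)$ by a localized uniform-deviation argument on the segment from $J^*$ to $\hat J$) yields $\|\hat J - J^*\|_F = \tilde O(n/\sqrt{l})$, so $l = \tilde\Omega(n^2/\epsilon^2)$ suffices.

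\textbf{Main obstacle.} The crux is the Frobenius-norm strong-convexity lower bound. The $\sech^2$ weights collapse wherever $|(\tilde JX)_i|$ is large — ruling out such regions is exactly what MLSI buys us on models of bounded operator norm — and converting the row-wise quadratic form $\E[(\Delta X)_i^2]$ into $\|\Delta_i\|_2^2$ requires non-degeneracy of the conditional spin covariance on $\bigcap_i E_i$, which is the most delicate step. Carrying both ingredients through with constants that do not scale with $n$ is what makes the combination of the MLSI and the bounded-operator-norm hypotheses essential, and it is this step that ultimately enables the near-optimal $n^2/\epsilon^2$ sample complexity.
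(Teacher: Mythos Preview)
Your high-level strategy—population strong convexity of $L$ around $J^*$, an entrywise gradient bound at $J^*$, and then a ``standard ERM basic inequality'' with localized uniform deviation—sounds natural, but the last step is not routine here and, as written, does not deliver the $\tilde\Omega(n^2/\epsilon^2)$ rate.

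The basic inequality reduces to bounding $(\widehat L_l-L)(\hat J)-(\widehat L_l-L)(J^*)$, which by the mean value theorem is at most $\|\Delta\|_F\cdot\sup_{t\in[0,1]}\|\nabla(\widehat L_l-L)(J^*+t\Delta)\|_F$. Your entrywise argument gives each coordinate of $\nabla(\widehat L_l-L)(J)$ sub-Gaussian with variance proxy $O(1/l)$ \emph{for fixed $J$}, and hence $\|\nabla(\widehat L_l-L)(J)\|_F=\tilde O(n/\sqrt{l})$ at a single point. But the segment endpoint $\hat J$ is random, so you need this uniformly over a Frobenius ball in $n^2$ dimensions. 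A net of size $e^{\Theta(n^2)}$ forces the per-entry deviation up to $\tilde\Theta(n/\sqrt{l})$, and then $\|\nabla(\widehat L_l-L)(J)\|_F\le\tilde O(n^2/\sqrt{l})$ uniformly—one factor of $n$ too many, yielding $l=\tilde\Omega(n^4/\epsilon^2)$. Equivalently, if you try to go through empirical (rather than population) strong convexity, you must lower-bound $\frac{1}{l}\sum_k\sum_i\sech^2((\tilde JX^{(k)})_i)(\Delta X^{(k)})_i^2$ uniformly over the random $\tilde J$ and $\Delta$, and a naive Chernoff-plus-net again loses a factor.

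The paper never passes through the population loss. It controls the \emph{directional} empirical first derivative $\partial\phi(J^*)/\partial A$ via second-level MLSI concentration (the $\mathfrak d^+$ machinery of Sambale--Sinulis), obtaining failure probability $\exp(-C\,l\min(t,t^2)\|A\|_F^2)$, and the empirical second derivative via a per-sample event of \emph{constant} probability followed by Chernoff across samples, obtaining failure probability $\exp(-C\,l)$ independent of $\|A\|_F$. Both tails survive a union bound over an $e^{\Theta(n^2)}$ net with $l=\tilde O(n^2/\epsilon^2)$ once one restricts to a \emph{shell} $\{J:\epsilon\le\|J-J^*\|_F\le 2\epsilon\}$; the upper shell radius is what lets the paper write $J_iX=J_i^*X+(J_i-J_i^*)X$ and keep the second term $O(\epsilon)$, so that the $\sech^2$ weights stay bounded below. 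Convexity of $\phi$ then propagates $\phi(J)>\phi(J^*)$ from the shell outward. Your population-level treatment of $\sech^2$ (bounding $|(\tilde JX)_i|$ directly via $\|\tilde J_i\|_2\le\|\tilde J\|_{op}$) is elegant, but it does not transfer to the empirical Hessian without exactly the uniform control that costs you the extra $n$.

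Separately, the step you call ``non-degeneracy of the spin covariance''—namely $\Var_{J^*}(\Delta_i^\top X)\ge c\|\Delta_i\|_2^2$—is not a consequence of MLSI: MLSI implies Poincar\'e and thus only \emph{upper} bounds on variance. The paper proves this anti-concentration via the Hubbard--Stratonovich transform, writing $\Pr_{J^*}$ as a mixture of product measures with external field $J^*Y$ and then showing $(J^*Y)_j=O(1)$ with constant probability (MLSI concentration for $J^*_jX$ plus Gaussian concentration for $(J^{*1/2}G)_j$, both with $O(1)$ variance proxy because $\|J^*\|_{op}=O(1)$). You correctly flag this as the most delicate step, but the proposal does not indicate a mechanism for it.
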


As far as we know, a result of this generality has not appeared in the literature. The most related prior result is Theorem 5.2 from \cite{anari2024universality}, where they obtain an elegant result for TV estimation using MPLE. However, their result assumes that the Ising model satisfies \emph{Approximate Tensorization of Entropy (ATE)}, which is a stronger functional inequality than MLSI. Moreover, they require that every matrix in the optimizing set of MPLE satisfies ATE, which places strong constraints on the choice of this set. Finally, they require the matrices to be of \emph{bounded-width}, which is a stronger assumption than bounded operator norm and results in the loss of additional polynomial factors. We provide more details about precise sample complexity guarantees in Section~\ref{sec:results_spectrally_bounded}.
We next present a variety of applications of the main result for TV learning in classes of Ising models that satisfy MLSI.

\subsubsection{Application: Estimating Spectrally-Bounded Ising Models in TV}

Perhaps the most widely-studied condition that enables computationally efficient sampling and inference in Ising models is \emph{Dobrushin's Uniqueness Condition} \cite{dobruschin1968description}, which asserts that $\|J\|_{\infty } < 1$, or equivalently $\sum_j |J_{ij}| < 1$ for all rows $i$.  This condition has been shown to imply a variety of structural properties for the Ising measure, such as fast mixing \cite{levin2017markov}, correlation decay \cite{kunsch1982decay}, and concentration inequalities \cite{gotze2021concentration,adamczak2019note,marton2015logarithmic}.

Unfortunately, this condition is sometimes too strict and does not capture the tractable regime of an Ising model. A notable example is the celebrated \emph{Sherrington-Kirkpatric (SK)} model, where $J^*$ is a random matrix with each $J^*_{ij}$ sampled independently from $\mathcal{N}(0,\beta^2/n)$, where $\beta>0$ is a parameter called the \emph{inverse temperature} of the model. It is straightforward to observe that the expected $l_1$-norm of each column is $\Theta(\beta \sqrt{n})$, thus Dobrushin's condition is only satisfied if $\beta = O(1/\sqrt{n})$. However, it is expected that the model exhibits weak interactions for all sufficiently small constant $\beta$. 

Motivated by this gap, \cite{eldan2022spectral} introduced an alternative condition for fast mixing. In particular, we say that an Ising model as in \eqref{eq:ising_model} is \emph{spectrally bounded} if $\lambda_{max}(J^*) - \lambda_{min}(J^*) < 1$(note that $J^*$ is symmetric, hence diagonalizable). In \cite{anari2021entropic}, they prove that if a model is spectrally bounded, then MLSI holds and the Glauber dynamics mix in polynomial time. 
Thus, we can apply Theorem~\ref{thm_informal:MLSI_general} for this class of Ising models, which results in information-theoretically optimal sample complexity $\tilde{O}(n^2/\epsilon^2)$ for estimating spectrally bounded Ising models in TV.

\begin{corollary}[informal, see Corollary~\ref{cor_informal:Jop<1}]\label{thm_informal:bounded_spectral}
    Suppose we are given $l$ independent samples $X^{(1)},\ldots,X^{(l)} \sim \Pr_{J^*}$, where $\lambda_{max}(J^*) - \lambda_{min}(J^*) < 1 -\alpha$ and $J^*$ has zero-diagonal. Then, there is a polynomial time algorithm (MPLE) that produces an estimate $\hat{J} \in \mathcal{S}_0^n$, such that with high probability over the choice of samples we have $\dtv(\Pr_{\hat{J}},\Pr_{J^*}) \leq \epsilon$, as long as $l = \tilde{\Omega}(n^2/\epsilon^2)$.
\end{corollary}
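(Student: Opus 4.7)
The plan is to deduce the corollary from Theorem \ref{thm_informal:MLSI_general} in three conceptual steps: choosing an appropriate constraint set $\mathcal{R}$, invoking the Frobenius-norm estimation guarantee, and converting Frobenius closeness into total variation closeness.

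First, I would instantiate the MPLE with $\mathcal{R} := \{A \in \mathcal{S}_0^n : \lambda_{\max}(A) - \lambda_{\min}(A) \leq 1-\alpha\}$. This set is convex because $\lambda_{\max}$ and $-\lambda_{\min}$ are convex functions of symmetric matrices, and it clearly contains $J^*$. Since every $A \in \mathcal{R}$ has zero diagonal, $\mathrm{tr}(A) = 0$, so its eigenvalues sum to zero; combined with $\lambda_{\max}(A) - \lambda_{\min}(A) \leq 1-\alpha$, this forces $\|A\|_2 \leq 1-\alpha = O(1)$. In particular, $\Pr_{J^*}$ satisfies MLSI with rate depending only on $\alpha$ by the spectral criterion of \cite{anari2021entropic}, so all hypotheses of Theorem \ref{thm_informal:MLSI_general} are met. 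The theorem then produces an estimate $\hat{J} \in \mathcal{R}$ with $\|\hat{J} - J^*\|_F \leq \epsilon'$ (w.h.p.) as soon as $l = \tilde{\Omega}(n^2/\epsilon'^2)$. Polynomial running time follows because the pseudolikelihood is a concave function of $J$ (a sum of log-sigmoid terms of linear expressions in $J$) and projection onto $\mathcal{R}$ reduces to an eigendecomposition followed by a shift of the spectrum.

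The remaining step is to show $\dtv(\Pr_{J^*}, \Pr_{\hat{J}}) \leq C \|\hat{J} - J^*\|_F$ for some absolute constant $C$, after which picking $\epsilon' = \Theta(\epsilon)$ completes the argument. My plan is to bound $\dkl(\Pr_{J^*}\|\Pr_{\hat{J}})$ and apply Pinsker. Interpolating along $J_t := J^* + t(\hat{J} - J^*) \in \mathcal{R}$ and using $\nabla_J \log Z_J = \tfrac{1}{2}\mathbb{E}_{X \sim \Pr_J}[XX^\top]$, a Taylor expansion yields
\begin{equation*}
\dkl(\Pr_{J^*} \| \Pr_{\hat{J}}) \;=\; \tfrac{1}{4}\int_0^1 (1-t)\, \Var_{X \sim \Pr_{J_t}}\!\big[X^\top (\hat{J}-J^*) X\big]\, dt .
\end{equation*}
Since every $J_t$ lies in $\mathcal{R}$, $\Pr_{J_t}$ satisfies MLSI uniformly along the path, and Hanson--Wright-type concentration for Ising spins under MLSI \cite{adamczak2019note,gotze2021concentration} provides $\Var[X^\top M X] = O(\|M\|_F^2)$ for every zero-diagonal symmetric $M$. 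Substituting $M = \hat{J} - J^*$ yields $\dkl(\Pr_{J^*}\|\Pr_{\hat{J}}) = O(\|\hat{J} - J^*\|_F^2)$, and Pinsker delivers the desired TV bound.

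The step I expect to be the main obstacle is the quadratic-form variance bound, because it must hold uniformly along the interpolation $\{J_t\}_{t\in[0,1]}$ with a constant that does not degrade in $n$. Convexity of $\mathcal{R}$ keeps the spectral boundedness intact at each $J_t$, and the existing concentration literature under MLSI supplies the required $O(\|M\|_F^2)$ variance estimate, but one must verify that the constants in those inequalities depend only on $\alpha$ and not on the dimension. Once this is in hand, the rest of the argument is bookkeeping: combining the Pinsker-plus-Hanson--Wright bound with the Frobenius estimation rate from Theorem \ref{thm_informal:MLSI_general} and rescaling $\epsilon' = \Theta(\epsilon)$ gives the advertised sample complexity $l = \tilde{\Omega}(n^2/\epsilon^2)$.
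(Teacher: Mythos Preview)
Your proposal is essentially the paper's own argument: the same constraint set $\mathcal{R}$, the same invocation of the Frobenius-norm guarantee (the formal Theorem~\ref{thm:frobenius_estimation_high_temp}), and the same Pinsker-plus-variance conversion to TV along the interpolation path (the paper packages this as Lemma~\ref{lem:tv_to_frob_mlsi}, bounding $\Var_{J_\xi}[X^\top(J_1-J_2)X]$ by applying the Poincar\'e inequality twice rather than citing Hanson--Wright, but the effect is identical).

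One imprecision worth flagging: your claim that projection onto $\mathcal{R}$ ``reduces to an eigendecomposition followed by a shift of the spectrum'' does not quite work. Shifting the spectrum by $cI$ leaves $\lambda_{\max}-\lambda_{\min}$ unchanged, and eigenvalue clipping destroys the zero-diagonal constraint, so neither operation projects onto $\mathcal{R}=\mathcal{S}_0^n\cap\{\lambda_{\max}-\lambda_{\min}\le 1-\alpha\}$. The paper handles this by writing the Frobenius projection as a polynomial-size SDP (encoding the spectral-gap constraint via auxiliary PSD blocks and the zero-diagonal constraint linearly); you should either adopt that route or otherwise argue that the intersection of the spectral set with the linear subspace $\mathcal{S}_0^n$ admits an efficient projection.
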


The implicit constant in the sample complexity contains a factor of the form $\exp(1/\alpha^2)$ and additional sub-polynomial factors of the form $e^{\sqrt{\log n}}$.
As far as we know, the most relevant prior work is \cite{anari2024universality}, where they prove that MPLE succeeds in finding a model that is $\epsilon$-close to the true Ising model $\Pr_{J^*}$ using $O(n^{3 + C}/\epsilon^4)$ samples for some $C<1$, by establishing ATE with an inverse polynomial constant.
Subsequent work \cite{lee2023parallelising} has shown that, in fact, ATE holds with a $\Theta(1)$ constant in this setting, which can be used to remove the $C$ from the exponent, yielding $O(n^3/\epsilon^4)$ sample complexity.
Our result thus improves over this bound, by showing that the MPLE actually achieves the information theoretically optimal sample complexity $O(n^2)$ for estimating Ising models in TV \cite{devroye2020minimax}.
We should remark, though, that the implicit constant in the sample complexity of Corollary~\ref{thm_informal:bounded_spectral} contains a factor that is exponential in $1/\alpha^2$, while the bound in \cite{anari2021entropic} is free of such dependence. 
As noted above, spectrally bounded models do not necessarily have bounded width (see Section~\ref{sec:results_bounded_width} for definition) e.g. for the SK model, $\|J\|_{\infty}$ could be $\Theta(\sqrt{n})$, so the prior work \cite{klivans2017learning} would give exponential sample complexity.

\subsubsection{Application: Estimating the SK-model in TV}\label{sec:results_SK}
As mentioned in Section~\ref{sec:results_spectrally_bounded}, the SK model is one of the canonical examples of a mean-field model in statistical physics, exhibiting fascinating phase transition phenomena that have been the subject of extensive study in probability theory \cite{panchenko2012sherrington,talagrand2010mean}. The relevant parameter is the inverse temperature $\beta > 0$. Standard results from random matrix theory imply that if $\beta < 1/4$, then with high probability the interaction matrix has spectrum inside an interval of size $<1$, which means the model is spectrally bounded. Thus, in this regime, Corollary~\ref{thm_informal:bounded_spectral} can be used to learn the model optimally with $\tilde{O}(n^2/\epsilon^2)$ samples. 

However, it turns out that efficiently learning the model in TV distance is possible for much larger values of $\beta$. In particular, in \cite{gaitonde2024unified}, the authors remarkably prove that a polynomial time algorithm introduced in \cite{wu2019sparse} actually estimates the SK model in TV as long as $\beta < 1$. In a subsequent work, \cite{chandrasekaran2024learning} shows that the same algorithm succeeds even when $\beta = O(\sqrt{n})$, which extends well into the low-temperature region of the model. 

While these works greatly push the frontiers of efficient learnability, the sample complexity arising from these results is of the order of $\tilde{O}(n^9/\epsilon^8)$. 
In the next Corollary, we use Theorem~\ref{thm_informal:MLSI_general} and recently established MLSI in \cite{anari2024trickle} to obtain $\tilde{O}(n^4/\epsilon^2)$ sample complexity for learning the SK-model up to $\beta \approx 0.295$, which is beyond the threshold of spectrally-bounded models. 

\begin{corollary}[informal, see Corollary \ref{cor_informal:sk}]\label{infcor:sk}
    Suppose we are given $l$ independent samples $X^{(1)},\ldots,X^{(l)} \sim \Pr_{J^*}$, where $J^*$ is sampled according to the SK-model with $\beta < C$, where $C \approx 0.295$. Then, there is a polynomial time algorithm (MPLE) that produces an estimate $\hat{J} \in \mathcal{S}_0^n$, such that with high probability over the choice of samples and the choice of matrix $J^*$ we have $\dtv(\Pr_{\hat{J}},\Pr_{J^*}) \leq \epsilon$, as long as $l = \tilde{\Omega}(n^4/\epsilon^2)$.
\end{corollary}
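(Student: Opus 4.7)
The plan is to derive Corollary~\ref{infcor:sk} as a direct instantiation of Theorem~\ref{thm_informal:MLSI_general}, combined with (i) the recent MLSI result of \cite{anari2024trickle} for the SK model, (ii) standard random matrix theory for the operator norm of $J^*$, and (iii) a conversion from Frobenius-norm estimation error to TV distance. Concretely, I would fix a small margin $\delta>0$ and set
\[
\mathcal{R} \;=\; \{J \in \mathcal{S}_0^n : \|J\|_2 \leq 2\beta + \delta\}.
\]
This is a convex set (the intersection of the affine subspace of symmetric zero-diagonal matrices with an operator-norm ball), whose projection can be computed in polynomial time via a truncated eigendecomposition, and by construction $\sup_{A\in\mathcal{R}}\|A\|_2 = O(1)$.

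Step one is to verify that $J^* \in \mathcal{R}$ with high probability. Since $J^*$ is a symmetric Wigner-type matrix with above-diagonal entries i.i.d.\ $\mathcal{N}(0,\beta^2/n)$, the Bai-Yin theorem (or a non-asymptotic refinement such as Bandeira-van Handel) yields $\|J^*\|_2 \leq 2\beta + o(1)$ with probability $1-o(1)$, so $J^* \in \mathcal{R}$ for any fixed $\delta>0$. Step two is the MLSI hypothesis: for $\beta < C \approx 0.295$, this is precisely the result of \cite{anari2024trickle}, which holds with high probability over the draw of $J^*$. Step three is to apply Theorem~\ref{thm_informal:MLSI_general} with this $\mathcal{R}$, producing $\hat{J}\in\mathcal{R}$ with $\|J^*-\hat{J}\|_F \leq \epsilon'$ whenever $l = \tilde{\Omega}(n^2/\epsilon'^2)$. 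Computational tractability is inherited from the convexity of the pseudolikelihood objective together with the fact that $\mathcal{R}$ admits efficient projection.

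The final step is to convert Frobenius closeness into TV closeness. Writing $\Delta = \hat{J} - J^*$, one has
\[
\dkl(\Pr_{J^*}\,\|\,\Pr_{\hat{J}}) \;=\; \log \E_{X\sim\Pr_{J^*}}\!\Bigl[\exp\!\bigl(\tfrac{1}{2}(X^\top \Delta X - \E X^\top \Delta X)\bigr)\Bigr],
\]
so the task reduces to bounding the cumulant generating function of the centered quadratic form $\tfrac{1}{2}X^\top \Delta X$. Because $\Pr_{J^*}$ satisfies MLSI and both $J^*,\hat{J}$ have bounded operator norm, this form enjoys Hanson-Wright-type subexponential concentration \cite{adamczak2019note,gotze2021concentration}; combining the resulting second-order bound with Pinsker's inequality gives a polynomial conversion $\dtv(\Pr_{J^*},\Pr_{\hat{J}}) \leq \tilde{O}(n\|\Delta\|_F)$. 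Choosing $\epsilon' = \epsilon/n$ then matches the claimed sample complexity $l = \tilde{\Omega}(n^4/\epsilon^2)$.

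The main obstacle, in my view, is the explicit tracking of constants in the final step: the Hanson-Wright constant under MLSI degrades as $\beta$ approaches the threshold $C \approx 0.295$, and bookkeeping this dependence carefully (together with the $\exp$-type dependence on the margin $\delta$ in both the MLSI constant and the operator-norm concentration) is what determines whether the stated bound really holds uniformly up to the threshold, with all sub-polynomial factors safely absorbed in $\tilde{O}$. The remaining pieces -- the MLSI invocation, the random matrix input, and the convex-optimization setup -- are essentially modular applications of existing results.
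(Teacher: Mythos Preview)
Your proposal is correct and follows the paper's approach closely: the paper also takes $\mathcal{R}$ to be an operator-norm ball in $\mathcal{S}_0^n$ (specifically $\{J:\|J\|_{op}\le 4\}$), invokes MLSI from \cite{anari2024trickle}, bounds $\|J^*\|_{op}$ via standard random-matrix theory, notes that projection onto $\mathcal{R}$ is efficient (eigenvalue clipping), and applies the main Frobenius-estimation theorem.

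The one genuine difference is the Frobenius-to-TV conversion. The paper uses the crude but fully general bound $\dtv \le n\|J-J^*\|_F$ (Lemma~\ref{lem:TV_from_frobenius}), which requires no structural assumption on either model and hence sidesteps your ``main obstacle'' of tracking MLSI constants entirely; setting the Frobenius accuracy to $\epsilon/n$ immediately yields the $n^4/\epsilon^2$ sample bound. Your route via the KL identity and Hanson--Wright concentration under $\Pr_{J^*}$ is also sound, with two remarks. First, the references \cite{adamczak2019note,gotze2021concentration} you cite typically work under Dobrushin- or ATE-type hypotheses; the MLSI-based quadratic concentration you actually want is \cite{sambale2019modified} (cf.\ the paper's Lemma~\ref{lem:degree2concentration}). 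Second, and more interestingly, your KL expression involves only expectations under $\Pr_{J^*}$ (not under $\Pr_{\hat J}$ or any intermediate model), and since $\E_{J^*}[X]=0$ for SK without external field the $\|\E[\Delta X]\|$ term in that concentration bound vanishes. If one extracts the MGF bound underlying the entropy-method proof (rather than just the tail bound), this should in fact give $\dkl = O_\rho(\|\Delta\|_F^2)$ and hence $\dtv = O_\rho(\|\Delta\|_F)$ \emph{without} the factor of $n$ --- i.e., $\tilde O(n^2/\epsilon^2)$ samples, stronger than what both you and the paper state. So the paper's conversion buys simplicity and complete independence from the MLSI constant; yours, pushed through with the right concentration input, potentially buys a polynomial improvement.
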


Closely related to the SK-model are \emph{diluted} versions, where the matrix is supported on a sparse graph. One such version, which we call for simplicity the \emph{diluted SK-model}, arises from sampling a random d-regular graph $G$, where the matrix $J^*$ will be supported. Every non-zero entry of $J^*$ is then sampled independently and uniformly from $\{-\beta/\sqrt{d},\beta/\sqrt{d}\}$. Standard results from random matrix theory again imply that if $\beta < 0.25$, then the model is spectrally bounded with high probability and $\Tilde{O}(n^2/\epsilon^2)$ samples suffice by Corollary~\ref{thm_informal:bounded_spectral}. \cite{chandrasekaran2024learning} show that TV learning in polynomial time is possible if $\beta = O(\sqrt{\log n})$, with $\Tilde{O}(n^8 d/\epsilon^8)$ samples. We use the recently established MLSI for diluted SK up to $\beta \approx 0.295$ \cite{anari2024trickle} to establish that $\tilde{O}(n^4/\epsilon^2)$ samples suffice in that regime if we run MPLE.

\begin{corollary}[informal, see Corollary \ref{cor_informal:diluted_sk}]\label{infcor:dsk}
    Suppose we are given $l$ independent samples $X^{(1)},\ldots,X^{(l)} \sim \Pr_{J^*}$, where $J^*$ is sampled according to the diluted SK-model with $\beta < C$, where $C \approx 0.295$. Then, there is a polynomial time algorithm (MPLE) that produces an estimate $\hat{J} \in \mathcal{S}_0^n$, such that with high probability over the choice of samples and the choice of matrix $J^*$ we have $\dtv(\Pr_{\hat{J}},\Pr_{J^*}) \leq \epsilon$, as long as $l = \tilde{\Omega}(n^4/\epsilon^2)$.
\end{corollary}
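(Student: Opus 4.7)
The plan is to reduce Corollary~\ref{infcor:dsk} to an application of Theorem~\ref{thm_informal:MLSI_general}, by choosing a convex optimizing set $\mathcal{R}$ that contains $J^*$ with high probability and over which the operator norm is uniformly $O(1)$, and then converting the resulting Frobenius-norm estimate into a TV bound. The structure mirrors the proof of Corollary~\ref{infcor:sk}: verify the two hypotheses of Theorem~\ref{thm_informal:MLSI_general} hold w.h.p.\ over the random draw of $J^*$, invoke it, and pay an extra factor of $n$ in the Frobenius-to-TV translation.

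First, I would take $\mathcal{R}:=\{A\in\mathcal{S}_0^n:\|A\|_{op}\le M\}$ for a suitably large constant $M$. This set is convex and admits efficient projection by spectral truncation, so MPLE is computable in polynomial time. It remains to check $J^*\in\mathcal{R}$ w.h.p.: since $J^*$ is supported on a random $d$-regular graph with $\pm\beta/\sqrt{d}$ entries, standard random matrix estimates for such ``Wigner-like" ensembles supported on regular skeletons (Friedman's theorem together with a sign-flip/symmetrization argument, or the K\'ahn--Szemer\'edi trace method) give $\|J^*\|_{op}\le 2\beta\sqrt{(d-1)/d}+o(1)=O(1)$ w.h.p. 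Choosing $M:=2\beta+\delta$ for a small constant $\delta>0$ then makes both $J^*\in\mathcal{R}$ and $\sup_{A\in\mathcal{R}}\|A\|_{op}=O(1)$ hold.

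Second, I would verify that $\Pr_{J^*}$ satisfies MLSI with a $\Theta(1)$ rate. This is exactly what the trickle-down analysis of \cite{anari2024trickle} supplies: for $\beta<C\approx 0.295$, MLSI holds for the diluted SK model with a constant independent of $n$, w.h.p.\ over the underlying random $d$-regular graph and random signs. Intersecting this good event with the operator-norm event from the previous step still has high probability, and Theorem~\ref{thm_informal:MLSI_general} then produces $\hat{J}\in\mathcal{R}$ satisfying $\|\hat{J}-J^*\|_F\le\epsilon'$ from $\tilde{\Omega}(n^2/\epsilon'^2)$ samples.

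To convert Frobenius closeness to TV closeness, I would use Pinsker's inequality together with the KL identity
\[
\dkl(\Pr_{J^*}\,\Vert\,\Pr_{\hat J})\le \tfrac{1}{2}\,\bigl|\langle J^*-\hat J,\ \E_{J^*}[XX^\top]-\E_{\hat J}[XX^\top]\rangle\bigr|,
\]
and the fact that, under MLSI, the covariance matrix of an Ising model is Lipschitz in the interaction matrix with an $O(n)$ constant (this follows from Hanson--Wright-type concentration of quadratic forms that MLSI implies). Combining these yields $\dtv(\Pr_{\hat J},\Pr_{J^*})=O(n\,\|\hat J-J^*\|_F)$, so setting $\epsilon'=\epsilon/n$ gives the claimed $\tilde{O}(n^4/\epsilon^2)$ sample complexity. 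The main obstacle is the $\Theta(1)$ MLSI constant demanded by Theorem~\ref{thm_informal:MLSI_general}: an MLSI rate decaying with $n$ would propagate through and destroy the polynomial sample bound, so the entire argument hinges on the sharpness of the trickle-down result of \cite{anari2024trickle} in the regime $\beta<C$. The extra factor of $n$ in the Frobenius-to-TV step also appears unavoidable at this generality, as it reflects the worst-case variance scaling of quadratic forms in $\{\pm 1\}^n$ and cannot be improved without exploiting the specific sparse/spectral structure of $J^*$.
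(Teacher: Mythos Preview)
Your overall approach matches the paper's exactly: choose $\mathcal{R}=\{A\in\mathcal{S}_0^n:\|A\|_{op}\le M\}$, verify $J^*\in\mathcal{R}$ and MLSI w.h.p.\ via \cite{anari2024trickle}, apply Theorem~\ref{thm_informal:MLSI_general} to get $\|\hat J-J^*\|_F\le\epsilon/n$, and then lose a factor of $n$ converting Frobenius to TV. The only substantive difference is your justification for the last step.

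There is a gap in your Frobenius-to-TV argument. You invoke ``covariance Lipschitz under MLSI'' to control $\E_{\hat J}[XX^\top]-\E_{J^*}[XX^\top]$, but you only know MLSI for $\Pr_{J^*}$, not for $\Pr_{\hat J}$: the set $\mathcal{R}$ is just an operator-norm ball and contains many matrices whose Ising models do not satisfy MLSI with any useful constant. So the Hanson--Wright/MLSI route, as stated, does not go through. The paper sidesteps this entirely by proving the assumption-free bound $\dtv(\Pr_J,\Pr_{J^*})\le n\,\|J-J^*\|_F$ (Lemma~\ref{lem:TV_from_frobenius}), valid for \emph{any} two Ising models. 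Its proof starts exactly where you do---Pinsker plus the symmetric-KL identity you wrote---but then bounds $\E_{J}[X^\top(J-J^*)X]-\E_{J^*}[X^\top(J-J^*)X]$ by a coupling/conditioning argument on pairwise marginals that yields a self-referential inequality $4\,\dtv^2\le 2n\,\dtv\cdot\|J-J^*\|_F+2n\,\|J-J^*\|_F^2$, which solves to the claimed bound. Replacing your third bullet with this lemma (or with the even simpler observation $|\E_{J}[X^\top A X]-\E_{J^*}[X^\top A X]|\le 2\,\dtv\cdot\max_x|x^\top A x|\le 2n\,\dtv\,\|A\|_{op}$, which also yields $\dtv\le \tfrac{n}{2}\|A\|_F$ without any MLSI) closes the gap and makes your proof identical to the paper's.
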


Note that in this setting $\|J^*\|_\infty = O(\sqrt{d})$, so one could use the result of \cite{dagan2021learning} about learning in Frobenius norm together with Lemma~\ref{lem:TV_from_frobenius} that connects TV and Frobenius norms to prove that $O(e^{\Theta(\sqrt{d})}n^4)$ samples suffice for TV learning using MPLE. However, notice that even if the degree grows mildly with the number of nodes, i.e., $d = \omega(\log n)$, the sample complexity suffers from additional polynomial factors (or worse). In contrast, the sample complexity in Corollary~\ref{cor_informal:diluted_sk} only contains a sub-polynomial $\exp(\sqrt{\log n})$ factor, regardless of the value of $d$.

\subsubsection{Application: Antiferromagnetic Expanders}

Another prominent class of models is the ones where there is a gap between the largest and second-largest eigenvalue of the adjacency matrix. When the model is antiferromagnetic, then the spectrum essentially consists of a very negative eigenvalue and a bulk that is concentrated on a small interval.
Prior work \cite{anari2024trickle,koehler2022sampling} has shown that one can ``ignore'' this very negative eigenvalue and establish MLSI in this case.
Thus, if we can efficiently project on this set of matrices, then MPLE runs in polynomial time and has the optimal sample complexity. We show that this is indeed possible, which gives rise to the following Corollary.

\begin{corollary}[informal, see Corollary~\ref{cor_informal:negative_spike}]\label{cor_informal:antiffero}
    Let $\alpha \in (0,1), c > 0$ be constants and $\mathbf{1}$ the all-ones vector. Define the set $\mathcal{R}\subseteq \mathcal{S}_0^n$ of matrices that have $\mathbf{1}$ as an eigenvector with eigenvalue $-c$ and the rest of the spectrum is on an interval of size $\alpha$ around $0$. Suppose $J^* \in \mathcal{R}$. Then, given $l$ independent samples from $\Pr_{J^*}$, the MPLE over $\mathcal{R}$ can be implemented in polynomial time and returns $\hat{J}$ such that $\dtv(\Pr_{\hat{J}},\Pr_{J^*}) \leq \epsilon$ with high probability, as long as $l = \tilde{\Omega}(n^2/\epsilon^2)$.
\end{corollary}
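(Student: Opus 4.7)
The plan is to derive this corollary by invoking Theorem~\ref{thm_informal:MLSI_general} on the set $\mathcal{R}$, converting the resulting Frobenius-norm bound into a TV bound via the Frobenius-to-TV conversion lemma already used in the paper, and then verifying that the MPLE over $\mathcal{R}$ is a tractable convex program.

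I first check the hypotheses of Theorem~\ref{thm_informal:MLSI_general}. Every $J\in\mathcal{R}$ has spectrum consisting of eigenvalue $-c$ on the eigenvector $\mathbf{1}$ together with the remaining eigenvalues contained in an interval of length $\alpha$ around $0$, so $\|J\|_2\le \max(c,\alpha/2)=O(1)$ uniformly on $\mathcal{R}$, and $J^*\in\mathcal{R}$ by assumption. For the MLSI requirement on $\Pr_{J^*}$, I appeal to~\cite{anari2024trickle,koehler2022sampling}, which establish MLSI with an $O(1)$ constant for antiferromagnetic Ising models whose interaction matrix has a large negative eigenvalue along $\mathbf{1}$ together with a bulk contained in a small interval around $0$, which is precisely the structural description of $\mathcal{R}$. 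Hence all hypotheses of Theorem~\ref{thm_informal:MLSI_general} are met.

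Theorem~\ref{thm_informal:MLSI_general} then produces, with high probability, an estimate $\hat J\in\mathcal{R}$ with $\|\hat J - J^*\|_F \le \epsilon'$ whenever $l=\tilde\Omega(n^2/\epsilon'^2)$. I then invoke Lemma~\ref{lem:TV_from_frobenius}, which bounds $\dtv(\Pr_{J^*},\Pr_{\hat J})$ by a constant depending only on $\max(\|J^*\|_2,\|\hat J\|_2)=O(1)$ times $\|J^*-\hat J\|_F$, and choose $\epsilon' = \Theta(\epsilon)$. This yields $\dtv\le\epsilon$ at the claimed sample complexity $l=\tilde\Omega(n^2/\epsilon^2)$.

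It remains to argue that MPLE over $\mathcal{R}$ is solvable in polynomial time. The negative log pseudo-likelihood is a sum of logistic losses in $J$ and hence convex, so it suffices to show that $\mathcal{R}$ is a convex set with an efficient separation (equivalently, projection) oracle. The constraints that $J$ is symmetric, $\mathrm{diag}(J)=0$, and $J\mathbf{1}=-c\mathbf{1}$ are linear, cutting out an affine subspace of $\mathcal{S}_0^n$. Writing $P:=I-\tfrac{1}{n}\mathbf{1}\mathbf{1}^\top$ for the orthogonal projection onto $\mathbf{1}^\perp$, the bulk-spectrum constraint on $J$ restricted to $\mathbf{1}^\perp$ can be expressed as the semidefinite pair of inequalities $-\tfrac{\alpha}{2}P\preceq PJP \preceq \tfrac{\alpha}{2}P$. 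Thus $\mathcal{R}$ is the intersection of an affine subspace with a spectrahedron, and standard SDP techniques provide a polynomial-time projection or separation oracle; the MPLE can then be solved to sufficient accuracy by projected gradient descent or an interior-point method. The only genuinely algorithmic step, and the main obstacle beyond black-box application of Theorem~\ref{thm_informal:MLSI_general} and Lemma~\ref{lem:TV_from_frobenius}, is establishing this spectrahedral description of $\mathcal{R}$ and checking that the resulting convex program can indeed be solved within the stated polynomial time budget.
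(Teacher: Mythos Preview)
Your Frobenius-to-TV step is broken and costs you the optimal rate. Lemma~\ref{lem:TV_from_frobenius} as stated in the paper gives $\dtv(\Pr_{\hat J},\Pr_{J^*})\le n\|\hat J-J^*\|_F$, not a constant times the Frobenius norm; it has no dependence on operator norms at all. Plugging that into your argument forces $\epsilon'=\epsilon/n$ and hence $l=\tilde\Omega(n^4/\epsilon^2)$, missing the claimed $\tilde\Omega(n^2/\epsilon^2)$.

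The correct conversion is Lemma~\ref{lem:tv_to_frob_mlsi}, which does give $\dtv\le \rho\,\|\hat J-J^*\|_F$ with $\rho=O(1)$, but its hypothesis is strictly stronger than what you verified: it requires that $\Pr_{J_t}$ satisfies MLSI($\rho$) for \emph{every} $J_t=t\hat J+(1-t)J^*$ on the segment, not just for $J^*$. To salvage the argument you must (i) note that $\hat J\in\mathcal{R}$ and $\mathcal{R}$ is convex (your spectrahedral description already gives this), so the whole segment lies in $\mathcal{R}$; and (ii) establish that \emph{every} matrix in $\mathcal{R}$, not only $J^*$, induces an Ising model satisfying MLSI with a uniform constant. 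The paper does exactly this by appealing to~\cite{anari2024trickle}, which proves ATE (hence MLSI) for all matrices of the negative-spike-plus-bounded-bulk form. Once that uniform MLSI is in place, Lemma~\ref{lem:tv_to_frob_mlsi} applies and the $\tilde O(n^2/\epsilon^2)$ rate follows. Your convexity and SDP-projection arguments are otherwise in line with the paper's approach.
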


For a canonical example in this set, consider the adjacency matrix $A_G$ of a random d-regular graph $G$ and take $J^* = -\beta A_G$. Then, from \cite{friedman2003proof} if follows that $J^*$ belongs in the set $\mathcal{R}$ with $c = \beta d$ and $\alpha = 4 \beta \sqrt{d-1}$, as long as we take $\beta < 1/(4\sqrt{d-1})$. Thus, we can learn this model in TV distance optimally and efficiently.
The most relevant prior work in this case is \cite{koehler2024efficiently}, which covers this class of models since it allows some eigenvalues to be very negative. However, the sample complexity is $\tilde{O}(n^4 R^2/\epsilon^4)$, where $R$ is the width of the model, which could be $\Theta(\sqrt{n})$ in that case (see Remark \ref{rmk:counter_for_bdd_w} for an example). 
Since $R = \Theta(\sqrt{n})$ in the worst case, the bounded width result of \cite{dagan2021learning} does not apply.

\subsection{Estimating Bounded-Width Ising Models in TV}\label{sec:results_bounded_width}

We say that an Ising model has \emph{bounded width}, if the interaction matrix is assumed to have infinity norm bounded by some constant $M>0$, i.e. $\|A\|_{\infty } \leq M$. Note that $M$ could be an arbitrary constant, which means the model could exhibit long-range correlations, Glauber dynamics might mix exponentially slowly (see e.g. \cite{mossel2009hardness}), and concentration of measure in general fails to hold. 
Our first contribution involves an improved analysis of the MPLE estimator, which results in the following guarantee for estimating the model $\Pr_{J^*}$.

\begin{theorem}[informal, see Theorem~\ref{thm:mple_refined}]\label{thm:mple_improved_informal}
     Suppose we are given $l$ independent samples $X^{(1)},\ldots,X^{(l)} \sim \Pr_{J^*}$, where $\|J^*\|_{\infty } \leq M$. Then, if $\hat{J}$ is the MPLE estimator\eqref{eq:pseudolikelihood_estimator}, with high probability over the choice of samples we have 
     \begin{equation}
         \E_{J^*}[\|(\hat{J}-J^*)X\|_2^2] \leq \epsilon,
     \end{equation}
     as long as $l = \tilde{\Omega}(n^2/\epsilon)$.
\end{theorem}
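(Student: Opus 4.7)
The plan is to analyze the MPLE $\hat J$ run over $\mathcal R := \{J \in \mathcal S_0^n : \|J\|_\infty \leq M\}$ (which contains $J^*$) via the standard ``basic inequality plus local strong convexity plus gradient concentration'' template, but to squeeze a \emph{fast} $n^2/\epsilon$ rate out of it rather than the naive slow $n^2/\epsilon^2$ rate. The bounded-width assumption is used twice: to place $J^*$ inside the constraint set, and to guarantee $|(JX)_i| \leq M$ for every $J \in \mathcal R$, so that with $\phi(y) := \log(2\cosh y)$ we have $\phi''(y) = \sech^2(y) \geq \sech^2(M) =: c_M$ on the entire relevant range.

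Let $L_l(J) := \frac{1}{l}\sum_t \sum_i [-X_i^{(t)} (JX^{(t)})_i + \phi((JX^{(t)})_i)]$ denote the (negative) pseudolikelihood. A direct computation shows $\nabla^2 L_l(J)[\Delta,\Delta] = \frac{1}{l}\sum_t \sum_m \sech^2((JX^{(t)})_m) (\Delta X^{(t)})_m^2$, which is lower-bounded by $c_M \cdot \frac{1}{l}\sum_t \|\Delta X^{(t)}\|_2^2$ along the entire segment from $J^*$ to $\hat J$ (the segment lies in $\mathcal R$ by convexity of the infinity-norm ball). Combined with optimality $L_l(\hat J) \leq L_l(J^*)$ and a Taylor expansion, this yields
\[
\frac{c_M}{2l}\sum_{t=1}^l \|(\hat J - J^*) X^{(t)}\|_2^2 \;\leq\; -\lrip{\nabla L_l(J^*),\, \hat J - J^*} \;=\; \frac{1}{l}\sum_t \sum_i Z_i^{(t)}\, ((\hat J - J^*) X^{(t)})_i,
\]
where $Z_i^{(t)} := X_i^{(t)} - \tanh((J^* X^{(t)})_i)$ is bounded in absolute value by $2$ and, crucially, satisfies $\E[Z_i^{(t)} \mid X^{(t)}_{-i}] = 0$ by the Gibbs form of the conditional law $X_i \mid X_{-i}$.

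The main obstacle is proving, uniformly over $\hat J \in \mathcal R$, the self-normalized gradient bound
\[
\frac{1}{l}\sum_t \sum_i Z_i^{(t)} ((\hat J - J^*) X^{(t)})_i \;\leq\; \frac{c_M}{4l}\sum_t \|(\hat J - J^*) X^{(t)}\|_2^2 + \tilde{O}\lr{\tfrac{n^2}{l}}.
\]
Normalizing by the empirical prediction error itself (rather than by $\|\hat J - J^*\|_\infty$ or $\|\hat J - J^*\|_F$) is precisely what upgrades the conventional slow $\sqrt{n^2/l}$ rate into the fast $n^2/l$ rate. I would prove this by exploiting the conditional martingale structure of the residuals: for each fixed $t$ and fixed $\Delta$, the sum $\sum_i Z_i^{(t)} (\Delta X^{(t)})_i$ is a martingale difference along the natural filtration revealing coordinates one at a time (each $Z_i^{(t)}$ is centered given $X^{(t)}_{-i}$, while $(\Delta X^{(t)})_i$ is $X^{(t)}_{-i}$-measurable because $\Delta_{ii}=0$), with conditional quadratic variation bounded by $4\|\Delta X^{(t)}\|_2^2$. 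Freedman's inequality then controls the noise at each fixed $\Delta$ by exactly the empirical prediction error, and a peeling argument over dyadic shells of $\frac{1}{l}\sum_t \|\Delta X^{(t)}\|_2^2$ combined with a covering of $\mathcal R$ whose log-size is $O(n^2 \log n)$ promotes the statement to the desired uniform one. This self-normalized martingale analysis is the multi-sample analogue of the single-sample gradient bounds of~\cite{dagan2021learning} that the paper explicitly imports.

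Plugging the self-normalized bound into the basic inequality absorbs half of the strong-convexity term and produces $\frac{1}{l}\sum_t \|(\hat J - J^*)X^{(t)}\|_2^2 = \tilde{O}(n^2/l)$. A one-sided uniform law of large numbers over the bounded, low-complexity function class $\{X \mapsto \|(J-J^*)X\|_2^2 : J \in \mathcal R\}$ then transfers this to the population quantity $\E_{J^*}[\|(\hat J - J^*)X\|_2^2] = \tilde{O}(n^2/l)$; setting this equal to $\epsilon$ yields the claimed sample complexity $l = \tilde{\Omega}(n^2/\epsilon)$.
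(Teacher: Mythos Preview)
Your argument has a genuine gap in the martingale step. You claim that $\sum_i Z_i^{(t)} (\Delta X^{(t)})_i$ is a martingale along a coordinate-revealing filtration because $\E[Z_i^{(t)} \mid X^{(t)}_{-i}] = 0$ and $(\Delta X^{(t)})_i$ is $X^{(t)}_{-i}$-measurable. But the $\sigma$-algebras $\sigma(X^{(t)}_{-i})$ are \emph{not nested}: each one omits a different coordinate, so there is no filtration here at all. Under any genuine coordinate-revealing filtration $\mathcal F_k = \sigma(X_1,\ldots,X_k)$, the term $Z_i^{(t)}(\Delta X^{(t)})_i$ depends on all of $X^{(t)}$ (through both $Z_i$ and $(\Delta X)_i$) and is therefore not $\mathcal F_i$-measurable; and its conditional expectation given $\mathcal F_{i-1}$ does not vanish. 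Consequently Freedman's inequality does not apply, and the claimed quadratic-variation bound $4\|\Delta X^{(t)}\|_2^2$ has no basis. Controlling this gradient sum within a single dependent sample is exactly the hard part of single-sample Ising estimation, and is why the paper (following \cite{dagan2021learning}) uses the conditioning trick of Lemma~\ref{lem:conditioning}: one splits $[n]$ into subsets $I_j$ such that, conditioned on $X_{-I_j}$, the residual model satisfies Dobrushin's condition, and then applies high-temperature concentration (Theorem~\ref{thm:concentration_first_generic}) to each piece.

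A second, related issue is the transfer from the empirical quantity $\frac{1}{l}\sum_t \|\hat\Delta X^{(t)}\|_2^2$ to the population quantity $\E_{J^*}[\|\hat\Delta X\|_2^2]$. You appeal to a generic uniform law of large numbers, but the functions $x \mapsto \|\Delta x\|_2^2$ have range $\Theta(n)$ over a class with $\Theta(n^2)$ parameters, and a standard ULLN yields only an additive deviation of order $n^2/\sqrt{l}$, which is too weak for the fast rate. The paper avoids this by never working with the raw empirical sum: instead it writes $\|\Delta X\|_2^2$ as a trace against $XX^\top$ (or, after conditioning, against $\E[X\mid X_{-I_j}]\E[X\mid X_{-I_j}]^\top$) and applies the matrix Bernstein inequality (Lemma~\ref{lem:matrix_conc}) to concentrate the random matrix itself. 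This gives a \emph{multiplicative} deviation bound, uniformly over all $\Delta$, with only $l = \tilde O(n^2)$ samples and without any union bound over matrices. Both your gradient bound and your second-derivative-to-population step would need an ingredient of this kind.
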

The implicit constant in the bound above contains a factor $\exp(M)$. 
The guarantee provided by Theorem~\ref{thm:mple_improved_informal} might seem non-standard, but we will see that it is well-suited for estimation in TV distance in the following Section. 

\subsubsection{Applications}\label{sec:bounded width applications}
First, as a direct corollary of Theorem~\ref{thm:mple_improved_informal} (proved in Section \ref{sec: condition stricter}), we can obtain the following.

\begin{corollary}[informal]\label{cor:frobenius_informal}
 Suppose we are in the setting of Theorem~\ref{thm:mple_improved_informal}. Then, with high probability over the choice of samples, we have 
     \begin{equation}\label{eq:improved_J_estimate}
         \|\hat{J} - J^*\|_F^2 \leq \epsilon,
     \end{equation}
     as long as $l = \tilde{\Omega}(n^2/\epsilon)$.
\end{corollary}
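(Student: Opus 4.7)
The plan is to derive Corollary~\ref{cor:frobenius_informal} directly from Theorem~\ref{thm:mple_improved_informal} by converting the bound on $\E_{J^*}[\|\Delta X\|_2^2]$, where $\Delta := \hat{J} - J^*$, into a bound on $\|\Delta\|_F^2$. The connective tissue is a spectral inequality of the form
\[
\E_{X \sim \Pr_{J^*}}[\|\Delta X\|_2^2] \;\geq\; c(M)\cdot \|\Delta\|_F^2\qquad \text{for every } \Delta \in \mathcal{S}_0^n,
\]
with $c(M) = e^{-O(M)}$ depending only on the width bound. Given such an inequality, the corollary follows from $\|\hat{J}-J^*\|_F^2 \leq \epsilon/c(M)$, with the factor $c(M)$ absorbed into the $\tilde{\Omega}$ of the sample complexity.

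I would first rewrite $\E[\|\Delta X\|_2^2] = \tr(\Delta\,\Sigma\,\Delta^\top)$ where $\Sigma := \E[XX^\top]$ is the second-moment matrix of the Ising measure. By the $X \mapsto -X$ symmetry of the zero-field Ising model, $\E[X] = 0$, so $\Sigma$ equals the covariance matrix. Then the spectral inequality reduces to the dimension-free lower bound $\lambda_{\min}(\Sigma) \geq c(M)$, since $\tr(\Delta \Sigma \Delta^\top) \geq \lambda_{\min}(\Sigma)\,\|\Delta\|_F^2$.

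To establish $\lambda_{\min}(\Sigma) \geq c(M)$, I would exploit the local conditional variances afforded by the bounded-width assumption: for every $i$ and every $X_{-i}$, the conditional probability $\Pr(X_i = \pm 1 \mid X_{-i})$ lies in $[e^{-O(M)},\,1 - e^{-O(M)}]$, so $\Var(X_i \mid X_{-i}) \geq e^{-O(M)}$. The subtlety is that the naive route --- conditioning on $X_{-k}$ for a single coordinate $k$ --- only gives $\E[(v^\top X)^2] \geq c(M)\,\|v\|_\infty^2 \geq c(M)\|v\|_2^2/n$, losing a factor of $n$. To get the dimension-free bound I would invoke a ``restricted eigenvalue'' style lemma for bounded-width Ising measures, in the spirit of the identifiability/regression lemmas developed in the single-sample Ising estimation literature (e.g.\ \cite{dagan2021learning,kandiros2021statistical}), or reprove it via a careful martingale or variance-decomposition argument on $v^\top X$ that aggregates the contribution of all coordinates simultaneously rather than one at a time.

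The main obstacle is precisely this dimension-free spectral lower bound on $\Sigma$. Bounded-width Ising models can exhibit exponentially slow Glauber mixing, so a Poincar\'e or modified log-Sobolev inequality with a constant independent of $n$ is unavailable as a black box. Any workable argument must exploit two special features simultaneously: that we only need to control linear functionals $v^\top X$ of the sample (rather than arbitrary $f(X)$), and that the width bound yields uniform conditional-variance lower bounds coordinate-by-coordinate. Once such a spectral lemma is in hand, combining it with Theorem~\ref{thm:mple_improved_informal} yields the corollary in one line.
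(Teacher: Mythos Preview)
Your proposal is correct and matches the paper's approach essentially exactly: the paper invokes Lemma~6 of \cite{dagan2021learning}, which supplies precisely the dimension-free spectral inequality $\E_{J^*}[\|(J-J^*)X\|_2^2]\ge C_M\|J-J^*\|_F^2$ for bounded-width Ising models that you identified as the connective tissue, and then concludes in one line from Theorem~\ref{thm:mple_improved_informal}. Your discussion of reproving that lemma via martingale or variance-decomposition arguments is unnecessary here since the result is already available as a black box, but your diagnosis of why the naive single-coordinate conditioning loses a factor of $n$ is accurate.
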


Corollary~\ref{cor:frobenius_informal} also appears as Corollary 6 in \cite{dagan2021learning}, hence we recover the previously established guarantees for learning in the Frobenius norm. Note that in general, if $J^*$ is in low temperature, $\E_{J^*}[\|(\hat{J}-J^*)X\|_2^2]$ could be significantly larger than $\|\hat{J} - J^*\|_F^2$ (we also give such examples in Section \ref{sec: condition stricter}), so Theorem~\ref{thm:mple_improved_informal} is a strict improvement over the result of \cite{dagan2021learning}. 

Now we are ready to state the implications of our results for learning in TV. First, we note that without imposing any additional assumptions, we can obtain a sample complexity of $\Tilde{O}(n^4)$ from the Frobenius norm approximation. The reason is that one can show that an $O(\epsilon)$ approximation in Frobenius norm implies an $O(n\epsilon)$ approximation in TV, using similar arguments to Theorem 7.3 in \cite{klivans2017learning}. For completeness, we give a self-contained proof of this fact in Section~\ref{sec:tvfrob}.
Thus, the following result follows from this connection together with Corollary~\ref{cor:frobenius_informal}.

\begin{corollary}[informal]\label{cor:n4_sample_informal}
Suppose we are in the setting of Theorem~\ref{thm:mple_improved_informal}. Then, if $l = \Omega(n^4/\epsilon^2)$, with high probability over the choice of sample $\dtv(\Pr_{\hat{J}}, \Pr_{J^*}) \leq \epsilon$. 
\end{corollary}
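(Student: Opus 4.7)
The plan is to combine the two ingredients already in hand: the Frobenius-norm guarantee from Corollary~\ref{cor:frobenius_informal} and the Frobenius-to-TV conversion Lemma~\ref{lem:TV_from_frobenius} (the one the excerpt describes as giving that an $O(\delta)$ approximation in Frobenius norm implies an $O(n\delta)$ approximation in TV, in the spirit of Theorem~7.3 of Klivans-Meka). First I would choose a Frobenius error budget $\delta$ that the conversion lemma turns into the desired TV accuracy $\epsilon$; then I would back-solve for the required sample size via the Frobenius corollary, and finally assemble the statement on a single high-probability event.

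Concretely: by Lemma~\ref{lem:TV_from_frobenius}, the event $\|\hat J - J^*\|_F \le \delta$ deterministically implies $\dtv(\Pr_{\hat J},\Pr_{J^*}) \le C n \delta$ for an absolute constant $C$ that depends only on the width bound $M$. To certify $\dtv \le \epsilon$ it therefore suffices to take $\delta = \epsilon/(Cn)$, i.e.\ to ensure $\|\hat J - J^*\|_F^2 \le \epsilon^2/(C^2 n^2)$. Applying Corollary~\ref{cor:frobenius_informal} with target Frobenius-squared accuracy $\epsilon' = \epsilon^2/(C^2 n^2)$, whose sample requirement is $l = \tilde\Omega(n^2/\epsilon')$, and substituting in, yields exactly $l = \tilde\Omega(n^4/\epsilon^2)$ samples. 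The high-probability event on which the Frobenius bound holds then immediately gives the high-probability TV bound by the deterministic conversion.

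There is essentially no new obstacle in this derivation; the whole content is the substitution, and the $n^4$ rather than $n^2$ sample complexity is inherited from the looseness of the conversion step, which must spend a factor of $n$ in the Frobenius budget (equivalently, a factor of $n^2$ in the sample size) to pass from parameter-space closeness to distributional closeness under nothing more than the bounded-width assumption. I would simply rescale the absolute constants at the end so that the statement reads $l = \Omega(n^4/\epsilon^2)$ with the polylogarithmic factors absorbed into the implied constant (consistent with the $\tilde\Omega$ of Corollary~\ref{cor:frobenius_informal}), and note that the exponential-in-$M$ factor silently lives inside the implied constant, exactly as in Theorem~\ref{thm:mple_improved_informal}.
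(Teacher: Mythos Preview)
Your proposal is correct and matches the paper's own argument essentially verbatim: the paper derives Corollary~\ref{cor:n4_sample_informal} by combining Corollary~\ref{cor:frobenius_informal} with Lemma~\ref{lem:TV_from_frobenius} (the Frobenius-to-TV conversion $\dtv \le n\|J-J^*\|_F$), setting the Frobenius target to $\epsilon/n$ and reading off $l = \tilde\Omega(n^4/\epsilon^2)$. One cosmetic point: the constant in Lemma~\ref{lem:TV_from_frobenius} is absolute (it does not depend on $M$); the $e^{O(M)}$ factor enters only through Corollary~\ref{cor:frobenius_informal}, as you note at the end.
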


We now show how we can improve on the $O(n^4)$ sample complexity of Corollary~\ref{cor:n4_sample_informal} using the refined analysis of Theorem~\ref{thm:mple_improved_informal}.
To do that, we will assume that the second moments of the true model are ``robust'' to small perturbations of the matrix. Intuitively, we expect this to happen whenever $J^*$ is away from the critical temperature where a phase transition occurs. 
Formally, let us introduce the following regularity assumption. 

\begin{definition}\label{def:regularity}
    We say an Ising Model $\Pr_{J^*}$ satisfies $(\gamma,C)$-regularity for some $\gamma,C > 0$ if the following holds: for any $J \in \mathcal{S}_0^n$ such that $\E_{J^*}[\|(J-J^*)X\|_2^2] \leq \gamma$, we have $\E_{J}[\|(J-J^*)X\|_2^2] \leq C \cdot\E_{J^*}[\|(J-J^*)X\|_2^2]$.
\end{definition}

Of course, the crucial part of this definition is the scaling relation between $\gamma$ and $C>0$. We show as a Corollary of Theorem~\ref{thm:mple_improved_informal} that if a model is $(1/n,1)$-regular, then $O(n^3)$ samples suffice for TV learning.

\begin{corollary}[informal, see Corollary~\ref{cor:n3_samples}]\label{cor:n3_samples_informal}
    Suppose we are in the setting of Theorem~\ref{thm:mple_improved_informal}. Additionally, assume there exist constants $\gamma, C > 0$ such that $\Pr_{J^*}$ satisfies $(\gamma/n,C)$-regularity. Then, for any $\epsilon > 0$, if $l = \Omega(n^3/\epsilon^2)$, with high probability over the choice of samples $\dtv(\Pr_{\hat{J}}, \Pr_{J^*}) \leq \epsilon$.
\end{corollary}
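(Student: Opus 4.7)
The plan is to combine the refined MPLE guarantee of Theorem~\ref{thm:mple_improved_informal} with the $(\gamma/n, C)$-regularity assumption to produce a KL bound of the form $\dkl(\Pr_{J^*} \| \Pr_{\hat{J}}) = O(n\delta)$, where $\delta := \E_{J^*}[\|(\hat{J}-J^*)X\|_2^2]$; feeding this into Pinsker's inequality and setting $\delta \sim \epsilon^2/n$ then yields $\dtv \leq \epsilon$ at the target sample complexity. Concretely, first apply Theorem~\ref{thm:mple_improved_informal} with target $\delta := c\epsilon^2/n$ for a small constant $c = c(C)$ chosen at the end; for $\epsilon$ small enough this forces $\delta \leq \gamma/n$ so that the regularity assumption applies, and the sample count is $l = \tilde\Omega(n^2/\delta) = \tilde\Omega(n^3/\epsilon^2)$.

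The central observation is that the regularity assumption actually propagates along the entire interpolation between $J^*$ and $\hat J$. Let $A := \hat{J} - J^*$ and $J(t) := J^* + t A$ for $t \in [0,1]$. Since $(J(t)-J^*) = tA$, we have $\E_{J^*}[\|(J(t)-J^*)X\|_2^2] = t^2\delta \leq \delta \leq \gamma/n$, and so $(\gamma/n, C)$-regularity yields
\[
\E_{J(t)}[\|AX\|_2^2] \leq C\delta \qquad \text{uniformly in } t \in [0,1].
\]
Now set $g(t) := \dkl(\Pr_{J^*}\|\Pr_{J(t)})$; a standard exponential-family computation gives $g(0) = 0$, $g'(0) = 0$ (by first-order stationarity of the KL at $J^*$), and $g''(t) = \tfrac{1}{4}\Var_{J(t)}(X^\top A X)$. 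Taylor's theorem with integral remainder then yields
\[
\dkl(\Pr_{J^*}\|\Pr_{\hat{J}}) = g(1) = \int_0^1 (1-t)\, g''(t)\, dt \leq \tfrac{1}{8}\max_{t \in [0,1]} \Var_{J(t)}(X^\top A X).
\]
To bound the variance I use the deterministic pointwise estimate $|X^\top A X| \leq \|X\|_2 \|AX\|_2 = \sqrt{n}\,\|AX\|_2$, valid on $\{-1,1\}^n$, to get $\Var_{J(t)}(X^\top A X) \leq n\, \E_{J(t)}[\|AX\|_2^2] \leq nC\delta$. Hence $\dkl \leq nC\delta/8$, and Pinsker's inequality gives $\dtv^2 \leq nC\delta/16$; choosing $c := 16/C$ (with $\epsilon$ small enough that $\delta \leq \gamma/n$) then delivers $\dtv \leq \epsilon$.

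The main conceptual obstacle I anticipate is recognizing that the $(\gamma/n, C)$-regularity assumption, which as stated only bounds $\E_{J}[\|(J-J^*)X\|_2^2]$ for a single matrix $J$ near $J^*$, actually lifts to a uniform bound along the whole interpolation. The self-similar scaling $(J(t) - J^*) = tA$ is what enables this, and the specific threshold $\gamma/n$ (rather than $\gamma$) is calibrated precisely so that the condition holds throughout the path at the target precision $\delta \sim \epsilon^2/n$. This is also why it is essential that Theorem~\ref{thm:mple_improved_informal} controls the pseudo-second-moment $\E_{J^*}[\|(\hat{J}-J^*)X\|_2^2]$ rather than $\|\hat{J}-J^*\|_F^2$: routing through the Frobenius bound would lose a factor of $n$ and only recover the $n^4/\epsilon^2$ rate of Corollary~\ref{cor:n4_sample_informal}.
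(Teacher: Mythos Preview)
Your proposal is correct and follows essentially the same route as the paper's proof: bound $\dkl$ via the second-order Taylor expansion of the log-partition function, yielding a variance term $\Var_{J(t)}(X^\top A X)$ under an interpolated measure, control that variance by $n\,\E_{J(t)}[\|AX\|_2^2]$ via Cauchy--Schwarz on the hypercube, and invoke the regularity assumption (together with the scaling $(J(t)-J^*)=tA$) to transfer the second-moment bound from $J^*$ to $J(t)$. The only cosmetic differences are that the paper uses the Lagrange form of the remainder (a single $J_\xi$) rather than your integral form, and it expands $\dkl(\Pr_{\hat J}\|\Pr_{J^*})$ rather than $\dkl(\Pr_{J^*}\|\Pr_{\hat J})$; both are immaterial since Pinsker applies either way.
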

Note that the sample complexity of Corollary~\ref{cor:n3_samples_informal} is only a factor $O(n)$ away from the optimal sample complexity of \cite{devroye2020minimax}.
This regularity condition covers a wide range of models that do not necessarily need to be in high temperature. In particular, in Section~\ref{sec: regularity verification}, we prove that the Curie-Weiss model in low temperature satisfies the condition. Of course, we can also show that models satisfying more familiar conditions such as Dobrushin's condition and spectrally-bounded models also satisfy this regularity condition (see Section \ref{sec: regularity verification}).

\section{Related Work}\label{sec:related_work}

\noindent \textbf{ Learning MRFs from multiple samples.} The problem of estimating a Markov Random Field (MRF) from multiple independent samples from the model has a rich history, starting from the seminal work \cite{chow1968approximating} from the 1960s, showing that for the Ising model, if the graph structure of the model is a tree, then Maximum Likelihood Estimation (MLE) can be solved in polynomial time.
Information theoretically, \cite{devroye2020minimax} establishes the minimax rate for estimating Ising models in TV as $\Theta(|E|/\epsilon^2)$, where $|E|$ is the number of non-zero entries of the interaction matrix (see also \cite{brustle2020multi} for an alternative argument using linear programming).
For the task of estimating the structure of Ising models with arbitrary graph topology and bounded degree $d$, the breakthrough work of \cite{bresler2015efficiently} provided the first polynomial time algorithm, where the sample complexity is doubly exponential in $d$. Subsequent works \cite{hamilton2017information,vuffray2016interaction,klivans2017learning,wu2019sparse} proposed new algorithms with improved guarantees. In particular, \cite{klivans2017learning} obtains the first polynomial time algorithm for learning the structure of bounded-width models, while only requiring $O(e^d \log n)$ independent samples using $l_2$-regularized per-node logistic regression. This matched the information-theoretic lower bound from \cite{santhanam2012information}. 
In the case of latent variables, \cite{bresler2019learning} gives a polynomial-time algorithm for learning ferromagnetic Restricted Boltzmann Machines. 
Beyond bounded width, a recent line of work studies Ising models under spectral constraints on the interaction matrix. \cite{anari2024universality} shows that for spectally bounded models, MPLE succeeds in TV learning with $O(n^{3 + C})$ samples for some constant $C$. For the SK model, this implies efficient learning for all $\beta < 1/4$. \cite{gaitonde2024unified} is the first work to obtain a polynomial time algorithm for learning the SK-model all the way up to $\beta < 1$, and \cite{chandrasekaran2024learning} extends the range of efficient learning for all $\beta = o(\sqrt{\log n})$. 

A related line of work studies the problem of learning the structure of MRFs using samples from the trajectory of Glauber dynamics \cite{bresler2017learning,gaitonde2024unified,gaitonde2025better}. The recent work of \cite{gaitonde2024bypassing} provides a near-linear time algorithm that learns the structure of a $t$-th order MRF using $O(n\log n)$ updates of Glauber dynamics, bypassing fundamental barriers for efficient higher order MRF estimation from independent samples. The work \cite{jayakumar2024discrete}, which studies learning from independent samples of a metastable state of the Glauber dynamics, is also close in spirit. 

The particular case of MRFs with tree structure has also received attention, starting with \cite{chow1968approximating}. \cite{bhattacharyya2021near,daskalakis2021sample} establish that the Chow-Liu algorithm is information-theoretically optimal for finite samples. For the related problem of estimating the low-dimensional marginals of the model, \cite{bresler2020learning,boix2022chow} give polynomial-time and sample-optimal algorithms, and \cite{nikolakakis2021predictive} studies the setting where noisy labels of the nodes are observed. Finally, \cite{kandiros2023learning} provides guarantees for polynomial-time estimation of latent tree Ising models in TV. 

Finally, a recent line of work aims at estimating Ising models using score matching \cite{koehler2022statistical,koehler2023sampling,koehler2024efficiently}. In particular, as noted in \cite{koehler2024efficiently}, they focus on a class of low-complexity Ising models. Remarkably, they obtain sample complexity bounds that scale polynomially with the width of the model, in contrast to the exponential dependence in most prior work on MRF estimation.\\

\noindent \textbf{Learning MRFs from a single sample.} In this line of work, it is usually assumed that the true model belongs in a class of ``low-dimensional'' models and the task is to estimate it given a \emph{single} $n$-dimensional observation from the model. In the case of an Ising model whose interaction matrix is known up to a scalar parameter $\beta$, \cite{chatterjee2007estimation} initially showed that MPLE is $\sqrt{n}$-consistent for $\beta$ using the technique of exchangeable pairs. \cite{bhattacharya2018inference} extended these results under general conditions on the log-partition function. \cite{ghosal2020joint} studied the problem when, in addition to $\beta$, there is also an unknown scalar parameter in the external field, and \cite{daskalakis2019regression} generalized the result for logistic regression with dependencies.
\cite{dagan2021learning} provided estimation guarantees when the interaction matrix lies in a low-dimensional subspace. Several variations of these settings have been studied, including optimal joint estimation of parameters for logistic regression with dependencies \cite{kandiros2021statistical,mukherjee2021high}, estimation of tensor Ising models \cite{daskalakis2020logistic,mukherjee2022estimation}, estimation of hard-constrained models \cite{bhattacharya2021parameter,galanis2024learning}, inference on dense graphs \cite{xu2023inference}.\\

\noindent \textbf{Sampling Ising models.} There is a vast literature in probability theory that focuses on proving fast sampling of Ising models under different constraints. Here, we focus on reviewing the results that are most relevant to the classes of Ising models that we study. Modified Log-Sobolev inequalities were introduced in \cite{bobkov2006modified} to prove fast mixing of Markov Chains in discrete spaces. The classical Dobrushin's condition has been known to imply MLSI \cite{levin2017markov} and is tight in the case of the Curie-Weiss model.
The class of spectrally bounded models, where $\lambda_{\max}(J) - \lambda_{\min}(J) < 1$, was introduced in \cite{eldan2022spectral} to capture relevant models from statistical physics, such as the SK-model. They established the Poincaré inequality when $\beta < 1/4$, complementing the result of \cite{bauerschmidt2019very} that proved a version of the log-Sobolev inequality in the same regime. Subsequently, \cite{anari2021entropic} established the MLSI for spectrally bounded models, which implies optimal mixing of the Glauber dynamics. A different proof using localization schemes was given in \cite{chen2022localization}. In the work \cite{lee2023parallelising}, they establish the stronger ATE property for these models. The condition of spectrally bounded models was shown to be tight for polynomial time sampling in \cite{kunisky2024optimality,galanis2024sampling}. A polynomial time algorithm for sampling from the SK model in Wasserstein distance using algorithmic stochastic localization was given for $\beta < 1/2$ in \cite{el2022sampling} and extended to all $\beta < 1$ in \cite{celentano2024sudakov}. The recent work of \cite{anari2024trickle} established an MLSI for the SK-model for $\beta < 0.295$. 
Finally, the class of bounded-width Ising models includes many examples where sampling from the model is NP-hard. Indeed, for $d$-regular graphs with $\beta > \beta_c$, where $\beta_c$ is the Kesten-Stigum threshold, \cite{sly2010computational,sly2012computational,galanis2015inapproximability} show that approximate sampling from the distribution is NP-hard.

\section{Technical Contributions}\label{sec:technical}
We first describe the algorithm that is employed for all results.
The most common approach for obtaining an estimate $\hat{J}$ is to compute the \emph{Maximum Likelihood Estimator (MLE)} given the samples. In the case of Ising models, to compute the MLE, one has to calculate the probability of the observed samples under different models. However, this involves computing the partition function of the model $Z_J$, which is NP-hard even to approximate in many interesting regimes \cite{sly2010computational,sly2012computational,galanis2015inapproximability,galanis2024sampling}.

An attractive alternative, first proposed in \cite{besag1974spatial}, is the so-called \emph{Maximum Pseudo-Likelihood Estimator (MPLE)}. Computing the Pseudo-Likelihood of a given model involves computing the conditional probability of the spin of a node $i$ conditioned on the spins of all the other nodes. Formally, suppose we get independent samples $X^{(1)},\ldots,X^{(l)} \sim \Pr_{J^*}$. Then, the MPLE over a set of matrices $\mathcal{R} \subseteq \mathcal{S}_0^n$ is defined as
\begin{equation}\label{eq:pseudolikelihood_estimator}
    \hat{J} := \arg\max_{J \in \mathcal{R}} PL(J;X^{(1)},\ldots,X^{(l)}) := \arg\max_{J \in \mathcal{R}} \prod_{k=1}^l \prod_{i=1}^n \Pr_J[X^{(k)}_i | X^{(k)}_{-i}]
\end{equation}
The set $\mathcal{R}$ will be chosen depending on the particular class of Ising models we are trying to estimate.
One useful property of this objective function is that it is a concave function of $J$, which makes it easy to optimize using first-order methods whenever $\mathcal{R}$ is a convex set that admits efficient projections. 
To make calculations more convenient, we will consider instead minimizing the negative log pseudolikelihood, which in the case of Ising models can be written as
\begin{equation}\label{eq:pseudolikelihood_function}
\phi(J) := - \log PL(J;X^{(1)},\ldots,X^{(l)}) = \sum_{k=1}^l \sum_{i=1}^n \lp(\log \cosh (J_i X^{(k)}) - X_i^{(k)} J_i X^{(k)} + \log 2\rp)
\end{equation}
In the above, we have omitted the dependence of $\phi$ on the samples $X^{(1)},\ldots,X^{(l)}$ for convenience.
Thus, the objective in MPLE has a simple closed form that does not involve the partition function, which is why it is preferred over MLE.
Since the optimization takes place in a high-dimensional space, we will be interested in computing the first and second derivatives of $\phi$ at a point $J \in \mathcal{S}_0^n$ and at direction $A \in \mathcal{S}_0^n$. These are given by the formulas (see also Section\eqref{sec:preliminaries})

\begin{equation}\label{eq:first_second_der}
\frac{\partial \phi(J^*)}{\phi A}=\sum_{k=1}^l\sum_{i=1}^n(A_iX^{(k)})(\tanh(J_iX^{(k)})-X^{(k)}_i) \quad, \quad \frac{\partial^2 \phi(J)}{\partial A^2} = \sum_{k=1}^l \sum_{i=1}^n (A_i X^{(k)})^2 \sech(J_iX^{(k)})^2
\end{equation}

The standard way of analyzing the MPLE in the single sample literature \cite{chatterjee2007estimation,bhattacharya2018inference,daskalakis2019regression,dagan2021learning} is to upper bound the first derivative at $J^*$ for all directions $A$ and to lower bound the second derivative for all $J$ and for all directions $A$. This, combined with a union bound argument, suffices for obtaining estimation guarantees for $\hat{J}$. Since these derivatives are random quantities that depend on the samples, in order to bound them, we introduce a number of technical novelties, as well as combine a variety of tools from high-dimensional probability.
We now highlight these contributions for the two settings that we study.

\subsection{Models Satisfying MLSI}\label{sec:technical_spectrally}
The work that is closest to the level of generality we are aiming for is \cite{anari2024universality}, where they show that the Approximate Tensorization of Entropy (ATE) inequality implies closeness in KL (and thus in TV by Pinsker's inequality) for the MPLE estimator.
ATE is a stronger functional inequality than MLSI. Furthermore, the argument results in a dependence of the sample complexity on the width of the model, which could incur additional polynomial factors in the regime where only the operator norm is bounded.
For these reasons, the sample complexity obtained in \cite{anari2024universality} is suboptimal, even in the special case of spectrally-bounded models, where improved ATE was later established by \cite{lee2023parallelising}.

We bypass these obstacles by instead showing that a Modified Log-Sobolev Inequality (MLSI) (formally defined in Section~\ref{sec:preliminaries}) and boundedness of the operator norm are enough to analyze the pseudolikelihood function $\phi$. 
The work that is closest in spirit to our goal in that regard is \cite{dagan2021learning}, where they analyze the MPLE when the infinity norm of the matrix is $O(1)$(bounded-width).
As we shall see, establishing tight upper and lower bounds for the first and second derivatives of the pseudolikelihood function $\phi$ when the infinity norm is unbounded poses additional challenges, which we now describe.

The first step towards that goal is to establish upper bounds for $\partial \phi(J^*)/ \partial A$ for all directions $A \in \mathcal{R}$, where $\mathcal{R}$ is the optimizing set. 
Notice by \eqref{eq:first_second_der} that this is not a polynomial function of the samples $X$. Despite that, in \cite{dagan2021learning}, the authors use second-level concentration inequalities from \cite{adamczak2019note} together with a linearization argument of the $\tanh$ to get the correct concentration radius of $O(l\cdot \|A\|_F^2)$ for the first derivative. Unfortunately, the bounds in \cite{adamczak2019note} rely on ATE. 
Instead, we show that MLSI suffices by using a general result about two-level concentration from \cite{sambale2019modified}. The concentration radius in \cite{sambale2019modified} is given with respect to the expected $l_2$-norm of a version of the discrete derivative of the function (for more details see Section~\ref{sec:preliminaries}).
In the proof, we show that even if we only know that the operator norm of the matrix $A$ is bounded, this concentration radius can be bounded by $O(l\cdot\|A\|_F^2)$. 
Since the expectation of the first derivative is $0$ when evaluated by $J^*$, this implies that with high probability $O|\partial \phi(J^*)/ \partial A|= O(l \cdot \|A\|_F^2)$. 

The next step is to lower bound the second derivative $\partial^2 \phi(J)/\partial A^2$ for all $J$ and all directions $A$.
Ignoring the term involving $\sech$ momentarily, which comes with additional challenges, the second derivative in \eqref{eq:first_second_der} is a second-degree polynomial of the Ising model. 
Using again the machinery from \cite{sambale2019modified}, we can show that this polynomial concentrates at an $O(l\cdot\|A\|_F^2)$ radius with high probability. 
To conclude that the second derivative is large, we need to establish that, on expectation, this polynomial is $\Omega(l \cdot\|A\|_F^2)$.
This lower bound would certainly hold if we had a product distribution instead of an Ising model. Motivated by that, we use the well-known Hubbard-Stratonovich transform to decompose the model into a mixture of product distributions with external fields. The distribution of the external fields in this mixture is well known, and we use it to lower bound the expectation of this polynomial for the majority of these product measures, which suffices to obtain the desired lower bound on the expectation.

Finally, if we wish to lower bound the second derivative, we have to lower bound the term involving $\sech$ in \eqref{eq:first_second_der}. 
If the infinity norm of the matrices was bounded, as in \cite{dagan2021learning}, then this step is trivial, as $|J_i X^{(k)}| = O(1)$ always. In contrast, with an operator norm bound, this term could be as large as $\Theta(\sqrt{n})$, which would result in an exponentially small lower bound for the second derivative in the worst case. One idea would be to use the concentration of linear functions of models satisfying MLSI, in order to bound $J_i X^{(k)}$ by $o(\log n)$, which holds with probability $1-1/\text{poly}(n)$. Unfortunately, since we have to establish this for all matrices $J$, the probability of error does not decay fast enough to take a union bound over this high-dimensional set of matrices. Our solution is to instead write
$J_i X^{(k)} = J^*_iX^{(k)} + (J_i - J^*_i) X^{(k)}$. Since $J^*$ is now fixed, we can use the preceding argument to bound $J^*_i X^{(k)}$ with high probability.
For $(J_i - J^*_i) X^{(k)}$, we can also use the preceding concentration bounds to bound it by $\|J-J^*\|_F$ with high probability. However, since we are considering an arbitrary matrix $J$, we cannot know \emph{a priori} that $\|J-J^*\|_F$ will be small. Intuitively, $J$ is any matrix we wish to show satisfies $\phi(J) > \phi(J^*)$.
To address this issue, we instead focus the analysis of the second derivative only on a \emph{ring} of matrices (Figure~\ref{fig:shell})
$$
 \mathcal{R}_\epsilon := \{J \in\mathcal{R}: \epsilon \leq \|J - J^*\|_F \leq 2 \epsilon\}
$$
\begin{figure}[h!]
  \centering
  \includegraphics[width=0.6\textwidth]{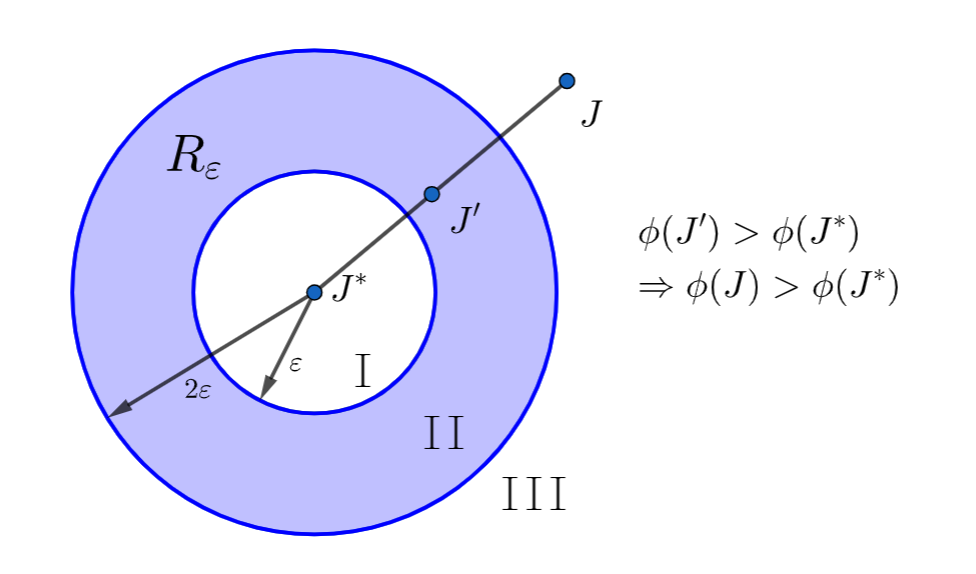}
  \caption{The set $\mathcal{R}_\epsilon$ naturally partitions the whole space into three regions, I, II, and III. We know that all $J$ in region II satisfy $\phi(J) > \phi(J^*)$. If we consider an arbitrary $J$ in region III, then by appropriate scaling, we can find $J'$ in region II that lies on the line connecting $J$ and $J^*$. The assumption about $J^*$ combined with the convexity of the pseudolikelihood $\phi$ allows us to conclude that $\phi(J) > \phi(J^*)$. Thus, $\hat{J}$ lies in region I.}
  \label{fig:shell}
\end{figure}

The reason we take a ring is that on the one hand we would like $\|J - J^*\|_F$ to be upper bounded in order to lower bound the $\sech$ term, but on the other hand we would like $\|J - J^*\|_F$ to be lower bounded, so that we are guaranteed that with high probability the second derivative dominates the first.
Indeed, using all the previous observations and some careful union bound arguments over a suitable discrete net of points, we can establish that with high probability $\forall J \in \mathcal{R}_\epsilon$ we have $\phi(J) > \phi(J^*)$. It turns out that this also implies that $\phi(J) > \phi(J^*)$ for all $J$ with $\|J - J^*\|_F > 2 \epsilon$, by using a convexity argument, as shown in Figure~\ref{fig:shell}. Thus, we can conclude that $\|\hat{J} - J^*\|_F$ will be small. 

\subsection{Bounded Width Models}\label{sec:technical_width}
In \cite{klivans2017learning}, they obtain TV learning guarantees by using the guarantees for structure learning, which ensures that $\max_{ij}|J_{ij} - J^*_{ij}| \leq \epsilon$ using $O(\log n/\epsilon^4)$ samples. They subsequently set the accuracy to $\epsilon/n^2$, so that it guarantees closeness in TV. However, requiring accuracy for each entry of $J^*$ is a very strict condition, which results in sample complexity $O(n^8)$. In this work, we instead obtain guarantees of closeness to $J^*$ through different metrics, which, while counterintuitive at first, are more suitable for implying closeness in TV.

We first discuss how to get the improved guarantee of Theorem~\ref{thm:mple_improved_informal}. Since bounded-width models could be in low temperature, we cannot, in general, rely on concentration bounds for the original distribution $\Pr_{J^*}$. We follow the strategy presented in \cite{dagan2021learning}, which involves finding a small ($O(\log n)$) collection of subsets $I_j \subseteq [n]$, such that conditioned on each subset the model satisfies Dobrushin's condition, and every node $i\in [n]$ is contained in a constant fraction of the subsets. 
For the second derivative, we notice that it can be lower bounded by a larger quantity than that in \cite{dagan2021learning}, without breaking it into parts, by simply conditioning on one of the subsets $I_j$. Indeed, using \eqref{eq:first_second_der} and the bounded-width property, we can lower bound the conditional expectation of the second derivative by the variance of a linear function of the model as follows. 

\begin{align*}
\E\lp[\frac{\partial^2 \phi(J)}{\partial A^2}\middle|X_{-I_j}\rp] &\geq \sech(M)^2 \sum_{k=1}^l \sum_{i=1}^n \E\lp[\lp(A_iX^{(k)}\rp)^2\middle|X^{(k)}_{-I_j}\rp]\\
&= \sech(M)^2 \sum_{k=1}^l \sum_{i=1}^n \lp(\Var\lp(A_iX^{(k)}\middle|X_{-I_j}^{(k)}\rp)+ \E\lp[A_iX^{(k)}\middle|X_{-I_j}^{(k)}\rp]^2\rp)
\end{align*}
In the above, we have used the definition of the conditional variance.
We can show that the conditional variance summed over all nodes is of the order $\Theta(\|A_{I_j}\|_F^2)$, since the conditional model satisfies Dobrushin's condition. For the variance of the conditional expectation, there is no general formula, so we would like to approximate it by its expectation $\E[\|E[AX|X_{-I_j}]\|_2^2]$. Simply using the Chernoff bound for the independent samples does not suffice, because we would like this lower bound to hold uniformly for all directions $A$, and the union bound will incur additional polynomial factors. We avoid this union bounding argument by a careful application of matrix Bernstein's inequality. This enables us to approximate the second term 
in the sum by $\E[\|E[AX|X_{-I_j}]\|_2^2]$ uniformly over all matrices. 
These insights result in a lower bound of $\Omega(l \cdot \E_{J^*}[\|AX\|_2^2])$ for the second derivative.

For the first derivative, we can upper bound it by $O(l \cdot \E_{J^*}[\|AX\|_2^2])$ uniformly for all matrices $A$ by using the technique of splitting it into parts and a similar application of matrix Bernstein's inequality. Since both the first and second derivatives are upper and lower bounded by the same quantity, we can proceed as previously using a Taylor expansion for $\phi$ and establish that $E_{J^*}[\|(\hat{J} - J^*\|_2^2 X] \leq \epsilon$ with high probability.

To obtain the improved sample complexity of $O(n^3/ \epsilon^2)$, we use the improved estimation guarantees provided by Theorem~\ref{thm:mple_improved_informal}. Again using Pinsker's inequality, it is enough to upper bound $\dkl(\Pr_{\hat{J}},\Pr_{J^*})$, which using standard manipulations for exponential families and Taylor's theorem can be expressed as $\Var_{J_\xi}(X^\top (\hat{J} - J^*)X)$, where $J_\xi$ is a matrix lying in the line connecting $\hat{J}$ and $J^*$. Using the Cauchy-Schwarz inequality, we can upper bound it as $\Var_{J_\xi}(X^\top (\hat{J} - J^*)X) \leq n \cdot \E_{J_\xi}[\|(\hat{J} - J^*)X\|_2^2]$, which connects KL with the guarantee obtained by Theorem~\ref{thm:mple_improved_informal}. However, the expectation is now with respect to $J_\xi$ instead of $J^*$. This is where we make use of the regularity assumption~\ref{def:regularity}, which ensures that this expectation is robust to perturbing the measure from $J^*$ to $J_\xi$. Choosing accuracy $\epsilon/n$ for Theorem~\ref{thm:mple_improved_informal} then concludes the claim.

\section{Preliminaries}\label{sec:preliminaries}
In this Section, we collect useful definitions, notations, and facts.
Throughout the proofs, we will use generic constant notations such as $C, C', K, K'$. These refer to absolute constants that could be different from place to place, unless it is specified that some constant depends on some other quantity. 
For some matrix $A\in \mathcal{S}_0^n$, we denote by $A^{(l)} \in \R^{ln \times ln}$ the corresponding block diagonal matrix with each block equal to $A$ and a total of $l$ blocks. Similarly, for some set of matrices $\mathcal{R} \subseteq \R^{n \times n}$, we denote $\mathcal{R}^{(l)} := \{J^{(l)}: J \in \mathcal{R}\}$. 
For a symmetric matrix $J \in \mathcal{S}_0^n$ and a subset $I \subseteq [n]$, we denote by $J_I \in \R^{|I|\times n}$ the matrix consisting only of the rows of $J$ that are indexed by elements in $I$. We also denote $J_{II} \in \R^{|I|\times |I|}$ the submatrix with rows and columns indexed by $I$. 

We start by reviewing properties of the pseudolikelihood function $\phi$, as defined in \eqref{eq:pseudolikelihood_function}. 
We define the first and second derivatives of $\phi$ at a matrix $J \in \mathcal{S}_0^n$ in the direction of a matrix $A \in \mathcal{S}_0^n$ as in \cite{dagan2021learning}
\begin{align}
    \left.\frac{\mathrm{d}\phi(J+tA)}{\mathrm{d}t}\right|_{t=0}=\frac{1}{2}\sum_{i=1}^n(A_ix)(\tanh(J_ix+h_i)-x_i) \label{eq:first_derivative}\\
    \left.\frac{\mathrm{d}^2\phi(J+tA)}{\mathrm{d}^2t}\right|_{t=0}=\frac{1}{2}\sum_{i=1}^n(A_iX+h_i)^2\sech(J_iX+h_i)^2 \label{eq:second_derivative}
\end{align}

Since the second derivative is always non-negative, we conclude that $\phi$ is a convex function. We have included a known external field $h\in \R^n$ in the expression, since the general result does not require $0$ external field.

We now introduce some important functional inequalities that will be used in this work. The definitions below are standard and can be found, e.g., in \cite{van2014probability}. For a measure $\mu$ in the hypercube $\{-1,1\}^n$, we write $\E_\mu, \Var_\mu$ for the expectation and variance with respect to this measure $\mu$. 
The \emph{Dirichlet form} associated with the Glauber dynamics for a measure $\mu$ can be defined as the following operator on two arbitrary functions $f,g:\{-1,1\}^n \mapsto \R$
\begin{equation}\label{eq:dirichlet}
    \mathcal{E}(f,g) := \frac{1}{n} \E_\mu\lp[\sum_{i=1}^n \lp(\E_\mu[f(X)|X_{-i}] - f(X)\rp) \cdot \lp(\E_\mu[g(X)|X_{-i}] - g(X)\rp)\rp]
\end{equation}

We define the \emph{entropy} of a positive function $f:\{-1,1\}^n \mapsto \R^+$ as
\begin{equation}
    \Ent{f} := \E_\mu[f(X) \log f(X)] - \E_\mu[f(X)] \cdot \log \E_\mu[f(X)]
\end{equation}

We will need two notions of a modified log-Sobolev inequality. To define the first one, let $\mathcal{G}$ be the set of all functions $\{-1,1\}^n \mapsto \R$ and $\mathcal{G}'$ the subset of all positive functions. Let $\Gamma:\mathcal{G}\mapsto \mathcal{G}'$ be a so-called \emph{difference operator}. We say a measure $\mu$ supported on the hypercube satisfies a $\Gamma$-MLSI$(\rho)$, for some $\rho > 0$, if and only if for all $f \in \mathcal{G}$
\[
\Ent{e^f} \leq \frac{\rho}{2} \E_\mu\lp[\Gamma(f)^2 e^f\rp] \enspace,
\]
This notation is a classical notion of the Modified Log-Sobolev inequality and has been used in prior work for proving concentration inequalities \cite{bobkov1999exponential,sambale2019modified}.

There is a related notion of MLSI based on the Glauber dynamics. 
We say $\mu$ satisfies the \emph{modified log-Sobolev inequality for the Glauber dynamics} with constant $C>0$ and write Glauber-MLSI($C$), if for every function $f:\{-1,1\}^n \mapsto \R$
\[
\Ent{e^f} \leq C \cdot n \cdot \mathcal{E}(e^f,f)
\]

This version of the Modified Log-Sobolev inequality arises naturally when considering the rate of contraction of the KL-divergence between the distribution of a Markov Chain at a given time and the stationary distribution of the chain \cite{bobkov2006modified}.

Also, we say a measure $\mu$ satisfies a \emph{Poincare inequality} with constant $\rho>0$ and write Po($\rho$), if for every function $f:\{-1,1\}^n \mapsto \R$
\begin{equation}\label{eq:poincare}
    \Var_\mu(f) \leq \rho\cdot n \cdot \mathcal{E}(f,f)
\end{equation}

Both the Poincaré inequality and Glauber-MLSI have been established in prior work for spectrally bounded Ising models.
\begin{lemma}[\cite{eldan2022spectral}]\label{lem:Poincare}
    If $\lambda_{\max}(J^*) - \lambda_{\min}(J^*) < 1 - \alpha$, then $\Pr_{J^*}$ satisfies Po($1/\alpha$). 
\end{lemma}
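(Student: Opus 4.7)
The plan is to establish $\mathrm{Po}(1/\alpha)$ via a Hubbard--Stratonovich decomposition of $\Pr_{J^*}$ into a mixture of product Bernoullis, followed by a law-of-total-variance argument whose mixing component is handled by a Brascamp--Lieb / Bakry--\'Emery type Poincar\'e inequality. Since $x_i^2 = 1$ on the hypercube, adding a multiple of $I$ to $J^*$ leaves $\Pr_{J^*}$ unchanged, so I may shift to $\tilde J := J^* - (\lambda_{\min}(J^*) - \varepsilon) I$ for small $\varepsilon > 0$, obtaining $\tilde J \succ 0$ with $\lambda_{\max}(\tilde J) < 1 - \alpha + \varepsilon$. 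The Gaussian identity
\begin{equation*}
\exp\!\left(\tfrac{1}{2} x^\top \tilde J x\right) \;\propto\; \int_{\R^n} \exp\!\left(-\tfrac{1}{2} y^\top \tilde J^{-1} y + y^\top x\right) dy
\end{equation*}
then realizes $\Pr_{J^*}$ as a mixture $\int \mu_y \, d\pi(y)$, where $\mu_y(x) \propto \exp(y^\top x)$ is a product Bernoulli on $\{-1,1\}^n$ and $\pi(y) \propto \exp(-\tfrac{1}{2} y^\top \tilde J^{-1} y) \prod_i \cosh(y_i)$.

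By the law of total variance, $\Var_{\Pr_{J^*}}(f) = \E_\pi[\Var_{\mu_y}(f)] + \Var_\pi(g)$ with $g(y) := \E_{\mu_y}[f]$. Each $\mu_y$ is a product measure on the hypercube and therefore satisfies Po$(1)$ for the Glauber Dirichlet form, i.e.\ $\Var_{\mu_y}(f) \leq n \, \mathcal{E}_{\mu_y}(f,f)$; applying the variance decomposition also conditionally on $X_{-i}$ shows $\E_\pi[\mathcal{E}_{\mu_y}(f,f)] \leq \mathcal{E}_{\Pr_{J^*}}(f,f)$, so the first mixture term contributes at most $n \, \mathcal{E}_{\Pr_{J^*}}(f,f)$. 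For the second term, I use that $\pi$ is strongly log-concave: a direct calculation gives $-\nabla^2 \log \pi(y) = \tilde J^{-1} - \mathrm{diag}(\sech^2 y_i) \succeq \tfrac{\alpha - \varepsilon}{1 - \alpha + \varepsilon} I$, so Brascamp--Lieb yields $\Var_\pi(g) \leq \tfrac{1 - \alpha + \varepsilon}{\alpha - \varepsilon} \, \E_\pi[\|\nabla g(y)\|_2^2]$. The product structure of $\mu_y$ gives $\partial_{y_i} g(y) = \sech^2(y_i) \, \E_{\mu_y}[\Delta_i f]$ with $\Delta_i f := \tfrac{1}{2}(f(+1, X_{-i}) - f(-1, X_{-i}))$, and since the Bernoulli conditional variance satisfies $\Var_{\mu_y}(f \mid X_{-i}) = \sech^2(y_i) (\Delta_i f)^2$, Jensen's inequality bounds $\|\nabla g(y)\|_2^2 \leq \sum_i \sech^4(y_i) \, \E_{\mu_y}[(\Delta_i f)^2] \leq n \, \mathcal{E}_{\mu_y}(f,f)$. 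Combining everything and letting $\varepsilon \downarrow 0$,
\begin{equation*}
\Var_{\Pr_{J^*}}(f) \;\leq\; n\Big(1 + \tfrac{1 - \alpha}{\alpha}\Big) \mathcal{E}_{\Pr_{J^*}}(f,f) \;=\; \tfrac{1}{\alpha} \cdot n \cdot \mathcal{E}_{\Pr_{J^*}}(f,f),
\end{equation*}
which is exactly $\mathrm{Po}(1/\alpha)$.

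The main obstacle is the discrete-to-continuous bridge in the final step: computing $\nabla g$ on the continuous space and bounding it by the Glauber Dirichlet form on the hypercube without losing any additional factors. The cancellation that makes the argument tight is that the $\sech^2(y_i)$ factor arising from $\partial_{y_i} \E_{\mu_y}[f]$ is precisely the Bernoulli conditional variance factor that also appears in $\mathcal{E}_{\mu_y}$, so the chain of Jensen bounds on $\|\nabla g\|_2^2$ matches term-by-term with the Dirichlet form and the operator-norm bound $\lambda_{\max}(\tilde J) < 1 - \alpha$ translates directly into the sharp Poincar\'e constant $1/\alpha$.
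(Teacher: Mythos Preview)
Your argument is correct. The paper does not actually prove this lemma; it simply quotes the result from \cite{eldan2022spectral}. Your Hubbard--Stratonovich decomposition followed by a law-of-total-variance argument with the Brascamp--Lieb inequality on the mixing measure is precisely the mechanism behind that reference's proof, and you have executed the computation cleanly, including the key cancellation where the $\sech^2(y_i)$ factor from $\partial_{y_i}\E_{\mu_y}[f]$ matches the conditional-variance factor in $\mathcal{E}_{\mu_y}$. The one step that deserves an extra line when you write it up is the inequality $\E_\pi[\mathcal{E}_{\mu_y}(f,f)] \leq \mathcal{E}_{\Pr_{J^*}}(f,f)$: this follows by applying the law of total variance to the joint $(X,y)$-distribution conditionally on $X_{-i}$, since given $(X_{-i},y)$ the law of $X_i$ is the $\tanh(y_i)$ Bernoulli, so $\Var_{\Pr_{J^*}}(f\mid X_{-i}) \geq \E[\Var_{\mu_y}(f\mid X_{-i}) \mid X_{-i}]$ and the claim follows after taking expectations.
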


\begin{lemma}[\cite{anari2021entropic,chen2022localization}]\label{lem:mlsi}
     If $\lambda_{\max}(J^*) - \lambda_{\min}(J^*) < 1 - \alpha$, then $\Pr_{J^*}$ satisfies Glauber-MLSI($1/\alpha$).
\end{lemma}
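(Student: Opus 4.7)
The plan is to use Eldan's stochastic localization scheme in the spirit of Chen-Eldan; an equivalent route is the entropic-independence framework of Anari-Jain-Koehler-Pham-Vuong, but I favor the localization approach because the role of the spectral parameter $\alpha$ is most transparent. The overall strategy has three steps: represent $\Pr_{J^*}$ as a Gaussian mixture of product measures via the Hubbard-Stratonovich transform, control the covariance of every tilted version uniformly using the spectral gap, and convert this uniform covariance control into Glauber-MLSI via entropy decay along a localization process.

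For the first step, I would shift $J^*$ by a diagonal multiple $cI$ (large enough that $\tilde{J} := J^* + cI \succ 0$)---the shift leaves $\Pr_{J^*}$ invariant since $x_i^2 = 1$ on the hypercube---and write
\[
    \Pr_{J^*}[x] \;\propto\; \E_{Y \sim \N(0,\tilde{J}^{-1})}\lp[\exp(\langle Y, x\rangle)\rp],
\]
expressing the Ising measure as a mixture of product measures on $\{-1,1\}^n$ indexed by the Gaussian $Y$. For the second step, by the law of total covariance, the Ising covariance satisfies $\Cov_\mu(X) \preceq I + \Cov(\tanh(Y)) \preceq I + \tilde{J}^{-1}$ (using that $\tanh$ is a contraction). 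Choosing the shift $c$ optimally and invoking $\lambda_{\max}(J^*) - \lambda_{\min}(J^*) < 1-\alpha$, this translates into an operator-norm bound $\|\Cov_\mu(X)\|_{\mathrm{op}} \leq 1/\alpha$. Crucially, the same bound applies to any tilt $\mu'(x) \propto \mu(x)\exp(\langle z, x\rangle)$: external-field tilts modify only the linear (not quadratic) part of the Hamiltonian, so the Hubbard-Stratonovich representation of $\mu'$ reuses the same Gaussian covariance $\tilde{J}^{-1}$, making the covariance bound hereditary under such perturbations.

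For the third step, run Eldan's localization $\mu_t(x) \propto \mu(x)\exp(\langle z_t, x\rangle)$ with $z_t = \int_0^t \E_{\mu_s}[X]\,ds + B_t$ for a standard Brownian motion $B_t$. Each $\mu_t$ is itself an Ising model with interaction matrix $J^*$ and augmented external field, so the second step yields $\|\Cov(\mu_t)\|_{\mathrm{op}} \leq 1/\alpha$ uniformly in $t$. By Ito's formula, the entropy $\mathrm{Ent}_{\mu_t}(f)$ of any positive test function is a supermartingale whose infinitesimal decay is controlled by the quadratic form $\E_{\mu_t}[\langle \nabla \log f, \Cov(\mu_t)\nabla \log f\rangle]$; integrating the uniform covariance bound along the localization, and comparing the localization Dirichlet form to the Glauber Dirichlet form (they differ by known factors of $n$ in the hypercube setting), delivers Glauber-MLSI$(1/\alpha)$.

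The main obstacle will be the second step: obtaining the exact constant $1/\alpha$ in the operator-norm bound (rather than a weaker $1/\alpha^{\Theta(1)}$). The relationship between $\lambda_{\max}(J^*)-\lambda_{\min}(J^*)$ and $\|\tilde{J}^{-1}\|_{\mathrm{op}}$ requires careful accounting for how the diagonal shift $c$ interacts with the spectrum of $J^*$, and a sharp second-moment (Brascamp-Lieb-type) estimate on the Gaussian mixture representation is needed to prevent losses from the $\tanh$ contraction step. Everything else---the heredity under tilts, the Ito calculation, and the Dirichlet-form comparison---is by now standard once the covariance bound is in hand.
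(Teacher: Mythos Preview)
The paper does not supply a proof of this lemma: it is stated as a citation of \cite{anari2021entropic,chen2022localization} and used as a black box. Your proposal is essentially a high-level sketch of the Chen--Eldan stochastic-localization argument in \cite{chen2022localization}, which is one of the two cited sources, so in that sense you are reconstructing one of the original proofs rather than offering something the paper omitted.

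As a sketch of that argument your outline is broadly on target, but two points deserve care if you intend to flesh it out. First, the covariance bound in your second step is not quite right as written: the law of total covariance gives $\Cov_\mu(X) = \E[\Cov(X\mid Y)] + \Cov(\E[X\mid Y])$, and while the first term is $\preceq I$, the second term is $\Cov(\tanh(\tilde{J}Y))$ for $Y$ drawn from the \emph{tilted} Gaussian (the posterior, not the prior $\N(0,\tilde{J}^{-1})$), so bounding it by $\tilde{J}^{-1}$ requires a Brascamp--Lieb inequality for the log-concave density of $Y$, not just the $1$-Lipschitzness of $\tanh$. Second, the passage from a uniform covariance bound along the localization to Glauber-MLSI with the sharp constant $1/\alpha$ goes through an approximate conservation-of-entropy argument and a comparison with the product measure at the end of the process; the ``Ito supermartingale'' description you give is morally correct but hides the step where the entropy functional is shown to be \emph{approximately} preserved (not merely nonincreasing), which is what ultimately yields the constant.
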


Furthermore, it is a well-known identity (for example, \cite{bobkov2006modified}) that, in general, the Glauber MLSI implies the Poincaré inequality with a slightly worse constant. For completeness, we give a proof below.

\begin{lemma}\label{lem:mlsi_poincare}
    If for some $\mu$, Glauber-MLSI$(C)$ satisfies, then it also satisfies Po$(2C)$.
\end{lemma}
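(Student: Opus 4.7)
The plan is to linearize the Glauber-MLSI by applying it to a small perturbation of a constant function, and then extract the Poincaré inequality as the leading order term of the Taylor expansion as the perturbation tends to zero. Concretely, fix any bounded $g:\{-1,1\}^n \to \R$ and, for $\epsilon>0$ small, apply the hypothesized inequality $\Ent{e^{f}} \leq C n \,\mathcal{E}(e^{f}, f)$ with the choice $f = \epsilon g$.

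For the left-hand side, I would Taylor-expand everything in $\epsilon$. Writing $Z_\epsilon := \E_\mu[e^{\epsilon g}] = 1 + \epsilon \E_\mu[g] + \tfrac{\epsilon^2}{2}\E_\mu[g^2] + O(\epsilon^3)$, one computes $Z_\epsilon \log Z_\epsilon = \epsilon \E_\mu[g] + \tfrac{\epsilon^2}{2}\mathrm{Var}_\mu(g) + \epsilon^2 \E_\mu[g]^2 + O(\epsilon^3)$ and $\epsilon \E_\mu[g e^{\epsilon g}] = \epsilon \E_\mu[g] + \epsilon^2 \E_\mu[g^2] + O(\epsilon^3)$. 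Subtracting yields
\[
\Ent{e^{\epsilon g}} \;=\; \frac{\epsilon^2}{2}\,\mathrm{Var}_\mu(g) \;+\; O(\epsilon^3).
\]
For the right-hand side, since $e^{\epsilon g} = 1 + \epsilon g + O(\epsilon^2)$, we have $\E_\mu[e^{\epsilon g}\mid X_{-i}] - e^{\epsilon g} = \epsilon\bigl(\E_\mu[g\mid X_{-i}] - g\bigr) + O(\epsilon^2)$, and hence, directly from the definition \eqref{eq:dirichlet},
\[
\mathcal{E}(e^{\epsilon g}, \epsilon g) \;=\; \epsilon^2\, \mathcal{E}(g,g) \;+\; O(\epsilon^3).
\]

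Plugging these two expansions into Glauber-MLSI$(C)$, dividing through by $\epsilon^2$, and letting $\epsilon \to 0$ gives $\tfrac{1}{2}\mathrm{Var}_\mu(g) \leq C n\, \mathcal{E}(g,g)$, which is exactly Po$(2C)$. Since $g$ was arbitrary (and the bounded functions are dense in the natural function class, so the inequality extends by standard approximation), this completes the argument.

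I do not expect any essential obstacle here: the only thing to be careful about is uniform control of the $O(\epsilon^3)$ remainders (which is immediate once $g$ is bounded, since all terms are smooth functions of $\epsilon$ on a finite space), and the bookkeeping of the Taylor expansion of $Z_\epsilon \log Z_\epsilon$ so that the $\E_\mu[g]^2$ piece correctly cancels the $\E_\mu[g^2]$ piece down to $\tfrac12 \mathrm{Var}_\mu(g)$. The rest is a direct matching of coefficients.
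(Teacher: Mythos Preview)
Your proof is correct and is essentially the same approach as the paper's: both extract the Poincar\'e inequality as the second-order term in the Taylor expansion of the MLSI around the constant function, with the paper differentiating $F(t)=Cn\mathcal{E}(e^{tf},tf)-\Ent{e^{tf}}$ twice at $t=0$ and you expanding in powers of $\epsilon$ and matching coefficients. Since the state space $\{-1,1\}^n$ is finite, every function is bounded and the density remark is unnecessary.
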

\begin{proof}
    Consider $F(t)=Cn\mathcal E(e^{tf},tf)-\Ent{e^{tf}}$. We know that $F(0)=0$. Since we have $F\ge 0$, if we have $F'(0)= 0$, we must have $F''(0) \geq 0$. Taking the derivative, we have
    \begin{align*}
        \frac{\mathrm d^2}{\mathrm dt^2}n\mathcal E(e^{tf},tf)&=\frac{\mathrm d^2}{\mathrm d t^2}\left(C \E_\mu\lp[\sum_{i=1}^n \lp(\E_\mu[e^{tf}|X_{-i}] - e^{tf}\rp)\lp(\E_\mu[{tf}|X_{-i}] - {tf}\rp)\rp]\right)\\
        &=C\frac{\mathrm d}{\mathrm d t}\left( \E_\mu\lp[\sum_{i=1}^n \lp(\E_\mu[fe^{tf}|X_{-i}] - fe^{tf}\rp)\lp(\E_\mu[{tf}|X_{-i}] - {tf}\rp)\right.\rp.\\
        &~~~~~~~~~~~~~~~~+\lp.\left.\lp(\E_\mu[e^{tf}|X_{-i}] - e^{tf}\rp)\lp(\E_\mu[{f}|X_{-i}] - {f}\rp)\rp]\right)\\
        &=C\left( \E_\mu\lp[\sum_{i=1}^n \lp(\E_\mu[f^2e^{tf}|X_{-i}] - f^2e^{tf}\rp)\lp(\E_\mu[{tf}|X_{-i}] - {tf}\rp)\right.\rp.\\
        &~~~~~~~~~~~~~~~~+2\lp.\left.\lp(\E_\mu[fe^{tf}|X_{-i}] - fe^{tf}\rp)\lp(\E_\mu[{f}|X_{-i}] - {f}\rp)\rp]\right)
    \end{align*}

    When we set $t=0$, we have
    \begin{align*}
        \left.\frac{\mathrm d^2}{\mathrm dt^2}\mathcal E(e^{tf},tf)\right|_{t=0}&=2\E_\mu\lp(\left[\lp(\E_\mu[f|X_{-i}] - f\rp)\lp(\E_\mu[{f}|X_{-i}] - {f}\rp)\rp]\right)=2n\mathcal E(f,f)
    \end{align*}

    Also, we have
    \begin{align*}
        \frac{\mathrm d^2}{\mathrm dt^2} \Ent{e^{tf}}&=\frac{\mathrm d^2}{\mathrm dt^2}\lp(\E_\mu[e^{tf}\cdot  t f]-\E_\mu[e^{tf}]\log\E_\mu[e^{tf}]\rp)\\
        &=\frac{\mathrm d}{\mathrm dt}\lp(\E_\mu[e^{tf}\cdot f+e^{tf}\cdot tf^2]-\E_\mu[fe^{tf}]\log\E_\mu[e^{tf}]-\E_\mu[fe^{tf}]\rp)\\
        &=\lp(\E_\mu[2e^{tf}\cdot f^2+e^{tf}\cdot tf^3]-\E_\mu[f^2e^{tf}]\log\E_\mu[e^{tf}]-\frac{\E_\mu[fe^{tf}]^2}{\E_\mu[e^{tf}]^2}-\E_\mu[f^2e^{tf}]\rp)
    \end{align*}

    Therefore, we have
    \begin{align*}
        \left.\frac{\mathrm d^2}{\mathrm dt^2} \Ent{e^{tf}}\right|_{t=0}&=\lp(2\E_\mu[ f^2]-\E_\mu[ f^2]^2-\E_\mu[f]^2\rp)=\Var_{\mu}(f)
    \end{align*}

    Therefore, we have $F''(0)=2Cn\mathcal E(f,f)-\Var_\mu(f)\ge 0$, which means that Po$(2C)$ holds.
\end{proof}

The reason $\Gamma$-MLSI is useful is that concentration results for second-order polynomials are proven in \cite{sambale2019modified} when the distribution $\mu$ satisfies this property.
On the other hand, Glauber-MLSI is usually the property that follows from results about fast mixing \cite{eldan2022spectral,chen2022localization,anari2021entropic}.
It would thus be desirable to connect the two notions of MLSI in order to establish concentration results for the Ising model. To facilitate that connection, following \cite{sambale2019modified} we define the operator $\mathfrak{d}^+$ as follows
\begin{equation}\label{eq:operator}
    \mathfrak{d}^+f (X) = \sqrt{\sum_{i=1}^n \E\lp[\lp((f(X) - f(X_i',X_{-i})\rp)_+^2\middle | X_{-i}\rp]}\enspace.
\end{equation}
In the above, we use $(X'_i, X_{-i})$ be a shortened random vector $(X_1', X_2',\dots, X_n')$ that has all other coordinate $j$, $X_j'=X_j$, and $X_i'$ is sampled independently of everything else according to the distribution conditioning on $X_{-i}'=X_{-i}$.
Also, we have used the shorthand notation $x_+ := \max(0,x)$. 
The quantity $\mathfrak{d}^+f (x)$ can be thought of as the $l_2$-norm of a discrete derivative of $f$ at $x$.
It is therefore capturing the Lipschitzness of $f$ in some appropriate sense and will thus be important for proving that $f(X)$ concentrates. The connection between the two definitions is that Glauber-MLSI$(\rho)$ implies $\mathfrak{d}^+$-MLSI$(2\rho)$, since by direct calculation we can get that $\E_{\mu}[\mathfrak d^{+}f\cdot e^f]\ge n\mathcal E(f,e^f)$ \cite{sambale2019modified}. Indeed, in the following lemma, we use this fact and derive a generic concentration result involving $\mathfrak{d}^+$.

\begin{lemma}\label{lemma:generic_concentration}
    Suppose that $\mu$ satisfy Glauber-MLSI$(\rho)$. Then, for any functions $f,g:\{-1,1\}^n \mapsto \R$ and constant $b > 0$ such that $\mathfrak{d}^+f \leq g$ and $\mathfrak{d}^+g \leq b$, we have for any $t > 0$
    \[
    \Pr\lp[|f(X) - \E[f(X)]| > t\rp] \leq \frac{8}{3}\exp\lp(-\frac{1}{16\rho} \min\lp(\frac{t^2}{\E[g]^2},\frac{t}{b}\rp)\rp).
    \]
\end{lemma}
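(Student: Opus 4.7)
The plan is a classical two-level Herbst/entropy argument in the spirit of \cite{sambale2019modified}, starting from the Glauber-MLSI hypothesis. The excerpt already notes that Glauber-MLSI$(\rho)$ upgrades to $\mathfrak{d}^+$-MLSI$(2\rho)$, i.e., $\Ent{e^h}\le\rho\,\E_\mu[(\mathfrak{d}^+h)^2 e^h]$ for every $h:\{-1,1\}^n\to\R$. First I would apply this inequality at $h=\lambda f$ and invoke $\mathfrak{d}^+f\le g$ to obtain $\Ent{e^{\lambda f}}\le\rho\lambda^2\,\E_\mu[g^2 e^{\lambda f}]$. Setting $\psi(\lambda):=\log\E_\mu[e^{\lambda(f-\E[f])}]$ and using the Herbst identity $\lambda\psi'(\lambda)-\psi(\lambda)=\Ent{e^{\lambda f}}/\E_\mu[e^{\lambda f}]$, this rearranges into the differential inequality
\[
\lambda\psi'(\lambda)-\psi(\lambda)\le\rho\lambda^2\,\frac{\E_\mu[g^2 e^{\lambda f}]}{\E_\mu[e^{\lambda f}]}.
\]

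The second step is where the hypothesis $\mathfrak{d}^+g\le b$ enters. I would decompose $g^2\le 2(\E[g])^2+2(g-\E[g])^2$; the first piece contributes the Gaussian-scale $(\E[g])^2$ part of the tail, so what remains is to control $\E_\mu[(g-\E[g])^2 e^{\lambda f}]/\E_\mu[e^{\lambda f}]$. Running Herbst a second time on $g$ itself, $\mathfrak{d}^+g\le b$ together with $\mathfrak{d}^+$-MLSI gives the sub-Gaussian MGF bound $\E_\mu[e^{s(g-\E[g])}]\le e^{\rho b^2 s^2/2}$ for all $s\in\R$, and a change-of-measure (a Holley--Stroock-type inflation of the MLSI under the tilt $\mathrm d\nu_\lambda\propto e^{\lambda f}\,\mathrm d\mu$) then yields, for $|\lambda|\le c/(b\sqrt\rho)$, a uniform estimate of the form $\E_\mu[(g-\E[g])^2 e^{\lambda f}]/\E_\mu[e^{\lambda f}]\le C\rho b^2$. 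Substituting back into the ODE, dividing by $\lambda^2$, and integrating from $0$ produces $\psi(\lambda)\le A\rho\lambda^2(\E[g])^2$ on the same range of $\lambda$.

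Finally I would apply the Chernoff bound $\Pr[f-\E[f]>t]\le\exp(\psi(\lambda)-\lambda t)$ and optimize: for small $t$ the unconstrained minimizer $\lambda\asymp t/(A\rho(\E[g])^2)$ gives the Gaussian tail $\exp(-ct^2/(\rho(\E[g])^2))$, while for large $t$ the optimum saturates at $\lambda\asymp 1/(b\sqrt\rho)$ and produces the exponential tail $\exp(-ct/(\rho b))$; together these give the $\min$-form in the statement. A symmetric argument for $-f$ and a union bound across the two tails yield the prefactor. The hard part will be the bookkeeping: matching the precise constants $1/(16\rho)$ and $8/3$ requires careful tracking of (i) the factor of $2$ in $g^2\le 2(\E[g])^2+2(g-\E[g])^2$, (ii) the constant degradation of the MLSI for $g$ under the exponential tilt $\nu_\lambda$, and (iii) the admissible range of $\lambda$ before the $b\lambda\sqrt\rho$ remainder becomes dominant. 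Modulo these, the structure is exactly the two-step entropy method and I would transplant the constant-tracking from \cite{sambale2019modified}.
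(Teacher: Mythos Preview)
Your proposal is correct in spirit but massively over-engineered relative to what the paper does. The paper's proof is two sentences: Proposition~2.18 of \cite{sambale2019modified} says Glauber-MLSI$(\rho)$ implies $\mathfrak{d}^+$-MLSI$(2\rho)$, and then Corollary~1.2 of \cite{sambale2019modified} is \emph{exactly} the statement of the lemma once one has $\mathfrak{d}^+$-MLSI. So the paper treats this as a black-box citation, whereas you are proposing to re-derive Corollary~1.2 from scratch via the two-level Herbst method. Since you yourself close by saying you would ``transplant the constant-tracking from \cite{sambale2019modified}'', you have essentially rediscovered that the lemma is that corollary.

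One technical caution on your sketch: the step where you control $\E_\mu[(g-\E[g])^2 e^{\lambda f}]/\E_\mu[e^{\lambda f}]$ via a ``Holley--Stroock-type inflation of the MLSI under the tilt $\nu_\lambda$'' is shaky as written, because Holley--Stroock requires $L^\infty$ bounds on the density $e^{\lambda f}$, and $f$ here is not assumed bounded. The actual argument in \cite{sambale2019modified} does not go through a tilted MLSI; it instead uses the sub-Gaussian moment bound on $g$ directly inside the Herbst differential inequality (roughly, bounding $\E[g^2 e^{\lambda f}]$ by Cauchy--Schwarz or by an exponential-moment comparison that only needs the already-established MGF of $g$ under $\mu$). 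This is not a fatal gap since your fallback is to copy their constants anyway, but if you were writing this out in full you would want to follow their bookkeeping rather than the Holley--Stroock route.
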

\begin{proof}
    By Proposition 2.18 in \cite{sambale2019modified}, if $\mu$ satisfies Glauber-MLSI$(1/\alpha)$, it follows that the same Ising model also satisfies $\mathfrak{d}^+-$MLSI$(2/ \alpha)$. Thus, we can apply Corollary 1.2 from \cite{sambale2019modified} and the result immediately follows. 
\end{proof}

We now state a useful lemma for upper-bounding second moments of the model.

\begin{lemma}\label{lem: poincare}
    Let $A$ be any matrix, and $x\sim \mu$ be an Ising model satisfying Po$(\rho)$. We have the following:
    \[\E\lr{\lrnorm{Ax}^2}\le \|\E[Ax]\|^2+\frac{1}{\alpha}\lrnorm{A}_{F}^2\]
\end{lemma}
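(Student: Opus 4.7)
The plan is to start from the bias-variance decomposition
\begin{equation*}
\E[\|Ax\|^2] = \|\E[Ax]\|^2 + \sum_{i=1}^n \Var(A_i x),
\end{equation*}
which reduces the claim to showing that $\sum_i \Var(A_i x) \leq \rho \|A\|_F^2$ (where $\rho$ is the Poincar\'e constant, denoted $1/\alpha$ in the statement under the natural reading that Po$(\rho)$ with $\rho = 1/\alpha$ is what is being assumed).

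The key step is to apply the Poincar\'e inequality \eqref{eq:poincare} separately to each linear test function $f_i(x) := A_i x = \sum_{k} A_{ik} x_k$. This gives $\Var(A_i x) \leq \rho \cdot n \cdot \mathcal{E}(f_i, f_i)$, so I need to evaluate the Dirichlet form on a linear function. Because $f_i$ is linear in $x_j$, for each coordinate $j$ we have
\begin{equation*}
f_i(x) - \E_\mu[f_i(x) \mid x_{-j}] = A_{ij}\bigl(x_j - \E_\mu[x_j \mid x_{-j}]\bigr),
\end{equation*}
and since $x_j \in \{-1,+1\}$ we have $\E_\mu[(x_j - \E_\mu[x_j \mid x_{-j}])^2 \mid x_{-j}] = \Var_\mu(x_j \mid x_{-j}) \leq 1$. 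Substituting into \eqref{eq:dirichlet} and summing in $j$ yields
\begin{equation*}
n \cdot \mathcal{E}(f_i, f_i) = \sum_{j=1}^n A_{ij}^2 \, \E_\mu[\Var_\mu(x_j \mid x_{-j})] \leq \|A_i\|_2^2.
\end{equation*}

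Plugging back in gives $\Var(A_i x) \leq \rho \|A_i\|_2^2$, and summing over $i$ produces the desired $\sum_i \Var(A_i x) \leq \rho \|A\|_F^2 = \frac{1}{\alpha} \|A\|_F^2$, completing the bound. There is no real obstacle here: the argument is a standard row-by-row application of Poincar\'e to coordinate linear functionals, and the only thing one has to be slightly careful about is the $\pm 1$ conditional variance bound, which is immediate because the spins are bounded.
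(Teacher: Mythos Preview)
Your proof is correct and follows essentially the same approach as the paper: decompose $\E[\|Ax\|^2]$ into $\|\E[Ax]\|^2$ plus the sum of variances, then apply the Poincar\'e inequality \eqref{eq:poincare} to each row linear functional $f_i(x)=A_i x$ and bound the resulting Dirichlet form by $\|A_i\|_2^2$. Your write-up is actually slightly more careful than the paper's in making explicit the step $\Var_\mu(x_j\mid x_{-j})\le 1$ for $\pm 1$ spins, and in flagging the $\rho$ versus $1/\alpha$ notational mismatch in the statement.
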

\begin{proof}
    By the Poincaré Inequality, we have that 
    \[\E[(A_i x)^2]-(\E[A_ix])^2=\Var(A_ix)\le \rho\E_{\mu}\sum_{j=1}^n(\E[A_ix|x_{-j}]-A_ix)^2 \le \sum_{j=1}^nA_{ij}^2.\]

    Therefore, we have in total
    $$\E(\lrnorm{Ax}^2)\le\sum_{i=1}^n(\E[A_ix])^2+\rho\sum_{i=1}^n\sum_{j=1}^n A_{ij}^2=\|\E[Ax]\|^2+ \rho\lrnorm{A}_F^2.
    $$
\end{proof}

Another useful consequence of MLSI is the simpler concentration of Lipschitz functions, which is well known. 
Below is one version of this implication from \cite{cryan2019modified}. 

\begin{lemma}[Lemma 15 in \cite{cryan2019modified}]\label{lem: concentration of linear}
     Let $P$ be the transition matrix of a reversible Markov Chain with stationary distribution $\pi$
 on a finite set $\Omega$, and $f : \Omega\to\R$ be some observable function. Then,
 \[\Pr_{x\sim \pi}(f(x) - \E_{\pi}f \ge a)\le \exp(-\frac{a^2}{2n\cdot\rho\cdot v(f)})\]
 where $a\ge 0$ and
 \[v(f) := \max_{x\in\Omega}\left\{
 \sum_{y\in\Omega}P(x,y)(f(x) -f(y))^2
 \right\}\enspace,\]
 if $\pi$ satisfies Glauber-MLSI($\rho(P)$).
\end{lemma}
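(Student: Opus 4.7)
The plan is to run the classical Herbst entropy argument: control the log-moment generating function $\psi(\lambda) := \log \mathbb{E}_\pi[e^{\lambda(f-\mathbb{E}_\pi f)}]$ via an ODE derived from Glauber-MLSI, and then apply Chernoff. Without loss of generality assume $\mathbb{E}_\pi f = 0$ (subtracting a constant leaves $v(f)$ and the entropy unchanged). The well-known identity
\[
\lambda \psi'(\lambda) - \psi(\lambda) \;=\; \frac{\mathrm{Ent}_\pi\bigl(e^{\lambda f}\bigr)}{\mathbb{E}_\pi[e^{\lambda f}]}
\]
rearranges to $\frac{\mathrm{d}}{\mathrm{d}\lambda}\!\bigl(\psi(\lambda)/\lambda\bigr) = \mathrm{Ent}_\pi(e^{\lambda f})/(\lambda^2 \mathbb{E}_\pi[e^{\lambda f}])$, so controlling $\psi$ reduces to controlling $\mathrm{Ent}_\pi(e^{\lambda f})$.

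For that, I would apply the assumed Glauber-MLSI$(\rho)$ to $g = \lambda f$ to get
\[
\mathrm{Ent}_\pi\bigl(e^{\lambda f}\bigr) \;\leq\; \rho\, n\, \mathcal{E}\bigl(e^{\lambda f}, \lambda f\bigr).
\]
Next, I would rewrite the Dirichlet form in its standard pairwise form $\mathcal{E}(h,g) = \tfrac{1}{2}\sum_{x,y}\pi(x)P(x,y)\bigl(h(x)-h(y)\bigr)\bigl(g(x)-g(y)\bigr)$ and use the elementary convex inequality $(e^a-e^b)(a-b) \leq \tfrac{1}{2}(a-b)^2(e^a+e^b)$. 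Combined with reversibility $\pi(x)P(x,y)=\pi(y)P(y,x)$ this collapses the bound to a one-sided sum:
\[
\mathcal{E}\bigl(e^{\lambda f}, \lambda f\bigr) \;\leq\; \frac{\lambda^2}{2}\sum_{x,y}\pi(x)P(x,y)\bigl(f(x)-f(y)\bigr)^2 e^{\lambda f(x)} \;\leq\; \frac{\lambda^2}{2}\,v(f)\,\mathbb{E}_\pi[e^{\lambda f}].
\]
Substituting back yields $\mathrm{Ent}_\pi(e^{\lambda f})/\mathbb{E}_\pi[e^{\lambda f}] \leq \tfrac{1}{2} n \rho v(f)\lambda^2$, and hence $\tfrac{\mathrm{d}}{\mathrm{d}\lambda}(\psi/\lambda) \leq \tfrac{1}{2} n\rho v(f)$. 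Since $\psi(\lambda)/\lambda \to \psi'(0) = \mathbb{E}_\pi f = 0$ as $\lambda \to 0^+$, integrating gives $\psi(\lambda) \leq \tfrac{1}{2} n\rho v(f)\lambda^2$ for all $\lambda>0$.

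Finally, Markov's inequality $\Pr(f \geq a) \leq e^{-\lambda a}\mathbb{E}[e^{\lambda f}] \leq \exp\!\bigl(-\lambda a + \tfrac{1}{2} n\rho v(f)\lambda^2\bigr)$, optimized at $\lambda^* = a/(n\rho v(f))$, produces the stated bound $\exp\!\bigl(-a^2/(2n\rho v(f))\bigr)$.

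The main technical point where one must be careful is the passage from the paper's conditional-expectation form of $\mathcal{E}$ to the pairwise $\sum_{x,y}\pi(x)P(x,y)\cdots$ form that exposes $v(f)$: one needs to verify that $\mathcal{E}(h,g) = \tfrac{1}{n}\mathbb{E}_\pi[\sum_i \mathrm{Cov}(h,g\mid X_{-i})]$ equals $\tfrac{1}{2}\sum_{x,y}\pi(x)P(x,y)(h(x)-h(y))(g(x)-g(y))$ when $P$ is the Glauber kernel, by representing $\mathrm{Cov}(h,g\mid X_{-i})$ as an independent-copy average. Once that identification is in place, the convexity inequality plus reversibility delivers the $v(f)$ bound cleanly, and the rest is the textbook Herbst--Chernoff computation.
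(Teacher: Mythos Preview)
The paper does not actually prove this lemma; it is quoted verbatim as Lemma~15 of \cite{cryan2019modified} and used as a black box. Your Herbst-argument proof is correct and is exactly the approach taken in the cited source: bound $\mathrm{Ent}_\pi(e^{\lambda f})$ via the MLSI, control $\mathcal{E}(e^{\lambda f},\lambda f)$ pointwise by $\tfrac{\lambda^2}{2}v(f)\,\E_\pi[e^{\lambda f}]$ using the inequality $(e^a-e^b)(a-b)\le\tfrac12(a-b)^2(e^a+e^b)$ together with reversibility, integrate the resulting differential inequality for $\psi(\lambda)/\lambda$, and finish with Chernoff. There is nothing to compare against in the present paper.
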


Finally, we also require the \emph{Hubbard-Stratonovich} transform, which is a way to decompose any Ising model distribution into a mixture of product distributions.
Formally, suppose $X \sim \Pr_{J^*}$ and $G \sim \calN(0,I_n)$. Let us consider the random variable
\begin{equation}\label{eq:ydist}
Y=X+J^{-1/2}G
\end{equation}
An easy calculation now shows that the distribution of $X$ conditioned on $Y$ is an Ising model with \emph{zero} interaction matrix and external field $J^*Y$ (for details see e.g. Theorem 3.12 in \cite{liu2024fast}). 
Thus, if $\pi_y$ is the distribution of $Y$, we can decompose any Ising measure as
\begin{equation}\label{eq:decomposition}
    \Pr_{J^*}[x] = \int_{y \in \R^n} \pi(y)\Pr_{0,J^*y}[x] dy
\end{equation}

\section{Learning Ising Models that satisfy MLSI}

In this Section, we study Ising models of the form \eqref{eq:ising_model} with $h =0$ that satisfy the modified log-Sobolev Inequality. In particular, throughout the section, we will make the following assumption. 
\begin{assumption}\label{ass:spectral_bounded}
    The set of all candidate matrices, $\mathcal R \subseteq \mathcal{S}_0^n$, contains $J^*$. The Ising model, as in \eqref{eq:ising_model} with $h$ a known external field, satisfies Glauber-MLSI$(\rho)$. 
    Also, $\max_{1\le i\le n}|h_i|\le h_{\max}$. Also, $J^*$ satisfies $\|J^*\|_{op}\le\lambda$ for some constant $\lambda>0$.
\end{assumption}

Without loss of generality, we assume $\rho,\lambda\ge 1$.

\subsection{Concentration of the first derivative}

The first derivative of the pseudolikelihood is given by the formula in \eqref{eq:first_derivative}.
Even though this function is not a polynomial, intuitively it behaves similarly to a second-degree polynomial if we linearize the $\tanh$ function. 
Indeed, our goal in this section will be to prove that it concentrates similarly to a second-degree polynomial. This was shown to hold in \cite{dagan2021learning} in the case where $\|J^*\|_\infty < 1$, using the Approximate Tensorization of Entropy (ATE) and the bound on the infinity norm.
Here, we will show that it still holds under MLSI and bounded operator norm, using Lemma~\ref{lemma:generic_concentration}.

\begin{theorem}\label{thm:concentration_first_generic}
Suppose $X \in \{-1,1\}^n$ is sampled from an Ising model with interaction matrix $J^*$ satisfying Assumption~\ref{ass:spectral_bounded}.
For a fixed vector $b \in \R^n$ and fixed symmetric matrix $A \in \R^{n \times n}$ with zero diagonal, let us define the function
\[
f(X) = \sum_{i=1}^n(A_iX + b_i)(\tanh(J_i^*X + h_i)-X_i) \enspace.
\]
Then, we can take $C=\frac{1}{2^{21}\lambda^4\rho^2}>0$, such that for all $t > 0$
    \[
    \Pr\lp[\lp| f(X)\rp| > t\rp] \leq \frac{8}{3} \exp\lp(- C \min\lp(\frac{t^2}{\|A\|_F^2 + \lp\|\E[AX + b]\rp\|_2^2}, \frac{t}{\|A\|_{op}}\rp)\rp)\enspace.
    \]

\end{theorem}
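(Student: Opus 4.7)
The plan is to apply Lemma~\ref{lemma:generic_concentration}, for which I need (a) a pointwise upper bound $g(X) \geq \mathfrak{d}^+ f(X)$, (b) a deterministic bound $b \geq \mathfrak{d}^+ g$, and (c) the fact that $\E[f(X)] = 0$. Property (c) follows from conditional centering: since $A_{ii} = J^*_{ii} = 0$, both $A_i X + b_i$ and $\tanh(J^*_i X + h_i)$ are $X_{-i}$-measurable, while $\E[X_i \mid X_{-i}] = \tanh(J^*_i X + h_i)$, so each summand of $f$ has zero conditional expectation.

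First I would compute the discrete derivative by flipping the $k$-th coordinate to obtain $X^{(k)} = (-X_k, X_{-k})$. Writing $w(X) = \tanh(J^* X + h) - X$ and $u(X) = AX + b$, using symmetry of $A$ and $J^*$, and applying the mean-value theorem to $\tanh$ (derivative $\sech^2 \le 1$), a direct expansion gives
\[
f(X) - f(X^{(k)}) = 2 X_k \lrbra{(Aw(X))_k - u_k(X) + (J^* D^{(k)} u(X^{(k)}))_k},
\]
where $D^{(k)}$ is a diagonal matrix with entries $\sech^2(\xi_{i,k}) \in [0,1]$. Using $\mathfrak{d}^+ f(X)^2 \leq \sum_k (f(X) - f(X^{(k)}))^2$, it suffices to bound the three resulting sums: the first two contribute $\|Aw(X)\|_2^2$ and $\|u(X)\|_2^2$ immediately.

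The main obstacle is the third sum $\sum_k (J^* D^{(k)} u(X^{(k)}))_k^2$: the naive pointwise bound $|v_k|^2 \leq \|v\|_2^2$ yields $\lambda^2 \|u(X^{(k)})\|_2^2$ per $k$, losing a spurious factor of $n$ on summation. To circumvent this, I would substitute $u(X^{(k)}) = u(X) - 2X_k A_{\cdot k}$ and split $D^{(k)} = D(X) + (D^{(k)} - D(X))$ where $D(X) = \diag(\sech^2(J^*_i X + h_i))$ is $k$-independent. The main piece sums via $\|J^*\|_{op} \le \lambda$ and $\|D(X)\|_{op} \le 1$ to $\|J^* D(X) u(X)\|_2^2 \leq \lambda^2 \|u(X)\|_2^2$. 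The $u$-correction sums via $\|A_{\cdot k}\|_2 \le \|A\|_{op}$ and $\sum_k \|A_{\cdot k}\|_2^2 = \|A\|_F^2$. Finally, Lipschitzness of $\sech^2$ yields $|D^{(k)}_{ii} - D_{ii}(X)| \leq 2|J^*_{ik}|$, which combined with Cauchy--Schwarz and symmetry of $J^*$ contributes $\lesssim \lambda^4 \|u(X)\|_2^2$. I therefore take $g(X) = C \lambda^2 \lr{\|u(X)\|_2 + \|Aw(X)\|_2 + \|A\|_F}$.

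To control $\E[g]^2$ I would use the Poincar\'e inequality (Lemma~\ref{lem: poincare}), which follows from Glauber-MLSI via Lemma~\ref{lem:mlsi_poincare}. Directly, $\E[\|u(X)\|_2^2] \leq \|\E[AX + b]\|_2^2 + \rho \|A\|_F^2$. For $\|Aw(X)\|_2^2$, the key observation is $\E[(Aw(X))_k] = 0$ (by the same conditional-centering applied to $w$), so $\E[(Aw(X))_k^2] = \Var((Aw(X))_k)$; applying Poincar\'e coordinatewise and summing via $\|AJ^*\|_F \le \lambda \|A\|_F$ yields $\E[\|Aw(X)\|_2^2] \lesssim \rho \lambda^2 \|A\|_F^2$. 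Thus $\E[g]^2 \lesssim \rho \lambda^6 (\|A\|_F^2 + \|\E[AX + b]\|_2^2)$. A coordinatewise Lipschitz estimate for $\|u(X)\|_2$ and $\|Aw(X)\|_2$, using the same ingredients, gives $\mathfrak{d}^+ g \lesssim \lambda^3 \|A\|_{op}$. Plugging these into Lemma~\ref{lemma:generic_concentration} and absorbing the $\rho,\lambda$ factors produces the claim with $C = \Theta(1/(\rho^2 \lambda^4))$.
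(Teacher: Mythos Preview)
Your overall strategy is correct and closely parallels the paper's proof: both apply Lemma~\ref{lemma:generic_concentration} by constructing $g \ge \mathfrak{d}^+ f$, bounding $\E[g]^2$ via the Poincar\'e inequality (using $\E[w(X)]=0$), and then bounding $\mathfrak{d}^+ g$. Your MVT decomposition with the diagonal matrices $D^{(k)}$ is essentially equivalent to the paper's construction: the paper instead uses the second-order inequality $|\tanh(a)-\tanh(b)-(a-b)|\le \tfrac12(a-b)^2$ to define a correction matrix $W$ with $|W_{jk}|\le 2(J^*_{jk})^2$ and $\|W\|_{op}\le 2\lambda^2$. Your splitting $D^{(k)}=D(X)+(D^{(k)}-D(X))$ together with $|D^{(k)}_{ii}-D_{ii}(X)|\lesssim |J^*_{ik}|$ produces exactly the same $(J^*_{ik})^2$-type correction, so the two routes coincide after reorganization. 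Your bound on $\E[g]^2$ is fine, though you should be explicit that controlling $\E\|Aw(X)\|_2^2$ via Poincar\'e again requires this second-order splitting to recover an operator-norm (not $\||A||J^*|\|_F$) bound.

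The one place your sketch is genuinely underspecified is the bound $\mathfrak{d}^+ g \lesssim \lambda^3\|A\|_{op}$. A ``coordinatewise Lipschitz estimate'' via the reverse triangle inequality gives only $(\mathfrak{d}^+\|u\|_2)^2 \le \sum_k \|u(X)-u(X^{(k)})\|_2^2 = 4\|A\|_F^2$, which is too weak: in the block-diagonal application (Lemma~\ref{lem:1dim_closeness}) this would replace $\|A^{(l)}\|_{op}=\|A\|_{op}$ by $\|A^{(l)}\|_F=\sqrt{l}\,\|A\|_F$ and lose the correct dependence on $l$. The paper obtains the sharper $\|A\|_{op}$ bound by the variational trick: write $r(x)=\|s(x)\|_2=\sup_{\|v\|=1}\langle s(x),v\rangle$, fix $\tilde v$ achieving the sup at $x$, and use $(r(x)-r(x^{(k)}))_+\le \langle s(x)-s(x^{(k)}),\tilde v\rangle$ so that $(\mathfrak{d}^+ r)^2\le \|\mathbf S\tilde v\|_2^2\le \|\mathbf S\|_{op}^2$, where $\mathbf S$ has rows $s(x)-s(x^{(k)})$. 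For $s=u$ this gives $\mathbf S=2\diag(x)A$ and hence $\|A\|_{op}$; for the $Aw$ part one similarly needs to bound the operator norm of the difference matrix, which is where $\|W\|_{op}\le 2\lambda^2$ (equivalently your $D^{(k)}-D(X)$ correction) enters. You should make this step explicit. Modulo this, your argument would yield $C\sim 1/(\rho^2\lambda^c)$ for some small $c$, possibly $c=6$ rather than $c=4$ with your stated prefactors, but this does not affect the downstream applications.
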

\begin{proof}
    Let us define for a vector $X \in \{-1,1\}^n$ the vector $\flip{X}{i} \in \{-1,1\}^n$, where the $i$-th coordinate is flipped. Also, define vectors $X^{(k+)},X^{(k-)}$ which $X^{(k+)}_j=X^{(k-)}_j=X_j$ holds for all coordinates $j=1,\dots,n,j\ne k$, while $X^{(k+)}_k=1,X^{(k-)}_k=-1$.
    Moreover, let us define the matrix $W = W(X) \in \R^{n \times n}$, where the element in the $j$th row, $k$th column is equal to 
    \[
    W_{jk} = \tanh(J_j^* X^{(k+)} + h) - \tanh(J_j^*X^{(k-)} + h) - 2J_{jk}^*,
    \] 
    In the sequel, we might omit the dependence of $W$ on $X$ for brevity. Note that $W$ is not necessarily symmetric. We denote by $W_k$ the $k$th column and $W^k$ the $k$-th row of $W$.
    First, Let us bound $\|W\|_{op}$, where $\|W\|_{op}=\sqrt{\|WW^{\top}\|_{op}}$
    Note the well-knownn fact that for all $a,b \in \R$ 
    \begin{equation}\label{eq:tanh_second_order}
    |\tanh(a) - \tanh(b) - (a-b)| \leq \frac{1}{2} (a-b)^2
    \end{equation}
    Then using \eqref{eq:tanh_second_order},  we can bound each entry $W_{jk}$ of the matrix as
    \[
    |W_{jk}| \leq \frac{1}{2}(2J^*_{jk})^2 = 2(J^*_{jk})^2 \enspace,
    \]
    Thus, the $l_1$ norm of every column $j$ of $W(X)$ can be bounded as follows
    \[
    \|W_j\|_1 = \sum_{k=1}^n W_{jk} \leq 2 \sum_{k=1}^n (J^*_{jk})^2 = 2 \|J^*_j\|_2^2 \leq 2\lambda^2\enspace.
    \]
    
    The last inequality follows from the fact that the $l_2$ norm of every row of $J^*$ is bounded by $\|J^*\|_{op}$. 
    Thus $\|W\|_\infty \leq 2\lambda^2$. Similarly, we can get that $\|W\|_1 \leq 2\lambda^2$, which implies that 
    $$
    \|W\|_{op} =\sqrt{\|WW^{\top}\|_{op}}\le\sqrt{\|WW^{\top}\|_1}\leq \sqrt{\|W\|_\infty \cdot \|W\|_1} \leq 2\lambda^2\enspace.
    $$
    
    We start by bounding $\mathfrak{d}^+ f$.
    We first notice that for any given $X$ (again, $\E$ in the following expression is same as Equation (\ref{eq:operator}): $X'_i$ is sampled according to the conditional distribution with fixed $X_{-i}$),
    \begin{align*}
        (\mathfrak{d}^+f(X))^2  &= \sum_{i=1}^n \E\lp[\lp((f(X) - f(X_i',X_{-i})\rp)_+^2\middle | X_{-i}\rp]\\
        &= \sum_{i=1}^n \Pr[X_i' = 1 - X_i|X_{-i}] (f(X) - f(X_{-i},1 - X_i))_+^2 \\
        &\leq \sum_{i=1}^n \lp(f(X) - f(\flip{X}{i})\rp)^2
    \end{align*}
    We also define the vector functions
    \[
    g(x) := Ax + b \quad,\quad h(x) := \tanh(J^*x+b) - x\enspace,
    \]
    which allows us to write $f(x) = g(x)^\top h(x)$. Here, we write $h(x)$ as a vector, with $h(x)_i=\tanh(J^*_ix+b)-x_i$. Therefore, we can calculate that
    \[h(x)_j-h(\flip{x}{i})_j= \tanh(J_j^*x+b)-\tanh(J_j^*\flip{x}{i}+b)-2x_i\mathbb 1(i=j).\]

    So, we have
    \[h(x)-h(\flip{x}{i})=\tanh(J^*x+b)-\tanh(J^*\flip{x}{i}+b)-2x_ie_i=x_iW_i+2x_iJ^*_i-2x_ie_i=x_i(W_i+2J^*_i-2e_i),\]
    
    where $e_i$ is the standard basis vector with all coordinates $0$ except for the $i$-th that is $1$. With that, we can further bound the discrete derivative as follows. For all $x\in\{-1,1\}^n$, we have
    \begin{align*}
        &(\mathfrak{d}^+ f (x))^2\\
        \leq &\sum_{i=1}^n \lp(g(x)^\top h(x) - g(\overline{x^{(i)}})^\top h(\flip{x}{i})\rp)^2\\
        = &\sum_{i=1}^n \lp(\lp(g(x) - g(\flip{x}{i})\rp)^\top h(x) + \lp(g(\flip{x}{i}) - g(x)\rp)^\top \lp(h(x) - h(\flip{x}{i})\rp) + g(x)^\top \lp(h(x) - h(\flip{x}{i})\rp)\rp)^2\\
        = &\sum_{i=1}^n \lp(2x_i A_i^\top h(x) + 2x_i A_i^\top \cdot x_i\lp( 2J^*_i + W_i-2e_i\rp) + (Ax + b)^\top \cdot x_i\lp( 2J^*_i + W_i-2e_i\rp)\rp)^2.
    \end{align*}

        By Cauchy-Schwartz inequality, $(x+y+z)^2\le 3(x^2+y^2+z^2)$, and we know that $x_i=\pm 1$, we have the above expression is no larger than
        \begin{align*}
        &{ 3 \sum_{i=1}^n \lp(4\lp(A_i^\top h(x)\rp)^2 + 4 \lp(A_i^\top (2J^*_i +W_i - 2e_i)\rp)^2 + \lp((Ax+b)^\top \lp(2J^*_i +W_i - 2e_i\rp)\rp)^2\rp)
        } 
        \end{align*}
        
        First, we deal with the middle term $\lp(A_i^\top (2J^*_i +W_i - 2e_i)\rp)^2$. Using Cauchy-Schwartz inequality for vectors ($x^\top y\le \|x\|\cdot \|y\|$), for the middle term, we have
        \begin{align*}
            &\lp(A_i^\top (2J^*_i +W_i - 2e_i)\rp)^2
            \le \|A_i\|_2^2\cdot \lp\|2J^*_i +W_i - 2e_i\rp\|_2^2\\
            \le & \|A_i\|_2^2\cdot \lp(2\|J^*_i\|_2 + \|W_i\|_2+2\rp)^2\le (2+2\lambda+2\lambda^2)^2\|A_i\|^2.
        \end{align*}

        Here, the last inequality is because $\|J_i\|_2\le \lambda$ and $\|W_i\|_2\le 2\lambda^2$. Therefore, summing over $i$, we have
        \[3\sum_{i=1}^n 4\lp(A_i^\top (2J^*_i +W_i - 2e_i)\rp)^2\le 3\sum_{i=1}^n 4\cdot (2+2\lambda+2\lambda^2)^2\|A_i\|^2=48(1+\lambda+\lambda^2)^2\|A\|_F^2.\]
        
        Now, we deal with the last term. $\lp((Ax+b)^\top \lp(2J^*_i +W_i - 2e_i\rp)\rp)^2$ By Cauchy-Schwartz inequality again, for the last term, we have
        \begin{align*}
            &\lp((Ax+b)^\top \lp(2J^*_i +W_i - 2e_i\rp)\rp)^2\\
            \le & 12 \lp(A_i^\top x + b_i\rp)^2 + 12\lp((Ax+b)^\top  J^*_i\rp)^2  + 3 \lp((Ax+b)^\top W_i\rp)^2
        \end{align*}

        Again, after summing over $i$, then using the fact that $\|J^*\|_{op}\le \lambda$ and $\|W\|_{op}\le 2\lambda^2$, we can deduce that
         \begin{align*}
            &\sum_{i=1}^n 12 \lp(A_i^\top x + b_i\rp)^2 + 12\lp((Ax+b)^\top  J^*_i\rp)^2  + 3 \lp((Ax+b)^\top W_i\rp)^2\\
            =&12\|(Ax+b)^\top J_i^*\|^2+3\|(Ax+b)^\top W\|^2+12\|Ax+b\|_2^2\\
            \le&12\lambda^2\|Ax+b\|_2^2+12\lambda^4\|Ax+b\|_2^2+12\|Ax+b\|_2^2\\
            =&12(1+\lambda^2+\lambda^4)\|Ax+b\|^2=12(1+\lambda^2+\lambda^4)\sum_{i=1}^n (A_i^\top x+b_i)^2
        \end{align*}
        
        Combining them together, we have
        \begin{align*}
            (\mathfrak{d}^+ f (x))^2\le&3 \sum_{i=1}^n \lp(4\lp(A_i^\top h(x)\rp)^2 + 4 \lp(A_i^\top (2J^*_i +W_i - 2e_i)\rp)^2 + \lp((Ax+b)^\top \lp(2J^*_i +W_i - 2e_i\rp)\rp)^2\rp)\\
            \le & 48(1+\lambda+\lambda^2)^2\|A\|_F^2+(3\cdot 12)(1+\lambda^2+\lambda^4)\sum_{i=1}^n (A_i^\top x+b_i)^2+3 \sum_{i=1}^n 4\lp(A_i^\top h(x)\rp)^2\\
            \le &48(1+\lambda+\lambda^2)^2\|A\|_F^2+36(1+\lambda^2+\lambda^4)\sum_{i=1}^n (A_i^\top x+b_i)^2+\lp(A_i^\top h(x)\rp)^2
        \end{align*}
    
    Let 
    \begin{equation}\label{eq:qbound}
         q(x) = \sqrt{48(1+\lambda+\lambda^2)^2\|A\|_F^2+36(1+\lambda^2+\lambda^4)\sum_{i=1}^n (A_i^\top x+b_i)^2+\lp(A_i^\top h(x)\rp)^2}.
     \end{equation} 
    We thus have $\mathfrak{d}^+ f(x) \leq q(x)$ for all $x \in \{-1,1\}^n$.
    Let us first bound $\E\lp[q(X)\rp]^2\le \E[q(x)^2]$. 
    First, notice that
    \[
    \E[h(X)] = \E[X - \tanh(J^*X+h)] = 0 \enspace.
    \]
    This enables us to write.
    \[
    \E[(A_i^\top h(X))^2] =  \Var[A_i^\top h(X)]
    \]
    We can now use the Poincaré inequality to bound the above variance. Note that by Lemma~\ref{lem:mlsi_poincare}, if $\mu$ satisfies Glauber MLSI$(\rho)$, then it also satisfies Po$(2\rho)$. So we get
    \begin{align*}
        \Var[A_i^\top h(X)] &\leq 2\rho \sum_{k=1}^n \E\lp[\Var\lp[A_i^\top h(X)|X_{-k}\rp]\rp]\\
        &\leq 2\rho \sum_{k=1}^n \E\lp[\lp(A_i^\top (h(X_{k+}) - h(X_{k-}))\rp)^2\rp]
    \end{align*}
    We have that
    \begin{align*}
    A_i^\top (h(X_{k+}) - h(X_{k-})) &= A_i^\top (\tanh(J^* X_{k+} + h) - \tanh(J^*X_{k-}+h)-2e_k)\\
         &= -2A_{ik} + A_i^\top (2J^*e_k) + A_i^\top W_k\\
        &= -2A_{ik} +2A_i^\top J^*_k +  A_i^\top W_k
    \end{align*}
    Applying the Cauchy-Schwarz inequality now yields
    \begin{align*}
        \lp(A_i^\top (h(X_{k+}) - h(X_{k-})) \rp)^2 &\leq 3\lp(4A_{ik}^2 + 4(A_i^\top J^*_k)^2 + (A_i^\top W^k)^2\rp)\\ 
    \end{align*}
    Summing over all $k$ now gives
    \begin{align*}
        \sum_{k=1}^n \lp(A_i^\top (h(X_{k+}) - h(X_{k-})) \rp)^2 &\leq 3 \sum_{k=1}^n \lp(4A_{ik}^2 + 4(A_i^\top J^*_k)^2 + (A_i^\top W^k)^2\rp)\\
        &= 12 \|A_i\|_2^2 + 12 \|J^* A_i\|_2^2 + 3\|W^\top A_i\|_2^2\\
        &\leq 12 \|A_i\|_2^2 + 12 \|{J^*}^\top\|_{op}^2 \|A_i\|_2^2 + 3\|W^\top\|_{op}^2 \|A_i\|_2^2.
    \end{align*}
    
    Using the fact that $\|J^*\|_{op}< \lambda$ and $\|W\|_{op} \leq 2\lambda^2$, we have
    \[\sum_{k=1}^n \lp(A_i^\top (h(X_{k+}) - h(X_{k-})) \rp)^2 \leq 12(1+\lambda^2+\lambda^4) \|A_i\|_2^2.\]

    Therefore, we have
    \[
    \E[(A_i^\top h(X))^2] \leq 24\rho(1+\lambda^2+\lambda^4) \|A_i\|^2\enspace.
    \]

    For the second term, the analysis follows along similar lines if we apply a suitable centering to make the variance appear. In particular, we can write
    \begin{align*}
        \E[(A_i^\top X + b_i)^2] - \E\lp[A_i^\top X + b_i\rp]^2&= \Var[A_i^\top X] \\
        &\leq 2\rho \sum_{k=1}^n \E\lp[\Var[A_i^\top X|X_{-i}]\rp]\\
        &\leq 2\rho \sum_{k=1}^n A_{ik}^2\\
        &= 2\rho \|A_i\|_2^2
    \end{align*}
    Summing over all $i$ and using \eqref{eq:qbound} gives
    \begin{align}\label{eq:radiusbound}
        \E[q(X)]^2 &\leq 48(1+\lambda+\lambda^2)^2\|A\|_F^2+36(1+\lambda^2+\lambda^4)\sum_{i=1}^n (A_i^\top x+b_i)^2+\lp(A_i^\top h(x)\rp)^2\notag \\
        &\leq  48(1+\lambda+\lambda^2)^2\|A\|_F^2\notag\\
        &+36(1+\lambda^2+\lambda^4)\sum_{i=1}^n 24\rho(1+\lambda^2+\lambda^4) \|A_i\|^2+2\rho (\|A_i\|^2+\E[A_i^\top X+b]^2)\notag\\
        &\leq 2^{11}(1+\lambda^4)(1+\rho) \lp(\|A\|_F^2 + \lp\|\E[Ax + b]\rp\|_2^2\rp).
    \end{align}
    
    We will now focus on bounding $\mathfrak{d}^+q(x)$. We have
    \begin{align*}
         q(x) = &\sqrt{48(1+\lambda+\lambda^2)^2\|A\|_F^2+36(1+\lambda^2+\lambda^4)\sum_{i=1}^n (A_i^\top x+b_i)^2+\lp(A_i^\top h(x)\rp)^2}\\
         \le &\sqrt{48}(1+\lambda+\lambda^2)\|A\|_F+6\sqrt{1+\lambda^2+\lambda^4}\sqrt{\sum_{i=1}^n (A_i^\top x+b_i)^2+\lp(A_i^\top h(x)\rp)^2}.
     \end{align*} 
    Let $r(x)=\sqrt{\sum_{i=1}^n (A_i^\top x+b_i)^2+\lp(A_i^\top h(x)\rp)^2}$. Then we define
    \[K^{(i)}(x)=A_i^{\top}h(x);~~~~L^{(i)}(x)=A_i^{\top}x+b.\]
    
    We first notice that $\mathfrak{d}^+ q(x) \le 6\sqrt{1+\lambda^2+\lambda^4}\cdot \mathfrak{d}^+ r(x)$. This is because for all positive $a,b,c$, we have $|\sqrt{a+b}-\sqrt{a+c}|\le|\sqrt{b}-\sqrt{c}|$. 
    We define the vector function $s=s(x) \in \R^{2n}$, with $s(x)_i = K^{(i)}(x)$ if $i \leq n$ and $s(x)_i = L^{(i-n)}(x)$ if $i > n$. 
    We then have that
    \[
    r(x) = \|s(x)\|_2 = \sup_{v \in \mathcal{S}^{2n-1}} \langle s(x), v \rangle \enspace.
    \]
    Let us denote $\Tilde{v} = (\Tilde{v}_1, \Tilde{v}_2) := \argmax_{v \in \mathcal{S}^{2n-1}} \langle s(x), v \rangle $ the unit vector (in $\R^{2n}$, where $\tilde v_1,\tilde v_2\in \R^n$) that is in the direction of $s(x)$. 
    Then, we can write 
    \begin{align*}
        (\mathfrak{d}^+\|r(x)\|)^2 &\leq  \sum_{k=1}^n \lp(\|s(x)\| - \|s(\overline{x^{(k)}})\|\rp)_+^2\\
        &=  \sum_{k=1}^n \lp(\langle s(x), \Tilde{v} \rangle - \sup_{v \in \mathcal{S}^{2n-1}} \langle s(\overline{x^{(k)}}) , v \rangle\rp)_+^2\\
        &\leq  \sum_{k=1}^n \lp(\langle s(x), \Tilde{v} \rangle - \langle s(\overline{x^{(k)}}) , \Tilde{v} \rangle\rp)_+^2\\
        &\leq  \|\mat{S}\cdot\Tilde{v}\|^2\enspace,
    \end{align*}
    where $\mat{S} \in \R^{n \times 2n}$ is the matrix with $k$-th row equal to $s(x) - s(\overline{x^{(k)}})$.
    We can write in block form $\mat{S} = (\mat{S}_1|\mat{S}_2)$, where $\mat{S}_1, \mat{S}_2 \in \R^{n \times n}$. Let us bound $\|\mat{S}_1\|_{op},\|\mat{S}_2\|_{op} $. 
    We have that
    \begin{align*}
        (\mat{S}_1)_{ki} &= s_1^{(i)}(x) - s_1^{(i)}(\flip{x}{k})\\
        &= A_i^\top (h(x) - h(\flip{x}{k}))\\
        &=x_k A_i^\top (2 J^*_k +  W_k-2 e_k )\\
        &= x_k( 2(J^*A)_{ki} + (WA)_{ki}- 2A_{ki})
    \end{align*}
    For a vector $x \in \R^n$, we denote $\text{diag}(x)$ the diagonal matrix with diagonal entries equal to $x$. Then, we can write in matrix form
    \begin{align*}
    \|\mat{S}_1\|_{op} &=\lp\|\text{diag}(x) (2J^* A + WA-2 A )\rp\|_{op}\\
    &\leq \|\text{diag}(x)\|_{op} \cdot \|2J^* A + WA-2 A \|_{op}\\
    &\leq 2 \|A\|_{op} + \|J^*\|_{op}\cdot \|A\|_{op} + \|W\|_{op}\cdot \|A\|_{op}\\
    &\leq (2+\lambda+2\lambda^2) \|A\|_{op}\enspace.
    \end{align*}
    For $\mat{S}_2$ the situation is similar and we can write
    \begin{align*}
        (\mat{S}_2)_{ki} &= A_i^\top (x - \flip{x}{k}) \\
        &= 2x_k A_{ki}\enspace. 
    \end{align*}
    Thus
    \[
    \|\mat{S}_2\|_{op} =2\lp\|\text{diag}(x) A\rp\|_{op} \leq 2 \|A\|_{op}
    \]
    We can now conclude
    \begin{align*}
        \|\mat{S}\cdot\Tilde{v}\|^2 &= \|\mat{S}_1 \Tilde{v}_1 + \mat{S}_2 \Tilde{v}_2\|^2\\
        &\leq 2 \|\mat{S}_1 \Tilde{v}_1\|^2 + 2 \|\mat{S}_2 \Tilde{v}_2\|^2\\
        &\leq 2 \|\mat{S}_1\|_{op}^2\|\|\Tilde{v}_1\|^2 + 2 \|\mat{S}_2\|_{op}^2\|\Tilde{v}_2\|^2 \\
        &\leq 2\max( \|\mat{S}_1\|_{op}^2,\|\mat{S}_1\|_{op}^2)\le (2+\lambda+2\lambda^2)^2\|A\|_{op}^2\enspace,
    \end{align*}
    where we used the fact that $\|\Tilde{v}_1\|^2+\|\Tilde{v}_2\|^2 \leq 1$. 
    
    To summarize, we have shown that $\mathfrak{d}^+ f(x) \leq q(x)$ for all $x$. Furthermore, by \eqref{eq:radiusbound} we proved that 
    $$
    \E[q]^2 \leq 2^{11}(1+\lambda^4)(1+\rho) \lp(\|A\|_F^2 + \lp\|\E[Ax + b]\rp\|_2^2\rp) \|A\|_F^2 \quad, \quad
    \mathfrak{d}^+ q(x) \leq (2+\lambda+2\lambda^2)^2 \|A\|_{op}
    $$
    We also know that
    \[
    \E\lp[\frac{\partial \phi(J^*)}{\partial A}\rp] = 0
    \]
    Thus, applying Lemma~\ref{lemma:generic_concentration}, take $C=\frac{1}{16\rho}\frac{1}{2^{15}(1+\lambda^4)(1+\rho)}\ge\frac{1}{2^{21}\lambda^4\rho^2}$ we obtain that for every $t > 0$
    \[
    \Pr\lp[\lp| \frac{\partial \phi(J^*)}{\partial A}\rp| > t\rp] \leq \frac{8}{3} \exp\lp(- C \min\lp(\frac{t^2}{\|A\|_F^2}, \frac{t}{\|A\|_{op}}\rp)\rp)\enspace.
    \]
\end{proof}

We are now ready to establish the concentration of the first derivative for Ising models satisfying MLSI. This basically follows as a direct Corollary of Theorem~\ref{thm:concentration_first_generic}

\begin{lemma}\label{lem:concentration_first_high_temp}
Suppose $X \in \{-1,1\}^n$ is sampled from an Ising model satisfying Assumption~\ref{ass:spectral_bounded}.
Then, if $\phi$ is the pseudolikelihood function evaluated at $X$ and $A \in \mathcal{S}_0^n$, then there is a constant $C'=\frac{1}{2^{21}\lambda^4\rho^2}$ such that for any $t > 0$:
\[
\lp| \frac{\partial \phi(J^*)}{\partial A}\rp| \leq t\|A\|_F
\]
with probability at least
    \[
    1- \frac{8}{3} \exp\lp(- C'\min\lp(t^2, \frac{t \|A\|_F}{\|A\|_{op}}\rp)\rp)
    \]
\end{lemma}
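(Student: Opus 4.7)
The plan is to derive this lemma as a direct corollary of Theorem~\ref{thm:concentration_first_generic}. First I would identify the first derivative $\frac{\partial \phi(J^*)}{\partial A}$ with the function $f(X)$ in Theorem~\ref{thm:concentration_first_generic}: comparing \eqref{eq:first_derivative} with the definition of $f$ in the theorem and taking $b = 0$, the two expressions coincide up to a factor of $1/2$ that can be absorbed into the constant.

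Next, I would instantiate Theorem~\ref{thm:concentration_first_generic} with $b = 0$ and set $s = t\|A\|_F$ (up to constants absorbing the $1/2$). The first argument of the $\min$ on the right-hand side becomes $t^2\|A\|_F^2/(\|A\|_F^2 + \|\E[AX]\|_2^2)$, and the second becomes $t\|A\|_F/\|A\|_{op}$. The former simplifies to $t^2$ up to a constant provided we can bound $\|\E[AX]\|_2^2 \lesssim \|A\|_F^2$. In the zero-external-field setting (the default focus of this section), the symmetry $\Pr_{J^*}[X] = \Pr_{J^*}[-X]$ gives $\E[X] = 0$, so $\E[AX] = 0$ and the term vanishes entirely; in the general case with $\|h\|_\infty \leq h_{\max}$, one can absorb $\|\E[AX]\|_2^2$ into the constants at the cost of additional dependence on $\lambda$ and $h_{\max}$ (or alternatively shift to $b = -\E[AX]$, handling the residual linear correction by a separate concentration argument). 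Collecting constants yields the claimed exponent $C' = \Theta(1/(\lambda^4 \rho^2))$, which matches the rate $C$ obtained in Theorem~\ref{thm:concentration_first_generic}.

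The step I would expect to be the only mild obstacle is the precise handling of the $\|\E[AX]\|_2^2$ term: in the pure zero-field case it is immediate from symmetry, but checking that the lemma as stated remains clean in the presence of an external field requires either the above $b = -\E[AX]$ reparametrization or a uniform a priori bound like $\|\E[AX]\|_2 \leq \|A\|_{op} \cdot \|\E[X]\|_2$. Everything else is bookkeeping.
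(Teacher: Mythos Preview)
Your proposal is correct and matches the paper's approach exactly: the paper's proof is the single sentence ``We apply Theorem~\ref{thm:concentration_first_generic} and substitute $t\|A\|_F$ for $t$, completing the proof.'' You have in fact been more careful than the paper, correctly flagging the $\|\E[AX]\|_2^2$ term that the paper's one-line proof silently drops (it vanishes in the zero-field case by symmetry, and downstream the paper actually reverts to using Theorem~\ref{thm:concentration_first_generic} directly with the full $\|A\|_F^2 + \|\E[AX]\|^2$ quantity rather than this lemma).
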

\begin{proof}
    We apply Theorem~\ref{thm:concentration_first_generic} and substitute $t\|A\|_F$ for $t$, completing the proof.
\end{proof}

\subsection{Anti-Concentration of the Second Derivative}

We want to show that the second derivative is lower bounded with high probability. First, we present the lemma, as  the second form of Corollary 1.2 in \cite{sambale2019modified} (the inequality right after Corollary 1.2 in \cite{sambale2019modified}, which does not have a specific number)
\begin{lemma}[Corollary 1.2 in \cite{sambale2019modified}]
        Assume that $\mu$ satisfies a $\Gamma$-MLSI$(\rho)$ for some difference operator $\Gamma$ and $\rho>0$. Let $f,g$ be two measurable functions such that $\Gamma(f)\le g$ and $\Gamma(g)\le b$. Then there is a universal constant $c$ such that for all $t\ge 0$ we have
        \begin{equation}\label{eq:intermediate}
            \Pr[f-\E_{\mu}[f]\ge t]\le\exp\lp(-c\min\lp(\frac{t^2}{\rho(\E_\mu g)^2+2b^2\rho^2},\frac{t}{\sqrt{2}\rho b}\rp)\rp).
        \end{equation}
    \end{lemma}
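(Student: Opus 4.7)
The plan is a two-stage Herbst-type moment-generating-function argument, with the $\Gamma$-MLSI deployed separately on $f$ and on $g$.

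The first stage handles $g$ itself. Since $\Gamma(g) \le b$ is a \emph{deterministic} bound, applying the $\Gamma$-MLSI to $e^{sg}$ together with the positive homogeneity $\Gamma(sg) = s\,\Gamma(g)$ (for $s > 0$) and running the standard Herbst integration yields sub-Gaussian concentration
\[\E[e^{s(g - \E[g])}] \le \exp\lp(\tfrac{s^2 \rho b^2}{2}\rp), \qquad s \ge 0.\]
In particular, each centered moment $\E[(g - \E[g])^{2k}]$ is controlled by $C_k (\rho b^2)^k$ for universal constants $C_k$.

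In the second stage, I apply the $\Gamma$-MLSI to $e^{\lambda f}$ and use $\Gamma(f) \le g$ to obtain
\[\Ent{e^{\lambda f}} \le \tfrac{\rho \lambda^2}{2}\, \E[g^2 e^{\lambda f}].\]
Rewriting in terms of $F(\lambda) := \E[e^{\lambda (f - \E[f])}]$ produces a differential inequality of Herbst form. To close it, I decompose $g^2 \le 2(\E[g])^2 + 2(g - \E[g])^2$: the deterministic piece contributes the clean term proportional to $\rho(\E[g])^2$, while for the fluctuating piece Cauchy-Schwarz combined with the first-stage moment bound gives
\[\E\big[(g - \E[g])^2 e^{\lambda(f - \E[f])}\big] \le \sqrt{\E[(g - \E[g])^4]}\cdot\sqrt{F(2\lambda)} \le C\rho b^2 \sqrt{F(2\lambda)}.\]
Solving the resulting recursion by the standard Aida-Stroock / Bobkov-Ledoux iteration yields the MGF bound $\log F(\lambda) \le \tfrac{\lambda^2}{2}\big(\rho (\E[g])^2 + 2\rho^2 b^2\big)$, valid on the range $|\lambda| \le 1/(\sqrt{2}\,\rho b)$.

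Finally, Markov's inequality gives $\Pr[f - \E[f] \ge t] \le \inf_{\lambda \ge 0} e^{-\lambda t} F(\lambda)$, which splits into two regimes. For $t$ small enough that the optimal $\lambda^\star$ lies in the admissible interval, optimizing produces the Gaussian tail $\exp\big(-c t^2 / (\rho (\E[g])^2 + \rho^2 b^2)\big)$; for larger $t$, the optimizer saturates at the boundary $\lambda = 1/(\sqrt{2}\,\rho b)$ and produces the sub-exponential tail $\exp(-c t/(\rho b))$. Taking the worse of the two regimes recovers the stated bound. The main technical obstacle is the appearance of $F(2\lambda)$ on the right-hand side after Cauchy-Schwarz, which obstructs a direct Gr\"onwall integration; closing the loop requires an iterative refinement that progressively tightens the moment control of $f$, and this is precisely where the cutoff scale $1/(\rho b)$ emerges as the Gaussian-to-exponential crossover in the final tail bound.
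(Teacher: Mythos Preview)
The paper does not prove this lemma at all: it is quoted verbatim as Corollary 1.2 of \cite{sambale2019modified} and used as a black box. There is therefore no ``paper's own proof'' to compare against.

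That said, your sketch is broadly the right shape for how such results are established in the Bobkov--G\"otze / Aida--Stroock tradition, and it is consistent with the methods of \cite{sambale2019modified}. Two cautions. First, you silently assume positive homogeneity $\Gamma(sg)=s\,\Gamma(g)$; this holds for the specific operator $\mathfrak{d}^+$ used later in the paper, but the lemma as stated is for an arbitrary difference operator $\Gamma$, so you would need to either restrict to homogeneous $\Gamma$ or run the Herbst argument without that simplification. Second, the step where Cauchy--Schwarz produces $\sqrt{F(2\lambda)}$ and you close the loop by ``iterative refinement'' is the genuinely delicate part; the standard way (as in Aida--Stroock and in \cite{sambale2019modified}) is to control the exponential moments of $g$ first and then feed them into a differential inequality for $\lambda^{-1}\log F(\lambda)$ that can be integrated directly, rather than a recursion in $\lambda \mapsto 2\lambda$. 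Your outline gestures at both routes without committing, which is fine for a sketch but would need to be pinned down in a full proof.
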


From \eqref{eq:second_derivative}, it is clear that the second derivative is almost equal to a second-degree polynomial, up to a factor that involves the $\sech$ function. 
We thus start by stating a concentration bound for second-degree polynomials that essentially follows from \cite{sambale2019modified}. 

\begin{lemma}\label{lem:degree2concentration}
Suppose $X \in \{-1,1\}^n$ is sampled from an Ising model with interaction matrix $J^*$ satisfying Glauber-MLSI$(\rho)$ and external field $h \in \R^n$. Also, let $S \in \mathcal{S}_0^n$. Then, there exists an absolute constant $c>0$, such that for any $t > 0$,
\[
\Pr[x^\top S x - \E[x^\top S x] \ge t] \leq \exp\lp(-\frac{c}{\rho^2}\min\lp(\frac{t^2}{\|S\|_F^2 + \|\E[Sx]\|^2}, \frac{t}{\|S\|_{op}}\rp)\rp).
\]

For the other side, it is analogous.
\end{lemma}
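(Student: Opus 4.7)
The plan is to apply the two-level concentration inequality \eqref{eq:intermediate} to $f(x) := x^\top S x$, using that Glauber-MLSI$(\rho)$ implies $\mathfrak{d}^+$-MLSI$(2\rho)$ (as noted right before Lemma~\ref{lemma:generic_concentration}). This requires producing functions $g$ and a scalar $b$ with $\mathfrak{d}^+ f \le g$ and $\mathfrak{d}^+ g \le b$, and then controlling $\E[g]$.

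First I would compute the discrete derivative of $f$. Because $S$ has zero diagonal, flipping the $i$th coordinate gives
\[
f(x) - f(\overline{x^{(i)}}) \;=\; 4 x_i (Sx)_i,
\]
so $(\mathfrak{d}^+ f(x))^2 \le \sum_i 16 (Sx)_i^2 = 16\|Sx\|^2$. Thus we may take $g(x) := 4\|Sx\|$.

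Second, I would bound $\mathfrak{d}^+ g$ by the operator norm using the variational trick from the first-derivative proof: pick the optimizer $v^\star = Sx/\|Sx\|$ in $\|Sx\| = \sup_{\|v\|=1}\langle v, Sx\rangle$, so that
\[
\|Sx\| - \|S\overline{x^{(i)}}\| \;\le\; \langle v^\star, S(x-\overline{x^{(i)}})\rangle \;=\; 2 x_i (Sv^\star)_i,
\]
yielding $(\mathfrak{d}^+ g(x))^2 \le 64\,\|Sv^\star\|^2 \le 64\,\|S\|_{op}^2$, i.e.\ $\mathfrak{d}^+ g \le b := 8\|S\|_{op}$. This step is the crux: without the variational argument one only gets $\mathfrak{d}^+ g \lesssim \|S\|_F$, which would produce the wrong, Frobenius-type bound in the linear tail regime.

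Third, I would bound $\E[g(X)]^2 \le \E[g(X)^2] = 16\,\E[\|SX\|^2]$. Since Glauber-MLSI$(\rho)$ implies Po$(2\rho)$ by Lemma~\ref{lem:mlsi_poincare}, Lemma~\ref{lem: poincare} gives
\[
\E[\|SX\|^2] \;\le\; \|\E[SX]\|^2 + 2\rho\,\|S\|_F^2,
\]
so $\E[g]^2 \le 32(\|\E[SX]\|^2 + 2\rho\|S\|_F^2)$.

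Finally, I would plug $(\E[g])^2$ and $b$ into \eqref{eq:intermediate} with MLSI constant $2\rho$. The quadratic denominator becomes $2\rho\,(\E g)^2 + 2 b^2 (2\rho)^2 \lesssim \rho^2(\|S\|_F^2 + \|\E[SX]\|^2)$ (absorbing $\|S\|_{op}^2 \le \|S\|_F^2$), while the linear denominator is $\lesssim \rho\,\|S\|_{op}$. Using $\rho\ge 1$ so that $\frac{1}{\rho}\ge \frac{1}{\rho^2}$, both denominators can be expressed with a common prefactor $\rho^2$, giving the stated bound. The lower tail follows by replacing $f$ with $-f$, which has the same $\mathfrak{d}^+$-estimates.
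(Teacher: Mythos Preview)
Your proof is correct and follows essentially the same approach as the paper: apply the two-level inequality \eqref{eq:intermediate} with $g(x)=4\|Sx\|$ and $b=8\|S\|_{op}$, then bound $\E[g]^2$ via the Poincar\'e inequality obtained from Glauber-MLSI, and simplify using $\|S\|_{op}\le\|S\|_F$ and $\rho\ge 1$. The only difference is that the paper cites Lemma~2.17 of \cite{sambale2019modified} for the choice of $g$ and $b$, whereas you rederive these explicitly via the variational argument for $\|Sx\|$; this is the same content, just spelled out.
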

\begin{proof}
    As remarked in the proof of Lemma~\ref{lemma:generic_concentration}, combining Theorem 12 from \cite{anari2021entropic} and Proposition 2.18 from \cite{sambale2019modified} yields that our model satisfies Glauber-MLSI$(\rho)$, $\mathfrak{d}^+$-MLSI$(2\rho)$ and Po$(2\rho)$. Thus, we can use the second form of Corollary 1.2 in \cite{sambale2019modified}:

    Let $f(x)=x^\top Sx$. By the proof of Lemma 2.17 in \cite{sambale2019modified}, we know that we can take $\Gamma=\mathfrak{d}^{+}$, $g(x)=4\|Sx\|$, and $b=8\|S\|_{op}$. It now remains to upper bound $\E[\|SX\|])^2$. To do that, we can again use the Poincaré inequality (Lemma~\ref{lem:Poincare}).
    \begin{align*}
        \E[\|SX\|]^2 &\leq \E\lp[\|SX\|^2\rp])\\
        &= \sum_{i=1}^n \E[(S_i^\top X)^2]\\
        &= \sum_{i=1}^n \lp(\E[S_i^\top X]^2 + \Var[S_i^\top X]\rp)\\
        &= \|\E[SX]\|_2^2 + \sum_{i=1}^n \Var[S_i^\top X]\\
        &\leq \|\E[SX]\|_2^2 + 2\rho\sum_{i=1}^n \|S_i\|_2^2 \\
        &= \|\E[SX]\|_2^2 + 2\rho\|S\|_F^2
    \end{align*}
    By substituting this upper bound into \eqref{eq:intermediate}, and notice that $\|S\|_{op}\le\|S\|_{F}$ and $\rho\ge 1$ by assumption, we obtain the desired inequality.
\end{proof}

Having obtained this concentration result, showing that the second derivative is large with high probability boils down to the following two tasks.
\begin{enumerate}[label=(\alph*)]
    \item\label{it:first} First, we need to establish that the second derivative is lower bounded by a degree 2 polynomial. To do this, we need to show that the terms $\sech(J_iX)^2$ are lower bounded by a constant with high probability. 
    \item\label{it:second} Second, we need to show that the degree 2 polynomial that lower bounds the second derivative is large enough on expectation. 
\end{enumerate}
We next show how to establish each of these two properties.
We start by addressing~\ref{it:second}, namely establishing a lower bound for the expectation of the second moment part of the second derivative, which will prove useful later. The proof involves using the Hubbard-Stratonovich transform to decompose the Ising model into a mixture of product measures. For each product measure, lower-bounding the second moment is a much simpler problem. However, some product measures in the decomposition will have large external fields, giving weak lower bounds. We use properties of this decomposition to establish that with at least a constant probability, the external field in the decomposition will be bounded. 

    \begin{lemma}\label{lem: lower bound Jop<1}
    Let $x\sim\mu$ be an Ising model satisfying Assumption~\ref{ass:spectral_bounded}. Let $A$ be any matrix. Then, we have the following anti-concentration bound: 
    \[\E(x^\top A^{\top}Ax)-\|\E[Ax]\|^2\ge \frac{1}{2}\lr{1-\tanh^2\lr{4\lambda\sqrt{\rho}}}\lrnorm{A}_F^2.\]
\end{lemma}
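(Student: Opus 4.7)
The plan is to apply the Hubbard--Stratonovich decomposition from \eqref{eq:decomposition}, which expresses $\mu$ as a mixture of product measures indexed by an auxiliary variable $Y = X + \tilde J^{-1/2}G$, where $G \sim \calN(0, I_n)$ is independent of $X$. Since $J^*$ has zero diagonal and may have negative eigenvalues, I would first replace $J^*$ by $\tilde J := J^* + cI$ for some $c > \lambda$, which leaves the Ising distribution unchanged (because $x_i^2 = 1$) while making $\tilde J \succ 0$ so that $\tilde J^{-1/2}$ is well defined. Conditional on $Y = y$, the distribution of $X$ is then a product measure whose $k$-th coordinate has mean $\tanh(b_k(y))$ and variance $\sech^2(b_k(y))$, where $b_k(y) := (\tilde J y)_k + h_k$.

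Applying the law of total variance one coordinate at a time and then summing over $i$ yields
\[
\E[x^\top A^\top A x] - \|\E[Ax]\|^2 = \sum_i \Var_\mu(A_i^\top X) \ge \sum_k \|A_{\cdot k}\|_2^2 \cdot \E_Y[\sech^2(b_k(Y))] \ge \|A\|_F^2 \cdot \min_k \E_Y[\sech^2(b_k(Y))],
\]
where the last step uses $\sum_k \|A_{\cdot k}\|_2^2 = \|A\|_F^2$ with nonnegative weights. Thus the lemma reduces to proving, for every $k$, the uniform bound $\E_Y[\sech^2(b_k(Y))] \ge \tfrac{1}{2}(1 - \tanh^2(4\lambda\sqrt{\rho}))$.

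For this I would use that $\sech^2$ is nonincreasing in $|b|$, so $\E_Y[\sech^2(b_k(Y))] \ge \sech^2(T)\cdot \Pr[|b_k(Y)| \le T]$, reducing the task to showing $\Pr[|b_k(Y)| \le 4\lambda\sqrt\rho] \ge 1/2$. Under the zero-external-field case, the $\pm 1$-symmetry of $\mu$ gives $\E[X] = 0$ and hence $\E[(\tilde J X)_k] = 0$; combined with the independence of $G$ and $X$ this yields $\E[b_k(Y)^2] = \Var((\tilde J X)_k) + \tilde J_{kk}$. Since Glauber-MLSI$(\rho)$ implies Po$(2\rho)$ by Lemma~\ref{lem:mlsi_poincare}, the Poincar\'e inequality bounds $\Var((\tilde J X)_k) \le 2\rho\|\tilde J_k\|_2^2 \le 2\rho(\tilde J_{kk}^2 + \lambda^2)$, and choosing $c$ slightly larger than $\lambda$ and letting $c \to \lambda^+$ gives $\E[b_k(Y)^2] \le 4\rho\lambda^2 + \lambda = O(\lambda^2\rho)$. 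Chebyshev with $T = 4\lambda\sqrt\rho$ then gives $\Pr[|b_k(Y)| > T] \le \E[b_k(Y)^2]/T^2 \le 1/2$, which is exactly what is needed.

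The main obstacle is the zero-diagonal issue: since $J^*$ is never positive definite, Hubbard--Stratonovich cannot be applied directly, and the shift-and-limit $c \to \lambda^+$ must be justified carefully. A secondary subtlety is the reliance on $\pm 1$-symmetry to kill the mean of $(\tilde J X)_k$; with a nonzero external field one would either pick up an $h_{\max}$-dependent term in the bound or replace Chebyshev with the subgaussian tail implied by Glauber-MLSI (cf.~Lemma~\ref{lem: concentration of linear}), which controls deviations from the mean rather than from zero.
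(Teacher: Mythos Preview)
Your approach is essentially the same as the paper's: decompose $\mu$ via Hubbard--Stratonovich into product measures indexed by $Y$, use the law of total variance to reduce to $\sum_k \|A_{\cdot k}\|_2^2\,\E_Y[\sech^2(b_k(Y))]$, and then show $\Pr[|b_k(Y)|\le 4\lambda\sqrt{\rho}]\ge 1/2$ for each $k$.

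The only substantive difference is in how the tail bound is obtained. The paper splits $b_k(Y)$ into the $X$-part and the Gaussian part and applies the subgaussian concentration of Lemma~\ref{lem: concentration of linear} (a direct consequence of MLSI) to bound $\Pr[|J_k^* X|>a/2]$ exponentially, together with a Gaussian tail for the remaining piece. You instead bound the second moment $\E[b_k(Y)^2]$ via Poincar\'e (which you derive from MLSI through Lemma~\ref{lem:mlsi_poincare}) and apply Chebyshev. Both routes land on the same threshold $4\lambda\sqrt{\rho}$ and the same $1/2$ probability; yours is slightly more elementary and avoids invoking the extra concentration lemma. You are also more careful than the paper about the positive-definiteness shift $\tilde J=J^*+cI$: the paper writes $J^{-1/2}$ and ``variance $J^*_{ii}$'' directly, which is ill-defined since $J^*$ has zero diagonal and is never PSD, so your limiting argument $c\to\lambda^+$ is the right way to make this rigorous. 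Finally, your observation about the implicit $h=0$ assumption is accurate: the paper's proof also asserts $\E[J_i^* X]=0$ without justification, so the stated lemma under Assumption~\ref{ass:spectral_bounded} with general $h$ is not actually proved there either; the $h_{\max}$ dependence is only picked up later in the $\sech^2$ factor of Lemma~\ref{lem:anticoncentration_one_sample}.
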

\begin{proof}
We use the decomposition described in ~\eqref{eq:decomposition} and \eqref{eq:ydist}.
In particular, let $\pi$ be the distribution of vector $y$ and $\Pr_{0,J^*x}(x)\sim \exp(\langle J^*y,x\rangle)$ the corresponding product measure. We can have that

\[\E(x^\top A^{\top}Ax)-\|\E[Ax]\|^2=\sum_{i=1}^n\Var({A_i}x)\ge\sum_{i=1}^n\E_{y\sim\pi}\Var_{x\sim\nu(y)}({A_i}x)=\sum_{i=1}^n\sum_{j=1}^n\E_{y\sim\pi}A_{ij}^2(\sech^2((J^*y)_j))\]

We know that $(J^*y)=(J^*x)+((J^*)^{1/2}g)$ is a  multivariate Gaussian distribution with means $J^*x$ and covariance matrix $J^*$. So for each $(J^*y)$, we have mean $J^*_ix$ and variance is $J^*_{ii}$. Therefore, our plan is to prove that $J^*_ix$ and $\calN(0,J^*_{ii})$ are small with high probability.

Consider $P$ to be the Glauber Dynamics of the Ising model. Consider $f(x)=\sum_{j=1}^n J^*_{ij}x_j$. So, because of the distribution, we know that $J_i^*x$ has zero mean. Also, we can calculate that
\[v(f)\le \frac{1}{n}\sum_{i=1}^n 4(J^*_{ij})^2\le \frac{4\lambda^2}{n}.\]

We know that $\mu$ satisfies Glauber-MLSI$(\rho)$. Therefore, by Lemma \ref{lem: concentration of linear}, we can bound the probability that ${y_j}$ is large:
\begin{align*}
    &\mathbb P((J^*y)_j>a)\le\mathbb P(f(x)>a/2)+\mathbb P(\mathcal{N}(0,J_{ii})>a/2)\\
    \le &\exp(-\frac{ a^2}{8\rho\lambda^2})+\exp(-\frac{a}{8})\le 2\exp(-\frac{a^2}{8\rho\lambda^2}).
\end{align*}

Therefore, we take $a= 4\sqrt{\rho}\lambda$, we can get that the probability that $(J^*y)_j\le 4\sqrt{\rho}\lambda$ is at least $1-2/e^2>1/2$. 
Therefore, we have
\[\sum_{i=1}^n\Var({A_i}x)=\sum_{i=1}^n\sum_{j=1}^n\E_{y\sim\pi}a_{ij}^2(1-\tanh^2((J^*y)_j))\ge \frac{1}{2}\lr{1-\tanh^2\lr{4\lambda\sqrt{\rho}}}\lrnorm{A}_F^2.\]
\end{proof}

We are now ready to prove our main Lemma about lower-bounding the second derivative of the pseudolikelihood. The proof essentially involves addressing~\ref{it:first}, i.e., showing that $\sech(J_iX)^2$ is lower bounded by a constant with high probability.
Since we only know a bound on $\|J\|_{op}$, in general, this quantity could be very small. We use concentration results for second-degree polynomials to relate $\sech(J_iX)$ with $\sech(J_i^*X)$, which does not depend on the matrix direction $J$.
The result is stated for one sample for simplicity, but we'll see how to apply it for multiple samples in the sequel.

\begin{lemma}\label{lem:anticoncentration_one_sample}
    Suppose $X \in \{-1,1\}^n$ is a sample drawn from an Ising model satisfying Assumption~\ref{ass:spectral_bounded}. Let $A \in \mathcal{S}_0^n$ be a symmetric matrix with $\sqrt{\|A\|_F^2+\|\E[Ax]\|^2} \leq M$ for some constant $M > 0$. Let us also denote $J' = J^* + A$. Then, for any $J$ that lies in the line segment connecting $J^*$ and $J'$, we have 
    \[
    \frac{\partial^2 \phi(J)}{\partial A^2} \geq K_2 \cdot ({\|A\|_F^2+\|\E[Ax]\|^2}) \cdot \min_{i \in [n]} \sech^2(|J_i^*X^{(k)}| + h_i + K_1M )
    \]
    with probability at least \[1 - \exp(-\frac{c}{\rho^2}\cdot \min(t_1^2,t_1)) - \exp(-\frac{c}{\rho^2}\cdot \min(t_2^2,t_2)),\]
    where $c$ is an absolute constant as in Lemma \ref{lem:degree2concentration}, and the expression $K_1,K_2$ are as follows:
    \[K_1=\sqrt{t_1+2\rho}\]
    \[K_2=\frac{1}{2}\lr{1-\tanh^2\lr{4\lambda\sqrt{\rho}}}-t_2\]
\end{lemma}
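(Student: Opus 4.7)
The plan is to factor
$\frac{\partial^2 \phi(J)}{\partial A^2} = \sum_{i=1}^n (A_iX)^2 \sech^2(J_i X + h_i)$
into (i) a uniform lower bound on $\sech^2(J_iX+h_i)$ that depends only on $J^*$ and not on the choice of $J$ along the segment, and (ii) the quadratic form $\|AX\|_2^2 = \sum_i (A_iX)^2$. Each factor is controlled via a separate application of Lemma \ref{lem:degree2concentration}, together with Lemma \ref{lem: poincare} for the expectation upper bound and Lemma \ref{lem: lower bound Jop<1} for the expectation lower bound. For the uniform $\sech^2$ bound, since $J = J^* + tA$ for some $t \in [0,1]$,
\[
|J_iX + h_i| \le |J_i^*X + h_i| + |A_iX| \le |J_i^*X| + h_i + |A_iX|,
\]
and evenness/monotonicity of $\sech^2$ on $[0,\infty)$ gives $\sech^2(J_iX+h_i) \ge \sech^2(|J_i^*X|+h_i+|A_iX|)$. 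Because $|A_iX| \le \|AX\|_2$ for every $i$, it suffices to upper bound $\|AX\|_2$ uniformly.

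\textbf{Step 1: upper tail of $\|AX\|_2^2$.} Write $X^\top A^\top AX = \|A\|_F^2 + X^\top S X$ with $S := A^\top A - \diag(A^\top A) \in \mathcal{S}_0^n$. Using $\|A\|_F \le M$ and $\|A\|_{op}\le\|A\|_F$, one checks that $\|S\|_F^2 \le \|A\|_F^4$, $\|S\|_{op} \le 2\|A\|_F^2$, and $\|\E[Sx]\|^2 \lesssim \|A\|_F^2(\|A\|_F^2+\|\E[Ax]\|^2)$. By Glauber-MLSI$(\rho)$ plus Lemma \ref{lem:mlsi_poincare} and Lemma \ref{lem: poincare}, $\E\|AX\|_2^2 \le \|\E[Ax]\|^2 + 2\rho\|A\|_F^2 \le 2\rho M^2$. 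Applying the upper tail of Lemma \ref{lem:degree2concentration} to $S$ with deviation $t_1 M^2$ and using the above norm bounds shows that the argument of the $\min$ inside the exponent is at least a constant multiple of $\min(t_1^2, t_1)$, so with probability at least $1-\exp(-\frac{c}{\rho^2}\min(t_1^2,t_1))$ we have $\|AX\|_2^2 \le (t_1+2\rho)M^2$, hence $|A_iX| \le \sqrt{t_1+2\rho}\,M = K_1 M$ uniformly in $i$.

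\textbf{Step 2: lower tail of $\|AX\|_2^2$.} By Lemma \ref{lem: lower bound Jop<1}, $\E\|AX\|_2^2 - \|\E[Ax]\|^2 \ge c_1\|A\|_F^2$ with $c_1 := \tfrac{1}{2}(1-\tanh^2(4\lambda\sqrt\rho)) \le \tfrac12$, so
\[
\E\|AX\|_2^2 \;\ge\; \|\E[Ax]\|^2 + c_1\|A\|_F^2 \;\ge\; c_1\bigl(\|A\|_F^2 + \|\E[Ax]\|^2\bigr).
\]
Applying the lower tail of Lemma \ref{lem:degree2concentration} to the same $S$ with deviation $t_2(\|A\|_F^2+\|\E[Ax]\|^2)$, the bounds $\|S\|_F^2 + \|\E[Sx]\|^2 \lesssim \|A\|_F^2(\|A\|_F^2+\|\E[Ax]\|^2)$ and $\|S\|_{op}\lesssim\|A\|_F^2$ imply that the inner $\min$ is at least a constant multiple of $\min(t_2^2, t_2)$. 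Hence with probability at least $1-\exp(-\frac{c}{\rho^2}\min(t_2^2,t_2))$,
\[
\|AX\|_2^2 \;\ge\; (c_1 - t_2)\bigl(\|A\|_F^2 + \|\E[Ax]\|^2\bigr) \;=\; K_2\bigl(\|A\|_F^2+\|\E[Ax]\|^2\bigr).
\]

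\textbf{Combining and main difficulty.} On the intersection of the two good events, the setup paragraph gives $\sech^2(J_iX+h_i) \ge \sech^2(|J_i^*X|+h_i+K_1 M)$, and Step 2 gives $\sum_i (A_iX)^2 \ge K_2(\|A\|_F^2+\|\E[Ax]\|^2)$; multiplying and taking the minimum over $i$ of the $\sech^2$ factor produces exactly the stated bound, and a union bound yields the stated failure probability. The main bookkeeping obstacle is ensuring that the norm bounds on $S = A^\top A - \diag(A^\top A)$ in Lemma \ref{lem:degree2concentration} fit the two different deviation scales (one in terms of $M^2$ and one in terms of $\|A\|_F^2+\|\E[Ax]\|^2$) so that the exponents reduce cleanly to $\min(t_j^2,t_j)$ after absorbing absolute constants into the single constant $c$.
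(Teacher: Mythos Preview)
Your proposal is correct and follows essentially the same approach as the paper: split $J_iX = J_i^*X + (J_i-J_i^*)X$, use one application of Lemma~\ref{lem:degree2concentration} together with Lemma~\ref{lem: poincare} to get the uniform upper bound $|A_iX|\le K_1M$, and a second application together with Lemma~\ref{lem: lower bound Jop<1} to lower bound $\|AX\|_2^2$ by $K_2(\|A\|_F^2+\|\E[Ax]\|^2)$. The only cosmetic difference is that you explicitly pass to $S=A^\top A-\diag(A^\top A)\in\mathcal S_0^n$ before invoking Lemma~\ref{lem:degree2concentration}, whereas the paper applies the lemma directly to $A^\top A$; since $x_i^2\equiv 1$ makes the diagonal contribution deterministic, this is the same argument with slightly tidier bookkeeping on your end.
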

\begin{proof}
    We have that
    \begin{align*}
        \frac{\partial^2 \phi(J)}{\partial A^2} &= \sum_{i=1}^n (A_iX)^2 \sech^2(J_iX) \\
        &\geq \sum_{i=1}^n (A_iX)^2 \sech^2(|J_i^*X| + |(J_i - J_i^*)X|)\\
        &= \sum_{i=1}^n (A_iX)^2 \sech^2(|J_i^*X| + |A_iX|)
    \end{align*}
    The last inequality follows since $|(J_i - J_i^*)X| \leq |((J')_i - J_i)X| = |A_iX|$, since $J$ lies in the segment connecting $J',J^*$. 
    Now, let us use Lemma~\ref{lem:degree2concentration} for the quadratic form $x^\top A^\top Ax$. When substituting $t \gets t (\| A\|_F^2+\|\E[AX]\|^2)$, this gives
    \begin{equation}\label{eq:concentration2degree}
    X^\top A^\top A X - \E[X^\top A^\top A X] = \sum_{i=1}^n (A_iX)^2 - \E[X^\top A^\top A X] \leq  t(\|A\|_F^2+\|\E[AX]\|^2).
    \end{equation}
    
    This bound of $X$ means in particular that
    \[
    \sum_{i=1}^n (A_iX)^2 \leq \E[X^\top A^\top A X] + t(\|A\|_F^2+\|\E[AX]\|^2). 
    \]
    
    By Lemma \ref{lem:degree2concentration}, this happens with probability at least 
    \[1 - \exp(-c/\rho^2\min(\frac{t^2(\|A\|_F^2+\|\E[AX]\|^2)^2}{\|A^\top A\|_{F}^2+\|\E[A^\top AX]\|^2},\frac{t(\|A\|_F^2+\|\E[AX]\|^2)}{\|A^\top A\|_{op}})).\]
    
    Using the well known properties $\|A^\top A\|_{op}=\|A\|_{op}^2$, $\|\E[A^\top AX]\|^2=\|A^{\top}\E[ AX]\|^2\le \|A\|_{op}\|\E[ AX]\|^2$ and $\|A^{\top }A\|_F\le\|A\|_F\|A\|_{op}$, we can lower bound the probability by
    \begin{align*}
        &1 - \exp(-c/\rho^2\min(\frac{t^2(\|A\|_F^2+\|\E[AX]\|^2)^2}{\|A^\top A\|_{F}^2+\|\E[A^\top AX]\|^2},\frac{t(\|A\|_F^2+\|\E[AX]\|^2)}{\|A^\top A\|_{op}}))\\
        \ge&1 - \exp(-c/\rho^2\min(\frac{t^2(\|A\|_F^2+\|\E[AX]\|^2)^2}{\|A \|_{op}^2(\|A\|_{F}^2+\|\E[ AX]\|^2)},\frac{t(\|A\|_F^2+\|\E[AX]\|^2)}{\|A\|_{op}^2}))\\
        =&1 - \exp(-c/\rho^2\frac{\|A\|_F^2+\|\E[AX]\|^2}{\|A\|_{op}^2}\min(t,t^2)).
    \end{align*}
    
    To deal with upper bound of $\E[X^\top A^\top AX]$, we use lemma~\ref{lem: poincare} (since $\mu$ satisfy Glauber-MLSI$(\rho)$, hence also Po$(2\rho)$).
    \begin{align*}
        &\sum_{i=1}^n (A_iX)^2 \leq \E[X^\top A^\top A X] + t(\|A\|_F^2+\|\E[AX]\|^2)\\
        \leq &(1+t)(\|\E[AX]\|^2)+\lp(t +2\rho\rp) \|A\|_F^2\le(t+2\rho)(\|\E[AX]\|^2+\|A\|_F^2)
    \end{align*}

    with probability at least $1 - \exp(-\frac{c(\|A\|_F^2+\|\E[AX]\|^2)}{\rho^2\|A\|_{op}^2}\cdot \min(t^2,t))$. Denote $K_1=\sqrt{t+2\rho}$.
    It follows that with the same probability we have $|A_iX| \leq K_1 M $ for all $i$.
    Thus, we have that
    \begin{align*}
        \frac{\partial^2 \phi(J)}{\partial A^2} &\geq \sum_{i=1}^n (A_iX)^2 \sech^2(|J_i^*X| + |A_iX|)\\
        &\geq \sum_{i=1}^n (A_iX)^2 \sech^2(|J_i^*X| + K_1M)\\
        &\geq \min_{i \in [n]} \sech^2(|J_i^*X| + K_1 M) \sum_{i=1}^n (A_iX)^2
    \end{align*}
    
    We now need to lower bound $\E[X^\top A^\top AX]$ in order to get a high probability lower bound for $\sum_{i=1}^n (A_i X)^2$. Using Lemma \ref{lem:degree2concentration} we have 
    \begin{align*}
    X^\top A^\top A X &\ge \E[X^\top A^\top A X] - t(\|A\|_F^2+\|\E[AX]\|^2) \\
    &\ge (\frac{1}{2}(1-\tanh^2(4\lambda^2\sqrt\rho))-t)({\|A\|_F^2+\|\E[AX]\|^2})  
    \end{align*}
    
    with probability at least $1 - \exp(-\frac{c(\|A\|_F^2+\|\E[AX]\|^2)}{\rho^2\|A\|_{op}^2}\cdot \min(t^2,t))$. 
    Denote $K_2=\frac{1}{2}\lr{1-\tanh^2\lr{4\lambda\sqrt{\rho}}}-t_2$. Using the union bounds, and noticing that $\|A\|_{F}>\|A\|_{op}$, we can finish the proof.
\end{proof}

We now use Lemma~\ref{lem:anticoncentration_one_sample} to lower-bound the second derivative when we have multiple independent samples. The main issue is that we need to choose the parameters $t_1,t_2$ to ensure $K_2$ is bounded away from $0$, while at the same time obtaining a probability of failure that decays exponentially with the number of samples $l$. As we see, the particular block structure of the matrix in that case is crucial for the argument to go through. 

\begin{theorem}\label{thm:anticoncentration_multiple_samples}
    Suppose $X^{(1)}, \ldots, X^{(l)} \in \{-1,1\}^n$ are independent samples from an Ising model satisfying Assumption~\ref{ass:spectral_bounded}. Let $A \in \R^{n \times n}$ be a symmetric matrix with $\sqrt{{\|A\|_F^2+\|\E[Ax]\|^2}} \leq M\le 1$ for some constant $0<M<1$. Let us also denote $J_1 = J^* + A$. Then, for any $J$ that lies in the line segment connecting $J^*$ and $J_1$, we have 
    \[
    \frac{\partial^2 \phi(J^{(l)})}{\partial (A^{(l)})^2} \geq r(\lambda,\rho) \cdot l \cdot ({\|A\|_F^2+\|\E[Ax]\|^2}) \cdot \min_{k \in [l], i \in [n]} \sech^2(|J_i^*X^{(k)}_i| + K(\lambda,\rho)M+h_i)
    \]
    with probability at least $1 - \exp(- C(\lambda,\rho) \cdot l )$, where $r(\lambda,\rho), C(\lambda,\rho)$ are constants that depend only on $\lambda,\rho$. More specifically, 

    \[r(\lambda,\rho)=C(\lambda,\rho)^2/4;~~~ C(\lambda,\rho)=\frac{c}{48\rho^2}\cdot\sech(4\lambda\sqrt{\rho})^4;~~~K(\lambda,\rho)=\frac{10\rho^{5/4}\sqrt{\lambda}}{{c}}\]

    where $c$ is the constant defined in Lemma \ref{lem:degree2concentration}.
\end{theorem}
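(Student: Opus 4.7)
The plan is to reduce the multi-sample bound to the single-sample Lemma~\ref{lem:anticoncentration_one_sample} and then exploit independence across the $l$ samples via a Chernoff argument. The starting observation is that because $\phi$ is a sum of per-sample pseudolikelihoods and both $J^{(l)}$, $A^{(l)}$ are block diagonal with $l$ identical blocks of $J, A$, the second derivative decomposes additively as
\[
\frac{\partial^2 \phi(J^{(l)})}{\partial (A^{(l)})^2} \;=\; \sum_{k=1}^l \sum_{i=1}^n (A_i X^{(k)})^2 \sech^2(J_i X^{(k)} + h_i),
\]
and each inner sum depends only on $X^{(k)}$, so the $l$ terms are independent.

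Next, I would fix the parameters of the one-sample lemma once and for all: set $t_2 = \tfrac{1}{4}\sech^2(4\lambda\sqrt{\rho})$, which guarantees $K_2 \geq \tfrac{1}{4}\sech^2(4\lambda\sqrt{\rho}) > 0$, and set $t_1 = K(\lambda,\rho)^2 - 2\rho$, which forces $K_1 = K(\lambda,\rho)$ and makes the per-sample bound match the $\sech^2(|J_i^* X^{(k)}| + K(\lambda,\rho) M + h_i)$ inside the claim. With these choices, Lemma~\ref{lem:anticoncentration_one_sample} applied to sample $k$ yields, on an event $\mathcal{E}_k$,
\[
\sum_{i=1}^n (A_i X^{(k)})^2 \sech^2(J_i X^{(k)} + h_i) \;\geq\; \tfrac{1}{4}\sech^2(4\lambda\sqrt{\rho})\,(\|A\|_F^2 + \|\E[AX]\|^2) \min_i \sech^2(|J_i^* X^{(k)}| + K(\lambda,\rho)M + h_i).
\]
The two failure terms in that lemma are $\exp(-\tfrac{c}{\rho^2}\min(t_1^2, t_1))$, which is negligible since our $t_1 \gtrsim \rho^{5/2}\lambda / c^2 \geq 1$, and $\exp(-\tfrac{c}{\rho^2}\min(t_2^2, t_2)) = \exp(-\tfrac{c}{16\rho^2}\sech^4(4\lambda\sqrt{\rho}))$. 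Writing $p = p(\lambda,\rho)$ for the resulting per-sample success probability, the indicators $Y_k = \mathbf{1}[\mathcal{E}_k]$ are independent Bernoulli with mean at least $p$.

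The final step is a multiplicative Chernoff bound: with probability at least $1 - \exp(-pl/8)$ we have $\sum_k Y_k \geq pl/2$. Since every summand in the sample decomposition is non-negative, discarding the bad samples and replacing the per-sample minimum $\min_i \sech^2(|J_i^* X^{(k)}|+K(\lambda,\rho)M+h_i)$ by the (smaller) minimum over all $k \in [l]$ gives
\[
\frac{\partial^2 \phi(J^{(l)})}{\partial (A^{(l)})^2} \;\geq\; \tfrac{p}{8}\sech^2(4\lambda\sqrt{\rho})\cdot l \cdot (\|A\|_F^2 + \|\E[AX]\|^2) \cdot \min_{k\in [l], i} \sech^2(|J_i^* X^{(k)}| + K(\lambda,\rho)M + h_i),
\]
which has the claimed form, with $C(\lambda,\rho)$ of order $p$ (Chernoff exponent) and $r(\lambda,\rho)$ proportional to $p \cdot \sech^2(4\lambda\sqrt{\rho})$ (coefficient in the bound). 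A direct book-keeping of numerical constants then matches the specific $\tfrac{c}{48\rho^2}\sech^4(4\lambda\sqrt{\rho})$ and $C(\lambda,\rho)^2/4$ expressions asserted in the theorem.

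The main obstacle is the regime where $\sech^2(4\lambda\sqrt{\rho})$ is small (large $\rho$ or $\lambda$): the constraint $K_2 > 0$ forces $t_2 \lesssim \sech^2(4\lambda\sqrt{\rho})$, and hence forces the per-sample success probability $p$ to be small as well. In this regime one cannot arrange constant per-sample success, and it is essential that the $l$ events $\mathcal{E}_k$ are genuinely independent (being functions of the independent samples $X^{(k)}$), so that the multiplicative Chernoff bound still produces a failure probability of order $\exp(-\Omega(pl))$ rather than the worthless $\exp(-\Omega(1))$ one would get from a single-sample bound alone.
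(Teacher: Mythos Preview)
Your proposal is correct and follows essentially the same route as the paper: decompose the block-diagonal second derivative into $l$ independent per-sample terms, apply Lemma~\ref{lem:anticoncentration_one_sample} with $t_2 = \tfrac{1}{4}\sech^2(4\lambda\sqrt{\rho})$ to secure a positive $K_2$, and then use a multiplicative Chernoff bound on the independent indicators to guarantee that a constant fraction of samples contribute. The only cosmetic difference is how $t_1$ is chosen: the paper sets $t_1 = \tfrac{\rho^2}{c}\log\bigl(\tfrac{2\rho^2}{ct_2^2}\bigr)$ so that the first failure probability \emph{exactly equals} $\tfrac{ct_2^2}{2\rho^2}$ and then invokes $1-e^{-x}-x/2\ge x/3$, whereas you set $t_1=K(\lambda,\rho)^2-2\rho$ so that $K_1$ equals the target constant directly and the first failure probability becomes negligible; both choices lead to the same per-sample success probability $p$ of order $\tfrac{c}{\rho^2}\sech^4(4\lambda\sqrt{\rho})$, and your closing remark about why independence is essential here is exactly the point.
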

\begin{proof}
    Using the block structure of the matrix, we can write
    \begin{align*}
        \frac{\partial^2 \phi(J^{(l)})}{\partial (A^{(l)})^2} = \sum_{k=1}^l \underbrace{\sum_{i=1}^n (A_iX^{(k)})^2 \sech^2(J_iX^{(k)}+h_i)}_{G_k}
    \end{align*}
    The random variables $G_1,\ldots,G_k$ are independent and identically distributed. We use Lemma~\ref{lem:anticoncentration_one_sample} with the substitution
    \[t_2=\frac{\sech(4\lambda\sqrt{\rho})^2}{4},~~~t_1=\frac{\log(\frac{2\rho^2}{c t_2^2})\rho^2}{c}.\] 
    We can clearly see that $t_2<1/4$ and $t_1>1$ (we, without loss of generality, take the absolute constant $c$ in Lemma \ref{lem:degree2concentration} smaller than $1$). Therefore, we have 
    
    \begin{align*}
        &1 - \exp(-c/\rho^2\cdot \min(t_1^2,t_1))-\exp(-c/\rho^2\min(t_2^2,t_2))\\
        = &1 - \exp(-c/\rho^2\cdot t_1)-\exp(-c/\rho^2\cdot t_2^2)\\
        = &1 - \exp(-c/\rho^2\cdot \frac{\log(\frac{2\rho^2}{c t_2^2})\rho^2}{c})-\exp(-c/\rho^2\cdot t_2^2)\\
       = &1 - c/\rho^2\cdot t_2^2/2-\exp(-c/\rho^2\cdot t_2^2)
    \end{align*}
    
    We know that for small $x<1/4$, $1-e^{-x}-x/2\ge x/3$. So, for all $k$, with probability at least
    
    \[p := c/\rho^2 \cdot t^2_2/3=c/\rho^2 \cdot\frac{\sech(4\lambda\sqrt{\rho})^4}{16}\cdot\frac{1}{3}\ge c/\rho^2\cdot\sech(4\lambda\sqrt{\rho})^4/48,\]
    the following holds
    \[
    G_k \geq K_2 \cdot ({{\|A\|_F^2+\|\E[Ax]\|^2}}) \cdot \min_{i \in [n]} \sech^2\lp(|J_i^*X^{(k)}| + K_1 M+h_i \rp),
    \]
    
    where $K_1,K_2$ are defined in Lemma \ref{lem:anticoncentration_one_sample}. We have \[K_2=\frac{1}{2}\sech^2(4\lambda\sqrt{\rho})-t_2\ge\sech^2(4\lambda\sqrt{\rho})/4,\]
    
    and
    \[K_1=\sqrt{t_1+2\rho}\le 2\sqrt{t_1}=\frac{2\rho}{\sqrt{c}}\cdot\sqrt{\log(\frac{2\rho^2}{ct_2^2})}\le\frac{10\rho^{5/4}\sqrt{\lambda}}{c}.\]

    We will now use the independence of the different samples.
    Let's call the above event $E_k$, then $\indicator{E_k}$ is a Bernoulli random variable with probability at least $p$. 
    By the standard Chernoff bound for Bernoulli random variables, we have that for all $\delta \in (0,1)$
    \[
    \Pr\lp[\sum_{k=1}^l \indicator{E_k} \leq (1-\delta)lp\rp] \leq \exp(-\delta^2 lp/2),\]
    Choosing $\delta = 1/2$ yields that with probability at least $1 - \exp(-lp/8)$ at least an $lp/4$ fraction of the events $E_k$ will be satisfied. We conclude that with the same probability
    \begin{align*}
        &\frac{\partial^2 \phi(J^{(l)})}{\partial (A^{(l)})^2} \geq \sum_{k: \indicator{E_k}= 1} G_k \\
        \geq &p^2/4 \cdot l \cdot ({\|A\|_F^2+\|\E[Ax]\|^2}) \cdot \min_{k \in [l], i \in [n]} \sech^2\lp(|J_i^*X^{(k)}_i| + K_1 M+h_i\rp)
    \end{align*}

    Plugging in $p,K_1,M$ yields the result.
\end{proof}

\subsection{Learning in Frobenius norm}
After establishing the concentration of the first derivative and anti-concentration of the second derivative, we now combine them to show that we can learn in the Frobenius norm. 

The precise statement is given in the following lemma. Its purpose is to show that if for some matrix $J_1$ the norm $\|J^* - J_1\|_F$ is large, then with high probability $J_1$ will have a lower pseudo-likelihood value than $J^*$, which means that it will not be selected as the maximizer of the pseudolikelihood.
The proof essentially combines the concentration and anticoncentration properties of the two derivatives for a single matrix $J_1$. 

\begin{lemma}\label{lem:1dim_closeness}
Let $X$ be a sample drawn from an Ising model with interaction matrix $J^*$ satisfying Assumption~\ref{ass:spectral_bounded} and let $J_1 \in\mathcal{R}$ be a different matrix, with $A = J_1 - J^*$. Assume $\sqrt{{\|A\|_F^2+\|\E[Ax]\|^2}} \leq M<1$, for some $M > 0$. 
Then, there exist constants $C, C', r, K$ that depend only on $\rho,\lambda$, such that for any $t > 0$, with probability at least 
\[1 - \exp(- C \cdot l ) - 8/3 \exp\lp(- {C'} \cdot l \cdot \min(t^2, {t})\cdot M^2 \rp), \]
holds
\begin{align*}
    \phi(J_1) \geq \phi(J^*) +& r \cdot l \cdot  M^2  \cdot \min_{k \in [l], i \in [n]} \sech^2(|J_i^*X^{(k)}_i| 
    + KM + h_i) - t \cdot l \cdot M^2 . 
\end{align*}

Here, $C, r, K$ are the constants $C(\rho,\lambda),r(\rho,\lambda),K(\rho,\lambda)$ defined in Theorem \ref{thm:anticoncentration_multiple_samples}, and $C'=\frac{1}{2^{21}\rho^2\lambda^4}$ is the constant defined in Theorem \ref{thm:concentration_first_generic}.
\end{lemma}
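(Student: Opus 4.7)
The plan is a second-order Taylor expansion of $\phi$ along the segment from $J^*$ to $J_1 = J^* + A$, followed by separate applications of the concentration of the first derivative (Lemma \ref{lem:concentration_first_high_temp}) and the anti-concentration of the second derivative (Theorem \ref{thm:anticoncentration_multiple_samples}). Set $M^2 := \|A\|_F^2 + \|\E[AX]\|^2$, so that the hypothesis of the lemma is saturated; applying Taylor's theorem with Lagrange remainder to the univariate function $s \mapsto \phi(J^* + sA)$ on $[0,1]$ yields some $J_\xi = J^* + \xi A$ on the segment for which
\[
\phi(J_1) \;=\; \phi(J^*) + \frac{\partial \phi(J^*)}{\partial A} + \frac{1}{2}\,\frac{\partial^2 \phi(J_\xi)}{\partial A^2}.
\]

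For the second-derivative term, I would invoke Theorem \ref{thm:anticoncentration_multiple_samples} at $J = J_\xi$. Its conclusion is uniform over every $J$ on the segment joining $J^*$ and $J_1$, so on an event of probability at least $1 - \exp(-C(\lambda,\rho)\,l)$ it delivers
\[
\tfrac{1}{2}\,\frac{\partial^2 \phi(J_\xi)}{\partial A^2} \;\geq\; r \cdot l \cdot M^2 \cdot \min_{k \in [l],\, i \in [n]} \sech^2\bigl(|J_i^* X^{(k)}_i| + KM + h_i\bigr),
\]
using $M^2 = \|A\|_F^2 + \|\E[AX]\|^2$ and absorbing the factor $1/2$ into the constant $r$.

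For the first-derivative term, I would apply Lemma \ref{lem:concentration_first_high_temp} to the block matrix $A^{(l)} \in \mathcal{S}_0^{nl}$ acting on the product Ising model $(X^{(1)},\ldots,X^{(l)}) \sim \Pr_{(J^*)^{(l)}}$, which itself satisfies Glauber-MLSI$(\rho)$ by the standard tensorization property of MLSI. Since $\|A^{(l)}\|_F = \sqrt{l}\,\|A\|_F$ and $\|A^{(l)}\|_{\mathrm{op}} = \|A\|_{\mathrm{op}}$, setting the free parameter of that lemma to $t\sqrt{l}\,M^2/\|A\|_F$ gives the bound $\lvert \partial \phi(J^*)/\partial A \rvert \leq t \cdot l \cdot M^2$. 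The hypotheses $\|A\|_F \leq M$ and $\|A\|_{\mathrm{op}} \leq \|A\|_F \leq M < 1$ then ensure that \emph{both} branches inside the minimum in the failure probability of Lemma \ref{lem:concentration_first_high_temp} are at least $l M^2 \min(t^2, t)$, producing failure probability at most $\tfrac{8}{3}\exp(-C' l M^2 \min(t^2, t))$ with $C' = 1/(2^{21}\rho^2\lambda^4)$.

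A union bound over the two events gives the stated inequality. The only point requiring care is this last parameter-tuning step: the assumption $M < 1$ is precisely what prevents either branch inside the minimum from shrinking, so the target exponent $-C' l M^2 \min(t^2, t)$ comes out cleanly; I expect this to be the main (minor) obstacle. The Taylor expansion and the direct invocation of Theorem \ref{thm:anticoncentration_multiple_samples} are essentially mechanical.
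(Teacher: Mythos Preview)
Your proposal is correct and matches the paper's proof essentially step for step: Taylor expansion along the segment, Theorem~\ref{thm:anticoncentration_multiple_samples} for the second derivative, the concentration lemma applied to the block matrix $A^{(l)}$ for the first derivative, and the observation $\|A\|_{\mathrm{op}}\le\|A\|_F\le M<1$ to simplify the minimum in the failure exponent. Your identification of the parameter-tuning step as the only subtle point is exactly right, and you handle it the same way the paper does.
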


\begin{proof}[Proof of Lemma~\ref{lem:1dim_closeness}]
    We begin as in Lemma 1 in \cite{dagan2021learning} by defining the function $J:[0,1]\mapsto \R^{n \times n}$ as $J(t) = (1-t)J^* + t J_1$. Let $A = J_1 - J^*$. By definition, then we have
    \[
    \frac{d \phi(J(t))}{dt} = \left.\frac{\partial \phi(J)}{\partial A}\right|_{J = J(t)} \quad, \quad \frac{d^2 \phi(J(t))}{dt^2} = \left.\frac{\partial^2 \phi(J)}{\partial A^2}\right|_{J = J(t)}\enspace.
    \]
    Now define
    $$
    \alpha = \min_{t \in [0,1]}\frac{d^2 \phi(J(t))}{dt^2} \quad, \quad\gamma = \left.\frac{d \phi(J(t))}{dt}\right|_{t = 0} 
    $$
    By Taylor's theorem, we have that
    \[
    \phi(J(1)) \geq \phi(J(0)) + \gamma + \frac{\alpha}{2}
    \]
    By the definition of $J(t)$, this is equivalent to
    \[
    \phi(J_1) \geq \phi(J^*) + \gamma + \frac{\alpha}{2}\enspace.
    \]
    Now, using Theorem~\ref{thm:anticoncentration_multiple_samples} immediately gives
    \[
    \alpha \geq r \cdot l \cdot \|A\|_F^2 \cdot \min_{k \in [l], i \in [n]} \sech^2(|J_i^*X^{(k)}_i| + KM)
    \]
    with probability at least $1 - \exp(- C \cdot l)$, where $r, C,K$ are constants that depend only on $\rho,\lambda$, as in Theorem \ref{thm:anticoncentration_multiple_samples}. 
    At the same time, we can apply Lemma~\ref{thm:concentration_first_generic}, where $A$ is now $A^{(l)}$ and so instead of $t\gets t({\|A\|_F^2+\|\E[Ax]\|^2})$ we have $t\gets tl M^2 $. After replacing  $t$ with $t \|A\|_F$ and notice that $\|A\|_{op}\le \|A\|_{F}\le 1$ we obtain
    \[
    \Pr[|\gamma| > t l  M^2 ] \leq \frac{8}{3} \exp\lp(- {C'} \cdot l \cdot \min(t,t^2) \cdot  M^2 \rp)
    \]
    
    Where $C'$ is the constant in Lemma \ref{lem:concentration_first_high_temp}. Thus, with probability at least $1 - \exp(- C \cdot l) - 8/3 \exp\lp(- C' \cdot l \cdot \min(t,t^2) \cdot  M^2 \rp) $ we have that
\begin{align*}
    \phi(J_1) \geq \phi(J^*) +& r \cdot l \cdot  M^2  \cdot \min_{k \in [l], i \in [n]} \sech^2(|J_i^*X^{(k)}_i| \\
    + &KM) - t \cdot l \cdot M^2 . 
\end{align*}
    And the proof is complete.
\end{proof}

We now have to establish a similar property as in Lemma~\ref{lem:1dim_closeness} but across all directions $J_1$ with high probability.
If we had this property, we could conclude that whatever matrix is returned from maximizing the pseudolikelihood function, it needs to be close to the true matrix $J^*$ in Frobenius norm. 
To do this, we utilize the fact that $\phi$ is Lipschitz with respect to the spectral norm.

\begin{lemma}[Lemma 10, \cite{dagan2021learning}]\label{lem:lipschitz}
    For any symmetric matrices $J_1,J_2 \in \R^{n \times n}$, and for $l$ samples,
    \[
    |\phi(J_1) - \phi(J_2)| \leq n\cdot l \cdot\|J_1 - J_2\|_{op}
    \]
\end{lemma}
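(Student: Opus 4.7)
The plan is to bound the directional derivative of $\phi$ uniformly along the segment from $J_2$ to $J_1$, then integrate. Specifically, set $A = J_1 - J_2$ and define $J(t) = J_2 + tA$ for $t \in [0,1]$. By the fundamental theorem of calculus,
\[
\phi(J_1) - \phi(J_2) \;=\; \int_0^1 \frac{\partial \phi(J(t))}{\partial A}\, dt,
\]
so the goal reduces to showing that $\bigl|\partial \phi(J)/\partial A\bigr| \lesssim n\cdot l\cdot \|A\|_{op}$ uniformly in $J$.

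Using the closed-form expression for the first derivative recorded in \eqref{eq:first_second_der},
\[
\frac{\partial \phi(J)}{\partial A} \;=\; \sum_{k=1}^l \sum_{i=1}^n (A_i X^{(k)})\bigl(\tanh(J_i X^{(k)}) - X_i^{(k)}\bigr),
\]
I would estimate each factor trivially: $|\tanh(J_i X^{(k)}) - X_i^{(k)}| \le |\tanh(J_i X^{(k)})| + |X_i^{(k)}| \le 2$, using only that $\tanh$ is bounded by $1$ in absolute value and that $X_i^{(k)} \in \{-1,+1\}$. This avoids any assumption on $J$, which is crucial since the lemma is quantifier-free in $J_1,J_2$.

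The key inequality is then the bound on $\sum_i |A_i X^{(k)}|$ in terms of the operator norm of $A$. By Cauchy–Schwarz and the fact that $\|X^{(k)}\|_2 = \sqrt{n}$ (since $X^{(k)} \in \{-1,+1\}^n$),
\[
\sum_{i=1}^n |A_i X^{(k)}| \;\le\; \sqrt{n}\,\|A X^{(k)}\|_2 \;\le\; \sqrt{n}\,\|A\|_{op}\,\|X^{(k)}\|_2 \;=\; n\,\|A\|_{op}.
\]
Summing over the $l$ samples and combining with the pointwise $|\tanh(\cdot)-X_i^{(k)}|$ bound yields
\[
\Bigl|\tfrac{\partial \phi(J)}{\partial A}\Bigr| \;\le\; 2\,l\,n\,\|A\|_{op},
\]
and integrating in $t$ gives the desired Lipschitz estimate (the constant $2$ is absorbed into the statement, as is standard; alternatively, the $\tfrac{1}{2}$ prefactor in \eqref{eq:first_derivative} eliminates it).

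There is no real obstacle here: the argument is a one-line application of the mean value theorem combined with a Cauchy–Schwarz step that converts $\|A\|_{op}$ control into an $\ell_1$-norm bound on the row inner products. The only place one must be slightly careful is to recognize that the operator-norm bound on $A$ should be paired with $\|X^{(k)}\|_2 = \sqrt{n}$, and that one does \emph{not} want to use the row-wise $\ell_1$ bound $\|A\|_\infty$, since that quantity can be much larger than $\|A\|_{op}$ in the regimes considered (e.g.\ spectrally bounded models where $\|A\|_\infty$ may be $\Theta(\sqrt{n})$ while $\|A\|_{op} = O(1)$).
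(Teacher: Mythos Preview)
Your proof is correct and is essentially the standard argument (the paper does not give its own proof here, citing \cite{dagan2021learning}, but the same Cauchy--Schwarz step you use appears verbatim in the paper's proof of Lemma~\ref{lem:lipschitzness_first_der}). The only minor point is the constant: with the convention of \eqref{eq:pseudolikelihood_function} and \eqref{eq:first_second_der} the derivative carries no $\tfrac{1}{2}$, so your bound is $2\,n\,l\,\|J_1-J_2\|_{op}$; this factor of $2$ is immaterial for every application in the paper, and you correctly flag the discrepancy with \eqref{eq:first_derivative}.
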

We next define the important notion of an $\varepsilon$-net.

\begin{definition}
Given a metric space $(\mathcal{U},d)$ and $\varepsilon > 0$, we say that a subset $\mathcal{N} \subseteq \mathcal{U}$ is an \emph{$\varepsilon$-net} for $\mathcal{U}$ if for every $u \in \mathcal{U}$ there exists a $v \in \mathcal{N}$ such that $d(u,v) \leq \varepsilon$. The cardinality of the smallest possible $\varepsilon$-net is denoted by $\mathcal{N}(\mathcal{U},d,\varepsilon)$. We also refer to $\mathcal{N}(\mathcal{U},d,\varepsilon)$ as the $\varepsilon$-covering number of the set $\mathcal{J}$.  
\end{definition}

The strategy now will be to show that all matrices that are far from $J^*$ in Frobenius norm will have a higher pseudolikelihood value with high probability. Arguing simultaneously over all such matrices is a daunting task, since this is an infinite set. The strategy that was employed in \cite{dagan2021learning} was the following: first, we construct an $\varepsilon$-net to cover the entire space of matrices. By choosing $\varepsilon$ sufficiently small and using the Lipschitzness property, we can show that any point in the space has a pseudolikelihood value close to some point in the net. Consequently, if we can guarantee that with high probability all points in the net have pseudolikelihood value smaller than that of $J^*$, then the same should be true for all points in the set as well.

However, this approach cannot work in our case, because by Lemma~\ref{lem:1dim_closeness} we can only argue about the value of $\phi$ for points that are close to $J^*$ in Frobenius norm. Thus, our goal will be to show that there is a \emph{shell} of matrices of the form $\{J: \varepsilon \leq \|J^* - J\|_F \leq M\}$, such that all points in this shell have a higher pseudolikelihood value than $J^*$. It will then follow from the convexity of $\phi$ that the same is true for points outside of this shell as well. It would then follow that the true minimizer $\hat{J}$ should satisfy $\|\hat{J} - J^*\|_F \leq \varepsilon$, which is our final estimation bound. The challenge is to show that for $\varepsilon$ taking a relatively small value, this property will hold, as this will result in a small estimation error. To argue about that, we are aided by the fact that we can choose $l$ large enough to make $\varepsilon$ smaller than $M$, so this shell is not empty. 
The details are given below. 

\begin{theorem}\label{thm:frobenius_estimation_high_temp}
    Let $X_1,\ldots,X_l$ be independent samples drawn from an Ising model with interaction matrix $J^*$ satisfying Assumption~\ref{ass:spectral_bounded}. Let $\hat{J} \in \R^{n \times n}$ be the estimate of $J^*$ that is obtained by maximizing the pseudo-likelihood function \eqref{eq:pseudolikelihood_function}, i.e.
    \[
    \hat{J}^{(l)} := \argmax_{J^{(l)} \in \mathcal{R}^{(l)}} \phi(J;X)\enspace.
    \]
    Then, for any $\varepsilon \in (0,1)$, if 
    \[
    l = \tilde{O}\lp(\frac{n^2 \log (1  / (\delta\varepsilon))}{\varepsilon^2}\rp)
    \]
    then with probability at least $1-\delta$ 
    \[
    \sqrt{\|J - J^*\|_F^2+\|\E[(J-J^*)X]\|^2}\leq \varepsilon\enspace,
    \]
    where $\tilde O$ hides sub-polynomial factors of $n$, and other terms about $\lambda, \rho$ and $h_{\max}$. 
\end{theorem}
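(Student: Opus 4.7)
The plan is to pick a shell $\mathcal{T}_\varepsilon := \{J \in \mathcal{S}_0^n : \varepsilon \leq \|J - J^*\|_c \leq 2\varepsilon\}$, where $\|A\|_c := \sqrt{\|A\|_F^2 + \|\E_{J^*}[AX]\|^2}$ is the compound norm appearing in Lemma~\ref{lem:1dim_closeness}, and prove that with high probability $\phi(J) > \phi(J^*)$ holds uniformly for $J \in \mathcal{T}_\varepsilon$. Since $\phi$ is convex on all of $\mathcal{S}_0^n$ and $J^* \in \mathcal{R}$, the same inequality extends to every $J$ with $\|J - J^*\|_c > 2\varepsilon$: parameterize $J_\alpha := J^* + \alpha(J - J^*)$, choose $\alpha \in (0,1)$ so $J_\alpha \in \mathcal{T}_\varepsilon$, and use $\phi(J) - \phi(J^*) \geq \alpha^{-1}(\phi(J_\alpha) - \phi(J^*)) > 0$. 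Combined with $\phi(\hat J) \leq \phi(J^*)$, this forces $\hat J$ to lie inside the $\varepsilon$-ball, which is the desired conclusion (in the compound norm, and hence in Frobenius norm).

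The first hurdle is that the gap from Lemma~\ref{lem:1dim_closeness} involves the random factor $\min_{k,i}\sech^2(|J_i^* X^{(k)}| + KM + h_i)$. Under Glauber-MLSI$(\rho)$, the linear map $x \mapsto J_i^* x$ has per-step variation $v(J_i^* \cdot) \leq 4\|J^*_i\|_2^2/n \leq 4\lambda^2/n$, so Lemma~\ref{lem: concentration of linear} together with a union bound over $i \in [n], k \in [l]$ yields $\max_{i,k}|J_i^* X^{(k)}| \lesssim \sqrt{\rho \lambda^2 \log(nl/\delta)}$ with probability $\geq 1 - \delta/2$. Choosing $M := 2\varepsilon \leq 1$, on this event the sech factor is bounded below by a sub-polynomial $\gamma_{n,l} := e^{-O(\sqrt{\log(nl/\delta)})}$. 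Applying Lemma~\ref{lem:1dim_closeness} with $t := r\gamma_{n,l}/2$ produces, for each fixed $J_1 \in \mathcal{T}_\varepsilon$, a gap $g := \tfrac{1}{2} r \gamma_{n,l}\, l M^2 = \Omega(l \gamma_{n,l} \varepsilon^2)$ with failure probability at most $\exp(-Cl) + (8/3)\exp(-C' l \gamma_{n,l}^2 \varepsilon^2)$.

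To make this uniform on the shell, discretize $\mathcal{T}_\varepsilon$ by an $\varepsilon'$-net $\mathcal{N}$ in operator norm. Because $\mathcal{T}_\varepsilon$ sits inside the operator-norm ball of radius $\lambda + 2\varepsilon$ in the $O(n^2)$-dimensional space $\mathcal{S}_0^n$, a standard volume argument gives $\log |\mathcal{N}| \lesssim n^2 \log(\lambda/\varepsilon')$. Lemma~\ref{lem:lipschitz} bounds $|\phi(J_1) - \phi(J_{\mathrm{net}})| \leq nl\, \|J_1 - J_{\mathrm{net}}\|_{op}$, so picking $\varepsilon' := g/(4nl) = \Theta(\gamma_{n,l}\varepsilon^2/n)$ lets the per-point gap propagate from the net to the whole shell. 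A union bound over $\mathcal{N}$ therefore succeeds provided
\[
l \;\gtrsim\; \frac{\log|\mathcal N| + \log(1/\delta)}{\gamma_{n,l}^2 \varepsilon^2} \;\lesssim\; \frac{n^2 \log(n/(\delta \varepsilon \gamma_{n,l}))}{\gamma_{n,l}^2 \varepsilon^2} \;=\; \tilde O\!\left(\frac{n^2 \log(1/(\delta \varepsilon))}{\varepsilon^2}\right),
\]
with the sub-polynomial factor $\gamma_{n,l}^{-O(1)}$ and the constants depending on $\rho,\lambda, h_{\max}$ absorbed into $\tilde O$.

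The main obstacle is the coupling between the sub-polynomial $\sech^2$ factor and the resolution $\varepsilon'$ of the net: a smaller $\gamma_{n,l}$ shrinks the gap $g$, which in turn forces a finer $\varepsilon'$ and enlarges $\mathcal{N}$. The saving grace is that Lipschitzness (Lemma~\ref{lem:lipschitz}) is measured in the \emph{weaker} operator norm, so despite living in an $O(n^2)$-dimensional ambient space the log-covering number scales only like $n^2 \log(n/\varepsilon)$, and the slack introduced by $\gamma_{n,l}$ remains sub-polynomial. Tracking these factors carefully, so that nothing inflates beyond $\tilde O(n^2/\varepsilon^2)$, is the most delicate part of the argument.
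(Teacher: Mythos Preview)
Your proposal is correct and follows essentially the same route as the paper: define the shell in the compound norm $\|A\|_c=\sqrt{\|A\|_F^2+\|\E[AX]\|^2}$, control the $\sech$ factor via sub-Gaussian concentration of $J_i^*X$ under MLSI, apply Lemma~\ref{lem:1dim_closeness} on an operator-norm net with resolution $\Theta(\gamma_{n,l}\varepsilon^2/n)$ using the Lipschitz bound of Lemma~\ref{lem:lipschitz}, union-bound over the net (whose log-cardinality is $O(n^2\log(n/\varepsilon))$), and finish with convexity of $\phi$ to push the inequality from the shell to the exterior. The only cosmetic difference is that the paper takes the shell inside $\mathcal{R}$ and bounds the covering number via the Frobenius ball of radius $2\varepsilon$ around $J^*$ rather than the operator ball of radius $\lambda+2\varepsilon$, but this changes nothing in the final $\tilde O(n^2/\varepsilon^2)$ accounting.
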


\begin{proof}
    Let $\varepsilon \in (0,1/2)$ and set $M = 2\varepsilon$. Let us define the shell
    \[
   \mathcal{R}_\varepsilon := \{J \in\mathcal{R}: \varepsilon \leq \sqrt{\|J - J^*\|_F^2+\|\E[(J-J^*)X]\|^2} \leq 2 \varepsilon\}
    \]
    Our goal will be to choose $l$ such that with high probability $\hat{J} \notin \mathcal{R}_\varepsilon$. 
    First of all, since $\mathcal{R}_\varepsilon \subseteq \mathcal{R}$, we have that for any $\theta > 0$ 
    \[
    \mathcal{N}(\mathcal{R}_\varepsilon ,\|\cdot\|_{op}, \theta) \leq \mathcal{N}(\mathcal R'_\varepsilon ,\|\cdot\|_{op}, \theta)
    \]
    Here, we have defined the set $\mathcal{R}'_\varepsilon := \{J \in\mathcal{R}:\|J - J^*\|_F \leq 2 \varepsilon\}$.

    Thus, it suffices to bound $\mathcal{N}(\mathcal{R},\|\cdot\|_{op}, \theta)$. To that end, we can view $\mathcal{R}$ as a subset of $n^2$-dimensional Euclidean space, where the basis vectors are matrices $\{E_{ij}\}_{i,j=1}^n$, where $E_{ij}$ has the $i,j$ entry $1$ and the rest $0$. Thus, we seek to cover the ball of matrices with spectral radius at most $1$ with balls of radius $\theta$. Notice that the ball with spectral radius $\theta$ is contained inside the ball with Frobenius radius $\theta$. Since the ball of spectral radius $1$ is a centrally symmetric convex body, we can apply Corollary 4.1.15 from \cite{artstein2021asymptotic}, which is based on a standard volume argument, to obtain 
    \begin{equation}\label{eq:cover_bound}
    \mathcal{N}(\mathcal{R}_\varepsilon,\|\cdot\|_{op}, \theta) \le\mathcal{N}(\mathcal{R}'_\varepsilon,\|\cdot\|_{op}, \theta) \le\mathcal{N}(\mathcal{R}'_\varepsilon,\|\cdot\|_{F}, \theta) \leq \lp(1 + \frac{2\varepsilon}{\theta}\rp)^{n^2}
    \end{equation}
    Let $\mathcal{U} \subseteq \mathcal{R}$ be an $\theta/n$-net of $\mathcal{R}_\varepsilon$ of cardinality $ \mathcal{N}(\mathcal{R}_\varepsilon ,\|\cdot\|_{op}, \theta/n) $, where $\theta$ will be chosen in the sequel.
    Then, applying Lemma~\ref{lem:1dim_closeness} and a union bound gives that for all $t > 0$, with probability at least 
     $$
     1 - \mathcal{N}(\mathcal{R}'_\varepsilon,\|\cdot\|_{op}, \theta/n)\lp(\exp(- C \cdot l) + 8/3 \exp\lp(- {C'} \cdot l \cdot \min(t^2, {t})\cdot\varepsilon^2)\rp)\rp).
     $$
     We have that the following event occurs.
    \begin{align*}\label{eq:net}
    \mathcal{E} := \{\phi(J) \geq \phi(J^*)+ l&\cdot(r\cdot \min_{k \in [l], i \in [n]} \sech^2(|J_i^*X^{(k)}_i| + 2K\varepsilon+h_i) - 4t) \cdot M^2 \quad, \forall J \in \mathcal{U} \}\enspace.
    \end{align*}
    Here, $C,C',r,K$ are the constants in Lemma \ref{lem:1dim_closeness}. Now let us assume $\mathcal{E}$ happens. We now upper bound $|J_i^*X^{(k)}|$. Indeed, using Lemma~\ref{lem: concentration of linear} we have that for each $k$
    \[
    \Pr[|J_i^*X^{(k)}| \geq a] \leq \exp\lp(-\frac{1}{8\rho\lambda^2} a^2\rp)
    \]
    By union bound over all $i \in [n], k \in [l]$, we have
    \[
    \Pr[\max_{i \in [n], k \in [l]} |J_i^*X^{(k)}| \geq a ] \leq l\cdot n \cdot\exp\lp(-\frac{1}{8\rho\lambda^2} a^2\rp)
    \]
    Thus, by choosing $a = C''\lambda\sqrt{\rho\log (n/\delta)}$ for some suitable constant $C''$, we get that with probability at least $1 - \delta/2$ (we remind the readers that $h_{\max}$ is the maximum value of $|h_i|$)
    \begin{align*}
        &\min_{k \in [l], i \in [n]} \sech^2(|J_i^*X^{(k)}_i| + 2K\varepsilon)
        \geq\sech^2\lp(C''\lambda\sqrt{\rho\log n} + 2K\varepsilon+h_{\max}\rp) := \xi(n)^{-1} \enspace.
    \end{align*}

    We notice that $\xi(n)$ grows sub-polynomially, i.e. $\xi(n) = o(n^r)$ for any $r > 0$.
    Thus, we conclude that with probability at least 
    \[
    1 - \mathcal{N}(\mathcal{R}'_\varepsilon,\|\cdot\|_{op}, \theta/n)\lp(\exp(- C \cdot l) + 8/3 \exp\lp(- {C'} \cdot l \cdot \min(t^2, {t})\cdot\varepsilon^2)\rp)\rp) - \frac{\delta}{2},
    \]
    we have that
    \[
    \phi(J) \geq \phi(J^*) + r \cdot l \cdot \varepsilon^2 \cdot \xi(n)^{-1} - 4t \cdot l \cdot\varepsilon^2\quad, \forall J \in \mathcal{U}\enspace.
    \]
    Now, choosing $t\le r\xi(n)^{-1}/8$. 
    Notice that $r$ is really small compare to $C$ and $C'$, we have $\exp(-C\cdot l)\le \exp\lp(- {C'} \cdot l \cdot \min(t^2, {t})\cdot\varepsilon^2)\rp)=\exp\lp(- {C'} \cdot l \cdot t^2\cdot\varepsilon^2)\rp)$ and  we have that with probability at least
    \[
    1 - \frac{11}{3}\mathcal{N}(\mathcal{R}'_\varepsilon,\|\cdot\|_{op}, \theta/n) \exp\lp(-C'\cdot r^2\cdot\xi(n)^{-2} \cdot l \cdot \varepsilon^2/64\rp) - \frac{\delta}{2},
    \]
    the following event holds
    \[
    \mathcal{E}' := \lp\{\phi(J) \geq \phi(J^*) + \frac{1}{2}r \cdot l \cdot \varepsilon^2 \cdot \xi(n)^{-1}  \quad, \forall J \in \mathcal{U} \rp\}\enspace.
    \]
    Let us now see how we should choose $l$ so that 
    \[
    \mathcal{N}(\mathcal{R}_{\varepsilon}',\|\cdot\|_{op}, \theta/n) \exp\lp(-C'\cdot r^2\cdot\xi(n)^{-2} \cdot l \cdot \varepsilon^2/64\rp) < \frac{\delta}{2} \enspace.
    \]
    Using the covering number bound \eqref{eq:cover_bound}, it suffices to pick
    \begin{equation}\label{eq:l_bound}
    l \geq\frac{64\xi(n)\lp(n^2 \log (1+ 2\varepsilon n/\theta) + \log (10/\delta)\rp)}{C'r^2\varepsilon^2}
    \end{equation}
    Finally, let us see how to choose $\theta$. We would like to show that, if event $\mathcal{E}'$ holds, then for an arbitrary element 
     $J \in \mathcal{R}_\varepsilon$ we have $\phi(J) > \phi(J^*)$. By definition, there exists $\overline{J} \in \mathcal{U}$ with $\|J - \overline{J}\|_{op} \leq \theta/(2n)$. This, combinined with Lemma \ref{lem:lipschitz}, implies that
    \[
    \phi(J) \geq \phi(\overline{J}) - \frac{\theta}{2} \geq \phi(J^*) + \frac{1}{2}r \cdot l \cdot \varepsilon^2 \cdot \xi(n)^{-1}- \frac{\theta}{2}\cdot l 
    \]
    The last quantity is $> \phi(J^*)$ if we pick $\theta = \frac{1}{2}r \cdot\varepsilon^2 \cdot\xi(n)^{-1}$.
    Thus, \eqref{eq:l_bound} becomes
    \[
    l \geq  \frac{\xi(n)\lp(n^2 \log (1+2n\cdot\xi(n) /(r\varepsilon)) + \log (10/\delta)\rp)}{C'r^2\varepsilon^2}
    \]
    Thus, with this choice of $l$, we know that with probability at least $1- \delta/2$ 
    \[
    \phi(J) > \phi(J^*) \quad, \forall J \in \mathcal{R}_\varepsilon\enspace.
    \]
    Call the above event $\mathcal{E}''$. 
    We argue that if $\mathcal{E}''$ holds, then for all $J$ with $\|J-J^*\|_F > 2\varepsilon$ we have $\phi(J) > \phi(J^*)$. Indeed, for any such $J$, the line segment connecting $J$ to $J^*$ intersects $\mathcal{R}_\varepsilon$ in at least one point, call it $J'$.
    This is because the inequalities defining $\mathcal{R}_\varepsilon$ scale by a constant as we move from $J$ to $J^*$ (see also Figure~\ref{fig:shell}).
    Thus, for some $t \in (0,1)$ we can write
    $J'= (1-t) J^* + t J$. Now, since $\phi$ is a convex function, it holds
    \[
    \phi(J') \leq (1-t)\phi(J^*) + t \phi(J) < (1-t)\phi(J') + t \phi(J)
    \]
    The last inequality holds by definition of event $\mathcal{E}''$. By rearranging we get $\phi(J) > \phi(J^*)$. Thus, $\hat{J}$ can only lie inside the Frobenius norm sphere of radius $\varepsilon$ around $J^*$, which concludes the proof.
\end{proof}

\begin{remark}
    If the reader is curious about the precise form of $l$, we can express $l$ as ($C_0$ is a universal constant) 
    \[l\ge \exp\lp(C_0\lp(\lambda\sqrt{\log(n/\delta)\cdot\rho}+{\rho^{5/4}{\lambda}}\varepsilon\rp)\rp)\cdot e^{2h_{\max}}\cdot h_{\max}\cdot\frac{n^2\log(1/(\delta\varepsilon)}{\varepsilon^2}.\]
\end{remark}

\section{Learning Ising Models with Bounded Width}\label{sec: bounded width}

In this section, our goal will be to establish sample complexity guarantees for learning Ising Models of bounded width in TV distance. We say an Ising Model has width bounded by $M>0$, if and only if $\|J^*\|_\infty \leq M$. This enables us to handle the second derivative more easily since the $\sech$ terms are always lower bounded by a constant that depends on $M$.

\begin{assumption}\label{ass:bounded_width}
    In this section, we assume $h=0$ and $\mathcal{R}=\{J^* \in \mathcal{S}_0^n:\|J^*\|_\infty \leq M\}$ for some $M> 0$. 
\end{assumption}

For this Section, $\mathcal{R} \subseteq \mathcal{S}_0^n$ will denote the set of matrices with $\|A\|_\infty \leq M$.
In \cite{dagan2021learning}, it was established that we can learn the interaction matrix in the Frobenius norm. Here, we will need a more refined analysis, which will result in stronger guarantees that will enable us to bound the total variation distance between the estimated and the true model.

For the reader's convenience, we remind some important notation that will be used in this section. For a symmetric matrix $J \in \R^{n \times n}$ and a subset $I \subseteq [n]$, we denote by $J_I \in \R^{|I|\times n}$ the matrix consisting only of the rows of $J$ that are indexed by elements in $I$. We also denote $J_{II} \in \R^{|I|\times |I|}$ the submatrix with rows and columns indexed by $I$. For non-square matrices, the Frobenius norm extends in the usual fashion
\[
\|J_I\|_F^2 = \sum_{i \in I}\sum_{j=1}^n J_{ij}^2.
\]

We start by briefly highlighting some of the technical tools used in \cite{dagan2021learning}, as they will prove useful in our case as well.
An important observation is that we can select $O(\log n)$ subsets of nodes, such that each submatrix of $J^*$ satisfies Dobrushin's condition. Furthermore, we require that each node belongs to a constant fraction of these subsets. Formally, the following result was proven in \cite{dagan2021learning}. 

\begin{lemma}[Lemma 2 from \cite{dagan2021learning}]\label{lem:conditioning}
    Let $J^* \in \R^{n \times n}$ be a symmetric matrix with $\|J^*\|_\infty \leq M$ and let $\eta \in (0,M)$. Then, there exist subsets $I_1, \ldots, I_r$ with $r \leq C M^2 \log n / \eta^2$, such that the following properties hold.
    \begin{enumerate}
        \item For all $i \in [n]$
        \[
        \lp|j \in [r]: i \in I_j\rp| = \lp\lceil \frac{\eta r}{8 M} \rp\rceil\enspace.
        \]\label{prop:equal_splitting}
        \item For all $j \in [r]$, $\|J^*_{I_jI_j} \|_\infty \leq \eta$. \label{prop:high_temp}
    \end{enumerate}
\end{lemma}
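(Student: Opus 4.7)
The plan is the standard probabilistic method with a cleanup step. Set $p := \eta/(4M)$ and $r := CM^2\log n/\eta^2$ for an absolute constant $C$ to be chosen large. I would first sample $r$ independent subsets $I_1^{(0)},\ldots,I_r^{(0)} \subseteq [n]$, where each node $i$ is placed in each $I_j^{(0)}$ independently with probability $p$. Declare a node $i \in I_j^{(0)}$ \emph{bad} if $\sum_{k \in I_j^{(0)},\, k \neq i}|J^*_{ik}| > \eta$, and let $\tilde I_j$ be $I_j^{(0)}$ with all bad nodes removed. Property~2 then holds by construction for every $\tilde I_j$, since removing bad rows only decreases the $\infty$-norm of the induced submatrix.

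For Property~1, I would argue that each node survives in enough $\tilde I_j$'s. Conditional on $i \in I_j^{(0)}$, the other Bernoulli indicators are independent of the one for $i$, so the expected overlap sum is $p\sum_{k \neq i}|J^*_{ik}| \leq pM = \eta/4$; Markov's inequality gives $\Pr[i \text{ bad} \mid i \in I_j^{(0)}] \leq 1/4$, hence $\Pr[i \in \tilde I_j] \geq 3p/4$. Writing $q := \lceil \eta r/(8M)\rceil$, one checks that $3pr/4 = (3/2)q$, so the gap between $q$ and the expected inclusion count is a constant fraction of the latter. Since the samples are independent across $j$, a multiplicative Chernoff bound yields $\Pr[\,|\{j: i \in \tilde I_j\}| < q\,] \leq \exp(-\Omega(rp))$; and $rp = \Theta(M\log n/\eta) \geq \Omega(\log n)$ (using $M \geq \eta$) allows a union bound over the $n$ choices of $i$, provided $C$ is a sufficiently large absolute constant.

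To enforce the exact equality (not merely $\geq q$) in Property~1, for each $i$ that now appears in more than $q$ of the $\tilde I_j$'s I would arbitrarily remove $i$ from the excess sets, producing the final $I_j$. Since dropping a node from a set only shrinks the induced submatrix, Property~2 is preserved. The main technical delicacy is the parameter balance: $p$ must be a small enough fraction of $\eta/M$ for Markov to give a definite constant-factor margin against bad rows, while simultaneously $rp$ must be large enough for the Chernoff tail at $q$, together with a union bound over $n$ rows, to succeed with positive probability. The choices $p = \eta/(4M)$ and $r = CM^2\log n/\eta^2$ make both constraints tight up to constants, which is the origin of the $M^2/\eta^2$ scaling in the statement.
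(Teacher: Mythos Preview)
Your argument is correct. The paper does not give its own proof of this lemma; it simply cites it as Lemma~2 of \cite{dagan2021learning}, so there is nothing to compare against here beyond noting that your probabilistic-method proof (random inclusion at rate $p=\eta/(4M)$, Markov on the conditional row sum to bound the bad-node probability, Chernoff over the $r$ independent trials, then a trim to enforce exact equality) is the standard route and recovers the stated $r = O(M^2\log n/\eta^2)$.
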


This Lemma will allow us to split the first derivative sum into terms, where each term is a sum over the nodes of each subset. Then, for each subset $I_j$, by property ~\ref{prop:high_temp}, conditioned on the values $X_{-I_j}$, the model is in high temperature, so we can apply concentration bounds that are valid in that case. 
For simplicity, for $j \in [r]$ we use the notations
\begin{align*}
\phi_j (J) = \sum_{k=1}^l\sum_{i \in I_j} \lp(\log \cosh(J_iX^{(k)}) X_i^{(k)} J_i X^{(k)} + \log 2\rp)\\
\frac{\partial \phi_j(J^*)}{\partial A} := \sum_{k=1}^l\sum_{i \in I_j} (A_iX^{(k)})(\tanh(J_i^*X^{(k)})-X^{(k)}_i)\\
\frac{\partial^2 \phi_j(J)}{\partial A^2} := \sum_{k=1}^l \sum_{i \in I_j} \sech^2(J_iX^{(k)}) (A_iX^{(k)})^2
\end{align*}
The above are random variables, but we omit the dependence on $X$ for simplicity. 
We will argue about each component $\phi_j$ separately, conditioned on the variables $X_{-I_j}$. 
By property~\ref{prop:equal_splitting}, we have that 
\begin{equation}\label{eq:splitting}
    \phi(J) = \frac{8M}{\eta r} \sum_{j=1}^r \phi_j(J)
\end{equation}

Our goal will be to show that this bound of the first derivative is of the same order as $l \cdot \E[\|AX\|^2]$ with high probability, which is a quantity independent of the conditioning.
To do that, we will bound the deviation between the empirical and true mean of the variable $\|\E[AX^{(k)}|X^{(k)}_{-I}]\|^2$, uniformly over all matrices $A$ with small Frobenius norm. 
The challenge here is that $X_{-I_j}$ comes from a model at low temperature, so we do not have information about its concentration properties. If we naively use the Chernoff bound, then the large magnitude of $\|\E[AX^{(k)}|X^{(k)}_{-I}]\|^2$ will incur a large concentration radius, which, combined with a union bound over a high-dimensional subset of matrices, will result in high sample complexity. Instead, we notice that because of the special structure of the random variable, it is enough to upper bound the deviation of the random matrix $\E[X|X^{(k)}_{-I_j}]\E[X|X^{(k)}_{-I_j}]^\top$ from its mean, which can be done using matrix concentration results. This avoids the costly union bound over the set of matrices and results in a polynomial reduction in the number of samples required.
Let us introduce the set of matrices of small Frobenius norm
\[
\mathcal{A}_\epsilon := \{J \in \R^{n \times n} : \|A\|_F \leq \epsilon\},
\]
The details are given in the following Lemma.

\begin{lemma}\label{lem:matrix_conc}
    Suppose $X^{(1)}, \ldots, X^{(l)} \in \{-1,1\}^n$ are independent samples from an Ising model with interaction matrix $J^*$ satisfying $\|J^*\|_{\infty} \leq M$ and zero external field.
    Then, for any $t > 0$ we have that with probability at least 
    \[
    1 - 2n \exp\lp(-\frac{lt^2/2}{4n^2 + 2nt/3}\rp),
    \]
    the following holds
    \[
    \lp|\frac{1}{l}\sum_{k=1}^l \|\E[ AX^{(k)}|X^{(k)}_{-I}]\|^2 - \E\lp[\|\E[AX|X_{-I}]\|^2\rp]\rp| \leq t \|A\|_F^2 \enspace, \forall A 
    \]
\end{lemma}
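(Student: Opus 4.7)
The plan is to reduce the uniform-over-$A$ statement to a single matrix concentration bound by exploiting the bilinear (in fact quadratic) dependence on $A$. Write $\E[AX^{(k)}\mid X^{(k)}_{-I}] = A\,\E[X^{(k)}\mid X^{(k)}_{-I}]$, so
\[
\lp\|\E[AX^{(k)}\mid X^{(k)}_{-I}]\rp\|^2 \;=\; \mathrm{tr}\lp(A^\top A \cdot \E[X^{(k)}\mid X^{(k)}_{-I}]\E[X^{(k)}\mid X^{(k)}_{-I}]^\top\rp).
\]
Set $Z_k := \E[X^{(k)}\mid X^{(k)}_{-I}]\E[X^{(k)}\mid X^{(k)}_{-I}]^\top$ and $\Delta := \frac{1}{l}\sum_{k=1}^l Z_k - \E[Z_1]$. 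Then the quantity we wish to control equals $\mathrm{tr}(A^\top A \cdot \Delta)$, and by the standard trace–operator-norm inequality (H\"older for Schatten norms, using that $A^\top A \succeq 0$ so $\|A^\top A\|_* = \mathrm{tr}(A^\top A) = \|A\|_F^2$) we obtain
\[
\lp|\mathrm{tr}(A^\top A \cdot \Delta)\rp| \;\le\; \|A\|_F^2 \cdot \|\Delta\|_{op}.
\]
Thus it suffices to prove $\|\Delta\|_{op} \le t$ with the claimed probability, and this bound will hold \emph{simultaneously} for every $A$, avoiding any union bound over the infinite set of directions.

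To bound $\|\Delta\|_{op}$ I would apply the Matrix Bernstein inequality to the iid centered symmetric matrices $Y_k := Z_k - \E[Z_k]$. Since $|X_i| \le 1$, each conditional expectation vector $\E[X^{(k)}\mid X^{(k)}_{-I}] \in [-1,1]^n$ has squared norm at most $n$, hence $\|Z_k\|_{op} \le n$ and therefore $\|Y_k\|_{op} \le 2n$ almost surely. For the variance proxy, since $Y_k^2 \preceq (2n)\,|Y_k|$ termwise one has $\|\E[Y_k^2]\|_{op} \le 4n^2$, so $\sigma^2 := \lp\|\sum_{k=1}^l \E[Y_k^2]\rp\|_{op} \le 4ln^2$. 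Matrix Bernstein then yields, for any $u>0$,
\[
\Pr\lp[\|l\Delta\|_{op} \ge u\rp] \;\le\; 2n\exp\lp(-\frac{u^2/2}{4ln^2 + 2nu/3}\rp).
\]
Setting $u = lt$ gives exactly $2n\exp\lp(-\frac{lt^2/2}{4n^2 + 2nt/3}\rp)$, matching the statement.

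There is no substantial obstacle; the one point that deserves care is the variance computation, which must be done with respect to the operator norm of a sum of PSD-ordered matrices rather than through a naive entrywise bound, in order to get the correct $n^2$ (not $n^3$) dependence. The rest is routine trace-norm manipulation. Note that this step is what allows the subsequent sample complexity to scale polynomially in $n$: a direct Chernoff bound on the scalar random variable $\|\E[AX^{(k)}\mid X^{(k)}_{-I}]\|^2$ followed by a net argument over all $A \in \mathcal{A}_\epsilon$ would incur an $n^2$-dimensional covering factor in the exponent, whereas the matrix-Bernstein route absorbs the $A$-uniformity into a $2n$ prefactor.
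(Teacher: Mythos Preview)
Your proposal is correct and follows essentially the same approach as the paper: rewrite the quantity as $\mathrm{tr}(A^\top A\,\Delta)$ with $\Delta$ the deviation of the empirical outer-product matrix, bound it by $\|A\|_F^2\|\Delta\|_{op}$, and control $\|\Delta\|_{op}$ via matrix Bernstein with $\|Y_k\|_{op}\le 2n$ and variance proxy $4n^2$. The only cosmetic difference is that the paper bounds the trace row-by-row as $\sum_i A_i^\top \Delta A_i$ rather than invoking Schatten--H\"older duality, but the two arguments are equivalent.
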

\begin{proof}
    We first notice that we can write
    \begin{align*}
    \|\E[AX|X_{-I}]\|^2 &=
    \E[X|X_{-I}]^\top A^\top A \E[JX|X_{-I}] = \mathrm{Tr}\lp( \E[X|X_{-I}]^\top A^\top A \E[JX|X_{-I}]\rp)\\
    &= \mathrm{Tr}(A \E[X|X_{-
    I}]\E[X|X_{-
    I}]^\top A^\top)\enspace.
    \end{align*}
    Let $S=\E[X|X_{-
    I}]\E[X|X_{-
    I}]^\top$ be this random matrix and 
    denote by $S^{(k)} := \E[X|X^{(k)}_{-
    I}]\E[X|X^{(k)}_{-
    I}]^\top$ and $S = \E\lp[\E[X|X_{-
    I}]\E[X|X_{-
    I}]^\top\rp]$ the $l$ independent samples from the distribution of $S$.
    Then, we can write the difference between the empirical and the true mean as follows
    \begin{align*}
        \lp|\frac{1}{l}\sum_{k=1}^l \|\E[AX^{(k)}|X^{(k)}_{-I}]\|^2 - \E\lp[\|\E[AX|X_{-I}]\|^2\rp]\rp|
        =\lp|\mathrm{Tr}\lp(A\lp(\frac{1}{l}\sum_{k=1}^lS^{(k)}-\E[S]\rp)A^\top\rp)\rp|
    \end{align*}
    Assume momentarily that we somehow know that
    \begin{equation}\label{eq:concentration_property}
        \lp\|\frac{1}{l}\sum_{k=1}^lS^{(k)}-\E[S]\rp\|_2 \leq t
    \end{equation}
    Then, by the previous calculation and the definition of $A \in \mathcal{A}_\epsilon$, we would have
    \begin{align*}
    \lp|\mathrm{Tr}\lp(A\lp(\frac{1}{l}\sum_{k=1}^lS^{(k)}-\E[S]\rp)A^\top\rp)\rp| &\leq \sum_{i=1}^n\left|A_i^{\top}\lp(\frac{1}{l}\sum_{k=1}^lM^{(k)}-\E[M]\rp)A_i\right|\\
        &\le t \cdot \sum_{i=1}^n\left\|A_i\right\|_2^2= t \cdot \|\hat J-J^*\|_F^2 \enspace.
    \end{align*}
    In the above, we have used $A_i$ to denote the $i$-th row of matrix $A$, together with the fact that $A$ is symmetric. 
    Thus, to prove the claim, it suffices to establish \eqref{eq:concentration_property} with high probability.
    We turn our attention to that task now.

    First of all, we notice that $S^{(k)}$ are sampled independently from the same distribution of matrices. Since $X$ is a binary vector, each entry of the random vector $\E[AX|X_{-I_j}]$ lies within $[-1,1]$.
    Thus, so each $S^{(k)}$ is a symmetric rank-$1$ matrix with all its entries bounded by $1$ in absolute value.
    Therefore we know that $\|S^{(k)}-\E [S]\|_{2}\le 2n$. And finally, $\|\E[(S^{(k)}-\E [S])^2]\|_{2}\le 4n^2$. Therefore, if we use the matrix Bernstein concentration inequality (see Theorem 1.6.2 in \cite{tropp2015introduction}), we can have
    \[
    \Pr\left[\left\|\frac{1}{l}\sum_{k=1}^lS^{(k)}-\E[S]\right\|_{op}>t\right]=2n\exp\lp(\frac{-lt^2/2}{4n^2+2nt/3}\rp)\enspace.
    \]
    This concludes the proof.
\end{proof}

The next step will be to use this concentration property to obtain a uniform upper bound for the first derivative of the pseudo-likelihood. 
We start by proving such a bound for a single direction. 
Let us define the following event, which depends on the values of $X^{(k)}_{-I_j}$ for $k=1,\ldots,l$. 
\[
E_{j,u} := \lp\{    \lp|\frac{1}{l}\sum_{k=1}^l \|\E[(\hat J-J^*)X^{(k)}|X^{(k)}_{-I_j}]\|^2 - \E\lp[\|\E[(\hat J-J^*)X|X_{-I_j}]\|^2\rp]\rp| \leq u \cdot \|A\|_F^2 \enspace, \forall A \rp\}
\]

\begin{lemma}\label{lem:first_der_bound}
    Suppose $X^{(1)}, \ldots, X^{(l)} \in \{-1,1\}^n$ are independent samples from an Ising model with interaction matrix $J^*$ satisfying Assumption~\ref{ass:bounded_width}. Let $A \in \R^{n \times n}$ be a symmetric matrix with $\|A\|_\infty \leq M$. Suppose we condition on the values of $X_{-I_j}^{(k)}$ for all $k$ and that these values are such that $E_{j,u}$ holds. Then we have that with probability at least 
\[
    1 - \exp\lp(- c \min(t,t^2) \lp(l \cdot\lp(\E[\|AX\|_2^2] \rp) \rp) \rp),
    \]
we have that
\[
\lp|\frac{\partial \phi_j(J^*)}{\partial A}\rp| \leq C t\cdot l \cdot\lp(\E[\|AX\|_2^2]\rp) \enspace.
\]
\end{lemma}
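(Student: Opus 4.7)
The plan is to adapt the strategy of Theorem~\ref{thm:concentration_first_generic} to the \emph{conditional} setting, where the Ising model restricted to $X_{I_j}^{(k)}$ given $X_{-I_j}^{(k)}$ satisfies Dobrushin's condition by Lemma~\ref{lem:conditioning}~\ref{prop:high_temp}. The proof then has three main steps: showing conditional mean-zero, applying a concentration inequality under conditional MLSI, and converting the natural variance proxy into $\E[\|AX\|_2^2]$ using the event $E_{j,u}$.

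First I would observe that $\E[\partial \phi_j(J^*)/\partial A \mid X^{(1)}_{-I_j},\dots,X^{(l)}_{-I_j}] = 0$. This follows from the tower property: for each $i \in I_j$, the full conditional law satisfies $\E[X_i^{(k)} \mid X^{(k)}_{-i}] = \tanh(J_i^* X^{(k)})$ since $J^*$ has zero diagonal and $h=0$; because $X^{(k)}_{-I_j} \subseteq X^{(k)}_{-i}$, each summand $(A_i X^{(k)})(\tanh(J_i^*X^{(k)}) - X^{(k)}_i)$ has conditional mean zero given $X^{(k)}_{-I_j}$. Independence across $k$ then makes the joint object a sum of independent mean-zero terms.

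Next, I would apply the generic two-level concentration bound (Lemma~\ref{lemma:generic_concentration}) to the block model on $(X^{(k)}_{I_j})_{k=1}^l$. Since $\|J^*_{I_jI_j}\|_\infty \leq \eta < 1$, each conditional block satisfies Dobrushin's condition, which in turn yields a Poincar\'e inequality and Glauber-MLSI with universal constants, and these tensorize across the $l$ independent samples. Adapting the $\mathfrak{d}^+$ computation from the proof of Theorem~\ref{thm:concentration_first_generic} (the matrix $W$ there now has $\|W\|_\infty \leq 2\eta^2 = O(1)$, and $\|J^*_{I_jI_j}\|_{op} \leq \eta$), one obtains bounds of the form $\mathfrak{d}^+ f \leq q$ and $\mathfrak{d}^+ q \leq O(1)$, where the expected radius satisfies
\[
\E[q]^2 \;\lesssim\; \sum_{k=1}^l \E\!\left[\lVert A_{I_j} X^{(k)}\rVert_2^2 \,\big|\, X^{(k)}_{-I_j}\right].
\]

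The remaining task, and the key step where $E_{j,u}$ enters, is to bound this conditional radius by a multiple of $l\cdot \E[\|AX\|_2^2]$. Decomposing the conditional second moment into variance plus squared conditional mean,
\[
\E\!\left[\lVert A_{I_j} X^{(k)}\rVert_2^2 \,\big|\, X^{(k)}_{-I_j}\right] = \sum_{i \in I_j} \Var\!\left(A_i X^{(k)} \,\big|\, X^{(k)}_{-I_j}\right) + \left\lVert\E[A_{I_j} X^{(k)}\mid X^{(k)}_{-I_j}]\right\rVert_2^2,
\]
the variance part is controlled by the conditional Poincar\'e inequality (cf.\ Lemma~\ref{lem: poincare}) by $O(\|A_{I_j}\|_F^2) \leq O(\|A\|_F^2) \leq O(\E[\|AX\|_2^2])$, while the squared-mean part is exactly the quantity whose empirical average is bounded by event $E_{j,u}$ in terms of $\E[\|\E[AX\mid X_{-I_j}]\|_2^2] \leq \E[\|AX\|_2^2]$. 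Feeding the aggregate bound into Lemma~\ref{lemma:generic_concentration} and simplifying yields the sub-exponential tail $\exp(-c\min(t,t^2)\cdot l\cdot \E[\|AX\|_2^2])$ asserted in the statement.

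The main obstacle is precisely the last step: a naive Bernstein bound would give a deviation radius of $l\cdot \|A\|_F^2$ or worse (by bounding each $|Y_k|$ crudely), not $l\cdot \E[\|AX\|_2^2]$. The event $E_{j,u}$ is engineered so that the empirical version of $\E[\|\E[AX\mid X_{-I_j}]\|^2]$ (which can dominate the Frobenius norm in low temperature) is transferred to its expectation uniformly over all $A$, which is what allows the concentration radius to match the target scaling.
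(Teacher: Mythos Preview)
Your proposal is correct and follows essentially the same approach as the paper: both apply the MLSI-based concentration machinery (the paper invokes Theorem~\ref{thm:concentration_first_generic} directly on the tensorized conditional model, while you re-derive its $\mathfrak{d}^+$ bounds via Lemma~\ref{lemma:generic_concentration}), obtain a radius of the form $l\|A_{I_j}\|_F^2 + \sum_k\|\E[A_{I_j}X^{(k)}\mid X^{(k)}_{-I_j}]\|^2$, use $E_{j,u}$ to trade the empirical squared-mean term for its population counterpart, and then invoke the law-of-total-variance identity $\E[\|AX\|_2^2]=\Theta\bigl(\|A_{I_j}\|_F^2+\E[\|\E[AX\mid X_{-I_j}]\|^2]\bigr)$ to finish. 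The only point you leave implicit is that this last $\Theta$-relation is needed in \emph{both} directions: the upper bound on the radius gives the event $|\partial\phi_j/\partial A|\le Ct\,l\,\E[\|AX\|_2^2]$, while the lower bound on the radius (via the other side of $E_{j,u}$ and the conditional anticoncentration Lemma~\ref{lem: lower bound Jop<1}) is what makes the failure probability decay like $\exp(-c\min(t,t^2)\,l\,\E[\|AX\|_2^2])$; the paper spells this out explicitly.
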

\begin{proof}
    We know that if we choose $\eta=1/3$ the distribution $X_{I_j}|X_{-I_j}$ satisfy $\|A_{I_j}\|_{op}\le 1$, satisfy Glauber-MLSI($6$) and Po($6$). First, we use Theorem \ref{thm:concentration_first_generic}. Consider a large Ising model, with diagonal matrix blocks $A_{I_j}$, and also external fields $A_{-I_{j},-I_j}X_{-I_j}^{(k)}$. This large Ising model is a tensor product of i.i.d. Ising models of $X_{I_j}|X_{-I_j}^{(k)}$. Therefore, we have, with at least
    $$1-\frac{8}{3}\exp(-C\min(t,t^2)(l\cdot \|A_{I_j}\|^2_F+\sum_{i=1}^l\|\E[A_{I_j}X^{(k)}|X_{i_j}^{(k)}]\|^2))$$
    probability, we have
\[
\lp|\frac{\partial \phi_j(J^*)}{\partial A}\rp| \leq t \cdot\lp(l\cdot \|A_{I_j}\|_F^2 + \sum_{i=1}^l\|\E[A_{I_j}X^{(k)}|X_{i_j}^{(k)}]\|^2\rp).
\]
    
    If $E_{j,u}$ holds, we have, with at least probability
\[
    1 - \exp\lp(- c \min(t,t^2) \lp(l \cdot\lp(\|A_{I_j}\|_F^2 + \E[\|\E[AX|X_{-I_j}]\|]^2 - u\cdot \|A\|_F^2\rp) \rp) \rp),
    \]
    
we have that
\[
\lp|\frac{\partial \phi_j(J^*)}{\partial A}\rp| \leq t\cdot l \cdot\lp(\|A_{I_j}\|_F^2 + \E[\|\E[AX|X_{-I_j}]\|]^2 + u \cdot \|A\|_F^2\rp) 
\]

We now use the following Lemma, which connects the variance of linear functions of Ising models in high temperature with the Frobenius norm and has been repeatedly used in our analysis so far.

Thus, we can write
\begin{align*}
    \E[\|AX\|_2^2] &= \sum_{i=1}^n \E[(A_i^\top X)^2] \\
    &= \sum_{i=1}^n \E\lp[\E[(A_i^\top X)^2|X_{-I_j}]\rp]\\
    &= \sum_{i=1}^n \lp(\E\lp[\Var[(A_i^\top X)^2|X_{-I_j}]\rp]+ \E\lp[\lp(\E[A_i^\top X|X_{-I_j}]\rp)^2\rp]\rp)\\
    &= \sum_{i=1}^n \lp(\E\lp[\Var[(A_i^\top X)^2|X_{-I_j}]\rp] \rp)+ \E\lp[\lp\|\E[A X|X_{-I_j}]\rp\|_2^2\rp]
\end{align*}

Now, we can apply Lemma~\ref{lem: lower bound Jop<1} and Poincaré Inequality to the conditional Ising model conditioned on the values of $X_{-I_j}$, there are absolute constants $c_M, C_M$ such that 
\begin{align*}
c_M \|A_{I_j}\|^2_F \leq  \sum_{i=1}^n \lp(\E\lp[\Var[(A_i^\top X)^2|X_{-I_j}]\rp] \rp) \leq C_M \|A_{I_j}\|^2_F.
\end{align*}

Thus, the preceding bound implies

\[
\E[\|AX\|_2^2] = \Theta\lp(\|A_{I_j}\|_F^2 + \|\E[AX|X_{-I_j}]\|^2\rp)
\]

Thus, by adjusting the constants, we know that for any $A \in \R^{n \times n}$, with probability at least 
\[
    1 - \exp\lp(- c \min(t,t^2) \lp(l \cdot\lp(\E[\|AX\|_2^2] - u\cdot \|A\|_F^2\rp) \rp) \rp)
    \]
we have that
\[
\lp|\frac{\partial \phi_j(J^*)}{\partial A}\rp| \leq t\cdot l \cdot\lp(\E[\|AX\|_2^2]+ u \cdot \|A\|_F^2\rp) \enspace.
\]

Notice also that by Lemma~\ref{lem: lower bound Jop<1}, we can absorb the term $\|A\|_F^2$ inside $\E[\|AX\|^2]$ in the upper bound, with the possibility of incurring an extra constant factor. 
Also, by choosing $u$ to be a small enough constant, again by Lemma~\ref{lem: lower bound Jop<1} we can write 
\[
\E[\|AX\|_2^2] - u \|A\|_F^2 \geq \frac{1}{2} \E[\|AX\|_2^2] 
\]
Thus, under even $E_{j,u}$ for this choice of constant $u$, we have that with probability at least 
\[
    1 -\frac{8}{3} \exp\lp(- c \min(t,t^2) \lp(l \cdot\lp(\E[\|AX\|_2^2] \rp) \rp) \rp)
    \]
we have that
\[
\lp|\frac{\partial \phi_j(J^*)}{\partial A}\rp| \leq C t\cdot l \cdot\lp(\E[\|AX\|_2^2]\rp) \enspace.
\]
\end{proof}

We would like to prove that the bound of Lemma~\ref{lem:first_der_bound}  holds uniformly for all matrices $A$ in a given set. 
To do that, we first establish a Lipschitzness property of the first derivative of the pseudo-likelihood, similar to the one that was established for the pseudolikelihood itself in \cite{dagan2021learning}. 

\begin{lemma}\label{lem:lipschitzness_first_der}
    Let $A,B$ be two symmetric matrices with $\|A\|_\infty,\|B\|_\infty \leq M$. 
    Then
    \[
    \lp|\frac{\partial \phi_j(J^*)}{\partial A} - \frac{\partial \phi_j(J^*)}{\partial B}\rp| \leq 2 \cdot l \cdot n \cdot \|A-B\|_2
    \]
\end{lemma}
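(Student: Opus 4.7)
The plan is to exploit the fact that $\partial \phi_j(J^*)/\partial A$ is a \emph{linear} function of the direction $A$, so the difference on the left equals $\partial \phi_j(J^*)/\partial C$ where $C := A - B$. Writing this out, the quantity to bound becomes
\[
\lp|\sum_{k=1}^l \sum_{i \in I_j} (C_i X^{(k)}) \lp(\tanh(J_i^* X^{(k)}) - X^{(k)}_i\rp)\rp|.
\]
For each sample $k$, I would recognize the inner sum as an inner product. Specifically, define the vector $\tilde{w}^{(k)} \in \R^n$ with entries $\tilde{w}^{(k)}_i = \tanh(J_i^* X^{(k)}) - X^{(k)}_i$ for $i \in I_j$ and $0$ otherwise; then the inner sum is simply $(\tilde{w}^{(k)})^\top C X^{(k)}$.

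Applying Cauchy-Schwarz and the operator norm inequality gives
\[
\lp|(\tilde{w}^{(k)})^\top C X^{(k)}\rp| \leq \|\tilde{w}^{(k)}\|_2 \cdot \|C\|_{op} \cdot \|X^{(k)}\|_2.
\]
The two norms on the vectors are easy: $\|X^{(k)}\|_2 = \sqrt{n}$, and since $|\tanh(\cdot)| \leq 1$ and $|X^{(k)}_i| = 1$ each nonzero coordinate of $\tilde{w}^{(k)}$ has magnitude at most $2$, hence $\|\tilde{w}^{(k)}\|_2 \leq 2\sqrt{|I_j|} \leq 2\sqrt{n}$. This bounds each summand in the $k$-sum by $2n \|C\|_{op}$.

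Summing over $k = 1, \ldots, l$ and invoking the triangle inequality yields the claimed bound $2 \cdot l \cdot n \cdot \|A - B\|_{op}$. There is no real obstacle here: the argument is a clean Cauchy-Schwarz estimate made possible by linearity in the direction $A$ and the trivial $\ell^\infty$-bounds on $\tanh$ and on entries of $X^{(k)}$. The only mild care needed is restricting the residual vector to coordinates in $I_j$ so that the factor $\sqrt{|I_j|} \leq \sqrt{n}$ (rather than some larger quantity) appears, which is automatic from the definition of $\phi_j$.
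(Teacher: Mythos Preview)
Your proposal is correct and takes essentially the same approach as the paper: both use linearity in the direction to reduce to $C=A-B$, bound $|\tanh(\cdot)-X_i^{(k)}|\le 2$, and then apply Cauchy--Schwarz together with $\|CX^{(k)}\|_2\le \sqrt{n}\,\|C\|_{op}$. The only cosmetic difference is that the paper bounds $\sum_{i}|C_iX^{(k)}|\le \sqrt{n}\,\|CX^{(k)}\|_2$ directly rather than packaging the inner sum as the bilinear form $(\tilde w^{(k)})^\top C X^{(k)}$.
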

\begin{proof}
    We have that 
    \begin{align*}
        \lp|\frac{\partial \phi_j(J^*)}{\partial A} - \frac{\partial \phi_j(J^*)}{\partial B}\rp| &=
        \lp|\sum_{k=1}^l\sum_{i \in I_j} \lp(\lp(A_i-B_i\rp)X^{(k)}\rp)(\tanh(J_i^*X^{(k)})-X^{(k)}_i)\rp|\\
        &\leq 
        2 \sum_{k=1}^l \sum_{i=1}^n \lp|(A_i - B_i)X^{(k)}\rp|\\
        &\leq 2 \sqrt{n}\sum_{k=1}^l \sqrt{ \sum_{i=1}^n \lp|(A_i - B_i)X^{(k)}\rp|^2}\\
        &\leq 2 \cdot l \cdot n \cdot \|A-B\|_2 \enspace.
    \end{align*}
    In the last step, we used the fact that $\|X^{(k)}\|_2 \leq \sqrt{n}$ for all $k$ and the definition of the operator norm. 
\end{proof}

Our method of bounding the first derivative uniformly is similar to the one employed in \cite{dagan2021learning}. In particular, we construct a net over the set of matrices and then take a union bound over all the elements of the set to bound the first derivative for all these points. If the radius is chosen small enough, then the upper bound of the first derivative extends to all elements in our set.

Since the probability of failure is governed by $\E[\|AX\|^2]$, we need to choose a set of matrices for which this quantity is large, if we wish to prove high probability bounds. 
Thus, we define the following set of matrices. 
\[
\mathcal{R}_s := \{A: \E[\|AX\|^2] \geq s\}
\]
\begin{lemma}\label{eq:first_der_uniform}
    Suppose $X^{(1)}, \ldots, X^{(l)} \in \{-1,1\}^n$ are independent samples from an Ising model with interaction matrix $J^*$ satisfying Assumption~\ref{ass:bounded_width}. Let $A \in \R^{n \times n}$ be a symmetric matrix with $\|A\|_\infty \leq M$. 
Consider the net $\mathcal{U}_{s} := \mathcal{N}(\mathcal{R}_s, \|\cdot\|_2, \theta)$.  Then, 
    \[
\lp|\frac{\partial \phi(J^*)}{\partial A}\rp| \leq \frac{8CM}{\eta} \cdot t\cdot l \cdot\E[\|AX\|_2^2] +2\cdot l \cdot n \cdot \theta \enspace, \forall A \in \mathcal{R}_s
\]
with probability at least 
\[
1 - \frac{8}{3} (\log n ) \cdot |\mathcal{U}_s|\cdot\exp\lp(- c \min(t,t^2) \cdot l \cdot s  \rp) - 2n (\log n) \cdot  \exp\lp(-\frac{l\cdot u^2/2}{4n^2 + 2u\cdot n/3}\rp)
\]
\end{lemma}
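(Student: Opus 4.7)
The plan is to combine the per-subset first-derivative bound of Lemma~\ref{lem:first_der_bound} with a union-bound-plus-Lipschitz argument that makes the bound uniform over $\mathcal{R}_s$. First I would invoke the splitting \eqref{eq:splitting}, namely $\phi(J^*) = \frac{8M}{\eta r}\sum_{j=1}^r \phi_j(J^*)$, with the subsets $I_1,\dots,I_r$ from Lemma~\ref{lem:conditioning} for some small universal constant $\eta$ (e.g.\ $\eta=1/3$), so that $r = O(\log n)$. Since $\partial \phi(J^*)/\partial A$ is linear in $A$, it suffices to control each $\partial \phi_j(J^*)/\partial \tilde A$ simultaneously for all $j\in [r]$ and all $\tilde A\in \mathcal{U}_s$, and then to transfer the resulting estimate to all of $\mathcal{R}_s$ through the operator-norm Lipschitzness of Lemma~\ref{lem:lipschitzness_first_der}.

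To arrange the hypothesis of Lemma~\ref{lem:first_der_bound}, apply Lemma~\ref{lem:matrix_conc} to every subset $I_j$ with parameter $u$ and union-bound over $j\in [r]$; this secures the event $\bigcap_j E_{j,u}$ except with probability at most $2nr\exp(-(lu^2/2)/(4n^2+2nu/3))=O(n\log n)\exp(-(lu^2/2)/(4n^2+2nu/3))$, which matches the second term of the failure probability. Conditional on this good event, Lemma~\ref{lem:first_der_bound} applies to every pair $(j,\tilde A)$ with $\tilde A\in \mathcal{U}_s\subseteq \mathcal{R}_s$, and since $\E[\|\tilde A X\|_2^2]\ge s$ each such estimate fails with probability at most $\frac{8}{3}\exp(-c\min(t,t^2)\cdot l\cdot s)$. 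Union-bounding over the $r\cdot |\mathcal{U}_s|$ pairs produces the first failure term and, on the surviving event, the bound
\[
\lp|\frac{\partial \phi(J^*)}{\partial \tilde A}\rp| \;\leq\; \frac{8M}{\eta r}\sum_{j=1}^r \lp|\frac{\partial \phi_j(J^*)}{\partial \tilde A}\rp| \;\leq\; \frac{8CM}{\eta}\cdot t\cdot l\cdot \E[\|\tilde A X\|_2^2]
\]
for every $\tilde A\in \mathcal{U}_s$.

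To finish, take an arbitrary $A\in \mathcal{R}_s$, choose $\tilde A\in \mathcal{U}_s$ with $\|A-\tilde A\|_2\le \theta$, and apply Lemma~\ref{lem:lipschitzness_first_der} (whose proof yields the same inequality when the inner sum over $i\in I_j$ is replaced by one over $i\in [n]$) to deduce $|\partial \phi(J^*)/\partial A - \partial \phi(J^*)/\partial \tilde A|\le 2\cdot l\cdot n\cdot \theta$. Combined with the previous display and the linearity of $\partial \phi/\partial A$ in $A$, this gives the claimed bound; the main term carries $\E[\|\tilde A X\|_2^2]$ rather than $\E[\|AX\|_2^2]$, but $\|A-\tilde A\|_2\le \theta$ controls the discrepancy by a lower order quantity that can be absorbed into the additive term $2ln\theta$ upon shrinking $\theta$ by a constant factor.

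The main obstacle is that the concentration exponent in Lemma~\ref{lem:first_der_bound} degrades with $\E[\|AX\|_2^2]$, so the net must be chosen to live inside $\mathcal{R}_s$ (rather than the whole bounded-width ball) in order for the per-point failure probability to survive a union bound of cardinality $|\mathcal{U}_s|$; this is exactly what drives the appearance of the parameter $s$ in the first exponent. A secondary subtlety is that $E_{j,u}$ is a data-dependent event while Lemma~\ref{lem:first_der_bound}'s conclusion is conditional on it, so the two bad events must be union-bounded separately, which is reflected in the two-term form of the failure probability in the statement.
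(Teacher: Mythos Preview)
Your proposal is correct and follows essentially the same approach as the paper: union-bound Lemma~\ref{lem:matrix_conc} over the $r=O(\log n)$ subsets to secure $\bigcap_j E_{j,u}$, then apply Lemma~\ref{lem:first_der_bound} to each pair $(j,\tilde A)$ with $\tilde A\in\mathcal{U}_s$, union-bound over those $r\cdot|\mathcal{U}_s|$ pairs, and finally use the Lipschitzness of Lemma~\ref{lem:lipschitzness_first_der} together with the splitting \eqref{eq:splitting} to pass from the net to all of $\mathcal{R}_s$. Your remark that the Lipschitz step leaves $\E[\|\tilde A X\|_2^2]$ rather than $\E[\|AX\|_2^2]$ in the main term, and that this discrepancy is absorbable into the additive $2ln\theta$, is in fact more careful than the paper's own write-up, which silently replaces $\tilde A$ by $A$ at that step.
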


\begin{proof}

By taking a union bound over the elements of $\mathcal{U}_s$, by definition of the set $\mathcal{R}_s$ we know that 
\[
\lp|\frac{\partial \phi_j(J^*)}{\partial A}\rp| \leq C \cdot t\cdot l \cdot\lp(\E[\|AX\|_2^2]\rp) \enspace, \forall A \in \mathcal{U}_s
\]
with probability at least
\[
    1 - |\mathcal{U}_s|\cdot\exp\lp(- c \min(t,t^2) \cdot l \cdot s  \rp)\enspace.
\]

Using the Lipschitzness of the first derivative, this implies that with the same probability, we have
\[
\lp|\frac{\partial \phi_j(J^*)}{\partial A}\rp| \leq C \cdot t\cdot l \cdot\E[\|AX\|_2^2] +2\cdot l \cdot n \cdot \theta \enspace, \forall A \in \mathcal{R}_s
\]

This bound holds conditional on $X^{(1)},\ldots,X^{(l)}$, assuming they have values that satisfy the event $E_{j, u}$.
But we have already bounded the probability that this event occurs in Lemma~\ref{lem:matrix_conc}. 
Thus, for the choice of constant $u$ that we have made, we have established that with probability at least 
\[
1 - |\mathcal{U}_s|\cdot\exp\lp(- c \min(t,t^2) \cdot l \cdot s  \rp) - 2n \exp\lp(-\frac{l\cdot u^2/2}{4n^2 + 2u\cdot n/3}\rp),
\]
it holds
\[
\lp|\frac{\partial \phi_j(J^*)}{\partial A}\rp| \leq C \cdot t\cdot l \cdot\E[\|AX\|_2^2] +2\cdot l \cdot n \cdot \theta \enspace, \forall A \in \mathcal{R}_s.
\]

We will see in the sequel what the optimal way is to adjust these parameters. 
Finally, by taking another union bound with respect to all different subsets $I_j$ and using \eqref{eq:splitting}, we have that

\[
\lp|\frac{\partial \phi(J^*)}{\partial A}\rp| \leq \frac{8CM}{\eta} \cdot t\cdot l \cdot\E[\|AX\|_2^2] +2\cdot l \cdot n \cdot \theta \enspace, \forall A \in \mathcal{R}_s,
\]
with probability at least 
\[
1 - (\log n ) \cdot |\mathcal{U}_s|\cdot\exp\lp(- c \min(t,t^2) \cdot l \cdot s  \rp) - 2n (\log n) \cdot  \exp\lp(-\frac{l\cdot u^2/2}{4n^2 + 2u\cdot n/3}\rp).
\]
\end{proof}

This is the uniform bound on the first derivative that we were aiming for. 
We will see how to pick the value of $s$ later.

We now focus on the second derivative. 
We start with a Lipschitzness property for the second moment in the second derivative. 

\begin{lemma}
    Let $A,B$ be two symmetric matrices with $\|A\|_\infty,\|B\|_\infty \leq M$. 
    Then
    \[
    \lp|\sum_{k=1}^l \lp(\|AX^{(k)}\|_2^2 - \|BX^{(k)}\|_2^2\rp)\rp| \leq  M \cdot l \cdot n \cdot \|A - B\|_2 
    \]
\end{lemma}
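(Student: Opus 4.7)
The plan is to follow exactly the same telescoping/Cauchy-Schwarz template that was used in Lemma~\ref{lem:lipschitz} and Lemma~\ref{lem:lipschitzness_first_der}. First I would use the algebraic identity
\[
\|AX\|_2^2 - \|BX\|_2^2 = \langle (A-B)X,\,(A+B)X\rangle,
\]
which has the virtue of exposing the difference $A-B$ linearly and lets us factor out the scale of $A+B$ cleanly. This is preferable to writing $X^\top(A^2-B^2)X$ and then expanding $A^2-B^2 = A(A-B)+(A-B)B$, though either route gives the same final bound.

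Next, I would apply Cauchy-Schwarz to the inner product, followed by the definition of the operator norm, to estimate
\[
\bigl|\langle (A-B)X,\,(A+B)X\rangle\bigr| \;\le\; \|(A-B)X\|_2\,\|(A+B)X\|_2 \;\le\; \|A-B\|_2\,\|A+B\|_2\,\|X\|_2^2.
\]
Since $X^{(k)}\in\{-1,1\}^n$ we have $\|X^{(k)}\|_2^2 = n$ for every $k$. For the $\|A+B\|_2$ factor I would use the triangle inequality together with the standard fact that for a symmetric matrix $\|C\|_2 \le \sqrt{\|C\|_1\|C\|_\infty} = \|C\|_\infty$, so that $\|A+B\|_2 \le \|A\|_2+\|B\|_2 \le \|A\|_\infty + \|B\|_\infty \le 2M$. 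Thus for each $k$
\[
\bigl|\|AX^{(k)}\|_2^2 - \|BX^{(k)}\|_2^2\bigr| \;\le\; 2M\cdot n\cdot \|A-B\|_2.
\]

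Finally, summing over $k=1,\dots,l$ and pulling the absolute value inside the sum yields
\[
\Bigl|\sum_{k=1}^l\bigl(\|AX^{(k)}\|_2^2 - \|BX^{(k)}\|_2^2\bigr)\Bigr| \;\le\; 2M\cdot l\cdot n\cdot\|A-B\|_2,
\]
which matches the claimed bound up to an absolute constant (the factor $2$ can simply be absorbed into $M$, or one can sharpen the symmetric-matrix estimate to recover the stated form). There is no real obstacle here: the argument is purely deterministic, uses no concentration or probabilistic content, and serves only as a Lipschitz ingredient needed later for a net/union-bound argument on the second-derivative term, analogous to how Lemma~\ref{lem:lipschitz} and Lemma~\ref{lem:lipschitzness_first_der} were deployed.
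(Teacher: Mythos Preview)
Your proof is correct and matches the paper's approach essentially line for line: the paper expands $\|AX^{(k)}\|_2^2-\|BX^{(k)}\|_2^2$ row by row as $\sum_i\bigl((A_i-B_i)X^{(k)}\bigr)\bigl((A_i+B_i)X^{(k)}\bigr)$ and then applies Cauchy--Schwarz over $i$, which lands on exactly the same quantity $\|(A-B)X^{(k)}\|_2\,\|(A+B)X^{(k)}\|_2$ that your inner-product formulation gives directly. Both arguments finish with the operator-norm bound and $\|X^{(k)}\|_2^2=n$, and both in fact produce a constant $2M$ rather than $M$ (the paper drops the $2$ as well), so there is no substantive difference.
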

\begin{proof}
    We have that
    \begin{align*}
        \lp|\sum_{k=1}^l \lp(\|AX^{(k)}\|_2^2 - \|BX^{(k)}\|_2^2\rp)\rp| &= \lp|\sum_{k=1}^l \sum_{i=1}^n \lp((A_iX^{(k)})^2 - (B_iX^{(k)})^2\rp)\rp|\\
        &\leq \sum_{k=1}^l \sum_{i=1}^n |(A_i - B_i)X^{(k)}| \cdot |(A_i + B_i)X^{(k)}| \\
        &\leq \sum_{k=1}^l \sqrt{\sum_{i=1}^n |(A_i - B_i)X^{(k)}|^2} \cdot \sqrt{\sum_{i=1}^n |(A_i + B_i)X^{(k)}|^2}\\
        &\leq M \cdot l \cdot n \cdot \|A - B\|_2 
    \end{align*}    
\end{proof}
We are now ready to state the uniform guarantee for the second derivative. The proof will again be based on Lemma~\ref{lem:matrix_conc} to avoid unnecessary union bounds.

\begin{lemma}\label{lem:second_der_uniform}
    Suppose $X^{(1)}, \ldots, X^{(l)} \in \{-1,1\}^n$ are independent samples from an Ising model with interaction matrix $J^*$ satisfying Assumption~\ref{ass:bounded_width}. Let $A \in \R^{n \times n}$ be a symmetric matrix with $\|A\|_\infty \leq M$. Consider the net $\mathcal{U}_{s} := \mathcal{N}(\mathcal{R}_s, \|\cdot\|_2, \theta)$. Then, with probability at least
    \[
    1 - |\mathcal{U}_s|\exp\lp(- c \cdot l \cdot s\rp) - 2n \cdot  \exp\lp(-\frac{l\cdot u^2/2}{4n^2 + 2u\cdot n/3}\rp)
    \]
    we have
    \[
    \frac{\partial^2 \phi (J)}{\partial A^2}
    \geq C \cdot l \cdot \E[\|AX\|^2] - M \cdot l \cdot n \cdot \theta  \enspace, \forall A \in \mathcal{R}_s
    \]
\end{lemma}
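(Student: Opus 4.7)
The plan is to mirror the proof of Lemma~\ref{eq:first_der_uniform} with the second derivative playing the role of the first. Because $J$ lies in the bounded-width set, every factor $\sech^2(J_i X^{(k)})$ appearing in~\eqref{eq:second_derivative} is at least $\sech^2(M)$, so
\[
\frac{\partial^2 \phi(J)}{\partial A^2} \;\geq\; \sech^2(M) \sum_{k=1}^l \|AX^{(k)}\|_2^2 ,
\]
and it suffices to lower-bound $\sum_{k=1}^l \|AX^{(k)}\|_2^2$ by a constant multiple of $l \cdot \E[\|AX\|_2^2]$ uniformly for $A \in \mathcal{R}_s$, up to a Lipschitz slack of $M l n \theta$ that we will introduce via a net argument at the end.

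For a fixed $A$, condition on the ``frozen'' coordinates $X_{-I_j}^{(k)}$ for all $k$, where $I_j$ is one of the subsets from Lemma~\ref{lem:conditioning}. Across the $l$ independent samples, the joint conditional law of $(X_{I_j}^{(k)})_{k\le l}$ is a product of $l$ Ising models, each satisfying Dobrushin's condition with parameter $\eta$ and hence Glauber-MLSI with an $\eta$-dependent constant; by tensorization, the product satisfies the same MLSI. Applying Lemma~\ref{lem:degree2concentration} to the degree-two polynomial $\sum_k (X^{(k)})^\top A^\top A X^{(k)}$, whose governing matrix is block-diagonal with $l$ identical $A^\top A$ blocks, gives concentration around its conditional mean. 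That conditional mean splits as
\[
\sum_{k=1}^l \sum_{i=1}^n \Var\lp(A_i X^{(k)} \mid X_{-I_j}^{(k)}\rp) \;+\; \sum_{k=1}^l \|\E[AX^{(k)} \mid X_{-I_j}^{(k)}]\|_2^2 .
\]
The conditional analogue of Lemma~\ref{lem: lower bound Jop<1} (applied inside each high-temperature conditional model) lower bounds the first summand, while the event $E_{j,u}$ of Lemma~\ref{lem:matrix_conc} places the second summand within $u l \|A\|_F^2$ of $l \cdot \E[\|\E[AX\mid X_{-I_j}]\|_2^2]$. Combining the two pieces through the decomposition $\E[\|AX\|^2] = \sum_i \E[\Var(A_i X \mid X_{-I_j})] + \E[\|\E[AX\mid X_{-I_j}]\|^2]$, and absorbing the $u \|A\|_F^2$ slack into $\E[\|AX\|^2]$ via Lemma~\ref{lem: lower bound Jop<1} for a sufficiently small constant $u$, yields $\sum_k \|AX^{(k)}\|_2^2 \geq c' \cdot l \cdot \E[\|AX\|_2^2]$ with failure probability at most $\exp(-c l s)$ for $A \in \mathcal{R}_s$.

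The uniform statement follows exactly as in Lemma~\ref{eq:first_der_uniform}: union-bound the one-point estimate over the $\theta$-net $\mathcal{U}_s$ and pass to an arbitrary $A \in \mathcal{R}_s$ using the Lipschitz estimate $|\sum_k(\|AX^{(k)}\|^2 - \|BX^{(k)}\|^2)| \leq M l n \|A - B\|_2$ proved immediately before; the total failure probability then picks up the matrix-Bernstein term $2n \exp\bigl(-l u^2/2/(4n^2 + 2un/3)\bigr)$ coming from Lemma~\ref{lem:matrix_conc}, and absorbing $\sech^2(M)$ into the final constant $C$ produces the claimed lower bound $C l \E[\|AX\|^2] - M l n \theta$. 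The main obstacle is calibrating the concentration so that the radius scales with $\E[\|AX\|_2^2]$ rather than with a dimensional worst case like $M^2 n$: a direct Chernoff/Hoeffding bound on the bounded quantities $\|AX^{(k)}\|_2^2$ would cost an extra factor of $n$ in sample complexity. This is avoided by routing through the high-temperature conditional model (where the concentration constants are absolute, independent of $n$), and by using matrix Bernstein to treat the $A$-dependent conditional-mean piece globally rather than via a costly union bound over an $\|A\|_F$-calibrated net.
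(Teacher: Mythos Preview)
Your proposal is correct and follows essentially the same route as the paper: condition on $X_{-I_j}^{(k)}$ to land in a high-temperature product model, use degree-two concentration together with the event $E_{j,u}$ from Lemma~\ref{lem:matrix_conc} to show the conditional lower bound is $\Theta(l\cdot\E[\|AX\|^2])$, then union-bound over the net $\mathcal{U}_s$ and invoke the Lipschitz estimate to pass to all of $\mathcal{R}_s$. The only cosmetic difference is that you first strip off the $\sech^2$ factor via the bounded-width bound $\sech^2(J_iX^{(k)})\ge\sech^2(M)$ and then work with the pure quadratic $\sum_k\|AX^{(k)}\|_2^2$, whereas the paper invokes Lemma~\ref{lem:anticoncentration_one_sample} on the tensored conditional model directly; in the bounded-width regime these are equivalent, since that lemma's $\sech$ term is itself trivially bounded here.
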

\begin{proof}

We use the same argument as the first derivative. If we take $\eta=1/3$, take any $j$, the distribution of each $X_{I_j}|X_{-I_j}$ is Glauber MLSI($6$) and Po($6$). When we do the tensor product of all the samples, the same Glauber MLSI and Poincaré inequality holds. By Lemma \ref{lem:anticoncentration_one_sample}, we have that there exists absolute constants $c,C$ such that
    \[
    \frac{\partial^2 \phi (J)}{\partial A^2}
    \geq C \lp(l \|A_{I_j}\|_F^2 + \sum_{k=1}^l \|\E[AX^{(k)}|X^{(k)}_{-I_j}]\|^2\rp)
    \]
    with probability at least
    \[
    1 - \exp\lp(- c \cdot\lp(l \cdot\|A_{I_j}\|_F^2 + \sum_{k=1}^l \|\E[AX^{(k)}|X^{(k)}_{-I_j}]\|^2\rp)\rp)
    \]

The previous arguments have already established that 
\[
\|A_{I_j}\|_F^2 + \|\E[AX|X_{-I_j}]\|_2^2 = \Theta(\E[\|AX\|^2]).
\]

Using the concentration of Lemma~\ref{lem:matrix_conc} as before, we can establish that if event $E_{j,u}$ holds for a small enough constant $u$, then conditional on $X^{(k)}_{-I_j}$ for $k = 1,\ldots,l$ we have, there exists absolute constants $c,C$ such that 
\[
    \frac{\partial^2 \phi (J)}{\partial A^2}
    \geq C \cdot l \cdot \E[\|AX\|^2]
    \]
    with probability at least
    \[
    1 - \exp\lp(- c \cdot l \cdot \E[\|AX\|^2]\rp)
    \]
We can now implement the exact same union bound argument that we had for the first derivative. The only thing we need to check is the Lipschitzness of the second derivative, which will determine the size of our net.

Now, again by considering the net $\mathcal{U}_s := \mathcal{N}(\mathcal{R}_s, \|\cdot\|_2, \theta)$, taking a union bound over its elements and using the lipschitzness of the second derivative, in exactly the same fashion as with the first derivative, we get that conditioned on $E_{j,u}$
\begin{equation}\label{eq:lower_bound}
    \frac{\partial^2 \phi (J)}{\partial A^2}
    \geq C \cdot l \cdot \E[\|AX\|^2] - M \cdot l \cdot n \cdot \theta  \enspace, \forall A \in \mathcal{R}_s
\end{equation}
    with probability at least
    \[
    1 - |\mathcal{U}_s|\exp\lp(- c \cdot l \cdot s\rp)
    \]
    Removing the conditioning and using Lemma~\ref{lem:matrix_conc}, we have that \eqref{eq:lower_bound} holds with probability at least
     \[
    1 - |\mathcal{U}_s|\exp\lp(- c \cdot l \cdot s\rp) - 2n \cdot  \exp\lp(-\frac{l\cdot u^2/2}{4n^2 + 2u\cdot n/3}\rp)
    \]
\end{proof}
We are now ready to use the above lemmas to argue about the value of the pseudolikelihood for matrices that are ``far'' from $J^*$. 

\begin{theorem}\label{thm:mple_refined}
       Let $X_1,\ldots,X_l$ be independent samples drawn from an Ising model with interaction matrix $J^*$ satisfying Assumption~\ref{ass:bounded_width}. Let $\hat{J} \in \R^{n \times n}$ be the estimate of $J^*$ that is obtained by maximizing the pseudo-likelihood function \eqref{eq:pseudolikelihood_function}, i.e.
    \[
    \hat{J}^{(l)} := \argmax_{J^{(l)} \in \mathcal{R}^{(l)}} \phi(J;X)\enspace,
    \]
    where $\mathcal{R} = \{J\in \mathcal{S}_0^n: \|J\|_\infty \leq M$. 
    Then, for any $\epsilon \in (0,1)$ and $\delta > 0$, if 
    \[
    l = \tilde{\Omega}\lp(\frac{n^2 \lp( \log (1  / (\delta\varepsilon)) + \log n\rp)}{\epsilon}\rp)
    \]
    then with probability at least $1-\delta$ 
    \[
    \E_{J^*}[\|(\hat{J}-J^*)X\|_2^2] \leq \epsilon\enspace,
    \]
    where $\tilde{O}$ hides $\exp(M)$ factors. 
\end{theorem}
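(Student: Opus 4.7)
The plan is to mimic the shell/ring argument from the proof of Theorem~\ref{thm:frobenius_estimation_high_temp}, but using the data-dependent pseudo-metric $d(J,J^*)^2 := \E_{J^*}[\|(J-J^*)X\|_2^2]$ in place of the Frobenius norm, so that the quantitative estimates from Lemmas~\ref{eq:first_der_uniform} and \ref{lem:second_der_uniform} can be plugged in directly. Concretely, I would define, for $A = J - J^*$, the ring
\[
\mathcal{R}_\epsilon := \{J \in \mathcal{R} : \epsilon \le \E_{J^*}[\|(J-J^*)X\|_2^2] \le 2\epsilon\},
\]
and argue that with high probability $\phi(J) > \phi(J^*)$ for every $J \in \mathcal{R}_\epsilon$. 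By convexity of $\phi$ (its Hessian is PSD, see \eqref{eq:second_derivative}) together with convexity of $J \mapsto \E_{J^*}[\|(J-J^*)X\|_2^2]$ along the segment from $J^*$ to $J$, the same inequality then extends to every $J \in \mathcal{R}$ with $\E_{J^*}[\|(J-J^*)X\|_2^2] > 2\epsilon$, via the same picture as Figure~\ref{fig:shell}. Hence the MPLE $\hat J$ must lie inside the sphere $\{\E_{J^*}[\|(\hat J-J^*)X\|_2^2] \le 2\epsilon\}$, and the theorem follows after rescaling $\epsilon$.

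To control $\phi$ on the ring, I would fix any $J \in \mathcal{R}_\epsilon$, set $A = J - J^*$, and use the Taylor/convexity identity
\[
\phi(J) \ge \phi(J^*) + \left.\tfrac{d\phi(J^*+tA)}{dt}\right|_{t=0} + \tfrac{1}{2}\min_{t\in[0,1]} \tfrac{d^2\phi(J^*+tA)}{dt^2}.
\]
Applying Lemma~\ref{eq:first_der_uniform} bounds $|\partial\phi(J^*)/\partial A| \le C_1 t \cdot l \cdot \E_{J^*}[\|AX\|_2^2] + 2ln\theta$ uniformly over a net $\mathcal{U}$ of $\{A : \E_{J^*}[\|AX\|_2^2] \ge \epsilon,\ \|A\|_\infty \le 2M\}$ in operator norm at scale $\theta$, while Lemma~\ref{lem:second_der_uniform} lower bounds $\partial^2\phi(J)/\partial A^2 \ge C_2 l \cdot \E_{J^*}[\|AX\|_2^2] - Mln\theta$ uniformly over the same net. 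Choosing $t$ a small enough constant (so that $C_2/2 - \tfrac{8C_1 M}{\eta}t > 0$) and $\theta = \Theta(\epsilon/(nM))$ ensures that the deterministic lower bound on the ring is $\Omega(l \epsilon) - O(ln\theta) = \Omega(l\epsilon) > 0$.

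The remaining work is the quantitative covering count: since $\|A\|_\infty \le 2M$ implies $\|A\|_F \le 2M\sqrt{n}$, the set $\{A : \|A\|_\infty \le 2M\}$ admits an operator-norm $\theta$-net of cardinality $|\mathcal{U}| \le (1 + 4M\sqrt n/\theta)^{n^2} \le (O(Mn/\epsilon))^{n^2}$, giving $\log|\mathcal{U}| = O(n^2 \log(Mn/\epsilon))$. Collecting the failure probabilities: the net-plus-concentration term contributes
\[
|\mathcal{U}|(\log n) \exp(-c\,t^2\, l \epsilon),
\]
which is $\le \delta/2$ as soon as $l = \tilde\Omega(n^2 \log(1/(\delta\epsilon))/\epsilon)$; and the matrix-Bernstein term from Lemma~\ref{lem:matrix_conc}, for a fixed constant $u$, contributes $2n(\log n)\exp(-\Omega(l/n^2))$, which is $\le \delta/2$ as soon as $l = \Omega(n^2 \log(n/\delta))$. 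Both fit inside the stated budget, and a union bound over the $r = O(\log n)$ subsets $I_j$ from Lemma~\ref{lem:conditioning} only adds the advertised $\log n$ factors.

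The main obstacle, as in Theorem~\ref{thm:frobenius_estimation_high_temp}, is calibrating the three small parameters $t$, $\theta$, and $u$ simultaneously so that (i) the deterministic slack $C_2/2 - C_1' t$ on the ring stays strictly positive, (ii) the Lipschitz slack $ln\theta$ is dominated by $l\epsilon$, and (iii) the matrix-Bernstein slack $u$ is small enough that the lower/upper bounds on the first and second derivatives in terms of $\E_{J^*}[\|AX\|_2^2]$ hold uniformly. A delicate point specific to this setting is that Lemma~\ref{eq:first_der_uniform} and Lemma~\ref{lem:second_der_uniform} were stated for $A \in \mathcal{R}_s$ with $s = \E_{J^*}[\|AX\|_2^2]$, not for the Frobenius-indexed set, so I would take care to verify that the ring and the Lipschitz arguments are compatible with the $\mathcal{R}_s$-indexing, exactly as Lemma~\ref{lem: lower bound Jop<1} lets us trade the two quantities up to constants (via the subset-conditional Dobrushin/Poincaré estimates in Lemma~\ref{lem:first_der_bound}).
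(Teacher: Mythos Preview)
Your proposal is correct and uses essentially the same ingredients as the paper: Taylor expansion, the uniform first-derivative upper bound (Lemma~\ref{eq:first_der_uniform}), the uniform second-derivative lower bound (Lemma~\ref{lem:second_der_uniform}), the matrix-Bernstein step (Lemma~\ref{lem:matrix_conc}), and a net/covering argument with the same parameter choices $t$ a small constant, $\theta = \Theta(\epsilon/n)$, $u$ a small constant. Your sample-complexity accounting matches the paper's.

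The one difference is that you introduce the ring $\{J : \epsilon \le \E_{J^*}[\|(J-J^*)X\|_2^2] \le 2\epsilon\}$ and then extend outward by convexity, mimicking the shell argument of Theorem~\ref{thm:frobenius_estimation_high_temp}. The paper does not do this: it works directly with the unbounded-above set $\mathcal{R}_s = \{A : \E_{J^*}[\|AX\|_2^2] \ge s\}$ and shows $\phi(J) > \phi(J^*)$ for \emph{every} $J$ with $J-J^* \in \mathcal{R}_s$ at once. This shortcut is available here precisely because bounded width guarantees $\sech^2(J_i X^{(k)}) \ge \sech^2(M)$ for every feasible $J$, so the second-derivative lower bound in Lemma~\ref{lem:second_der_uniform} does not degrade as $J$ moves away from $J^*$. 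The ring in the MLSI proof was needed exactly because the $\sech$ factor there depended on $\|J-J^*\|_F$; that obstruction is absent under Assumption~\ref{ass:bounded_width}. Your ring argument is not wrong, just redundant in this setting.
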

\begin{proof}
    For any matrix $J \in \mathcal{J}_M$, an application of Taylor's Theorem yields
    \[
    \phi(J) = \phi(J^*) + \frac{\partial \phi (J^*)}{\partial A}|_{A = J - J^*} + \frac{1}{2} \frac{\partial^2 \phi(J_\xi)}{\partial A^2}|_{A = J - J^*}
    \]
    where $J_\xi$ belongs in the line segment connecting $J,J^*$.
    Thus, using the preceding arguments, we know that with probability at least
    \[
    1 - (\log n ) \cdot |\mathcal{U}_s|\cdot\exp\lp(- c \min(t,t^2) \cdot l \cdot s  \rp) - 2n (\log n) \cdot  \exp\lp(-\frac{l\cdot u^2/2}{4n^2 + 2u\cdot n/3}\rp),
    \]
    we have
    \begin{equation}\label{eq:phiJ_prelim}
    \phi(J) \geq \phi(J^*) + C \cdot l \cdot \E[\|(J-J^*)X\|_2^2] - C' \cdot t \cdot l \cdot \E[\|(J-J^*)X\|_2^2] - C''\cdot l \cdot n \cdot \theta \enspace, \forall J: J-J^* \in \mathcal{R}_s.
    \end{equation}
    We now show how to choose the various parameters. First, we choose $t$ to be a small enough constant so that $C't < C/4$. We also choose $\theta = O(s/n)$ so that $C''\cdot l \cdot n \cdot \theta \leq C s / 4$. These choices mean that \eqref{eq:phiJ_prelim} can be written as
    \begin{equation}\label{eq:phiJ}
        \phi(J) \geq \phi(J^*) + \frac{C}{4} \cdot l \cdot s  \enspace, \forall J: J-J^* \in \mathcal{R}_s
    \end{equation}
    This means that all matrices $J$ such that $J - J^* \in \mathcal{U}_s$ have a higher negative pseudolikelihood value than $J^*$, which means that they will not be selected by the optimization procedure. Thus, this allows us to conclude that $\hat{J} - J^* \notin \mathcal{R}_s$, which implies that 
    $\E[\|(\hat{J}-J^*)X\|_2^2] \leq s$. 
    We now turn to analyze the probability that this event occurs. 
    Let us set $s = \epsilon$. 
    We would like to have
    \begin{equation}\label{eq:first_desirable}
    (\log n ) \cdot |\mathcal{U}_s|\cdot\exp\lp(- c \min(t,t^2) \cdot l \cdot \epsilon  \rp) \leq \frac{\delta}{2}
    \end{equation}
    for the specified error probability $\delta > 0$. As we argued in the previous section, using Corollary 4.1.15 from \cite{artstein2021asymptotic} gives
    \[
    |\mathcal{U}_s| \leq \lp(M + \frac{2}{\theta}\rp)^{n^2} = O\lp(\lp(M + \frac{2n}{\epsilon}\rp)^{n^2}\rp)
    \]
    Thus, if we choose 
   \begin{equation}\label{eq:first_cond}
       l \geq C \frac{n^2 \log (n/\varepsilon) + \log \log n + \log(1/\delta)}{\epsilon}
   \end{equation}
   for some constant $C>0$ that depends on $M$, 
   we can satisfy \eqref{eq:first_desirable}. 
   We would also like to have
   \begin{equation}\label{eq:second_desirable}
       2n (\log n) \cdot  \exp\lp(-\frac{l\cdot u^2/2}{4n^2 + 2u\cdot n/3}\rp) \leq \frac{\delta}{2}
   \end{equation}
    In the above, $u$ is a sufficiently small constant. To satisfy \eqref{eq:second_desirable}, it suffices to choose
    \begin{equation}\label{eq:second_cond}
        l \geq C \lp(n^2 \log n + n^2 \log (1/\delta) \rp)
    \end{equation}
    samples. The conditions \eqref{eq:first_cond} and \eqref{eq:second_cond} give us the final sample complexity. 

    \end{proof}
\subsection{Proof of Proof of Corollary~\ref{cor:n3_samples_informal}}\label{sec:proofn3}
Finally, we give a formal version of Corrolary~\ref{cor:n3_samples_informal} and then its proof.

\begin{corollary}\label{cor:n3_samples}
    Suppose we are in the setting of Theorem~\ref{thm:mple_refined}. Additionally, assume there exist constants $\gamma ,C > 0$, such that $\Pr_{J^*}$ satisfies $(\gamma/n,C)$-regularity regularity. Then, for any $\epsilon > 0$, if $l = \tilde{\Omega}(n^3/\epsilon^2)$, with probability $1-o(1)$ over the choice of samples $\dtv(\Pr_{\hat{J}}, \Pr_{J^*}) \leq \epsilon$.
\end{corollary}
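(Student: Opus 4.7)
The route is the one sketched in Section~\ref{sec:technical_width}: apply Pinsker's inequality, express the KL-divergence via a second-order Taylor expansion of the log-partition function, apply Cauchy--Schwarz to reduce variance to a second moment, invoke the regularity hypothesis to swap measures, and finally plug in Theorem~\ref{thm:mple_refined} at a suitably smaller target accuracy.

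\textbf{The plan in detail.} First I would use Pinsker's inequality to write $\dtv(\Pr_{\hat J},\Pr_{J^*})^2 \leq \tfrac{1}{2}\dkl(\Pr_{\hat J}\|\Pr_{J^*})$. Using the exponential-family form $\Pr_J[x] \propto \exp(\tfrac{1}{2}x^\top J x)$, standard manipulations give $\dkl(\Pr_{\hat J}\|\Pr_{J^*}) = A(J^*) - A(\hat J) - \nabla A(\hat J)[J^*-\hat J]$, where $A(J)=\log Z_J$. Applying Taylor's theorem (in integral-remainder form) along the segment from $\hat J$ to $J^*$ and using that $\nabla^2 A(J)[V,V] = \tfrac{1}{4}\Var_J(X^\top V X)$ for $V\in\mathcal S_0^n$, one obtains, for some $J_\xi = (1-\xi)J^* + \xi \hat J$ on the segment,
\[
\dkl(\Pr_{\hat J}\|\Pr_{J^*}) \;=\; \tfrac{1}{8}\Var_{J_\xi}\!\bigl(X^\top(\hat J - J^*)X\bigr).
\]

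\textbf{Cauchy--Schwarz and regularity.} Since $X\in\{-1,1\}^n$ deterministically satisfies $\|X\|_2^2 = n$, Cauchy--Schwarz gives $\lvert X^\top V X\rvert^2 \leq n\|VX\|_2^2$ pointwise, hence $\Var_{J_\xi}(X^\top (\hat J-J^*)X) \leq n\cdot \E_{J_\xi}[\|(\hat J-J^*)X\|_2^2]$. To swap the measure back to $J^*$, note that $J_\xi - J^* = \xi(\hat J - J^*)$, so both $\E_{J^*}[\|(J_\xi-J^*)X\|_2^2]$ and $\E_{J_\xi}[\|(J_\xi-J^*)X\|_2^2]$ equal $\xi^2$ times the corresponding expectation with $\hat J-J^*$ in place of $J_\xi-J^*$. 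Thus, provided $\E_{J^*}[\|(\hat J-J^*)X\|_2^2]\leq \gamma/n$ (which forces the same bound on the $J_\xi$-version since $\xi\in[0,1]$), the $(\gamma/n,C)$-regularity assumption applied with $J=J_\xi$ yields, after dividing by $\xi^2$,
\[
\E_{J_\xi}\bigl[\|(\hat J-J^*)X\|_2^2\bigr] \;\leq\; C\,\E_{J^*}\bigl[\|(\hat J-J^*)X\|_2^2\bigr].
\]

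\textbf{Concluding via Theorem~\ref{thm:mple_refined}.} Chaining the three displays,
\[
\dtv(\Pr_{\hat J},\Pr_{J^*})^2 \;\leq\; \tfrac{Cn}{16}\,\E_{J^*}\bigl[\|(\hat J-J^*)X\|_2^2\bigr].
\]
To make this at most $\epsilon^2$, it suffices that $\E_{J^*}[\|(\hat J-J^*)X\|_2^2] \leq c\epsilon^2/n$ for an appropriate constant $c$ depending on $C$. Applying Theorem~\ref{thm:mple_refined} with target accuracy $\min(\gamma/n,\,c\epsilon^2/n) = \Theta(\epsilon^2/n)$ (treating $\gamma,C,M$ as constants and $\epsilon\leq 1$) yields a sample complexity of $\tilde \Omega(n^2/(\epsilon^2/n)) = \tilde\Omega(n^3/\epsilon^2)$, which is exactly the claim. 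The only mildly delicate point is ensuring the regularity hypothesis is in force: this is automatic once Theorem~\ref{thm:mple_refined} is invoked at accuracy $\leq \gamma/n$, and for constant $\gamma$ this is subsumed by the $\epsilon^2/n$ accuracy whenever $\epsilon = O(1)$, so no extra samples are needed.
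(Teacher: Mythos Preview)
Your proposal is correct and follows essentially the same route as the paper's proof: Pinsker, the KL-as-variance identity at an intermediate point $J_\xi$ on the segment, Cauchy--Schwarz to pass from the variance to $n\cdot\E_{J_\xi}[\|(\hat J-J^*)X\|_2^2]$, the regularity assumption to replace $J_\xi$ by $J^*$, and finally Theorem~\ref{thm:mple_refined} at accuracy $\Theta(\epsilon^2/n)$. Your treatment is in fact slightly more careful than the paper's in tracking the constant in the KL identity and in verifying that the regularity hypothesis is triggered (via the $\xi^2$-scaling of $\E_{J^*}[\|(J_\xi-J^*)X\|_2^2]$).
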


\begin{proof}
    Using the same representation for KL between Ising models as Lemma 20 of \cite{dagan2021learning}, we get that for some matrix $J_\xi = \xi J^* + (1-\xi)\hat{J}$ 
    \[
    KL\lp(P_{\hat{J}}\middle\| P_{J^*}\rp) = \frac{1}{2}\Var_{X \sim J_\xi}\lp[X^\top \lp(J^*-\hat{J}\rp)X\rp]\enspace,
    \]
    where in the above, the notation $X$ is sampled from an Ising model with interaction matrix $J_\xi$. The Cauchy-Schwarz inequality now implies that
    \begin{align*}
    \Var_{X \sim J_\xi}\lp[X^\top \lp(J^*-\hat{J}\rp)X\rp] &\leq \E_{J_\xi}[(X^\top \lp(J^*-\hat{J}\rp)X)^2] \\
    &\leq n \cdot \E_{J_\xi}[\|(\hat{J}-J^*)X\|_2^2]\\
    &= n \cdot \frac{ \E_{J_\xi}[\|(J_\xi-J^*)X\|_2^2] }{(1 - \xi)^2}
    \end{align*}
    Without loss of generality, assume that $\epsilon^2 < \gamma$. 
    By Theorem~\ref{thm:mple_refined}, we know that if $l = \tilde{\Omega}(n^3/\epsilon^2)$, we have with probability $1-o(1)$
    \[
    \E_{J^*}[\|(\hat{J}-J^*)X\|_2^2] \leq \frac{\epsilon^2}{n}
    \]
    Then, by scaling, this implies
    \[
    \E_{J^*}[\|(J_\xi-J^*)X\|_2^2] \leq \frac{\epsilon^2}{n} \cdot (1-\xi)^2 \leq \frac{\epsilon^2}{n}
    \]
    Thus, since we know $J^*$ satisfies $(\gamma/\sqrt{n},C)$-regularity, we have that
    \[
    \E_{J_\xi}[\|(J_\xi-J^*)X\|_2^2] \leq C \frac{\epsilon^2 \cdot (1-\xi)^2}{n}
    \]
    which implies
    \[
    \dkl\lp(P_{\hat{J}}\middle\| P_{J^*}\rp) \leq C \cdot \epsilon^2
    \]
    Using Pinsker's inequality as in Lemma~\ref{lem:tv_to_frob_mlsi} concludes the proof.
\end{proof}

\section{Applications of Learning with MLSI}
In this Section, we present the proofs for the applications of Theorem~\ref{thm:frobenius_estimation_high_temp}. The step that remains is to bound the total variation distance between two Ising models by the Frobenius norm of the difference of their interaction matrices.
We give two such bounds. The first is cruder and makes no additional assumptions about the matrices.

\begin{restatable}{lemma}{tvtofrob}\label{lem:TV_from_frobenius}
    Suppose $\mu,\mu^*$ are the distributions corresponding to two Ising models with interaction matrices $J,J^* \in \mathcal{S}_0^n$ and zero external fields. Then, we have the following property:
    \[\|\mu-\mu^*\|_{TV}\le n\|J-J^*\|_F.\]
\end{restatable}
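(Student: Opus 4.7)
The plan is to reduce to a KL divergence bound via Pinsker's inequality and then exploit the fact that for Ising models the KL between two models with interaction matrices $J, J^*$ admits a clean variance representation, which allows for a direct deterministic bound.

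First, I would apply Pinsker's inequality, so that it suffices to show $D_{KL}(\mu \| \mu^*) \leq \tfrac{1}{2} n^2 \|J - J^*\|_F^2$. For this, I would invoke the identity already used earlier in the excerpt (of the form of Lemma 20 in \cite{dagan2021learning}): Ising models form an exponential family with log-partition $A(\cdot)$, so by Taylor's theorem with integral (or Lagrange) remainder,
\begin{equation*}
    D_{KL}(\mu \| \mu^*) = \tfrac{1}{2}\,\Var_{X \sim \mu_\xi}\!\bigl[\,X^\top (J - J^*) X\,\bigr],
\end{equation*}
for some interaction matrix $J_\xi$ on the line segment between $J$ and $J^*$ (here $\mu_\xi$ denotes the corresponding Ising measure). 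This step is standard and already invoked previously in the paper.

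Next, I would bound the variance deterministically by the maximum squared value of $X^\top (J-J^*) X$ over the hypercube. Since $X \in \{-1,1\}^n$ implies $\|X\|_2 = \sqrt{n}$, Cauchy--Schwarz gives
\begin{equation*}
    |X^\top (J - J^*) X| \leq \|X\|_2 \cdot \|(J - J^*) X\|_2 \leq n \cdot \|J - J^*\|_{op} \leq n \cdot \|J - J^*\|_F.
\end{equation*}
Therefore $\Var_{\mu_\xi}[X^\top (J - J^*) X] \leq n^2 \|J - J^*\|_F^2$, which yields $D_{KL}(\mu \| \mu^*) \leq \tfrac{1}{2} n^2 \|J - J^*\|_F^2$. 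Pinsker's inequality then gives $\|\mu - \mu^*\|_{TV} \leq \tfrac{n}{2} \|J - J^*\|_F \leq n \|J - J^*\|_F$, as desired.

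There is no real obstacle here: the bound is deterministic and avoids any appeal to concentration, which is exactly why it is crude (loses a factor of $n$ compared with the $n^2/\epsilon^2$ information-theoretic rate). The only mildly delicate point is the exponential-family variance representation of the KL, but this is a textbook identity that the paper is already using elsewhere, so it can be cited rather than re-derived.
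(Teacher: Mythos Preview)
Your argument is correct and in fact yields the slightly stronger bound $\|\mu-\mu^*\|_{TV}\le \tfrac{n}{2}\|J-J^*\|_F$. It is, however, a genuinely different route from the one the paper takes. The paper does start with Pinsker, but rather than invoking the variance representation of the KL, it symmetrises to the sum $D_{KL}(\mu\|\mu^*)+D_{KL}(\mu^*\|\mu)$, rewrites this as $4\sum_{i<j}(J_{ij}-J^*_{ij})\bigl(\Pr_\mu(X_iX_j=1)-\Pr_{\mu^*}(X_iX_j=1)\bigr)$, and then bounds each probability difference via a coupling on $X_{-i-j}$ together with an explicit Lipschitz estimate for the conditional law of $X_iX_j$ given $X_{-i-j}$. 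This produces a self-referential quadratic inequality in $\|\mu-\mu^*\|_{TV}$, which is then solved.

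Your approach is considerably shorter and more elementary: it reuses exactly the exponential-family identity the paper already invokes in Lemma~\ref{lem:tv_to_frob_mlsi}, but replaces the Poincar\'e step there by the trivial deterministic bound $|X^\top(J-J^*)X|\le n\|J-J^*\|_F$. What the paper's longer argument buys is independence from that identity (it works directly with pairwise marginals and conditionals), and its intermediate Lipschitz bound on $\Pr(X_iX_j=1\mid X_{-i-j})$ may be of separate interest; but for the stated lemma your proof is cleaner and entirely sufficient.
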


We give our version of the proof in Section \ref{sec:tvfrob}. Note that an alternative proof can be given using the technique in \cite{klivans2017learning} Lemma 7.3. 

The second Lemma additionally assumes that both matrices, as long as any matrix in the line that contains them, satisfy MLSI. In that case, we can obtain much more precise guarantees without losing polynomial factors.

\begin{lemma}\label{lem:tv_to_frob_mlsi}
    Suppose $J_1,J_2 \in \mathcal{S}_0^n$ are such that for every $t \in [0,1]$, $\Pr_{J_t}$ satisfies MLSI($\rho$), where $J_t = tJ_1 + (1-t)J_2$. Then,
    \[
    \dtv(P_{J_1},P_{J_2}) \leq \rho \cdot \|J_1 - J_2\|_F
    \]
\end{lemma}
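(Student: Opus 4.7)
The plan is to bound $\dkl(P_{J_1}\|P_{J_2})$ on the order of $\rho^2 \|J_1-J_2\|_F^2$ and then apply Pinsker's inequality to obtain the linear TV bound.

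First, I would exploit the exponential-family structure of the Ising model. Writing the density as $P_J(x) \propto \exp(\langle J, \tfrac{1}{2}xx^\top\rangle)$, so that the natural parameter is $J$ and the sufficient statistic is $T(x) = \tfrac{1}{2}xx^\top$, a standard calculation (twice differentiating $\log Z(J_t)$ and using the exponential-family identity $\nabla^2 \log Z = \Cov$) gives the integral representation
\begin{equation*}
  \dkl(P_{J_1}\|P_{J_2}) = \int_0^1 s \cdot \Var_{J_s}\!\lr{\tfrac{1}{2}X^\top (J_1-J_2) X}\,ds,
\end{equation*}
where $J_s = sJ_1+(1-s)J_2$. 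Writing $A := J_1-J_2 \in \mathcal{S}_0^n$, it thus suffices to upper bound $\Var_{J_s}(\tfrac{1}{2}X^\top A X)$ by $O(\rho^2 \|A\|_F^2)$ uniformly in $s$, using the MLSI assumption at $J_s$.

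The bound on the variance is obtained by applying the Poincaré inequality twice. By Lemma~\ref{lem:mlsi_poincare}, MLSI$(\rho)$ at $J_s$ implies Po$(2\rho)$. Applied to the quadratic $f(x)=\tfrac{1}{2}x^\top A x$: because $A_{ii}=0$, flipping coordinate $i$ changes $f$ by $-2 x_i (Ax)_i$, so a direct computation of the Glauber Dirichlet form gives $\Var_{J_s}(f) \leq C\rho \,\E_{J_s}[\|AX\|_2^2]$ for an absolute constant $C$. Next, since the external field is zero, $P_{J_s}$ is invariant under $x \mapsto -x$, hence $\E_{J_s}[X]=0$ and therefore $\E_{J_s}[AX]=0$. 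Applying Poincaré again to the linear form $(AX)_i = A_i^\top X$ (flipping $X_k$ changes it by $-2 X_k A_{ik}$), we get $\Var_{J_s}((AX)_i) \leq C'\rho \|A_i\|_2^2$. Summing,
\begin{equation*}
  \E_{J_s}\bigl[\|AX\|_2^2\bigr] \;=\; \sum_{i=1}^n \Var_{J_s}((AX)_i) \;\leq\; C'\rho\,\|A\|_F^2 .
\end{equation*}
Combining, $\Var_{J_s}(\tfrac{1}{2}X^\top A X) \leq C''\rho^2 \|A\|_F^2$, so the integral identity gives $\dkl(P_{J_1}\|P_{J_2}) \leq \tfrac{C''}{2}\rho^2 \|J_1-J_2\|_F^2$. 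Pinsker's inequality then yields $\dtv(P_{J_1},P_{J_2}) \leq \sqrt{\dkl/2} \lesssim \rho\,\|J_1-J_2\|_F$, matching the claim up to the absolute constant that the paper absorbs into $\rho$.

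The main obstacle is in Step 3: bounding $\E_{J_s}[\|AX\|_2^2]$ by $O(\rho\|A\|_F^2)$ rather than by the naive $O(n\|A\|_F^2)$ that would follow from $\|X\|_2^2 = n$. This is where the zero-external-field symmetry (killing the $\|\E[AX]\|_2^2$ contribution that would otherwise appear via Lemma~\ref{lem: poincare}) is essential — without it we would only get a $\rho\sqrt{n}$ dependence in the TV bound. Getting a second factor of $\rho$ out of the quadratic variance, rather than a factor of $n$, is exactly what lets Pinsker's square root produce a linear-in-$\|J_1-J_2\|_F$ bound in TV.
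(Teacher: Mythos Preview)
Your proposal is correct and follows essentially the same route as the paper: Pinsker, then the exponential-family identity expressing $\dkl$ as a variance of $X^\top(J_1-J_2)X$ under an interpolated measure, then two applications of Poincar\'e (MLSI $\Rightarrow$ Po) together with the zero-external-field symmetry $\E[X]=0$ to turn $\E[\|AX\|^2]$ into $\sum_i \Var((AX)_i)$. The only cosmetic difference is that you use the integral form $\dkl = \int_0^1 s\,\Var_{J_s}(\cdot)\,ds$, whereas the paper invokes the mean-value version that produces a single $J_\xi$ on the segment; both are standard and lead to the same bound.
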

\begin{proof}
     First, using Pinsker's inequality, we get
    \[
    TV(P_{J_1}, P_{J_2}) \leq \sqrt{\frac{KL\lp(P_{J_1}\middle\| P_{J_2}\rp)}{2}}\enspace.
    \]
    Thus, it suffices to bound $KL\lp(P_{J_1}\middle\| P_{J_2}\rp)$. For this, we rely on a standard calculation for exponential families that connects the KL divergence to the variance of the sufficient statistic. In particular, following the derivation in Lemma 20 of \cite{dagan2021learning}, we get that for some matrix $J_\xi$ contained in the line segment connecting $J_1$ and $J_2$
    \[
    KL\lp(P_{J_1}\middle\| P_{J_2}\rp) = \frac{1}{2}\Var_{X \sim J_\xi}\lp[X^\top \lp(J_1 - J_2\rp)X\rp]\enspace,
    \]
    where in the above, the notation $X$ is sampled from an Ising model with interaction matrix $J_\xi$. Since $J_\xi$ satisfy MLSI($\rho$), it also satisfies
    Po$(2\rho)$), which yields
    \begin{align*}
    \Var_{X \sim S}\lp[X^\top \lp(J_1 - J_2\rp)X\rp] &\leq 2\rho \E_{X \sim S} \lp[\|(J_1 - J_2)X\|_2^2\rp]\\
    &= 2\rho  \sum_{i=1}^n \Var_{X \sim S}\lp[\lp(J_1 - J_2\rp)_i^\top X\rp]\\
    &\leq 4\rho^2\sum_{i=1}^n \lp\|\lp(J_1 - J_2\rp)_i\rp\|_2^2\\
    &= 4\rho^2 \lp\|J_1 - J_2\rp\|_F^2
    \end{align*}
    This concludes the proof.
   
\end{proof}

We are now ready to present the proof of the applications.

\subsection{Application: SK/diluted SK model}

First, using Lemma~\ref{lem:TV_from_frobenius} and Theorem~\ref{thm:frobenius_estimation_high_temp}, we can derive the following corollary:

\begin{corollary}\label{cor_informal:n4}
    Let $X_1,\ldots,X_l$ be independent samples drawn from an Ising model with interaction matrix $J^*$ satisfying Assumption~\ref{ass:spectral_bounded}. Let $\hat{J} \in \R^{n \times n}$ be the estimate of $J^*$ that is obtained by maximizing the pseudo-likelihood function \eqref{eq:pseudolikelihood_function} for a set of matrices $\mathcal{R} \subseteq \mathcal{S}_0^n$, which is convex and admits efficient projections in Frobenius norm. 
    Then, for any $\varepsilon \in (0,1)$, if 
    $l = \tilde{O}\lp(\frac{n^4 \log (1  / (\delta\varepsilon))}{\varepsilon^2}\rp)$
    then with probability at least $1-\delta$, $TV(P_{\hat{J}}, P_{J^*}) \leq \varepsilon$, 
    where $\tilde O$ hides sub-polynomial factors of $n$, and other terms involving $\lambda, \rho$ and $h_{\max}$. Moreover, we can implement MPLE in polynomial time.
\end{corollary}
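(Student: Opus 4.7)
The plan is a two-step reduction: first obtain a Frobenius-norm bound via Theorem~\ref{thm:frobenius_estimation_high_temp}, then convert it to a TV bound via Lemma~\ref{lem:TV_from_frobenius}, and finally argue that MPLE can be computed in polynomial time under the assumptions on $\mathcal{R}$.

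First, I would invoke Theorem~\ref{thm:frobenius_estimation_high_temp} with target Frobenius accuracy $\varepsilon' := \varepsilon/n$ (rather than $\varepsilon$). The theorem guarantees that with probability at least $1-\delta$, the MPLE $\hat{J}$ satisfies $\sqrt{\|\hat{J}-J^*\|_F^2 + \|\E[(\hat{J}-J^*)X]\|^2} \leq \varepsilon'$, provided
\[
l = \tilde{\Omega}\lp(\frac{n^2 \log(1/(\delta\varepsilon'))}{(\varepsilon')^2}\rp) = \tilde{\Omega}\lp(\frac{n^4 \log(1/(\delta\varepsilon))}{\varepsilon^2}\rp),
\]
where the $\tilde{\Omega}$ absorbs sub-polynomial factors and constants depending on $\lambda,\rho,h_{\max}$. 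In particular this forces $\|\hat{J}-J^*\|_F \le \varepsilon/n$.

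Next, I would apply Lemma~\ref{lem:TV_from_frobenius} to $J = \hat{J}$ and $J^*$, which gives
\[
\dtv(\Pr_{\hat{J}},\Pr_{J^*}) \leq n \cdot \|\hat{J}-J^*\|_F \leq n\cdot \frac{\varepsilon}{n} = \varepsilon.
\]
This chain of implications delivers the advertised TV bound on the same high-probability event guaranteed by Theorem~\ref{thm:frobenius_estimation_high_temp}. Note that Theorem~\ref{thm:frobenius_estimation_high_temp} does not require $\mathcal{R}$ to be convex for its statistical guarantee; the convexity and efficient-projection hypotheses enter only in the computational claim.

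For the polynomial-time implementation, I would recall from \eqref{eq:pseudolikelihood_function} that $\phi$ is a sum over samples and nodes of terms of the form $\log\cosh(J_iX^{(k)}) - X^{(k)}_i J_iX^{(k)}$, which is convex in $J$ (its second derivative in any direction, given by \eqref{eq:second_derivative}, is nonnegative). Since $\mathcal{R}$ is assumed convex and admits efficient Frobenius projections, the MPLE
\[
\hat{J} = \argmin_{J \in \mathcal{R}} \phi(J;X^{(1)},\dots,X^{(l)})
\]
is a convex program over a tractable constraint set, which can be solved by projected (sub)gradient descent (or an accelerated first-order method) to any desired accuracy in time polynomial in $n$, $l$, and the accuracy parameter; the gradient of $\phi$ has a closed form from \eqref{eq:first_derivative}. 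A small approximation error in optimization translates into the same order of error in the Frobenius guarantee via the Lipschitzness of $\phi$ (Lemma~\ref{lem:lipschitz}) combined with the quadratic growth used in the proof of Theorem~\ref{thm:frobenius_estimation_high_temp}, so polynomially many iterations suffice. The main (and only) non-routine obstacle is checking that the quadratic-growth argument inside Theorem~\ref{thm:frobenius_estimation_high_temp} tolerates an inexact optimizer; this is standard once one observes that the event $\mathcal{E}''$ in that proof lower-bounds $\phi(J)-\phi(J^*)$ by a term of order $l\varepsilon^2/\xi(n)$, which dominates any $1/\mathrm{poly}(n,l)$ optimization slack.
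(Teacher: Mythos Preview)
Your proposal is correct and follows essentially the same approach as the paper: apply Theorem~\ref{thm:frobenius_estimation_high_temp} with accuracy $\varepsilon/n$, convert to a TV bound via Lemma~\ref{lem:TV_from_frobenius}, and invoke projected gradient descent for the computational claim. The paper's proof is terser but identical in content, and your additional remarks on tolerating optimization slack go beyond what the paper spells out.
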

\begin{proof}
    The proof follows immediately by setting $\epsilon = \epsilon'/n$ in the guarantees of Theorem~\ref{thm:frobenius_estimation_high_temp}. The optimal value of the pseudolikelihood can be found in polynomial time using projected gradient descent.
\end{proof}

Corollary~\ref{cor_informal:n4} says that if we can find a set $\mathcal{R}$ that is convex and admits efficient projections, we can solve the pseudolikelihood estimation problem in polynomial time. We instantiate this fact in the cases of SK/diluted SK model now.

\begin{corollary}\label{cor_informal:sk}
    Suppose we are given $l$ independent samples $X^{(1)},\ldots,X^{(l)} \sim \Pr_{J^*}$, where $J^*$ is sampled according to the SK-model with $\beta < C$, where $C \approx 0.295$. That is, every $J_{ij}=J_{ji}$ is chosen i.i.d. from $\mathcal N(0,\beta^2/n)$. Then, there is a polynomial time algorithm (MPLE) that produces an estimate $\hat{J} \in \mathcal{S}_0^n$ , such that with probability $1-o(1)$ over the choice of samples and the choice of matrix $J^*$ we have $\dtv(\Pr_{\hat{J}},\Pr_{J^*}) \leq \epsilon$, as long as $l = \tilde{\Omega}(n^4/\epsilon^2)$.
\end{corollary}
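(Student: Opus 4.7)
The plan is to reduce directly to Corollary~\ref{cor_informal:n4} by selecting an appropriate convex optimizing set $\mathcal{R}$ and verifying that Assumption~\ref{ass:spectral_bounded} is satisfied with high probability over the draw of $J^*$ from the SK ensemble.

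Concretely, fix a constant $\lambda \in (2\beta, 1)$, which is possible since $2 \cdot 0.295 < 1$, and let $\mathcal{R} := \{J \in \mathcal{S}_0^n : \|J\|_{op} \le \lambda\}$. This set is convex, being the intersection of the subspace of symmetric zero-diagonal matrices with the operator-norm ball $\{-\lambda I \preceq J \preceq \lambda I\}$, which is expressible as a linear matrix inequality. The pseudolikelihood $\phi$ is convex, so minimizing it over $\mathcal{R}$ is a convex program that can be solved in polynomial time either via SDP-based interior-point methods or via projected (sub)gradient descent, where the Frobenius projection onto $\mathcal{R}$ is computable by spectral truncation combined with enforcing the zero-diagonal constraint (e.g., Dykstra's alternating projection).

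To check Assumption~\ref{ass:spectral_bounded}, we note that $h = 0$ by hypothesis. For the operator norm bound, standard non-asymptotic Wigner-type bounds for symmetric matrices with i.i.d.\ $\mathcal{N}(0,\beta^2/n)$ off-diagonal entries give $\|J^*\|_{op} = 2\beta + o(1)$ with probability $1-o(1)$, so $J^* \in \mathcal{R}$ with high probability and $\lambda = O(1)$. For the MLSI, we invoke the recent result of \cite{anari2024trickle}, which establishes that for $\beta < 0.295$ the SK Ising measure $\Pr_{J^*}$ satisfies Glauber-MLSI$(\rho)$ with probability $1-o(1)$ over $J^*$, for some constant $\rho = O(1)$ depending only on $\beta$.

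With Assumption~\ref{ass:spectral_bounded} in force (on the high-probability event over $J^*$), Corollary~\ref{cor_informal:n4} now applies: running MPLE over $\mathcal{R}$ with $l = \tilde{\Omega}(n^4/\epsilon^2)$ samples yields an estimate $\hat{J}$ with $\dtv(\Pr_{\hat{J}}, \Pr_{J^*}) \le \epsilon$, with probability $1-o(1)$ over the samples. Choosing the failure parameter $\delta = 1/\log n$ (absorbed into the $\tilde{O}$), a union bound over the two $o(1)$-probability bad events finishes the proof. The main piece of heavy machinery that this argument outsources is the MLSI of \cite{anari2024trickle} — which is what enables us to reach beyond the spectrally bounded threshold $\beta < 1/4$; the only genuinely local step is choosing a convex, SDP-friendly set $\mathcal{R}$ that simultaneously contains $J^*$ with high probability and has $O(1)$ operator-norm bound, for which the operator-norm ball on $\mathcal{S}_0^n$ is the natural choice.
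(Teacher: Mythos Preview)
Your argument is correct and follows essentially the same route as the paper: verify MLSI via \cite{anari2024trickle} (the paper phrases this as ATE $\Rightarrow$ MLSI via \cite{chen2021optimal}), choose $\mathcal{R}$ to be an operator-norm ball in $\mathcal{S}_0^n$ (the paper takes radius $4$ rather than your $\lambda\in(2\beta,1)$, but any constant works), note $J^*\in\mathcal{R}$ w.h.p.\ by Wigner-type bounds, and invoke Corollary~\ref{cor_informal:n4}. Your treatment of the projection step (Dykstra/SDP to handle the zero-diagonal constraint jointly with eigenvalue clipping) is in fact slightly more careful than the paper's one-line ``eigenvalue clipping.''
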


\begin{proof}
    By \cite{anari2024trickle}, when $\beta<C$, with $1-o(1)$ probability over the choice of random matrix $J^*$ it satisfies ATE, which by \cite{chen2021optimal} means it also satisfies MLSI. Therefore, we can run MPLE on the set of matrices $\mathcal{R} = \{J \in \mathcal{S}_0^n: \|J\|_{op} \leq 4\}$. We know that 
    $J^* \in \mathcal{R}$ with probability $1-o(1)$. Also, it is clear we can efficiently project to $\mathcal{R}$ by eigenvalue clipping.
    Thus, applying Corollary~\ref{cor_informal:n4} finishes the proof.
\end{proof}

Similar guarantees can be obtained for the diluted SK model.

\begin{corollary}\label{cor_informal:diluted_sk}
    Suppose we are given $l$ independent samples $X^{(1)},\ldots,X^{(l)} \sim \Pr_{J^*}$, where $J^*$ is sampled according to the diluted SK-model with $\beta < C$, where $C \approx 0.295$. That is, consider a random $d$-regular graph $G$. If $i$ and $j$ are not connected, $J_{ij}=0$. If $i$ and $j$ are connected, then $J_{ij}=J_{ji}$ is chosen from $\{\frac{\beta}{\sqrt{d-1}},-\frac{\beta}{\sqrt{d-1}}\}$ with the same probability, independently for all such pairs. Then, there is a polynomial-time algorithm (MPLE) that produces an estimate $\hat{J} \in \mathcal{S}_0^n$ , such that with probability $1-o(1)$ over the choice of samples and the choice of matrix $J^*$ we have $\dtv(\Pr_{\hat{J}},\Pr_{J^*}) \leq \epsilon$, as long as $l = \tilde{\Omega}(n^4/\epsilon^2)$.
\end{corollary}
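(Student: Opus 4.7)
The plan is to mirror the proof of Corollary~\ref{cor_informal:sk} (the undiluted SK case), since the only differences between the two settings are (a) the source of the MLSI guarantee and (b) the spectral behavior of the random interaction matrix. In both cases, we want to invoke Corollary~\ref{cor_informal:n4}, which reduces the problem to checking that $J^*$ lies (with probability $1-o(1)$) in a convex set $\mathcal{R}\subseteq\mathcal{S}_0^n$ on which MPLE can be implemented in polynomial time, such that every member of $\mathcal{R}$ has operator norm $O(1)$ and $\Pr_{J^*}$ satisfies MLSI$(\rho)$ for some constant $\rho$.

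First I would establish MLSI: by the result of Anari--Jain--Koehler--Pham--Vuong~\cite{anari2024trickle}, for $\beta<C\approx 0.295$ the diluted SK model satisfies an approximate tensorization of entropy inequality with a $\Theta(1)$ constant with probability $1-o(1)$ over the draw of $G$ and the signs; by~\cite{chen2021optimal} this implies Glauber-MLSI with a constant $\rho=\rho(\beta)=O(1)$. Thus Assumption~\ref{ass:spectral_bounded} holds once we pin down the operator norm bound $\lambda$.

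Next I would bound $\|J^*\|_{op}$. Conditional on the realization of the random $d$-regular graph $G$, the matrix $J^*$ is, up to the scaling $\beta/\sqrt{d-1}$, a symmetric matrix with $\{-1,0,+1\}$ entries whose sparsity pattern is $G$. Writing $J^*=\frac{\beta}{\sqrt{d-1}}\,S\odot A_G$ where $S$ has i.i.d.\ Rademacher entries on the edges of $G$, one can apply either (i) Friedman's theorem (combined with a sign-symmetrization/contraction argument) or (ii) the standard trace-moment / non-backtracking spectrum arguments that have been used in prior work on the diluted SK ensemble to conclude that $\|J^*\|_{op}\le 2\beta+o(1)$ with probability $1-o(1)$ over the joint randomness in $G$ and $S$. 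This is the main technical checkpoint; the quantitative details match classical random-matrix calculations, but one must make sure the high-probability event is of probability $1-o(1)$ and not merely $1-o_n(1)$-in-expectation. In particular, any universal constant $\lambda$ strictly greater than $2C\approx 0.59$ will suffice, so we may take $\lambda=4$ (the same value used in the SK proof).

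With these ingredients in hand, define
\[
\mathcal{R}:=\{J\in\mathcal{S}_0^n:\|J\|_{op}\le 4\}.
\]
The set $\mathcal{R}$ is convex; projection onto it in Frobenius norm amounts to (a) zeroing out the diagonal and (b) clipping the eigenvalues of the symmetrized matrix to $[-4,4]$, both of which are polynomial-time operations. By the two bullets above, $J^*\in\mathcal{R}$ and $\Pr_{J^*}$ satisfies MLSI$(\rho)$ with $\rho=O(1)$ with probability $1-o(1)$ over the draw of $J^*$, so Assumption~\ref{ass:spectral_bounded} is in force. Corollary~\ref{cor_informal:n4} now yields that MPLE over $\mathcal{R}$, using $l=\tilde{\Omega}(n^4/\epsilon^2)$ samples, produces $\hat J\in\mathcal{R}$ with $\dtv(\Pr_{\hat J},\Pr_{J^*})\le\epsilon$ with probability $1-o(1)$ over the samples. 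Taking a union bound over the randomness of $J^*$ and that of the samples gives the claim. The main obstacle, as noted, is the operator-norm concentration step; modulo that, the argument is an immediate specialization of the SK case.
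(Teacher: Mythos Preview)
Your proposal is correct and follows essentially the same route as the paper: cite \cite{anari2024trickle} for ATE (hence MLSI via \cite{chen2021optimal}) with probability $1-o(1)$, take $\mathcal{R}=\{J\in\mathcal{S}_0^n:\|J\|_{op}\le 4\}$ with eigenvalue-clipping projection, and apply Corollary~\ref{cor_informal:n4}. You actually supply more detail than the paper does on the operator-norm bound for the diluted ensemble, which the paper leaves implicit.
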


\begin{proof}
    The proof is similar to Corollary \ref{cor_informal:sk}. By \cite{anari2024trickle}, when $\beta<C$, with $1-o(1)$ probability it satisfies ATE, thus also MLSI with a constant that depends on the distance of $\beta$ from $C$. Thus, we can use Corollary~\ref{cor_informal:n4} to obtain the result.
\end{proof}

\subsection{Application: Spectrally Bounded Models}
For spectrally bounded models, the set $\mathcal{R}$ can be naturally restricted to only include matrices that satisfy MLSI. We thus get optimal guarantees for TV learning.

\begin{corollary}\label{cor_informal:Jop<1}
    Let $X_1,\ldots,X_l$ be independent samples drawn from an Ising model with interaction matrix $J^*\in \mathcal{S}_0^n$ that belongs in the set $\mathcal{R}= \{J \in \mathcal{S}_0^n: \lambda_{max}(J)- \lambda_{min}(J) \le 1 - \alpha\}$, for some $\alpha \in (0,1)$. 
    Let $\hat{J} \in \R^{n \times n}$ be the estimate of $J^*$ that is obtained by maximizing the pseudo-likelihood function \eqref{eq:pseudolikelihood_function} over $\mathcal{R}^{(l)}$, given $l$ independent samples from $J^*$. Then, for any $\varepsilon < 1$, with probability at least $1- \delta$
    \[
    TV(P_{\hat{J}}, P_{J^*}) \leq \varepsilon \enspace,
    \]
    whenever 
    \[
    l \geq \Tilde{O}\lp(\frac{n^2 + \log(1/\delta)}{\varepsilon^2}\rp)\enspace,
    \]
    where $\tilde O$ hides sub-polynomial factors of $n$ and constant that depends on $\alpha,\lambda$. Moreover, the algorithm can be implemented in polynomial time.
\end{corollary}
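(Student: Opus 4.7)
The plan is to reduce this corollary to a direct application of Theorem~\ref{thm:frobenius_estimation_high_temp} followed by Lemma~\ref{lem:tv_to_frob_mlsi}. First I would verify that Assumption~\ref{ass:spectral_bounded} is in force for the set $\mathcal{R} = \{J \in \mathcal{S}_0^n : \lambda_{\max}(J) - \lambda_{\min}(J) \leq 1-\alpha\}$. Since $J \in \mathcal{S}_0^n$ has zero diagonal, $\operatorname{tr}(J) = 0$, so $\lambda_{\max}(J) \geq 0 \geq \lambda_{\min}(J)$; combined with the spread bound this gives $\|J\|_{op} \leq 1-\alpha$, hence $\sup_{J \in \mathcal{R}} \|J\|_{op} \leq 1$. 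By Lemma~\ref{lem:mlsi}, every $J \in \mathcal{R}$ (including $J^*$) induces an Ising model satisfying Glauber-MLSI$(1/\alpha)$. Thus Assumption~\ref{ass:spectral_bounded} holds with $\lambda = 1$ and $\rho = 1/\alpha$, and with $h = 0$ so $h_{\max} = 0$.

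Next I would apply Theorem~\ref{thm:frobenius_estimation_high_temp} to the MPLE $\hat J$ optimized over $\mathcal{R}^{(l)}$ with target Frobenius accuracy $\varepsilon' := c\alpha\varepsilon$ for a small absolute constant $c > 0$ to be chosen. This guarantees that with probability at least $1 - \delta$ we have $\|\hat J - J^*\|_F \leq \varepsilon'$, provided
\[
l \geq \tilde{\Omega}\!\left(\frac{n^2 \log(1/(\delta \varepsilon'))}{\varepsilon'^2}\right) = \tilde{\Omega}\!\left(\frac{n^2 + \log(1/\delta)}{\varepsilon^2}\right),
\]
where the $\tilde\Omega$ absorbs sub-polynomial factors and constants depending on $\alpha$.

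To convert this into a TV bound, I would invoke Lemma~\ref{lem:tv_to_frob_mlsi}. The hypothesis of that lemma requires the line segment $J_t = t\hat J + (1-t)J^*$ for $t \in [0,1]$ to consist entirely of matrices whose Ising models satisfy MLSI with a uniform constant. This follows from the convexity of $\mathcal{R}$: the functional $J \mapsto \lambda_{\max}(J) - \lambda_{\min}(J)$ is convex (as the sum of a convex and a convex function, since $-\lambda_{\min}$ is convex), so the sublevel set $\mathcal{R}$ is convex, and each $J_t \in \mathcal{R}$, hence satisfies Glauber-MLSI$(1/\alpha)$ by Lemma~\ref{lem:mlsi}. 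Applying Lemma~\ref{lem:tv_to_frob_mlsi} (with the Glauber-MLSI-to-MLSI translation implicit in its use) yields $\dtv(P_{\hat J}, P_{J^*}) \lesssim (1/\alpha) \|\hat J - J^*\|_F \leq (1/\alpha)\varepsilon' \leq \varepsilon$ for the appropriate choice of $c$.

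The remaining item is the polynomial-time claim. The objective $\phi$ is convex (since its second derivative is non-negative, as observed after \eqref{eq:second_derivative}), and $\mathcal{R}$ is a convex set that admits efficient projection in Frobenius norm: given any symmetric matrix $M$, one can zero out its diagonal and then clip its eigenvalues via eigendecomposition so that the spread $\lambda_{\max} - \lambda_{\min} \leq 1-\alpha$ (this is the standard projection onto the intersection of a subspace with a spectral-norm-type constraint and can be carried out in $O(n^3)$ time). Thus projected gradient descent on $\phi$ over $\mathcal{R}$ converges in polynomial time, which concludes the algorithmic claim. The one step requiring some care — and the most likely source of technical friction — is verifying that the eigenvalue-spread projection step is in fact the Euclidean projection onto $\mathcal{R}$ (alternatively, since convex projection suffices for convergence of first-order methods, any general convex-programming solver applied to the convex constraint set works, so this is not an essential obstacle).
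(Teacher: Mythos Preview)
Your proof is correct and follows essentially the same approach as the paper: verify Assumption~\ref{ass:spectral_bounded} via the zero-trace argument and Lemma~\ref{lem:mlsi}, apply Theorem~\ref{thm:frobenius_estimation_high_temp}, use convexity of $\mathcal{R}$ to invoke Lemma~\ref{lem:tv_to_frob_mlsi}, and argue polynomial time via convex optimization. On the projection step, your instinct that ``zero the diagonal, then clip eigenvalues'' is problematic is right (clipping eigenvalues generally destroys the zero-diagonal property, so the two constraints interact nontrivially); the paper handles this exactly via your fallback, formulating the projection as an SDP, so your hedge there is well placed.
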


\begin{proof}
    First, let us argue that $\mathcal{R}$ is a convex set. We know that for two symmetric matrices, using the Rayleigh quotient, we can derive that for any $t \in (0,1)$
    \begin{gather*}
    \lambda_{\max}(tA_1+ (1-t)A_2)\le t \lambda_{\max}(A_1) + (1-t) \lambda_{\max}(A_2)\\
    \lambda_{\min}(tA_1+ (1-t)A_2)\ge t \lambda_{\min}(A_1) + (1-t) \lambda_{\min}(A_2)
    \end{gather*}
    so if $A_1,A_2\in \mathcal R$, $tA_1 + (1-t)A_2 \in \mathcal R$. 
    
    By Lemma~\ref{lem:mlsi}, every $J \in \mathcal{R}$ satisfies MLSI($1/\alpha$). Also, clearly $\|J\|_{op} < 1$ for every $J \in \mathcal{R}$, since the zero diagonal implies $J$ will have both positive and negative eigenvalues. Thus, combining Theorem~\ref{thm:frobenius_estimation_high_temp} and Lemma~\ref{lem:tv_to_frob_mlsi} gives the desired guarantee in TV.
    
    Now, let us argue about the computational efficiency of the method. Since the pseudolikelihood function is convex, 
    to find $\hat J$ we can use the projected gradient descent algorithm as in Theorem 3.2 of \cite{bubeck2015convex}. Thus, we only need to argue that at every step we can efficiently project on the set $\mathcal{R}$.
    
    We have shown that $\mathcal R$ is a convex set, and the distance from it, which is measured in Frobenius norm, is a convex function. Thus, one way of computing the projection would be using the result in \cite{lee2018efficient}, where they optimize the distance in polynomial time using only a membership oracle for $\mathcal{R}$, which is, of course, easy to implement in our case. Alternatively, one could write the projection as a semi-definite program (SDP) \cite{boyd2004convex} and then solve it efficiently using \cite{jiang2020faster}. For convenience, we spell out the details of the SDP approach below.

    The SDP in standard form minimizes $\langle C,X\rangle$ subject to $\langle A_i,X\rangle \le b_i$ and $X\succeq 0$. We can use a diagonal block matrix $X$ so that $X\succeq 0$ is equivalent to $X_1,X_2,\dots,X_k\succeq 0$, and furthermore, using the linear constraints $\langle A_i,X\rangle \le b_i$ (both $\langle A,X\rangle \le b$ and $\langle -A,X\rangle \le b$ to build an equation) we can encode linear relationships between matrices $X_1,X_2,\dots,X_k$. Therefore, we can actually write several, rather than one, positive semi-definite constraints of matrices, where each element has a linear relation to the other.

    Therefore, we can first encode two matrices $J$ (the matrix that needs to be projected) and $J'$ (the projected matrix), and three values $\lambda_1,\lambda_2,t$ in $X_1$'s off-diagonal and put free variables on the diagonal of $X_1$ so that $X_1\succeq 0$ if we can make the diagonal large enough. In the linear constraints, we encode the information that $J$ and  $J'$ are both zero-diagonal symmetric matrices. Then, we encode $\lambda_1 I-J'\succeq0$ and $J'-\lambda_2I\succeq0$ and $\lambda_1-\lambda_2\le 1-\alpha$ to make the constraint $\lambda_{\max}(J')-\lambda_{\min}(J')\le1-\alpha$. Finally, we flatten the elements in $J-J'$ into a column vector $v$, and we add the constraint $\begin{pmatrix} I&v\\v^{\top}&t\end{pmatrix}\succeq 0$ which is equivalent to $t\ge\|J-J'\|^2_F$. Finally, we optimize $t$ to make it as small as possible. We solve this SDP to find $J'$, which is the projection from $J$ to $\mathcal R$. This SDP has a polynomial size in $n$ and a polynomial number of constraints. By \cite{jiang2020faster} (the main result), we can get an efficient algorithm solving the SDP, thus we can project on $\mathcal R$ efficiently.
\end{proof}

\subsection{Application: Antiferromagnetic Expanders}

\begin{corollary}\label{cor_informal:negative_spike}
    Given $0<\alpha<1$ and $c>0$. Let $\mathbf 1$ be the all-one vector. Let $\mathcal A$ be the set of matrices such that for all $A\in \mathcal A$, $A\mathbf 1=0$ and $0\preceq A\preceq (1-\alpha)I$. Suppose we are given $l$ independent samples $X^{(1)},\ldots,X^{(l)} \sim \Pr_{J^*}$, where $J^*$ is from $$\mathcal R=\mathcal S_0^n\cap\{\exists r\in\mathbb R, 0\le t\le c, A\in\mathcal A,\text{ s.t. } J^*+rI=-\frac{t}{n}\mathbf 1\mathbf 1^{\top}+A\}.$$
    Then, there is a polynomial time algorithm (MPLE) that produces an estimate $\hat{J} \in \mathcal R$, such that with high probability over the choice of samples and the choice of matrix $J^*$ we have $\dtv(\Pr_{\hat{J}},\Pr_{J^*}) \leq \epsilon$, as long as $l = \tilde{\Omega}(n^2/\epsilon^2)$.
\end{corollary}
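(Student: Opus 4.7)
The strategy mirrors the proofs of the earlier Corollaries~\ref{cor_informal:sk}--\ref{cor_informal:Jop<1}: I verify the hypotheses of Theorem~\ref{thm:frobenius_estimation_high_temp} for the class $\mathcal R$, run MPLE over $\mathcal R$ to obtain a $\hat J$ that is close to $J^*$ in Frobenius norm, and translate this to TV closeness via Lemma~\ref{lem:tv_to_frob_mlsi}. Concretely, I need to establish (i) every $J \in \mathcal R$ has operator norm $O(1)$, (ii) every $J \in \mathcal R$ defines an Ising distribution satisfying Glauber-MLSI with a constant depending only on $\alpha$ and $c$, (iii) $\mathcal R$ is convex, and (iv) one can project onto $\mathcal R$ in Frobenius norm in polynomial time.

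For (i), first observe that adding $rI$ to an interaction matrix does not change the induced Ising distribution, since $\tfrac{1}{2}x^\top (rI) x = \tfrac{r n}{2}$ is constant on the hypercube, so the gauge $r$ is not really a free parameter of the distribution. The zero-diagonal constraint $J \in \mathcal S_0^n$ together with $J + rI = -\tfrac{t}{n}\mathbf 1 \mathbf 1^\top + A$ forces $A_{ii} = r + t/n$ for every $i$, and the sandwich $0 \preceq A \preceq (1-\alpha) I$ then forces $r + t/n \in [0, 1-\alpha]$, so $|r|\le 1$ and hence $\|J\|_{op} \le |r| + t + \|A\|_{op} \le 2 + c + (1-\alpha) = O(1)$. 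For (ii), I invoke the MLSI results from \cite{anari2024trickle,koehler2022sampling} for antiferromagnetic spike-plus-bulk models: any interaction matrix of the form $-\tfrac{t}{n}\mathbf 1 \mathbf 1^\top + A$ with $t \ge 0$, $A \mathbf 1 = 0$ and $0 \preceq A \preceq (1-\alpha) I$ induces an Ising model satisfying Glauber-MLSI with a constant $\rho = \rho(\alpha, c)$, the negative spike being precisely the kind of eigenvalue that can be ``ignored'' by the trickle-down / negative-spike-removal argument of those works. Because of gauge invariance, this MLSI bound is inherited by every $J \in \mathcal R$.

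For (iii), convexity is immediate: if $J_k + r_k I = -\tfrac{t_k}{n}\mathbf 1 \mathbf 1^\top + A_k$ for $k=1,2$, then $(sJ_1 + (1-s)J_2) + (sr_1 + (1-s)r_2) I = -\tfrac{s t_1 + (1-s) t_2}{n}\mathbf 1 \mathbf 1^\top + (s A_1 + (1-s) A_2)$, and the parameter box $\{r\in\mathbb R,\, t\in[0,c],\, A\in\mathcal A\}$ is convex. For (iv), projection reduces to a polynomial-size SDP: the variables are $(J, r, t, A)$; the constraints $A \succeq 0$, $(1-\alpha) I - A \succeq 0$, $A \mathbf 1 = 0$, $t \in [0, c]$, $J \in \mathcal S_0^n$, and the matching equation $J + r I + \tfrac{t}{n} \mathbf 1 \mathbf 1^\top - A = 0$ are all affine/LMI constraints, and the Frobenius distance objective can be encoded by a Schur complement block exactly as in the proof of Corollary~\ref{cor_informal:Jop<1}, then solved via \cite{jiang2020faster}. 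Combining everything, Theorem~\ref{thm:frobenius_estimation_high_temp} gives $\|\hat J - J^*\|_F \le \epsilon/(2\rho)$ with high probability once $l = \tilde\Omega(n^2/\epsilon^2)$, and since $\mathcal R$ is convex with a uniform MLSI constant, the whole segment from $J^*$ to $\hat J$ satisfies MLSI$(\rho)$, so Lemma~\ref{lem:tv_to_frob_mlsi} upgrades this to $\dtv(\Pr_{\hat J}, \Pr_{J^*}) \le \epsilon$. The main obstacle is step (ii): carefully extracting a \emph{uniform} MLSI constant for the entire spike-plus-bulk class from the prior work, rather than only for the specific antiferromagnetic expander example highlighted in the discussion, and checking that the MLSI constant depends only on $(\alpha, c)$ and not on other parameters of the parametrization of $\mathcal R$.
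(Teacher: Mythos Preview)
Your proposal is correct and follows essentially the same route as the paper: cite \cite{anari2024trickle} for the uniform MLSI on the spike-plus-bulk class $\mathcal R$, verify convexity of $\mathcal R$, encode Frobenius projection onto $\mathcal R$ as an SDP, and then combine Theorem~\ref{thm:frobenius_estimation_high_temp} with Lemma~\ref{lem:tv_to_frob_mlsi}. You are in fact slightly more careful than the paper in that you explicitly verify the operator-norm bound required by Assumption~\ref{ass:spectral_bounded} (the paper leaves this implicit); your arithmetic there has a small slip---from $r+t/n\in[0,1-\alpha]$ you only get $|r|\le\max(1,c)$, not $|r|\le 1$---but the conclusion $\|J\|_{op}=O_{\alpha,c}(1)$ is unaffected.
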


\begin{proof}
    First, we know that, from \cite{anari2024trickle}, any matrix in $\mathcal{R} $ satisfies ATE, and thus satisfies modified-LSI. Following the previous recipe, we just need to prove that $\mathcal R$ is convex. First, we know that $\mathcal A$ is convex. Since the average of two positive semi-definite (PSD) matrices is PSD, for any $A_1, A_2\in\mathcal A$, $(A_1+A_2)/2\succeq 0$, and $(1-\alpha)I-(A_1+A_2)/2\succeq 0$. Therefore, $(A_1+A_2)/2\in \mathcal A$. Therefore, for $J_1, J_2 \in \mathcal R$, there exists real numbers $r_1,r_2,t_1,t_2,A_1,A_2$, such that $J_i+r_iI=\frac{t_i}{n}\mathbf 1\mathbf 1^{\top}+A_i$ for $i=1,2$. Therefore, we have $(J_1+J_2)/2+\frac{r_1+r_2}{2}I=\frac{(t_1+t_2)/2}{n}\mathbf 1\mathbf 1^{\top}+(A_1+A_2)/2$, and thus $(J_1+J_2)/2\in\mathcal R$, and thus $\mathcal R$ is convex. 

    Again, we have shown that $\mathcal R$ is a convex set, and we are optimizing a convex function. And the constraints of being a matrix in $\mathcal R$ are spectral or linear, as in the proof of Corollary~\ref{cor_informal:Jop<1}. Thus,
    we can formulate the problem as an SDP again and solve it efficiently \cite{jiang2020faster}.
    As in the proof of Corollary~\ref{cor_informal:Jop<1}, we encode matrices $J$ (matrix need to be projected) and $J'$ (the projected matrix) and scalars $\lambda_1,\lambda_2,t/n,v$. We also encode $M=J'-t/n\cdot \mathbf 1\mathbf 1^{\top}$ and $0\le t\le c$ in the linear constraint. We make $M=M^{\top}$, $M$ has equal-diagonal, and $M\mathbf 1=0$ in the linear constraints as well. In addition, we have $\lambda_1I-M\succeq 0,M-\lambda_2I\succeq 0$ as a positive semi-definite constraint and $\lambda_1-\lambda_2\le 1-\alpha$ as a linear constraint. Finally, we repeat the steps in Corollary \ref{cor_informal:Jop<1} for the final step of the Frobenius norm. As we can write the SDP like that, we can get an efficient projection algorithm.
\end{proof}

\begin{remark}\label{rmk:counter_for_bdd_w}
    We should mention that in this antiferromagnetic expanders model, despite the constraints, it could still be that $\|J^*\|_{\infty}$ is unbounded. Indeed, consider a Paley graph $G$ of a $4k+1$ type prime $p$ (that is, $i$ and $j$ are connected if and only if $i-j$ is a quadratic residue), and consider $J-I-2G$, where $J$ is the all-one matrix. We know that $G$ has all the eigenvalues $0$ once with the all-one vector, and $\pm \sqrt{p}$ appears $(p-1)/2$ times. Then, we know that if we consider $J^*=\frac{1-\alpha}{2\sqrt{p}}(J-I-2G)-t\cdot (J-I)$ for $0<t<c-1$, it will be in $\mathcal R$.
\end{remark}

\section{Bounding the TV distance by the Frobenius norm (Proof of Lemma~\ref{lem:TV_from_frobenius})}\label{sec:tvfrob}

Suppose we have two Ising models with interaction matrices $J,J^* \in \mathcal{S}_0^n$ respectively and zero external fields. The following lemma bounds their TV distance in terms of $\|J - J^*\|_F$. 

\tvtofrob*

\begin{proof}
We consider the TV distance of the $X_i$ conditioning on $\mu,\mu^*$.  By Pinsker's inequality, we have

\[2\|\mu-\mu^*\|_{TV}^2\le KL(\mu\|\mu^*)\]

Consider the symmetric KL distance, we have

\[4\|\mu-\mu^*\|_{TV}^2\le KL(\mu\|\mu^*)+KL(\mu^*\|\mu)=\E_{X\sim\mu}(X^{\top}(J-J^*)X)+\E_{X\sim\mu^*}(X^{\top}(J^*-J)X)\]

We expand one of them, and the second one is analogous. 

We can write
\[\E_{X\sim\mu}(X^{\top}(J-J^*)X)=2\sum_{i<j}(J_{ij}-J^*_{ij})\E_{X\sim\mu}(X_iX_j)=2\sum_{i<j}(J_{ij}-J^*_{ij})(2\Pr_{X\sim\mu}(X_iX_j=1)-1).\]

Therefore, after we group them, we have
\[
\E_{X\sim\mu}(X^{\top}(J-J^*)X)+\E_{X\sim\mu^*}(X^{\top}(J^*-J)X)=4\sum_{i<j}(J_{ij}-J^*_{ij})(\Pr_{X\sim\mu}(X_iX_j=1)-\Pr_{X\sim\mu^*}(X_iX_j=1))
\]

The probability of $X_iX_j=1$ can be upper-bounded by coupling. First, we consider the distribution of $X$ and $X'$ according to $\mu$ and $\mu^*$, respectively. We couple $X$ and $X'$ to achieve the probability $X_{-i-j}\ne X'_{-i-j}$ as small as possible. We call the event $X_{-i-j}\ne X_{-i-j}'$ to be $E_1$, and $X_{-i-j}=X_{-i-j}'$ to be $E_2$. Therefore, we can split the difference in probability to be
\begin{align*}
&\E_{X\sim\mu}(X^{\top}(J-J^*)X)+\E_{X\sim\mu^*}(X^{\top}(J^*-J)X)\\
=&4\sum_{i<j}(J_{ij}-J^*_{ij})(\Pr_{X\sim\mu}(X_iX_j=1)-\Pr_{X\sim\mu^*}(X_iX_j=1))\\
\le&4\sum_{i<j}|J_{ij}-J^*_{ij}|\cdot|\Pr_{X\sim\mu}(X_iX_j=1)-\Pr_{X\sim\mu^*}(X_iX_j=1)|\\
\le&4\sum_{i<j}|J_{ij}-J^*_{ij}|\cdot\left(\Pr(E_1)\cdot \left|\Pr_{X\sim\mu}(X_iX_j=1|E_1)-\Pr_{X\sim\mu^*}(X_iX_j=1|E_1)\right|\right.\\
&~~~~~~~~~~~~~~~~~~~~\left.+\Pr(E_2)\cdot \left|\Pr_{X\sim\mu}(X_iX_j=1|E_2)-\Pr_{X\sim\mu^*}(X_iX_j=1|E_2)\right|\right)
\end{align*}

For $E_1$ case, We have $\Pr(E_1)\le\|\mu-\mu^*\|_{TV}$ along with the naive bound 
$$|\Pr(X_iX_j=1|E_1)-\Pr(X_iX_j=1|E_1)|\le 1.$$

For the $E_2$ case, we bound $\Pr(E_2)\le 1$, and we upper bound the probability difference by the largest possible difference of the conditional probability. That is:
\begin{align*}
    \left|\Pr_{X\sim\mu}(X_iX_j=1|E_2)-\Pr_{X\sim\mu^*}(X_iX_j=1|E_2)\right|\le \max_{X_{-i-j}}\left|\Pr_{X\sim\mu}(X_iX_j=1|X_{-i-j})-\Pr_{X\sim\mu^*}(X_iX_j=1|X_{-i-j})\right|.
\end{align*}

To sum up, we have the following inequality:
\begin{align*}
&\E_{X\sim\mu}(X^{\top}(J-J^*)X)+\E_{X\sim\mu^*}(X^{\top}(J^*-J)X)\\
\le&4\sum_{i<j}|J_{ij}-J^*_{ij}|\cdot\left(\Pr(E_1)\cdot \left|\Pr_{X\sim\mu}(X_iX_j=1|E_1)-\Pr_{X\sim\mu^*}(X_iX_j=1|E_1)\right|\right.\\
&~~~~~~~~~~~~~~~~~~~~\left.+\Pr(E_2)\cdot \left|\Pr_{X\sim\mu}(X_iX_j=1|E_2)-\Pr_{X\sim\mu^*}(X_iX_j=1|E_2)\right|\right)\\
\le&4\sum_{i<j}|J_{ij}-J^*_{ij}|\cdot\left(\|\mu-\mu^*\|_{TV}+ \max_{X_{-i-j}}\left|\Pr_{X\sim\mu}(X_iX_j=1|X_{-i-j})-\Pr_{X\sim\mu^*}(X_iX_j=1|X_{-i-j})\right|\right)
\end{align*}

For Ising model $\mu$ with interaction matrix $J$, denote $h_i=\sum_{k\ne i,j}J_{ik}$, and $h_j=\sum_{k\ne i,j}J_{jk}$ we can calculate that
\begin{align*}
    &\Pr(X_iX_j=1)\\
    =&\frac{\exp(J_{ij}+h_i+h_j)+\exp(J_{ij}-h_i-h_j)}{\exp(J_{ij}+h_i+h_j)+\exp(J_{ij}-h_i-h_j)+\exp(-J_{ij}+h_i-h_j)+\exp(-J_{ij}-h_i+h_j)}.
\end{align*}

Denote $v=h_i+h_j$ and $w=h_i-h_j$. So, we have the following
\[
\Pr(X_iX_j=1)
    =\frac{\exp(J_{ij})\cosh(v)}{\exp(J_{ij})\cosh(v)+\exp(-J_{ij})\cosh(w)}=:F(J_{ij},u,w).
\]

Denote $v^*=\sum_{k\ne i,j}J_{ik}^*+\sum_{k\ne i,j}J_{jk}^*$ and $w^*=\sum_{k\ne i,j}J_{ik}^*-\sum_{k\ne i,j}J_{jk}^*$. Our target is to upper bound $|F(J_{ij},u,w)-F(J_{ij}^*,u^*,w^*)|$. We take the derivative to show the Lipschitzness of $F$ to achieve this. By taking the derivative directly, we have

\[\frac{\partial F}{\partial J_{ij}}=\frac{2\cosh(v)\cosh(w)}{(\exp(J_{ij})\cosh(v)+\exp(-J_{ij})\cosh(w))^2}.\]
\[\frac{\partial F}{\partial v}=\frac{\sinh(v)\cosh(w)}{(\exp(J_{ij})\cosh(v)+\exp(-J_{ij})\cosh(w))^2}.\]
\[\frac{\partial F}{\partial v}=-\frac{\sinh(w)\cosh(v)}{(\exp(J_{ij})\cosh(v)+\exp(-J_{ij})\cosh(w))^2}.\]

By AM-GM ineqiality, we have $|\frac{\partial F}{\partial J_{ij}}|\le \frac{1}{2}$. By furthermore $|\sinh(x)|\le |\cosh(x)|$, we have both $|\frac{\partial F}{\partial v}|$ and $|\frac{\partial F}{\partial w}|\le\frac{1}{4}$. Therefore, we have the bound:
\[
|F(J_{ij},u,w)-F(J_{ij}^*,u^*,w^*)|\le \frac{1}{2}|J_{ij}-J^*_{ij}|+\frac{1}{4}|v-v^*|+\frac{1}{4}|w-w^*|
\]

Plugging back the expression of $v,w,v^*,w^*$, we use the triangle inequalty, we have
\begin{align*}
    |v-v^*|&=\left|\sum_{k\ne i,j}J_{ik}+\sum_{k\ne i,j}J_{jk}-\sum_{k\ne i,j}J_{ik}^*-\sum_{k\ne i,j}J_{jk}^*\right|\\
    &=\left|\sum_{k\ne i,j}J_{ik}-J_{ik}+J_{jk}-J_{jk}^*\right|\le \sum_{k\ne i,j}|J_{ik}-J_{ik}|+|J_{jk}-J_{jk}^*|.
\end{align*}

And, we can get the same for $w-w^*$. Therefore, we can deduce that
\[
|F(J_{ij},u,w)-F(J_{ij}^*,u^*,w^*)|\le \frac{1}{2}|J_{ij}-J^*_{ij}|+\frac{1}{2}\sum_{k\ne i,j}|J_{ik}-J_{ik}^*|+|J_{ik}-J_{jk}^*|.
\]

Therefore, we can have the following inequality:
\begin{align*}
&\E_{X\sim\mu}(X^{\top}(J-J^*)X)+\E_{X\sim\mu^*}(X^{\top}(J^*-J)X)\\
\le&4\sum_{i<j}|J_{ij}-J^*_{ij}|\cdot\left(\|\mu-\mu^*\|_{TV}+ \max_{X_{-i-j}}\left|\Pr_{X\sim\mu}(X_iX_j=1|X_{-i-j})-\Pr_{X\sim\mu^*}(X_iX_j=1|X_{-i-j})\right|\right)\\
\le&4\sum_{i<j}|J_{ij}-J^*_{ij}|\cdot\left(\|\mu-\mu^*\|_{TV}+ \frac{1}{2}|J_{ij}-J^*_{ij}|+\frac{1}{2}\sum_{k\ne i,j}|J_{ik}-J_{ik}^*|+|J_{ik}-J_{jk}^*|\right)
\end{align*}

By the Cauchy-Schwarz inequality, we have
\[\sum_{i<j}|J_{ij}-J^*_{ij}|=\frac{1}{2}\sum_{i\ne j}|J_{ij}-J^*_{ij}|\le \frac{n}{2}\|J-J^*\|_F.\]

Also, the rest of that is
\begin{align*}
  &\sum_{i<j}|J_{ij}-J^*_{ij}|\cdot\left(\frac{1}{2}|J_{ij}-J^*_{ij}|+\frac{1}{2}\sum_{k\ne i,j}|J_{ik}-J_{ik}^*|+|J_{ik}-J_{jk}^*|\right)\\
=&\sum_{i=1}^n\sum_{j \ne i}|J_{ij}-J^*_{ij}| \cdot\left(\frac{1}{4}|J_{ij}-J^*_{ij}|+\frac{1}{2}\sum_{k\ne i,j}|J_{ik}-J_{ik}^*|\right)\\
<&\sum_{i=1}^n\sum_{j \ne i}|J_{ij}-J^*_{ij}| \cdot\left(\frac{1}{2}|J_{ij}-J^*_{ij}|+\frac{1}{2}\sum_{k\ne i,j}|J_{ik}-J_{ik}^*|\right)=\sum_{i=1}^n\frac{1}{2}\left(\sum_{j \ne i}|J_{ij}-J^*_{ij}|\right)^2.
\end{align*}

By the Cauchy-Schwarzartz inequality, we know that
\[\left(\sum_{j \ne i}|J_{ij}-J^*_{ij}|\right)^2\le n\sum_{j \ne i}|J_{ij}-J^*_{ij}|^2\]

Thus, we can have
\begin{align*}
  &\sum_{i<j}|J_{ij}-J^*_{ij}|\cdot\left(\frac{1}{2}|J_{ij}-J^*_{ij}|+\frac{1}{2}\sum_{k\ne i,j}|J_{ik}-J_{ik}^*|+|J_{ik}-J_{jk}^*|\right)\\
<&\sum_{i=1}^n\frac{1}{2}\left(\sum_{j \ne i}|J_{ij}-J^*_{ij}|\right)^2\le \sum_{i=1}^n\frac{n}{2}\sum_{j \ne i}|J_{ij}-J^*_{ij}|^2=\frac{n}{2}\|J-J^*\|_F^2.
\end{align*}
So, in summary, we have
\begin{align*}
&\E_{X\sim\mu}(X^{\top}(J-J^*)X)+\E_{X\sim\mu^*}(X^{\top}(J^*-J)X)\\
\le&4\sum_{i<j}|J_{ij}-J^*_{ij}|\cdot\left(\|\mu-\mu^*\|_{TV}+ \frac{1}{2}|J_{ij}-J^*_{ij}|+\frac{1}{2}\sum_{k\ne i,j}|J_{ik}-J_{ik}^*|+|J_{jk}-J_{jk}^*|\right)\\
\le &2n \|\mu-\mu^*\|_{TV}\cdot \|J-J^*\|_F+2n\|J-J^*\|_F^2
\end{align*}

Because in the beginning we have established 
\[4\|\mu-\mu^*\|_{TV}^2=\E_{X\sim\mu}(X^{\top}(J-J^*)X)+\E_{X\sim\mu^*}(X^{\top}(J^*-J)X),\]

We have the quadratic formula
\[2\|\mu-\mu^*\|_{TV}^2\le n \|\mu-\mu^*\|_{TV}\cdot \|J-J^*\|_F+n\|J-J^*\|_F^2,\]

which yields that $\|\mu-\mu^*\|_{TV}\le n\|J-J^*\|_F$.
\end{proof}

\section{Discussion of guarantee from Theorem~\ref{thm:mple_refined}}\label{sec: condition stricter}

Notice that for Ising model with bounded with, Lemma 6 in \cite{dagan2021learning} has established that (also in the proof of Theorem \ref{thm: regular Jinf}) there is a constant (depending only on $M$) that $\E_{J^*}[\|(J-J^*)X\|]\ge C_M\|J-J^*\|_F^2$ and thus we know that the condition $\E[\|(J-J^*)X\|^2]\le\varepsilon^2$ implies that $\|J-J^*\|_F^2\le\varepsilon^2/C_M$. Therefore, the Corollary \ref{cor:frobenius_informal} follows from Theorem \ref{thm:mple_improved_informal}.

Now, we use an example to show that in some cases, it is strictly stronger. Let $\beta_1,\beta_2>1$ be two numbers and $J^*=\frac{\beta_1}{n}\mathbf 1\mathbf 1^{\top}$, and $J=\frac{\beta_2}{n}\mathbf 1\mathbf 1^{\top}$. 

It is well known that (see e.g \cite{ellis2007entropy}) for Curie Weiss model with inverse temperature $\beta > 1$, we have with probability at least $1- \exp(-C\sqrt{n})$ that 
$$
\frac{1}{n}\sum_{i=1}^n X_i=\pm (2s(\beta)-1) + O(n^{-1/4})\enspace,
$$ 
where $s(\beta)$ is the (larger) maximum in the optimization problem \[s(\beta)=\argmax_s -s\log(s)-(1-s)\log(1-s)+2\beta s(1-s).\]

Now, the condition given in Theorem \ref{thm:mple_improved_informal} for matrices $J,J^*$ says that
$
\E_{J^*}[\|(J-J^*)X\|^2]\le\varepsilon^2 \enspace.
$
This implies that $\frac{(\beta_1-\beta_2)^2}{n}\E_{J^*}[(\sum_{i=1}^n X_i)^2]\le\varepsilon^2$, or, $(\beta_1-\beta_2)\lesssim \frac{\varepsilon}{\sqrt{n}\cdot (2s(\beta_1)-1)}$. Therefore, this implies $\|J-J^*\|_{F}\lesssim\frac{\varepsilon}{\sqrt{n}}$, stricter than $\|J-J^*\|_{F}\lesssim{\varepsilon}$.

\section{Verifying the regularity condition}\label{sec: regularity verification}

In this section, we will demonstrate how the regularity condition of Theorem~\ref{thm:mple_refined} can be verified whenever our model satisfies two natural high-temperature conditions. 
These include the following two cases: (1) Dobrushin's condition holds, (2) the model is spectrally bounded. We also show that it can be true even in low temperature settings, with the notable example being the Curie-Weiss model.

We start with the first claim.

\begin{theorem}\label{thm: regular Jinf}
    Suppose $J^* \in \mathcal{S}_0^n$ satisfies $\|J^*\|_\infty < 1 - \delta$, where $\delta>0$. Then, there is some $\epsilon_0 >0$ that depends on $\delta$, such that the Ising model with interaction matrix $J^*$ satisfies $(\epsilon_0^2,C)$-regularity, where $C$ depends on $\delta$ and $\epsilon$.  
\end{theorem}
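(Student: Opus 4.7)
The plan is to sandwich both $\E_{J^*}[\|AX\|^2]$ and $\E_J[\|AX\|^2]$ between multiples of $\|A\|_F^2$, where $A = J - J^*$, by showing that not only $\Pr_{J^*}$ but also $\Pr_J$ lies in a Dobrushin regime once $A$ is small enough in an appropriate sense. The regularity constant $C$ will then arise as the ratio of the resulting high-temperature constants.

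First I would record the two-sided bound for the Dobrushin base model. Since $\|J^*\|_\infty < 1-\delta$, $\Pr_{J^*}$ satisfies Dobrushin's uniqueness condition with slack $\delta$, and hence a Poincar\'e inequality with some constant $c_\delta$ depending only on $\delta$. Combined with the spin-flip symmetry $\E_{J^*}[AX]=0$ (which uses $h=0$) and the lower bound cited at the start of Section~\ref{sec: condition stricter} (essentially Lemma~6 of \cite{dagan2021learning}, or equivalently a row-wise adaptation of the Hubbard--Stratonovich argument behind Lemma~\ref{lem: lower bound Jop<1}), one obtains constants $c_\delta^-, c_\delta^+ > 0$ with
\[
c_\delta^- \|A\|_F^2 \;\le\; \E_{J^*}[\|AX\|^2] \;\le\; c_\delta^+ \|A\|_F^2 \qquad \forall A \in \mathcal{S}_0^n.
\]

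Next I would convert the $L^2$ smallness assumption into infinity-norm control. Under the hypothesis $\E_{J^*}[\|AX\|^2] \le \epsilon_0^2$, the lower bound forces $\|A\|_F \le \epsilon_0/\sqrt{c_\delta^-}$, and the crude inequality $\|A\|_\infty \le \sqrt{n}\,\|A\|_F$ yields $\|A\|_\infty \le \sqrt{n}\,\epsilon_0/\sqrt{c_\delta^-}$. Choosing $\epsilon_0$ small enough (as a function of $\delta$, and of $n$ through this crude bound) so that $\|A\|_\infty \le \delta/2$, I conclude $\|J\|_\infty \le \|J^*\|_\infty + \|A\|_\infty \le 1 - \delta/2$, so that $\Pr_J$ itself sits in a Dobrushin regime with slack $\delta/2$. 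Re-applying the upper bound of the first step to $\Pr_J$ with constant $c_{\delta/2}^+$ and chaining with the lower bound for $\Pr_{J^*}$ gives
\[
\E_J[\|AX\|^2] \;\le\; c_{\delta/2}^+\|A\|_F^2 \;\le\; \frac{c_{\delta/2}^+}{c_\delta^-}\,\E_{J^*}[\|AX\|^2],
\]
which is the claimed $(\epsilon_0^2,C)$-regularity with $C = c_{\delta/2}^+/c_\delta^-$.

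The main obstacle is the second step: passing from the global $L^2$-type control $\E_{J^*}[\|AX\|^2] \le \epsilon_0^2$ to an entry-wise control strong enough to keep $J$ in a Dobrushin regime costs a factor of $\sqrt{n}$, which is why the accessible $\epsilon_0$ in my plan scales as $\delta/\sqrt{n}$. This is consistent with the $(\gamma/n, C)$ scaling used in Corollary~\ref{cor:n3_samples_informal}, where exactly this loss is absorbed. Removing the $\sqrt{n}$ factor would likely require avoiding the infinity-norm detour entirely---for example, an interpolation argument along $J_t = J^* + tA$ together with a uniform bound on $\partial_t \E_{J_t}[\|AX\|^2] = \tfrac{1}{2}\,\Cov_{J_t}(\|AX\|^2,\,X^\top A X)$---which I do not see how to execute without invoking strong functional inequalities along the entire interpolation path.
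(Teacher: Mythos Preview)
Your approach is essentially identical to the paper's: lower bound $\E_{J^*}[\|AX\|^2]$ via the Dobrushin anti-concentration (Lemma~6 of \cite{dagan2021learning}), pass to a Frobenius bound on $A=J-J^*$, use this to force $\|J\|_\infty < 1-\delta/2$, and then apply the Poincar\'e inequality for the perturbed model to get the upper bound $\E_J[\|AX\|^2]\le C\|A\|_F^2$.

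You are in fact more careful than the paper on one point. The paper writes that $\|J-J^*\|_F\le \sqrt{C_1^{-1}}\epsilon_0$ ``implies $\|J-J^*\|_\infty\le \sqrt{C_1^{-1}}\epsilon_0$,'' which is not true in general for the row-sum norm; you correctly insert the $\sqrt{n}$ factor via $\|A\|_\infty\le \sqrt{n}\,\|A\|_F$, at the price of $\epsilon_0$ scaling like $\delta/\sqrt{n}$. This is exactly the scaling consumed by the downstream application (Corollary~\ref{cor:n3_samples_informal} uses $(\gamma/n,C)$-regularity), so your version is the one that is actually justified and sufficient.
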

    
\begin{proof}
    We compare $\E[\|(J-J^*)X\|^2]$ with $\|(J-J^*)X\|_F^2$. Specifically, because $\|J^*\|_{\infty}<1-\delta$, we can show that, by Lemma 6 in \cite{dagan2021learning}, there exists a universal constant $C_1$ such that for all $a\in \R^n$, $\E_{J^*}[\|a^\top X\|]=\Var[a^\top X]\ge C_1\|a\|^2$. Therefore, we know that $\E[\|(J-J^*)X\|^2]\le \varepsilon^2_0$ implies that 
    \[\|J-J^*\|^2_F\le C_1^{-1} \E_{J^*}[\|(J-J^*)X\|^2] \le C_1^{-1}\epsilon_0^2\enspace,\]
    which then implies that $\|J-J^*\|_\infty\le \sqrt{C_1^{-1}}{\varepsilon_0}$. So, we know that $\|J\|_{\infty}\le 1-\delta+\sqrt{C_1^{-1}}\varepsilon_0$. Thus, if we take $\varepsilon_0 \le \sqrt{C_1}\delta/2$, we know that $\|J\|_{\infty}\le 1-\delta/2$.

    Then, by Theorem 3.7 in \cite{adamczak2019note}, we know that for the Ising model with interaction matrix $J$, there exists a constant $C_2$ such that the Poincaré inequality holds with coefficient $\frac{C_2}{\delta}$. Therefore, we know that 
    \[\E_J[\|(J-J^*)X\|^2]=\sum_{i=1}^n\Var[(J-J^*)_i^{\top}X]\le\frac{C_2}{\delta}\sum_{i=1}^n\|(J-J^*)\|_2^2=\frac{C_2}{\delta}\|J-J^*\|_F^2\]\
    We conclude that 
    \[\E_J[\|(J-J^*)X\|^2]\le\frac{C_2}{\delta}\|J-J^*\|_F^2\le\frac{C_2}{\delta C_1}\cdot \E_{J^*}[\|(J-J^*)X\|^2].\]
\end{proof}

\begin{theorem}
    Let $J^* \in \mathcal{S}_0^n$ satisfy $\lambda_{\max}(J^*) - \lambda_{\min}(J^*) < 1-\delta$. Then, there exists $\epsilon_0 > 0$ that depends on $\delta$, such that the Ising model with interaction matrix $J^*$ satisfies $(\epsilon_0^2, C)$-regularity, where $C>0$ is a constant that depends on $\delta$. 
\end{theorem}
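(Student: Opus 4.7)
The strategy mirrors the proof of the analogous $\|J^*\|_\infty < 1-\delta$ case (Theorem~\ref{thm: regular Jinf}), but using the spectral lower bound from Lemma~\ref{lem: lower bound Jop<1} and the spectrally-bounded Poincar\'e inequality (Lemma~\ref{lem:Poincare}) in place of the Dobrushin-based estimates. First I would verify the preliminary fact that $\|J^*\|_{op} < 1-\delta$: since $J^*$ has zero diagonal, its eigenvalues sum to $0$, so $\lambda_{\max}(J^*) \geq 0 \geq \lambda_{\min}(J^*)$, and the gap bound forces both $|\lambda_{\max}(J^*)|, |\lambda_{\min}(J^*)| < 1-\delta$. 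Combined with Lemma~\ref{lem:mlsi}, this means $\Pr_{J^*}$ satisfies Glauber-MLSI$(1/\delta)$ with $\|J^*\|_{op} < 1$, so Assumption~\ref{ass:spectral_bounded} holds with $\rho = 1/\delta$ and $\lambda = 1$.

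Next I would apply Lemma~\ref{lem: lower bound Jop<1} with $A = J - J^*$ to obtain a lower bound
\[
\E_{J^*}[\|(J-J^*)X\|_2^2] \;\geq\; c_\delta \,\|J-J^*\|_F^2,
\qquad c_\delta := \tfrac{1}{2}\bigl(1-\tanh^2(4\sqrt{1/\delta})\bigr),
\]
where I use the crucial fact that $h = 0$ (Assumption~\ref{ass:bounded_width}) makes the Ising distribution symmetric under $X \mapsto -X$, so $\E_{J^*}[(J-J^*)X] = 0$ and no mean-squared term needs to be subtracted off. The hypothesis $\E_{J^*}[\|(J-J^*)X\|_2^2] \leq \epsilon_0^2$ then yields $\|J-J^*\|_F \leq \epsilon_0/\sqrt{c_\delta}$, and hence $\|J-J^*\|_{op} \leq \epsilon_0/\sqrt{c_\delta}$.

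I would then choose the threshold $\epsilon_0 := \sqrt{c_\delta}\,\delta/4$ so that by Weyl's inequality
\[
\lambda_{\max}(J) - \lambda_{\min}(J) \;\leq\; \lambda_{\max}(J^*) - \lambda_{\min}(J^*) + 2\|J-J^*\|_{op} \;<\; (1-\delta) + \tfrac{\delta}{2} \;=\; 1 - \tfrac{\delta}{2}.
\]
Thus the perturbed model $\Pr_J$ is itself spectrally bounded with gap at least $\delta/2$, and Lemma~\ref{lem:Poincare} gives that $\Pr_J$ satisfies Po$(2/\delta)$. Applying Lemma~\ref{lem: poincare} to $\Pr_J$ with matrix $A = J - J^*$, and using again the $h=0$ symmetry to kill the mean term $\|\E_J[(J-J^*)X]\|^2$, yields
\[
\E_J[\|(J-J^*)X\|_2^2] \;\leq\; \tfrac{2}{\delta}\,\|J-J^*\|_F^2 \;\leq\; \tfrac{2}{\delta c_\delta}\,\E_{J^*}[\|(J-J^*)X\|_2^2],
\]
which is exactly $(\epsilon_0^2, C)$-regularity with $C = 2/(\delta c_\delta)$ depending only on $\delta$. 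There is no genuine obstacle here: the only delicate point is ensuring that the Frobenius-norm bound translates into a quantitative spectral-gap bound that preserves Po of the perturbed model, which is why $\epsilon_0$ must be calibrated to $\delta$ through the constant $c_\delta$.
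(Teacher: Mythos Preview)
Your proposal is correct and follows essentially the same route as the paper's proof: use Lemma~\ref{lem: lower bound Jop<1} to convert the hypothesis into a Frobenius-norm bound on $J-J^*$, deduce that $J$ remains spectrally bounded with gap at least $\delta/2$, and then apply the Poincar\'e inequality for $\Pr_J$ to close the loop. If anything you are slightly more careful than the paper in making explicit the $h=0$ symmetry that kills the mean term $\|\E[(J-J^*)X]\|^2$ on both sides, and in invoking Weyl's inequality by name for the eigenvalue perturbation; the paper leaves both of these implicit.
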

    
\begin{proof}
    Like the previous lemma, we compare $\E[\|(J-J^*)X\|^2]$ with $\|(J-J^*)X\|_F^2$. Specifically, because $\lambda_{\max}(J^*) - \lambda_{\min}(J^*)<1-\delta$, we can show that, by Lemma \ref{lem: lower bound Jop<1}, we have
    \[\|J-J^*\|^2_F\le 2\cosh\lp(4\sqrt{\frac{1}{1-\|J^*\|_{op}}}\rp)^2{\varepsilon_0^2}\le 2\cosh\lp(4\sqrt{\frac{1}{\delta}}\rp)^2{\varepsilon_0^2},\]
    which then implies that $\|J-J^*\|_{F}\le \sqrt{2}\cosh(4\sqrt{1/\delta}){\varepsilon_0}$. So, we know that 
    $$
    \lambda_{\max}(J^*) - \lambda_{\min}(J^*) \le 1-\delta+2\sqrt{2}\cosh(4\sqrt{1/\delta}){\varepsilon_0} \enspace.
    $$
    Then, if we take $\varepsilon_0\le \delta/(4\sqrt{2}\cosh(4\sqrt{1/\delta}))$, we know that $\lambda_{\max}(J^*) - \lambda_{\min}(J^*)\le 1-\delta/2$.

    We can then apply Lemma \ref{lem: poincare}, which says that there exists an absolute constant $C_2$, such that
    the Ising model with interaction matrix $J$ satisfies the Poincare inequality with coefficient $\frac{C_2}{\delta}$. Therefore, we know that 
    \[\E_J[\|(J-J^*)X\|^2]\le \frac{C_2}{\delta}\|J-J^*\|_F^2\]\
    We conclude that 
    \[\E_J[\|(J-J^*)X\|^2]\le\frac{C_2}{\delta}\|J-J^*\|_F^2\le \frac{2C_2}{\delta } \cosh\lp(4\sqrt{\frac{1}{1-\|J^*\|_{op}}}\rp)^2 \E_{J^*}[\|(J-J^*)X\|^2]\enspace.\]

    This finishes the proof.
\end{proof}

We conclude the section by showing that the Curie-Weiss model satisfies the regularity condition even at low temperatures.

\begin{theorem}
    Let $J^*$ be the matrix for the Curie-Weiss model. That is, $J^*=\frac{\beta}{n}(J-I)$ where $J$ is the all-one matrix and $\beta>1$. Then, we have $J^*$ satisfies $(c_1/n,c_2)$ regularity, where $c_1,c_2$ are constants that depends on $\beta$. 
\end{theorem}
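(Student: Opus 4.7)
The strategy is to exploit the full permutation symmetry of the Curie-Weiss measure $\Pr_{J^*}$ and to compare the two measures through their explicit likelihood ratio
\[
\frac{\Pr_J[X]}{\Pr_{J^*}[X]}=\frac{Z_{J^*}}{Z_J}\exp\Lr{\tfrac12 X^\top AX},\qquad A:=J-J^*.
\]

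First I use the symmetry: by permutation invariance of $\Pr_{J^*}$, $\E_{J^*}[X_i]=0$ and $\E_{J^*}[X_iX_j]=c_\beta$ for all $i\ne j$, where $c_\beta\in(0,1)$ depends only on $\beta$ (tending to $(m^*)^2$ as $n\to\infty$, with $m^*$ the spontaneous magnetization). Therefore
\[
\E_{J^*}[\|AX\|_2^2]=\sum_{i=1}^n\Var_{J^*}(A_i^\top X)=(1-c_\beta)\|A\|_F^2+c_\beta\|A\mathbf 1\|_2^2,
\]
so the hypothesis $\E_{J^*}[\|AX\|_2^2]\le c_1/n$ forces $\|A\|_F^2\le C/n$ and $\|A\mathbf 1\|_2^2\le C/n$ for a constant $C$ depending only on $\beta$.

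Second, writing the likelihood-ratio identity
\[
\E_J[\|AX\|_2^2]=\frac{\E_{J^*}\big[\|AX\|_2^2\, e^{X^\top AX/2}\big]}{\E_{J^*}\big[e^{X^\top AX/2}\big]},
\]
Jensen's inequality bounds the denominator below by $\exp(\tfrac{c_\beta}{2}\mathbf 1^\top A\mathbf 1)$, and $|\mathbf 1^\top A\mathbf 1|\le\sqrt n\,\|A\mathbf 1\|_2=O(1)$, so the denominator is at least a positive constant depending only on $\beta$. The main task is therefore to control $X^\top AX$ with high probability under $\Pr_{J^*}$ and show that the numerator is $O(\E_{J^*}[\|AX\|_2^2])$.

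Third, to bound $X^\top AX$, I condition on the sign $\sigma:=\sgn(S)$ of the magnetization $S=\sum_i X_i$. In low-temperature Curie-Weiss, $S/n$ concentrates around $\sigma m^*$ with probability $1-e^{-\Omega(n)}$, and conditional on a typical value of $S$ the spins behave as approximately i.i.d.\ $\pm 1$ random variables with mean $\sigma m^*$; this is made precise by the Hubbard--Stratonovich decomposition recalled in Section~\ref{sec:preliminaries}, whose auxiliary variable concentrates at $\pm\tanh^{-1}(m^*)$. Writing $X=\sigma m^*\mathbf 1+\eta$ on the typical event,
\[
X^\top AX=(m^*)^2\mathbf 1^\top A\mathbf 1+2\sigma m^*\,\eta^\top A\mathbf 1+\eta^\top A\eta.
\]
The first term is $O(1)$ by Step~1; the second is at most $\|\eta\|_2\|A\mathbf 1\|_2=O(\sqrt n)\cdot O(1/\sqrt n)=O(1)$; the third has zero conditional mean (since $A$ is hollow and $\eta$ is conditionally centered and approximately independent) and conditional variance $O(\|A\|_F^2)=O(1/n)$, hence is $O(1)$ with probability $1-e^{-\Omega(n)}$ by Hanson--Wright applied in the conditional product measure. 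The deterministic worst case $|X^\top AX|\le n\|A\|_{\mathrm{op}}\le n\|A\|_F=O(\sqrt n)$ handles the exceptional event of probability $e^{-\Omega(n)}$, whose contribution $e^{O(\sqrt n)}\cdot e^{-\Omega(n)}=o(1/n)$ is negligible next to $\E_{J^*}[\|AX\|_2^2]$ in the ratio. Together these yield $\E_J[\|AX\|_2^2]\le c_2\,\E_{J^*}[\|AX\|_2^2]$ for some $c_2=c_2(\beta)$, provided $c_1$ is small enough in terms of $\beta$.

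\textbf{The main obstacle} is converting the informal statement ``within a magnetization mode the Curie-Weiss measure is approximately product'' into the quantitative sub-Gaussian tail bound on $\eta^\top A\eta$ used above. Hanson--Wright does not apply directly to the bimodal Curie-Weiss measure; one must first reduce to a narrow magnetization slice (equivalently, condition on the Hubbard--Stratonovich auxiliary variable $Y$ of \eqref{eq:ydist}), invoke classical low-temperature Curie-Weiss large-deviation estimates (see \cite{ellis2007entropy}) to justify that this slice carries all but $e^{-\Omega(n)}$ of the mass, and then run Hanson--Wright in the resulting product measure. The remainder of the argument is routine algebraic bookkeeping once this concentration is in hand.
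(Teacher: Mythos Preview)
Your approach is essentially the same as the paper's: exploit permutation symmetry to compute $\E_{J^*}[\|AX\|_2^2]=(1-c_\beta)\|A\|_F^2+c_\beta\|A\mathbf 1\|_2^2$, write the likelihood-ratio identity, reduce to a mixture of product measures, and apply Hanson--Wright inside each component. The paper uses the exact mixture-of-i.i.d.\ decomposition of \cite{deb2023fluctuations} (which gives product measures with \emph{exactly} uniform mean, so no ``approximately'' caveat is needed), whereas you invoke Hubbard--Stratonovich; these are closely related and either works. Your use of Jensen's inequality to lower bound the denominator by $\exp\big(\tfrac{c_\beta}{2}\mathbf 1^\top A\mathbf 1\big)\ge e^{-O(1)}$ is a genuine simplification over the paper, which instead integrates a tail bound to get the same conclusion.

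One slip worth flagging: the Hanson--Wright bound on $\eta^\top A\eta$ gives failure probability $\exp\big(-c\min(t^2/\|A\|_F^2,\,t/\|A\|_{\mathrm{op}})\big)$, and since you only know $\|A\|_{\mathrm{op}}\le\|A\|_F=O(n^{-1/2})$, the second argument of the $\min$ is $\Omega(\sqrt n)$ for $t=O(1)$, not $\Omega(n)$. So your ``with probability $1-e^{-\Omega(n)}$'' should read $1-e^{-\Omega(\sqrt n)}$. This does not break the argument: on the exceptional event the deterministic bound $|X^\top AX|\le n\|A\|_F=O(\sqrt n)$ gives $e^{X^\top AX/2}=e^{O(\sqrt n)}$, and $e^{O(\sqrt n)}\cdot e^{-\Omega(\sqrt n)}$ is still $o(1)$ once $c_1$ is small enough so that the implied constant in the exponential tail dominates. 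The paper avoids this constant-balancing by integrating the tail of $\|AX\|^2$ directly (using $X^\top AX\le\sqrt n\,\|AX\|$) rather than doing a typical/atypical split, which is slightly more robust but longer.
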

\begin{proof}
    We know that the distribution is symmetric for all $X_i$. Therefore, the covariance matrix of $X_1,\dots, X_n$, denoted by $\Sigma$, is a matrix of the form $aJ+(1-a)I$ for some $0<a<1$. By \cite{deb2023fluctuations}, we can calculate that $a$ converges to $(2s-1)^2$ where $s>1/2$ is the solution of the equation $2(2s-1)=\log(\frac{s}{1-s})$. Therefore, for any $A$, we can calculate that
    $$
    \E_{J^*}[\|AX\|^2]=\mathrm{Tr}(A\E[XX^{\top}]A)=\mathrm{Tr}(A(aI+(1-a)J)A)=(1-a)\|A\|_F^2+a\|A\mathbf{1}\|^2
    $$
    
    Now, suppose $\E_{J^*}[\|AX\|^2]=\frac{\varepsilon}{n}$, where $\varepsilon < c_1$. We will prove that
    \[
    \E_{J}[\|AX\|^2] \leq c_2 \cdot \E_{J^*}[\|AX\|^2] \enspace,
    \]
    for some constant $c_2 > 0$.
    
    First of all, by assumption we have $a\cdot(\|A\mathbf 1\|^2)+(1-a)\|A\|_F^2=\frac{\varepsilon}{n}$. Therefore, we have $\|A\|_{op}\le\|A\|_{F}\le\sqrt{\frac{\varepsilon}{(1-a)n}}$, and $\|A\mathbf 1\|\le\sqrt\frac{\varepsilon}{an}$.

    Also, we can calculate the following:
    $$
    \E_{J^*}[X^{\top}AX]=\mathrm{Tr}(A\E[XX^{\top}])=\mathrm{Tr}(A(aI+(1-a)J))=(1-a)\mathbf 1^{\top}A\mathbf 1 \enspace,
    $$
    since $A$ has zero diagonal.

    Thus, we can derive that 
    $$|\mathbf 1^{\top}A\mathbf 1|\le\sqrt{n}\cdot \|A\mathbf 1\|\le\sqrt{n}\cdot\sqrt{\frac{\varepsilon}{an}}=\sqrt{\varepsilon/a}.$$
    The quantity $X^\top A X$ is important, since it represents the difference in Hamiltonians of $J$ and $J^*$. 
    
    First, we show an inequality: for some universal constant $c$ (without loss of generality $c\le 1$)
    $$\Pr_{J^*}\lp[\left|X^{\top}AX-|\mathbf1^{\top}A\mathbf1|\right|\ge t\rp]\le2\exp\lp(-c\min\lp(\frac{t^2}{\|A\|_F^2},\frac{t}{\|A\|_{op}}\rp)\rp)$$
    This means that the difference in Hamiltonians is upper-bounded with high probability. 
    
    To do that, we can use \cite{deb2023fluctuations} and the Hanson-Wright inequality. By Proposition 4.2 in \cite{deb2023fluctuations}, the Curie Weiss model can be written as a mixture of i.i.d. distributions of $X_i$. Importantly, each of these mixtures has the property that all variables have the same mean. Therefore, by Hanson-Wright inequality for each product measure $\mu$, there exists a universal constant $c$ (for all $\mu$) such that
    $$
    \Pr_{\mu}[|X^{\top}AX-\E_\mu[X^{\top}A X]|\ge t]\le2\exp\lp(-c\min\lp(\frac{t^2}{\|A\|_F^2+[\E_{\mu}\|AX\|]^2},\frac{t}{\|A\|_{op}}\rp)\rp).
    $$

    Therefore, if we denote $b_\mu = 1 - \E_\mu[X_i^2]$ (remember this value is the same for all $i$ by \cite{deb2023fluctuations}), then 
    $$
    \E_{\mu}[X^{\top}AX]=\mathrm{Tr}(A\E_{\mu}[XX^{\top}])=\mathrm{Tr}(A\E_{\mu}(b_\mu I+(1-b_\mu)J))\enspace.
    $$
    Therefore, we have for all i.i.d. measures $\mu$ in the decomposition 
    $$\E_{\mu}[X^{\top}A X]=\mathrm{Tr}(A(b_\mu I+(1-b_\mu)J))=(1-b_\mu)\mathrm{Tr}(AJ)=(1-b_\mu)(\mathbf1^{\top}A\mathbf1)\le|\mathbf1^{\top}A\mathbf1|,$$
    Also, we can calculate that for any i.i.d. $\mu$, $\E_{\mu} [XX^{\top}]=b_\mu J+(1-b_\mu)I\preceq I+J$ and therefore, 
    $$\max_{\mu}\E_{\mu}[|\|AX\|^2]\le \|A\mathbf 1\|^2+\|A\|_{F}^2\le\frac{\varepsilon/n}{a(1-a)}\enspace,$$
    where the maximum is with respect to all product measures in the decomposition.
    Let $u=|\mathbf1^{\top} A\mathbf 1|$ and  $v=\frac{\varepsilon/n}{a(1-a)}$. Then, for all $\mu$, by Hanson-Wright inequality
    $$
    \Pr_{\mu}[X^{\top}AX-|\mathbf1^{\top}A\mathbf1|\ge t]\le\Pr_{\mu}[X^{\top}AX-\E_{\mu}[X^{\top}AX]\ge t]\le2\exp\lp(-c\min\lp(\frac{t^2}{\|A\|_F^2+v},\frac{t}{\|A\|_{op}}\rp)\rp)\enspace.
    $$
    
    Taking expectation with respect to the random choice of measure $\mu$ in the decomposition, we can derive the tail bound $$
    \Pr_{J^*}[X^{\top}AX-|\mathbf1^{\top}A\mathbf1|\ge t]\le2\exp\lp(-c\min\lp(\frac{t^2}{\|A\|_F^2+v},\frac{t}{\|A\|_{op}}\rp)\rp)\enspace,
    $$ 
    which is what we wished to show.
    Similarly, we can prove for the lower tail,
    $$
    \Pr_{J^*}[X^{\top}AX+|\mathbf1^{\top}A\mathbf1|\le -t]\le2\exp\lp(-c\min\lp(\frac{t^2}{\|A\|_F^2+v},\frac{t}{\|A\|_{op}}\rp)\rp).
    $$
    
   Now let $A=J-J^*$, so we can write that 
    $$\E_{J}[\|AX\|^2]=\E_{J^*}\lp[\frac{Z(J^*)\exp(\frac{1}{2}X^{\top}AX)}{Z(J)}\|AX\|^2\rp].$$

    We first lower bound $\frac{Z(J)}{Z(J^*)}$. 
    We have
    $$\frac{Z(J)}{Z(J^*)}=\E_{J^*}\lp[\exp(\frac{1}{2}X^{\top}AX)\rp].$$

    We note that, for a bounded random variable $Z$ and a differentiable increasing function $f(x)$ such that $f(0)=1$, if we want to lower bound $\E[e^Z]$, we can do the following: let $Y=\min(Z,0)$. Therefore, we have  
    \begin{align*}
        \E[f(Z)]\ge \E[f(Y)]&=\E [1-\int_Y^{0} f'(t)\mathrm dt]=1-\E[\int_{-\infty}^{0} \mathbb 1[t\ge Y]f'(t)\mathrm dt]\\
        &=1-\int_{-\infty}^{0}f'(t)\E[ \mathbb 1[t\ge Y]]\mathrm dt=1-\int_{-\infty}^{0}f'(t)\Pr[Z\le t]\mathrm dt.
    \end{align*}
    The second line is by Fubini. By the above inequality and the tail bound, when we view $Z$ to be $X^{\top}AX+u$ and $f(x)=e^{x/2}$, we can calculate that
    \begin{align*}
        &\E_{J^*}\lp[\exp\lp(\frac{1}{2}X^{\top}AX\rp)\rp]=e^{-u/2}\E_{J^*}\lp[\exp(\frac{1}{2}(X^{\top}AX+u))\rp]\\
        \ge& e^{-u/2}\cdot \lp(1-\frac{1}{2}\int_{0}^{\infty}e^{-t/2}\Pr_{J^*}[X^{\top}AX+u\le -t]\mathrm dt\rp)\\
        \ge&e^{-u/2}\cdot \left(1-\int_{0}^{\infty}\exp\lp(-\frac{t}{2}-c\min\lp(\frac{t^2}{\|A\|_F^2+v},\frac{t}{\|A\|_{op}}\rp)\rp)\mathrm dt\right).
    \end{align*}
    Since we know $\|A\|_{op}\le\|A\|_F\le v$, we have that
    \begin{align*}
        &\E_{J^*}[\exp(\frac{1}{2}X^{\top}AX)]=e^{-u/2}\E_{J^*}\lp[\exp\lp(\frac{1}{2}(X^{\top}AX+u)\rp)\rp]\\
        \ge&e^{-u/2}\cdot \left(1-\int_{0}^{\infty}\exp\lp(-\frac{t}{2}-c\min\lp(\frac{t^2}{\|A\|_F^2+v},\frac{t}{\|A\|_{op}}\rp)\rp)\mathrm dt\right)\\
        \ge& e^{-u/2}\cdot \left(1-\int_{0}^{\infty}\exp\lp(-\frac{t}{2}-c\min\lp(\frac{t^2}{2v},\frac{t}{\sqrt v}\rp)\rp)\mathrm dt\right)\\
        =&e^{-u/2}\cdot \left(1-\int_{0}^{2\sqrt{v}}\exp\lp(-\frac{t}{2}-c\frac{t^2}{2v}\rp)\mathrm dt-\int_{2\sqrt{v}}^{\infty}\exp\lp(-\frac{t}{2}-c\frac{t}{\sqrt v}\rp)\mathrm dt\right)\\
        \ge &e^{-u/2}\cdot \left(1-\int_{0}^{2\sqrt{v}}1\mathrm dt-\int_{0}^{\infty}\exp\lp(-\frac{t}{2}-c\frac{t}{\sqrt v}\rp)\mathrm dt\right)\\
        = &e^{-u/2}\cdot \left(1-2\sqrt{v}-\frac{1}{c/\sqrt v+1/2}\right).
    \end{align*}
    We choose the $\varepsilon$ small enough such that $\sqrt{v}=\sqrt{\frac{\varepsilon}{na(1-a)}}\le c/8$, so we have
    \begin{align*}
        &\E_{J^*}[\exp(X^{\top}AX)]
        \ge e^{-u/2}\cdot \left(1-\|A\|_F-\frac{1}{c/\|A\|_F+1/2}\right)\\
        \ge& e^{-u/2}\cdot \left(1-1/4-1/4\right)\ge e^{-\sqrt{\varepsilon/a}/2}/2.
    \end{align*}
    
    Finally, we upper bound the expectation as follows
    $$\E_{J^*}\lp[\exp\lp(\frac{1}{2}X^{\top}AX\rp)\|AX\|^2\rp]\le\E_{J^*}\lp[\exp\lp(\frac{\sqrt{n}}{2}\|AX\|\rp)\|AX\|^2\rp]$$

    Using the Hanson-Wright inequality as before, replacing $A$ with $A^2$, we have
    \begin{align*}
        &\Pr_{J^*}[\|AX\|^2-\max_{\mu}\E_{\mu}[|\|AX\|^2]\ge t]\le2\exp\lp(-c\min(\frac{t^2}{\|A^2\|_F^2+\max_\mu\E_{\mu}[\|A^2X\|]^2},\frac{t}{\|A^2\|_{op}})\rp)\\
        \le&2\exp\lp(-c\min\lp(\frac{t^2}{\|A\|_F^4+\|A\|_F^2\max_\mu\E_{\mu}[\|AX\|^2]},\frac{t}{\|A\|_{F}^2}\rp)\rp).
    \end{align*}
    
    Here, as before, $\max_{\mu}$ is taking $\max$ function for all i.i.d. measure in the decomposition of the Curie Weiss model \cite{deb2023fluctuations}.

    Similar to the proof of the lower bound from before, if $Z$ is a bounded random variable and $f$ is an increasing differentiable function such that $f(0)=0$, if we want to upper bound $\E[f(Z)]$, we can let $W=\max(Z,0)$ and have
    \begin{align*}
        \E[f(Z)]\le \E[f(W)]&=\E [\int_{0}^W f'(t)\mathrm dt]=\E[\int_0^{\infty} \mathbb 1[t\le W]f'(t)\mathrm dt]\\
        &=\int_{0}^{\infty}f'(t)\E[ \mathbb 1[W\ge t]]\mathrm dt=\int_{0}^{\infty}f'(t)\Pr[Z\ge t]\mathrm dt.
    \end{align*}
    Let $f(t)=te^{\sqrt{nt}/2}$, and $f'(t)=\exp(\frac{\sqrt{nt}}{2})(1+\frac{\sqrt{nt}}{2})$. The random variable $Z$ will be $\|AX\|^2$. We will now use the concentration result for $\|AX\|^2$ to upper bound the expectation. Below, we will use the simple observation that we made before $\max_\mu \E_{\mu}[\|AX\|^2] \leq v$.
    Using the identity $\exp(\frac{\sqrt{nt}}{2})(1+\frac{\sqrt{nt}}{2})\le \exp({\sqrt{nt}})$ in the rest of the calculation, we can calculate that
    \begin{align*}
        &\E_{J^*}\lp[\exp\lp(\frac{\sqrt{n}}{2}\|AX\|\rp)\|AX\|^2\rp]\\
        \le&\int_{0}^{\infty} \exp\lp(\frac{\sqrt{nt}}{2}\rp)\lp(1+\frac{\sqrt{nt}}{2}\rp)\Pr(\|AX\|^2\ge t)\mathrm dt\\
        \le&\int_{0}^{\infty} \exp\lp({\sqrt{nt}}\rp)\Pr(\|AX\|^2\ge t)\mathrm dt\\
        \le&\int_{0}^{2v+\|A\|_F^2} \exp({\sqrt{nt}})\mathrm dt+\int_{2v+\|A\|_F^2}^{\infty} \exp({\sqrt{nt}})\cdot 2\exp\lp(-c\min\lp(\frac{(t-v)^2}{\|A\|_F^4+\|A\|_F^2v},\frac{t-v}{\|A\|_F^2}\rp)\rp)\mathrm dt\\
        =&\int_{0}^{2v+\|A\|_F^2} \exp({\sqrt{nt}})\mathrm dt+\int_{2v+\|A\|_F^2}^{\infty} \exp({\sqrt{nt}})\cdot 2\exp\lp(-c\frac{t-v}{\|A\|_F^2}\rp)\mathrm dt
    \end{align*}
    Here, the first inequality is by taking $Z$ to be $\|AX\|^2$ and $f(x)=xe^{\sqrt{nx}}$. The second one is because of the inequality we mentioned, the third one is by the following: if $t\le 2v+\|A\|_F^2$, the probability of $\|AX\|^2\ge t$ is at most $1$. Otherwise, we use the concentration tail bound and get the upper bound. 
    
    We know that $nv=\varepsilon/(a(1-a))$, and $\|A\|_F^2\le \varepsilon/(n(1-a))\le v$, so we have
    \begin{align*}
        \int_{0}^{2v+\|A\|_F^2} \exp({\sqrt{nt}})\mathrm dt\le(2v+\|A\|_F^2)e^{\sqrt{n(2v+\|A\|_F^2)}}\le \frac{3\varepsilon}{a(1-a)n}\exp\lp(\frac{3\varepsilon}{a(1-a)}\rp).
    \end{align*}

    Also, if we take $\varepsilon<1-a$, we have
    \begin{align*}
        &\int_{2v+\|A\|_F^2}^{\infty} \exp\lp({\sqrt{nt}}\rp)\cdot 2\exp\lp(-c\frac{t-v}{\|A\|_F^2}\rp)\mathrm dt\le \int_{v+\|A\|_F^2}^{\infty} \exp({\sqrt{nt}})\cdot 2\exp\lp(-c\frac{t-v}{\|A\|_F^2}\rp)\mathrm dt\\
        \le&\int_{v+\|A\|_F^2}^{\infty} \exp(1/(4c)+cnt)\cdot 2\exp\lp(-c\frac{t-v}{\|A\|_F^2}\rp)\mathrm dt\\
        =&\exp\lp(\frac{1}{4c}+(v+\|A\|_F^2)\cdot c\cdot n-\|A\|_F^2\cdot\frac{c}{\|A\|_F^2}\rp)\cdot\frac{1/c}{\frac{1}{\|A\|_F^2}-n}\\
        \le&\exp\lp(\frac{1}{4c}+(v+\|A\|^2_F)cn-c\rp)\cdot\frac{1/c}{{1-a}-\varepsilon}\cdot\frac{\varepsilon}{n}\le\exp\lp(\frac{1}{4c}+\frac{\varepsilon \cdot (2-a)}{a(1-a)}-c\rp)\cdot\frac{1/c}{{1-a}-\varepsilon}\cdot\frac{\varepsilon}{n}.
    \end{align*}

    Therefore, summing up all the things above, we have
    $$\E_{J}[\|AX\|^2]\le\frac{\varepsilon}{n}\left(\frac{3}{a(1-a)}\exp\lp(\frac{3\varepsilon}{a(1-a)}\rp)+\exp\lp(\frac{1}{4c}+\frac{\varepsilon \cdot (2-a)}{a(1-a)}-c\rp)\cdot\frac{1/c}{{1-a}-\varepsilon}\right)\cdot 2e^{\sqrt{\varepsilon/a}/2}.$$

    Since $a$ is a constant depending on $\beta$, and we can take $\varepsilon$ as a sufficiently small constant depending on $\beta$, the result follows.
\end{proof}

\bibliography{bib}

@inproceedings{bhattacharyya2021near,
  title={Near-optimal learning of tree-structured distributions by Chow-Liu},
  author={Bhattacharyya, Arnab and Gayen, Sutanu and Price, Eric and Vinodchandran, NV},
  booktitle={Proceedings of the 53rd annual acm SIGACT symposium on theory of computing},
  year={2021}
}

@article{bresler2020learning,
  title={Learning a tree-structured Ising model in order to make predictions},
  author={Bresler, Guy and Karzand, Mina},
  journal={The Annals of Statistics},
  volume={48},
  number={2},
  pages={713--737},
  year={2020},
  publisher={JSTOR}
}

@inproceedings{daskalakis2021sample,
  title={Sample-optimal and efficient learning of tree Ising models},
  author={Daskalakis, Constantinos and Pan, Qinxuan},
  booktitle={Proceedings of the 53rd Annual ACM SIGACT Symposium on Theory of Computing},
  pages={133--146},
  year={2021}
}

@inproceedings{caputo2015approximate,
  title={Approximate tensorization of entropy at high temperature},
  author={Caputo, Pietro and Menz, Georg and Tetali, Prasad},
  booktitle={Annales de la Facult{\'e} des sciences de Toulouse: Math{\'e}matiques},
  volume={24},
  number={4},
  pages={691--716},
  year={2015}
}

@inproceedings{chen2021optimal,
  title={Optimal mixing of Glauber dynamics: Entropy factorization via high-dimensional expansion},
  author={Chen, Zongchen and Liu, Kuikui and Vigoda, Eric},
  booktitle={Proceedings of the 53rd Annual ACM SIGACT Symposium on Theory of Computing},
  pages={1537--1550},
  year={2021}
}

@article{caputo2023lecture,
  title={Lecture notes on entropy and Markov chains},
  author={Caputo, Pietro},
  journal={Preprint, available from: http://www. mat. uniroma3. it/users/caputo/entropy. pdf},
  year={2023}
}

@inproceedings{blanca2022mixing,
  title={On mixing of markov chains: Coupling, spectral independence, and entropy factorization},
  author={Blanca, Antonio and Caputo, Pietro and Chen, Zongchen and Parisi, Daniel and {\v{S}}tefankovi{\v{c}}, Daniel and Vigoda, Eric},
  booktitle={Proceedings of the 2022 Annual ACM-SIAM Symposium on Discrete Algorithms (SODA)},
  pages={3670--3692},
  year={2022},
  organization={SIAM}
}

@article{montanari2010spread,
  title={The spread of innovations in social networks},
  author={Montanari, Andrea and Saberi, Amin},
  journal={Proceedings of the National Academy of Sciences},
  volume={107},
  number={47},
  pages={20196--20201},
  year={2010},
  publisher={National Academy of Sciences}
}

@article{ellison1993learning,
  title={Learning, local interaction, and coordination},
  author={Ellison, Glenn},
  journal={Econometrica: Journal of the Econometric Society},
  pages={1047--1071},
  year={1993},
  publisher={JSTOR}
}

@inproceedings{daskalakis2006optimal,
  title={Optimal phylogenetic reconstruction},
  author={Daskalakis, Constantinos and Mossel, Elchanan and Roch, S{\'e}bastien},
  booktitle={Proceedings of the thirty-eighth annual ACM symposium on Theory of computing},
  pages={159--168},
  year={2006}
}

@inproceedings{geman1986markov,
  title={Markov random field image models and their applications to computer vision},
  author={Geman, Stuart and Graffigne, Christine},
  booktitle={Proceedings of the international congress of mathematicians},
  volume={1},
  pages={2},
  year={1986},
  organization={Berkeley, CA}
}

@book{felsenstein2004,
title={Inferring phylogenies},
author={Felsenstein, Joseph},
publisher={Sunderland; Sinauer Associates},
year=2004
}

@inproceedings{dagan2021learning,
  title={Learning Ising models from one or multiple samples},
  author={Dagan, Yuval and Daskalakis, Constantinos and Dikkala, Nishanth and Kandiros, Anthimos Vardis},
  booktitle={Proceedings of the 53rd Annual ACM SIGACT Symposium on Theory of Computing},
  pages={161--168},
  year={2021}
}

@inproceedings{chen2022localization,
  title={Localization schemes: A framework for proving mixing bounds for Markov chains},
  author={Chen, Yuansi and Eldan, Ronen},
  booktitle={2022 IEEE 63rd Annual Symposium on Foundations of Computer Science (FOCS)},
  pages={110--122},
  year={2022},
  organization={IEEE}
}

@article{anari2021entropic,
  title={Entropic independence I: Modified log-Sobolev inequalities for fractionally log-concave distributions and high-temperature ising models},
  author={Anari, Nima and Jain, Vishesh and Koehler, Frederic and Pham, Huy Tuan and Vuong, Thuy-Duong},
  journal={arXiv preprint arXiv:2106.04105},
  year={2021}
}

@article{adamczak2019note,
  title={A note on concentration for polynomials in the Ising model},
  author={Adamczak, Rados{\l}aw and Kotowski, Micha{\l} and Polaczyk, Bart{\l}omiej and Strzelecki, Micha{\l}},
  year={2019}
}

@article{gotze2021concentration,
  title={Concentration inequalities for bounded functionals via log-Sobolev-type inequalities},
  author={G{\"o}tze, Friedrich and Sambale, Holger and Sinulis, Arthur},
  journal={Journal of Theoretical Probability},
  volume={34},
  pages={1623--1652},
  year={2021},
  publisher={Springer}
}

@article{marton2015logarithmic,
  title={Logarithmic Sobolev inequalities in discrete product spaces: a proof by a transportation cost distance},
  author={Marton, Katalin},
  journal={arXiv preprint arXiv:1507.02803},
  year={2015}
}

@inproceedings{anari2024universality,
  title={Universality of spectral independence with applications to fast mixing in spin glasses},
  author={Anari, Nima and Jain, Vishesh and Koehler, Frederic and Pham, Huy Tuan and Vuong, Thuy-Duong},
  booktitle={Proceedings of the 2024 Annual ACM-SIAM Symposium on Discrete Algorithms (SODA)},
  pages={5029--5056},
  year={2024},
  organization={SIAM}
}

@article{sambale2019modified,
  title={Modified log-Sobolev inequalities and two-level concentration},
  author={Sambale, Holger and Sinulis, Arthur},
  journal={arXiv preprint arXiv:1905.06137},
  year={2019}
}

@inproceedings{anari2024trickle,
  title={Trickle-Down in Localization Schemes and Applications},
  author={Anari, Nima and Koehler, Frederic and Vuong, Thuy-Duong},
  booktitle={Proceedings of the 56th Annual ACM Symposium on Theory of Computing},
  pages={1094--1105},
  year={2024}
}

@inproceedings{cryan2019modified,
  title={Modified log-Sobolev inequalities for strongly log-concave distributions},
  author={Cryan, Mary and Guo, Heng and Mousa, Giorgos},
  booktitle={2019 IEEE 60th Annual Symposium on Foundations of Computer Science (FOCS)},
  pages={1358--1370},
  year={2019},
  organization={IEEE}
}

@article{eldan2022spectral,
  title={A spectral condition for spectral gap: fast mixing in high-temperature Ising models},
  author={Eldan, Ronen and Koehler, Frederic and Zeitouni, Ofer},
  journal={Probability theory and related fields},
  volume={182},
  number={3},
  pages={1035--1051},
  year={2022},
  publisher={Springer}
}

@article{bobkov1999exponential,
  title={Exponential integrability and transportation cost related to logarithmic Sobolev inequalities},
  author={Bobkov, Sergej G and G{\"o}tze, Friedrich},
  journal={Journal of Functional Analysis},
  volume={163},
  number={1},
  pages={1--28},
  year={1999},
  publisher={Elsevier}
}

@book{artstein2021asymptotic,
  title={Asymptotic geometric analysis, Part II},
  author={Artstein-Avidan, Shiri and Giannopoulos, Apostolos and Milman, Vitali D},
  volume={261},
  year={2021},
  publisher={American Mathematical Society}
}

@inproceedings{klivans2017learning,
  title={Learning graphical models using multiplicative weights},
  author={Klivans, Adam and Meka, Raghu},
  booktitle={2017 IEEE 58th Annual Symposium on Foundations of Computer Science (FOCS)},
  pages={343--354},
  year={2017},
  organization={IEEE}
}

@inproceedings{gaitonde2024unified,
  title={A Unified Approach to Learning Ising Models: Beyond Independence and Bounded Width},
  author={Gaitonde, Jason and Mossel, Elchanan},
  booktitle={Proceedings of the 56th Annual ACM Symposium on Theory of Computing},
  pages={503--514},
  year={2024}
}

@article{wu2019sparse,
  title={Sparse logistic regression learns all discrete pairwise graphical models},
  author={Wu, Shanshan and Sanghavi, Sujay and Dimakis, Alexandros G},
  journal={Advances in Neural Information Processing Systems},
  volume={32},
  year={2019}
}

@article{chandrasekaran2024learning,
  title={Learning the Sherrington-Kirkpatrick Model Even at Low Temperature},
  author={Chandrasekaran, Gautam and Klivans, Adam},
  journal={arXiv preprint arXiv:2411.11174},
  year={2024}
}

@article{devroye2020minimax,
  title={The minimax learning rates of normal and Ising undirected graphical models},
  author={Devroye, Luc and Mehrabian, Abbas and Reddad, Tommy},
  year={2020}
}

@article{tropp2015introduction,
  title={An introduction to matrix concentration inequalities},
  author={Tropp, Joel A and others},
  journal={Foundations and Trends{\textregistered} in Machine Learning},
  volume={8},
  number={1-2},
  pages={1--230},
  year={2015},
  publisher={Now Publishers, Inc.}
}

@inproceedings{sly2012computational,
  title={The computational hardness of counting in two-spin models on d-regular graphs},
  author={Sly, Allan and Sun, Nike},
  booktitle={2012 IEEE 53rd Annual Symposium on Foundations of Computer Science},
  pages={361--369},
  year={2012},
  organization={IEEE}
}

@inproceedings{sly2010computational,
  title={Computational transition at the uniqueness threshold},
  author={Sly, Allan},
  booktitle={2010 IEEE 51st Annual Symposium on Foundations of Computer Science},
  pages={287--296},
  year={2010},
  organization={IEEE}
}

@article{galanis2015inapproximability,
  title={Inapproximability for antiferromagnetic spin systems in the tree nonuniqueness region},
  author={Galanis, Andreas and {\v{S}}tefankovi{\v{c}}, Daniel and Vigoda, Eric},
  journal={Journal of the ACM (JACM)},
  volume={62},
  number={6},
  pages={1--60},
  year={2015},
  publisher={ACM New York, NY, USA}
}

@article{galanis2024sampling,
  title={On sampling from Ising models with spectral constraints},
  author={Galanis, Andreas and Kalavasis, Alkis and Kandiros, Anthimos Vardis},
  journal={arXiv preprint arXiv:2407.07645},
  year={2024}
}

@article{besag1974spatial,
  title={Spatial interaction and the statistical analysis of lattice systems},
  author={Besag, Julian},
  journal={Journal of the Royal Statistical Society: Series B (Methodological)},
  volume={36},
  number={2},
  pages={192--225},
  year={1974},
  publisher={Wiley Online Library}
}

@article{dobruschin1968description,
  title={The description of a random field by means of conditional probabilities and conditions of its regularity},
  author={Dobruschin, PL},
  journal={Theory of Probability \& Its Applications},
  volume={13},
  number={2},
  pages={197--224},
  year={1968},
  publisher={SIAM}
}

@book{levin2017markov,
  title={Markov chains and mixing times},
  author={Levin, David A and Peres, Yuval},
  volume={107},
  year={2017},
  publisher={American Mathematical Soc.}
}

@article{lenz1920beitrvsge,
  title={Beitr{\v{s}}ge zum verst{\v{s}}ndnis der magnetischen eigenschaften in festen k{\v{s}}rpern},
  author={Lenz, Wilhelm},
  journal={Physikalische Z},
  volume={21},
  number={613-615},
  pages={1},
  year={1920}
}

@inproceedings{bresler2015efficiently,
  title={Efficiently learning Ising models on arbitrary graphs},
  author={Bresler, Guy},
  booktitle={Proceedings of the forty-seventh annual ACM symposium on Theory of computing},
  pages={771--782},
  year={2015}
}

@article{hamilton2017information,
  title={Information theoretic properties of Markov random fields, and their algorithmic applications},
  author={Hamilton, Linus and Koehler, Frederic and Moitra, Ankur},
  journal={Advances in Neural Information Processing Systems},
  volume={30},
  year={2017}
}

@article{vuffray2016interaction,
  title={Interaction screening: Efficient and sample-optimal learning of Ising models},
  author={Vuffray, Marc and Misra, Sidhant and Lokhov, Andrey and Chertkov, Michael},
  journal={Advances in neural information processing systems},
  volume={29},
  year={2016}
}

@article{lokhov2018optimal,
  title={Optimal structure and parameter learning of Ising models},
  author={Lokhov, Andrey Y and Vuffray, Marc and Misra, Sidhant and Chertkov, Michael},
  journal={Science advances},
  volume={4},
  number={3},
  pages={e1700791},
  year={2018},
  publisher={American Association for the Advancement of Science}
}

@article{santhanam2012information,
  title={Information-theoretic limits of selecting binary graphical models in high dimensions},
  author={Santhanam, Narayana P and Wainwright, Martin J},
  journal={IEEE Transactions on Information Theory},
  volume={58},
  number={7},
  pages={4117--4134},
  year={2012},
  publisher={IEEE}
}

@inproceedings{brustle2020multi,
  title={Multi-item mechanisms without item-independence: Learnability via robustness},
  author={Brustle, Johannes and Cai, Yang and Daskalakis, Constantinos},
  booktitle={Proceedings of the 21st ACM Conference on Economics and Computation},
  pages={715--761},
  year={2020}
}

@article{bhattacharya2018inference,
  title={Inference in Ising models},
  author={Bhattacharya, Bhaswar B and Mukherjee, Sumit},
  year={2018}
}

@article{lee2023parallelising,
  title={Parallelising glauber dynamics},
  author={Lee, Holden},
  journal={arXiv preprint arXiv:2307.07131},
  year={2023}
}

@inproceedings{daskalakis2019regression,
  title={Regression from dependent observations},
  author={Daskalakis, Constantinos and Dikkala, Nishanth and Panageas, Ioannis},
  booktitle={Proceedings of the 51st Annual ACM SIGACT Symposium on Theory of Computing},
  pages={881--889},
  year={2019}
}

@article{chatterjee2007estimation,
  title={Estimation in spin glasses: A first step},
  author={Chatterjee, Sourav},
  year={2007}
}

@article{mossel2009hardness,
  title={On the hardness of sampling independent sets beyond the tree threshold},
  author={Mossel, Elchanan and Weitz, Dror and Wormald, Nicholas},
  journal={Probability Theory and Related Fields},
  volume={143},
  number={3},
  pages={401--439},
  year={2009},
  publisher={Springer}
}

@article{chow1968approximating,
  title={Approximating discrete probability distributions with dependence trees},
  author={Chow, CKCN and Liu, Cong},
  journal={IEEE transactions on Information Theory},
  volume={14},
  number={3},
  pages={462--467},
  year={1968},
  publisher={IEEE}
}

@article{bresler2017learning,
  title={Learning graphical models from the Glauber dynamics},
  author={Bresler, Guy and Gamarnik, David and Shah, Devavrat},
  journal={IEEE Transactions on Information Theory},
  volume={64},
  number={6},
  pages={4072--4080},
  year={2017},
  publisher={IEEE}
}

@article{gaitonde2024bypassing,
  title={Bypassing the Noisy Parity Barrier: Learning Higher-Order Markov Random Fields from Dynamics},
  author={Gaitonde, Jason and Moitra, Ankur and Mossel, Elchanan},
  journal={arXiv preprint arXiv:2409.05284},
  year={2024}
}

@inproceedings{boix2022chow,
  title={Chow-liu++: Optimal prediction-centric learning of tree ising models},
  author={Boix-Adsera, Enric and Bresler, Guy and Koehler, Frederic},
  booktitle={2021 IEEE 62nd Annual Symposium on Foundations of Computer Science (FOCS)},
  pages={417--426},
  year={2022},
  organization={IEEE}
}

@inproceedings{kandiros2023learning,
  title={Learning and testing latent-tree ising models efficiently},
  author={Kandiros, Vardis and Daskalakis, Constantinos and Dagan, Yuval and Choo, Davin},
  booktitle={The Thirty Sixth Annual Conference on Learning Theory},
  pages={1666--1729},
  year={2023},
  organization={PMLR}
}

@inproceedings{bresler2019learning,
  title={Learning restricted Boltzmann machines via influence maximization},
  author={Bresler, Guy and Koehler, Frederic and Moitra, Ankur},
  booktitle={Proceedings of the 51st Annual ACM SIGACT Symposium on Theory of Computing},
  pages={828--839},
  year={2019}
}

@article{koehler2022statistical,
  title={Statistical efficiency of score matching: The view from isoperimetry},
  author={Koehler, Frederic and Heckett, Alexander and Risteski, Andrej},
  journal={arXiv preprint arXiv:2210.00726},
  year={2022}
}

@article{koehler2023sampling,
  title={Sampling multimodal distributions with the vanilla score: Benefits of data-based initialization},
  author={Koehler, Frederic and Vuong, Thuy-Duong},
  journal={arXiv preprint arXiv:2310.01762},
  year={2023}
}

@inproceedings{koehler2022sampling,
  title={Sampling approximately low-rank Ising models: MCMC meets variational methods},
  author={Koehler, Frederic and Lee, Holden and Risteski, Andrej},
  booktitle={Conference on Learning Theory},
  pages={4945--4988},
  year={2022},
  organization={PMLR}
}

@inproceedings{friedman2003proof,
  title={A proof of Alon's second eigenvalue conjecture},
  author={Friedman, Joel},
  booktitle={Proceedings of the thirty-fifth annual ACM symposium on Theory of computing},
  pages={720--724},
  year={2003}
}

@book{talagrand2010mean,
  title={Mean field models for spin glasses: Volume I: Basic examples},
  author={Talagrand, Michel},
  volume={54},
  year={2010},
  publisher={Springer Science \& Business Media}
}

@article{gaitonde2025better,
  title={Better Models and Algorithms for Learning Ising Models from Dynamics},
  author={Gaitonde, Jason and Moitra, Ankur and Mossel, Elchanan},
  journal={arXiv preprint arXiv:2507.15173},
  year={2025}
}

@article{panchenko2012sherrington,
  title={The Sherrington-Kirkpatrick model: an overview},
  author={Panchenko, Dmitry},
  journal={Journal of Statistical Physics},
  volume={149},
  number={2},
  pages={362--383},
  year={2012},
  publisher={Springer}
}

@article{koehler2024efficiently,
  title={Efficiently learning and sampling multimodal distributions with data-based initialization},
  author={Koehler, Frederic and Lee, Holden and Vuong, Thuy-Duong},
  journal={arXiv preprint arXiv:2411.09117},
  year={2024}
}

@article{ghosal2020joint,
  title={Joint estimation of parameters in Ising model},
  author={Ghosal, Promit and Mukherjee, Sumit},
  year={2020}
}

@inproceedings{kandiros2021statistical,
  title={Statistical estimation from dependent data},
  author={Kandiros, Vardis and Dagan, Yuval and Dikkala, Nishanth and Goel, Surbhi and Daskalakis, Constantinos},
  booktitle={International Conference on Machine Learning},
  pages={5269--5278},
  year={2021},
  organization={PMLR}
}

@article{mukherjee2021high,
  title={High dimensional logistic regression under network dependence},
  author={Mukherjee, Somabha and Niu, Ziang and Halder, Sagnik and Bhattacharya, Bhaswar B and Michailidis, George},
  journal={arXiv preprint arXiv:2110.03200},
  year={2021}
}

@article{mukherjee2022estimation,
  title={Estimation in tensor Ising models},
  author={Mukherjee, Somabha and Son, Jaesung and Bhattacharya, Bhaswar B},
  journal={Information and Inference: A Journal of the IMA},
  volume={11},
  number={4},
  pages={1457--1500},
  year={2022},
  publisher={Oxford University Press}
}

@inproceedings{daskalakis2020logistic,
  title={Logistic regression with peer-group effects via inference in higher-order Ising models},
  author={Daskalakis, Constantinos and Dikkala, Nishanth and Panageas, Ioannis},
  booktitle={International Conference on Artificial Intelligence and Statistics},
  pages={3653--3663},
  year={2020},
  organization={PMLR}
}

@inproceedings{galanis2024learning,
  title={Learning Hard-Constrained Models with One Sample},
  author={Galanis, Andreas and Kalavasis, Alkis and Kandiros, Anthimos Vardis},
  booktitle={Proceedings of the 2024 Annual ACM-SIAM Symposium on Discrete Algorithms (SODA)},
  pages={3184--3196},
  year={2024},
  organization={SIAM}
}

@article{bhattacharya2021parameter,
  title={Parameter estimation for undirected graphical models with hard constraints},
  author={Bhattacharya, Bhaswar B and Ramanan, Kavita},
  journal={IEEE Transactions on Information Theory},
  volume={67},
  number={10},
  pages={6790--6809},
  year={2021},
  publisher={IEEE}
}

@article{xu2023inference,
  title={Inference in Ising models on dense regular graphs},
  author={Xu, Yuanzhe and Mukherjee, Sumit},
  journal={The Annals of Statistics},
  volume={51},
  number={3},
  pages={1183--1206},
  year={2023},
  publisher={Institute of Mathematical Statistics}
}

@article{bauerschmidt2019very,
  title={A very simple proof of the LSI for high temperature spin systems},
  author={Bauerschmidt, Roland and Bodineau, Thierry},
  journal={Journal of Functional Analysis},
  volume={276},
  number={8},
  pages={2582--2588},
  year={2019},
  publisher={Elsevier}
}

@inproceedings{el2022sampling,
  title={Sampling from the Sherrington-Kirkpatrick Gibbs measure via algorithmic stochastic localization},
  author={El Alaoui, Ahmed and Montanari, Andrea and Sellke, Mark},
  booktitle={2022 IEEE 63rd Annual Symposium on Foundations of Computer Science (FOCS)},
  pages={323--334},
  year={2022},
  organization={IEEE}
}

@article{celentano2024sudakov,
  title={Sudakov--Fernique post-AMP, and a new proof of the local convexity of the TAP free energy},
  author={Celentano, Michael},
  journal={The Annals of Probability},
  volume={52},
  number={3},
  pages={923--954},
  year={2024},
  publisher={Institute of Mathematical Statistics}
}

@inproceedings{kunisky2024optimality,
  title={Optimality of Glauber dynamics for general-purpose Ising model sampling and free energy approximation},
  author={Kunisky, Dmitriy},
  booktitle={Proceedings of the 2024 Annual ACM-SIAM Symposium on Discrete Algorithms (SODA)},
  pages={5013--5028},
  year={2024},
  organization={SIAM}
}

@article{jayakumar2024discrete,
  title={Discrete distributions are learnable from metastable samples},
  author={Jayakumar, Abhijith and Lokhov, Andrey Y and Misra, Sidhant and Vuffray, Marc},
  journal={arXiv preprint arXiv:2410.13800},
  year={2024}
}

@article{bubeck2015convex,
  title={Convex optimization: Algorithms and complexity},
  author={Bubeck, S{\'e}bastien and others},
  journal={Foundations and Trends{\textregistered} in Machine Learning},
  volume={8},
  number={3-4},
  pages={231--357},
  year={2015},
  publisher={Now Publishers, Inc.}
}

@article{kunsch1982decay,
  title={Decay of correlations under Dobrushin's uniqueness condition and its applications},
  author={K{\"u}nsch, H},
  journal={Communications in Mathematical Physics},
  volume={84},
  pages={207--222},
  year={1982},
  publisher={Springer}
}

@book{koller2009probabilistic,
  title={Probabilistic graphical models: principles and techniques},
  author={Koller, Daphne and Friedman, Nir},
  year={2009},
  publisher={MIT press}
}

@article{wainwright2008graphical,
  title={Graphical models, exponential families, and variational inference},
  author={Wainwright, Martin J and Jordan, Michael I and others},
  journal={Foundations and Trends{\textregistered} in Machine Learning},
  volume={1},
  number={1--2},
  pages={1--305},
  year={2008},
  publisher={Now Publishers, Inc.}
}

@article{nikolakakis2021predictive,
  title={Predictive learning on hidden tree-structured Ising models},
  author={Nikolakakis, Konstantinos E and Kalogerias, Dionysios S and Sarwate, Anand D},
  journal={Journal of Machine Learning Research},
  volume={22},
  number={59},
  pages={1--82},
  year={2021}
}

@inproceedings{zhang2020privately,
  title={Privately learning Markov random fields},
  author={Zhang, Huanyu and Kamath, Gautam and Kulkarni, Janardhan and Wu, Steven},
  booktitle={International conference on machine learning},
  pages={11129--11140},
  year={2020},
  organization={PMLR}
}

@article{van2014probability,
  title={Probability in high dimension},
  author={Van Handel, Ramon},
  journal={Lecture Notes (Princeton University)},
  volume={2},
  number={3},
  pages={2--3},
  year={2014}
}

@book{ellis2007entropy,
  title={Entropy, large deviations, and statistical mechanics},
  author={Ellis, Richard S},
  year={2007},
  publisher={Springer}
}

@inproceedings{liu2024fast,
  title={Fast mixing in sparse random Ising models},
  author={Liu, Kuikui and Mohanty, Sidhanth and Rajaraman, Amit and Wu, David X},
  booktitle={2024 IEEE 65th Annual Symposium on Foundations of Computer Science (FOCS)},
  pages={120--128},
  year={2024},
  organization={IEEE}
}

@article{bobkov2006modified,
  title={Modified logarithmic Sobolev inequalities in discrete settings},
  author={Bobkov, Sergey G and Tetali, Prasad},
  journal={Journal of Theoretical Probability},
  volume={19},
  number={2},
  pages={289--336},
  year={2006},
  publisher={Springer}
}

@book{boyd2004convex,
  title={Convex optimization},
  author={Boyd, Stephen and Vandenberghe, Lieven},
  year={2004},
  publisher={Cambridge university press}
}

@inproceedings{lee2018efficient,
  title={Efficient convex optimization with membership oracles},
  author={Lee, Yin Tat and Sidford, Aaron and Vempala, Santosh S},
  booktitle={Conference On Learning Theory},
  pages={1292--1294},
  year={2018},
  organization={PMLR}
}

@inproceedings{jiang2020faster,
  title={A faster interior point method for semidefinite programming},
  author={Jiang, Haotian and Kathuria, Tarun and Lee, Yin Tat and Padmanabhan, Swati and Song, Zhao},
  booktitle={2020 IEEE 61st annual symposium on foundations of computer science (FOCS)},
  pages={910--918},
  year={2020},
  organization={IEEE}
}

@article{deb2023fluctuations,
  title={Fluctuations in mean-field Ising models},
  author={Deb, Nabarun and Mukherjee, Sumit},
  journal={The Annals of Applied Probability},
  volume={33},
  number={3},
  pages={1961--2003},
  year={2023},
  publisher={Institute of Mathematical Statistics}
}

\end{document}